\newcommand{\Sbm}{\bm{\mathcal{S}}}
\newcommand{\sgnop}{\operatorname{sgn}}
\newcommand{\sgn}[1]{\ensuremath{\sgnop\left(#1\right)}}
\newcommand{\inproduct}[2]{\left\langle#1,#2\right\rangle}
\newtheorem{thm}{Theorem}[section]
\newtheorem{lemma}[thm]{Lemma}
\newtheorem{cor}[thm]{Corollary}
\newtheorem{defn}{Definition}[section]
\DeclareMathOperator*{\argmin}{argmin}
\newcommand{\Tr}{\mcode{Tr}}
\newcommand{\conj}{\mcode{conj}}
\newcommand{\Diag}{\mcode{Diag}}
\newcommand{\unfold}{\mcode{unfold}}
\newcommand{\fold}{\mcode{fold}}
\newcommand{\st}{{\text{s.t.}}}
\newcommand{\fft}{\mcode{fft}}
\newcommand{\ifft}{\mcode{ifft}}
\newcommand{\rankm}{\text{rank}}
\newcommand{\rankt}{\text{rank}_{\text{t}}}
\newcommand{\rank}{\text{rank}}
\newcommand{\ranka}{\text{rank}_{\text{a}}}
\newcommand{\none}{n_{(1)}}
\newcommand{\ntwo}{n_{(2)}}
\newcommand{\Rn}{\mathbb{R}^{n_1\times n_2\times n_3}}
\newcommand{\Cnnn}{\mathbb{C}^{n_1\times n_2\times n_3}}
\newcommand{\Cn}{\mathbb{C}^{n}}
\newcommand{\nss}{n\times n\times n_3}
\newcommand{\Pomega}{\bm{\mathcal{P}}_{\bm{{\Omega}}}}
\newcommand{\Pomegao}{\bm{\mathcal{P}}_{{\bm{{\Omega}}^\bot}}}
\newcommand{\Omegat}{\bm{{\Omega}}}
\newcommand{\PT}{\bm{\mathcal{P}}_{\bm{T}}}
\newcommand{\PTo}{\bm{\mathcal{P}}_{{\bm{T}}^\bot}}
\newcommand{\eabc}{{\mathfrak{e}}_{abc}}
\newcommand{\eijk}{{\mathfrak{e}}_{ijk}}
\newcommand{\ei}{\mathring{\mathfrak{e}}_i}
\newcommand{\ej}{\mathring{\mathfrak{e}}_j}
\newcommand{\ek}{\dot{\mathfrak{e}}_k}
\newcommand{\bcirc}{\mcode{bcirc}}
\newcommand{\bdiag}{\mcode{bdiag}}
\newcommand{\Ber}{\text{Ber}}
\newcommand{\A}{\bm{\mathcal{A}}}
\newcommand{\B}{\bm{\mathcal{B}}}
\newcommand{\C}{\bm{\mathcal{C}}}
\newcommand{\D}{\bm{\mathcal{D}}}
\newcommand{\E}{\bm{\mathcal{E}}}
\newcommand{\G}{\bm{\mathcal{G}}}
\newcommand{\HH}{\bm{\mathcal{H}}}
\newcommand{\I}{\bm{\mathcal{I}}}
\newcommand{\LL}{\bm{\mathcal{L}}}
\newcommand{\M}{\bm{\mathcal{M}}}
\newcommand{\PP}{\bm{\mathcal{P}}}
\newcommand{\Q}{\bm{\mathcal{Q}}}
\newcommand{\R}{\bm{\mathcal{R}}}
\newcommand{\T}{\bm{\mathcal{T}}}
\newcommand{\U}{\bm{\mathcal{U}}}
\newcommand{\V}{\bm{\mathcal{V}}}
\newcommand{\W}{\bm{{\mathcal{W}}}}
\newcommand{\X}{\bm{\mathcal{X}}}
\newcommand{\Y}{\bm{\mathcal{Y}}}
\newcommand{\Z}{\bm{\mathcal{Z}}}
\newcommand{\Lhat}{\hat{\bm{\mathcal{L}}}}
\newcommand{\Shat}{\hat{\bm{\mathcal{S}}}}
\newcommand{\WL}{{\bm{\mathcal{W}}}^{\bm{\mathcal{L}}} }
\newcommand{\WS}{{\bm{\mathcal{W}}}^{\bm{\mathcal{S}}} }
\newcommand{\Ehat}{\hat{\bm{\mathcal{E}}}}
\newcommand{\Abar}{\bm{\mathcal{\bar{A}}}}
\newcommand{\Bbar}{\bm{\mathcal{\bar{B}}}}
\newcommand{\Cbar}{\bm{\mathcal{\bar{C}}}}
\newcommand{\Ubar}{\bm{\mathcal{\bar{U}}}}
\newcommand{\Vbar}{\bm{\mathcal{\bar{V}}}}
\newcommand{\Sbar}{\bm{\mathcal{\bar{S}}}}
\newcommand{\Wbar}{\bm{\mathcal{\bar{W}}}}
\newcommand{\Ybar}{\bm{\mathcal{\bar{Y}}}}
\newcommand{\Mbar}{\bm{\mathcal{\bar{M}}}}
\newcommand{\Ibar}{\bm{\mathcal{\bar{I}}}}
\newcommand{\Am}{\bm{{A}}}
\newcommand{\Bm}{\bm{{B}}}
\newcommand{\Em}{\bm{E}}
\newcommand{\F}{\bm{{F}}}
\newcommand{\Wm}{\bm{W}}
\newcommand{\Lm}{\bm{L}}
\newcommand{\Sm}{\bm{S}}
\newcommand{\Xm}{\bm{X}}
\newcommand{\Vm}{\bm{V}}
\newcommand{\Um}{\bm{U}}
\newcommand{\Tm}{\bm{T}}
\newcommand{\Zm}{\bm{Z}}
\renewcommand{\Im}{\bm{I}}
\newcommand{\Ambar}{\bm{\bar{A}}}
\newcommand{\Bmbar}{\bm{\bar{{B}}}}
\newcommand{\Cmbar}{\bm{\bar{C}}}
\newcommand{\Umbar}{\bm{\bar{U}}}
\newcommand{\Vmbar}{\bm{\bar{V}}}
\newcommand{\Xmbar}{\bm{\bar{X}}}
\newcommand{\Lmbar}{\bm{\bar{L}}}
\newcommand{\Ymbar}{\bm{\bar{Y}}}
\newcommand{\Smbar}{\bm{\bar{S}}}
\newcommand{\Zmbar}{\bm{\bar{Z}}}
\newcommand{\Hmbar}{\bm{\bar{H}}}
\newcommand{\Mmbar}{\bm{\bar{M}}}
\newcommand{\Wmbar}{\bm{\bar{W}}}
\newcommand{\Bmtilde}{\bm{\tilde{{B}}}}
\newcommand{\x}{\bm{x}}
\newcommand{\y}{\bm{y}}
\newcommand{\f}{\ensuremath{\bm{f}}}
\newcommand{\vbar}{\bar{\bm{v}}}
\renewcommand{\aa}{{\bm{a}}}
\newcommand{\e}{{\text{e}}}
\newcommand{\vv}{{\bm{v}}}
\newcommand{\0}{\ensuremath{\mathbf{0}}}
\newcommand{\abs}[1]{\left\lvert#1\right\rvert}
\newcommand{\norm}[1]{\lVert#1\rVert}
\newcommand{\normlarge}[1]{\left\lVert#1\right\rVert}
\newcommand{\Mh}{{\bm{{M}}}^{{{H}}}}
\newcommand{\sumi}{\sum_{i=1}^{n_3}}
\newcommand{\canyi}[1]{\textcolor{black}{#1}}
\begin{document}

\title{Tensor Robust Principal Component Analysis with A New Tensor Nuclear Norm}

\author{Canyi~Lu,~Jiashi~Feng,~Yudong~Chen,~Wei Liu,~Member,~IEEE,~Zhouchen Lin,~\IEEEmembership{Fellow,~IEEE},
	
	 ~and~Shuicheng Yan,~\IEEEmembership{Fellow,~IEEE}
	
	\IEEEcompsocitemizethanks{\IEEEcompsocthanksitem C. Lu is with the Department of Electrical and Computer Engineering, Carnegie Mellon University (e-mail: canyilu@gmail.com).\protect
	\IEEEcompsocthanksitem J. Feng and S. Yan are with the Department of Electrical and Computer Engineering, National University of Singapore, Singapore (e-mail: elefjia@nus.edu.sg; eleyans@nus.edu.sg).
	\IEEEcompsocthanksitem Y. Chen is with the School of Operations Research and Information Engineering, Cornell University (e-mail: yudong.chen@cornell.edu).
	\IEEEcompsocthanksitem  W. Liu is with the Tencent AI Lab, Shenzhen, China (e-mail:
	wl2223@columbia.edu).
	\IEEEcompsocthanksitem Z. Lin is with the Key Laboratory of Machine Perception (MOE), School of Electronics Engineering and Computer Science, Peking University, Beijing 100871, China (e-mail: zlin@pku.edu.cn).
	}
}

\markboth{IEEE TRANSACTIONS ON PATTERN ANALYSIS AND MACHINE INTELLIGENCE}%
{Shell \MakeLowercase{\textit{et al.}}: Bare Advanced Demo of IEEEtran.cls for Journals}

\IEEEtitleabstractindextext{%
\begin{abstract}
	In this paper, we consider the Tensor Robust Principal Component Analysis (TRPCA) problem, which aims to exactly recover the low-rank and sparse components from their sum. Our model is based on the recently proposed tensor-tensor product (or t-product) \cite{kilmer2011factorization}. Induced by the t-product, we first rigorously deduce the tensor spectral norm, tensor nuclear norm, and tensor average rank, and show that the tensor nuclear norm is the convex envelope of the tensor average rank within the unit ball of the tensor spectral norm. These definitions, their relationships and properties are consistent with matrix cases. Equipped with the new tensor nuclear norm, we then solve the TRPCA problem by solving a convex program and provide the theoretical guarantee for the exact recovery. Our TRPCA model and recovery guarantee include matrix RPCA as a special case. Numerical experiments verify our results, and the applications to image recovery and background modeling problems demonstrate the effectiveness of our method. 
\end{abstract}
	
\begin{IEEEkeywords}
	Tensor robust PCA, convex optimization, tensor nuclear norm, tensor singular value decomposition
\end{IEEEkeywords}}
\maketitle		
\IEEEdisplaynontitleabstractindextext
\IEEEpeerreviewmaketitle
	
	\ifCLASSOPTIONcompsoc
	\IEEEraisesectionheading{\section{Introduction}\label{sec:introduction}}
	\else
	
\section{Introduction}\label{sec:introduction}
\fi
\IEEEPARstart{P}{rincipal} Component Analysis (PCA) is a fundamental approach for data analysis. It exploits low-dimensional structure in high-dimensional data, which commonly exists in different types of data, \textit{e.g.}, image, text, video and bioinformatics. It is computationally efficient and powerful for data instances which are mildly corrupted by small noises. However, a major issue of PCA is that it is brittle to be grossly corrupted or outlying observations, which are ubiquitous in real-world data. To date, a number of robust versions of PCA have been proposed, but many of them suffer from a high computational cost. 

\begin{figure}[!t]
	\centering
	\begin{subfigure}[b]{0.4\textwidth}
		\centering
		\includegraphics[width=\textwidth]{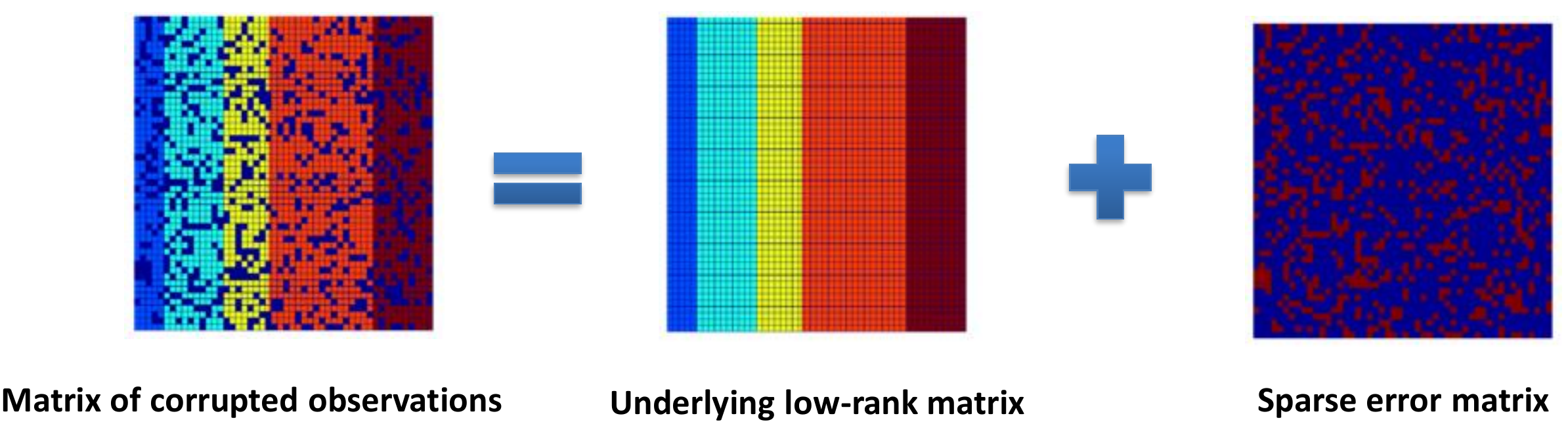}\vspace{2mm}
	\end{subfigure} 
	
	\begin{subfigure}[b]{0.4\textwidth}
		\centering
		\includegraphics[width=\textwidth]{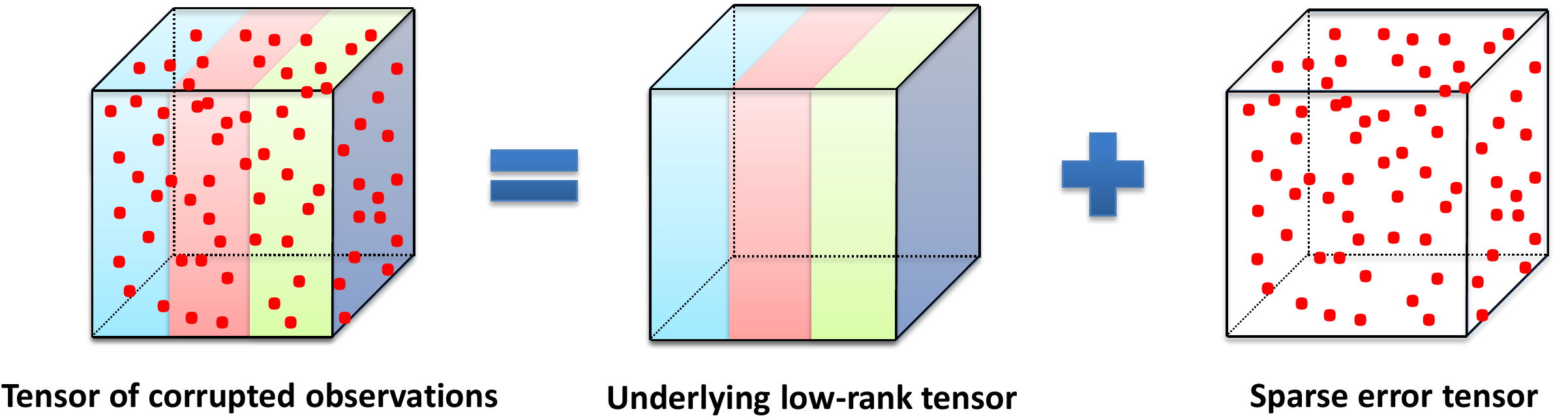}
	\end{subfigure}
	\caption{\small{Illustrations of RPCA \cite{RPCA} (up row) and our Tensor RPCA (bottom row). RPCA: low-rank and sparse matrix decomposition from noisy matrix observations. Tensor RPCA: low-rank and sparse tensor decomposition from noisy tensor observations.}}
	\label{fig_trpca1}
	\vspace{-0.45cm}
\end{figure}	
	
The Robust PCA \cite{RPCA} is the first polynomial-time algorithm with strong recovery guarantees. Suppose that we are given an observed matrix $\Xm\in\mathbb{R}^{n_1\times n_2}$, which can be decomposed as $\Xm=\Lm_0+\Em_0$, where $\Lm_0$ is low-rank and $\Em_0$ is sparse. It is shown in \cite{RPCA} that if the singular vectors of $\Lm_0$ satisfy some incoherent conditions, \textit{e.g.}, $\Lm_0$ is low-rank and \canyi{$\Em_0$} is sufficiently sparse, then $\Lm_0$ and \canyi{$\Em_0$} can be exactly recovered with high probability by solving the following convex problem
\begin{equation}\label{rpca}
	\min_{\Lm,\Em} \ \norm{\Lm}_*+\lambda\norm{\Em}_1, \ \st \ \Xm=\Lm+\Em,
\end{equation}
where $\norm{\Lm}_*$ denotes the nuclear norm (sum of the singular values of $\Lm$), and $\norm{\Em}_1$ denotes the $\ell_1$-norm (sum of the absolute values of all the entries in $\Em$). Theoretically, RPCA is guaranteed to work even if the rank of $\Lm_0$ grows almost linearly in the dimension of the matrix, and the errors in \canyi{$\Em_0$} are up to a constant fraction of all entries. The parameter $\lambda$ is suggested to be set as $1/\sqrt{\max(n_1,n_2)}$ which works well in practice. Algorithmically, program (\ref{rpca}) can be solved by efficient algorithms, at a cost not too much higher than PCA. RPCA and its extensions have been successfully applied to background modeling \cite{RPCA}, subspace clustering \cite{robustlrr}, video compressive sensing \cite{waters2011sparcs}, \textit{etc}. 

One major shortcoming of RPCA is that it can only handle 2-way (matrix) data. However, real data is usually multi-dimensional in nature-the information is stored in multi-way arrays known as tensors \cite{kolda2009tensor}. For example, a color image is a 3-way object with column, row and color modes; a greyscale video is indexed by two spatial variables and one temporal variable. To use RPCA, one has to first restructure the multi-way data into a matrix. Such a preprocessing usually leads to an information loss and would cause a performance degradation. To alleviate this issue, it is natural to consider extending RPCA to manipulate the tensor data by taking advantage of its multi-dimensional structure.
	
In this work, we are interested in the Tensor Robust Principal Component (TRPCA) model which aims to exactly recover a low-rank tensor corrupted by sparse errors. See Figure \ref{fig_trpca1} for an intuitive illustration. More specifically, suppose that we are given a data tensor $\X$, and know that it can be decomposed as
\begin{equation}\label{xls}
	\X=\LL_0+\E_0,
\end{equation}
where $\LL_0$ is low-rank and $\E_0$ is sparse, and both components are of arbitrary magnitudes. Note that we do not know the locations of the nonzero elements of $\E_0$, not even how many there are. Now we consider a similar problem to RPCA. Can we recover the low-rank and sparse components exactly and efficiently from $\X$? This is the problem of tensor RPCA studied in this work.
	
The tensor extension of RPCA is not easy since the numerical algebra of tensors is fraught with hardness results \cite{hillar2013most}, \canyi{\cite{anandkumar2016homotopy,zhang2018tensor}}. A main issue is that the tensor rank is not well defined with a tight convex relaxation. Several tensor rank definitions and their convex relaxations have been proposed but each has its limitation. For example, the CP rank \cite{kolda2009tensor}, defined as the smallest number of rank one tensor decomposition, is generally NP-hard to compute. Also its convex relaxation is intractable. This makes the low CP rank tensor recovery challenging. The tractable Tucker rank \cite{kolda2009tensor} and its convex relaxation are more widely used. For a $k$-way tensor $\X$, the Tucker rank is a vector defined as $\text{rank}_{\text{tc}}(\X):=\left( \text{rank}(\Xm^{\{1\}}), \text{rank}(\Xm^{\{2\}}), \cdots, \text{rank}(\Xm^{\{k\}}) \right)$, where $\Xm^{\{i\}}$ is the mode-$i$ matricization of $\X$ \cite{kolda2009tensor}. Motivated by the fact that the nuclear norm is the convex envelope of the matrix rank within the unit ball of the spectral norm, the Sum of Nuclear Norms (SNN)~\cite{liu2013tensor}, defined as $\sum_i\norm{\Xm^{\{i\}}}_*$, is used as a convex surrogate of $\sum_i\text{rank}({\Xm^{\{i\}}})$. Then the work \cite{mu2013square} considers the Low-Rank Tensor Completion (LRTC) model based on SNN:
\begin{equation}\label{tensorc}
	\min_{\X} \ \sum_{i=1}^{k}\lambda_i\norm{\Xm^{\{i\}}}_*, \ \st \ \Pomega(\X) = \Pomega(\M),
\end{equation}
where $\lambda_i>0$, and $\Pomega(\X)$ denotes the projection of $\X$ on the observed set $\Omegat$. The effectiveness of this approach for image processing has been well studied in \cite{liu2013tensor,tomioka2010estimation}. However, SNN is not the convex envelope of $\sum_i\text{rank}({\Xm^{\{i\}}})$ \cite{romera2013new}. Actually, the above model can be substantially suboptimal \cite{mu2013square}: reliably recovering a $k$-way tensor of length $n$ and Tucker rank $(r,r,\cdots,r)$ from Gaussian measurements requires $O(rn^{k-1})$ observations. In contrast, a certain (intractable) nonconvex formulation needs only $O(rK +nrK)$ observations. A better (but still suboptimal) convexification based on a more balanced matricization is proposed in \cite{mu2013square}. The work \cite{huang2014provable} presents the recovery guarantee for the SNN based tensor RPCA model
\begin{equation}\label{eqsnn}
	\min_{\LL,\E} \ \sum_{i=1}^{k} \lambda_i\norm{{\bm{L}}^{\{i\}}}_*+\norm{\E}_1, \ \st \ \X=\LL+\E.
\end{equation}
\canyi{A robust tensor CP decomposition problem is studied in \cite{anandkumar2016tensor}. Though the recovery is guaranteed, the algorithm is nonconvex.}
 
The limitations of existing works motivate us to consider an interesting problem: is it possible to define a new tensor nuclear norm such that it is a tight convex surrogate of certain tensor rank, and thus its resulting tensor RPCA enjoys a similar tight recovery guarantee to that of the matrix RPCA? This work will provide a positive answer to this question. Our solution is inspired by the recently proposed tensor-tensor product (t-product) \cite{kilmer2011factorization} which is a generalization of the matrix-matrix product. It enjoys several similar properties to the matrix-matrix product. For example, based on t-product, any tensors have the tensor Singular Value Decomposition (t-SVD) and this motivates a new tensor rank, \textit{i.e.}, tensor tubal rank \cite{kilmer2013third}. To recover a tensor of low tubal rank, we propose a new tensor nuclear norm which is rigorously induced by the t-product. First, the tensor spectral norm can be induced by the operator norm when treating the t-product as an operator. Then the tensor nuclear norm is defined as the dual norm of the tensor spectral norm. We further propose the tensor average rank (which is closely related to the tensor tubal rank), and prove that its convex envelope is the tensor nuclear norm within the unit ball of the tensor spectral norm. It is interesting that this framework, including the new tensor concepts and their relationships, is consistent with the one for the matrix cases. Equipped with these new tools, we then study the TRPCA problem which aims to recover the low tubal rank component $\LL_0$ and sparse component $\E_0$ from noisy observations $\X=\LL_0+\E_0\in\mathbb{R}^{n_1\times n_2\times n_3}$ (this work focuses on the 3-way tensor) by convex optimization
\begin{align}\label{trpca}
	\min_{\LL,\ \E} \ \norm{\LL}_*+\lambda\norm{\E}_1, \ \st \ \X=\LL+\E,
\end{align}
where $\norm{\LL}_*$ is our new tensor nuclear norm (see the definition in Section \ref{sec_TNN}). We prove that under certain incoherence conditions, the solution to (\ref{trpca}) perfectly recovers the low-rank and the sparse components, provided of course that the tubal rank of $\LL_0$ is not too large, and that $\E_0$ is reasonably sparse. A remarkable fact, like in RPCA, is that (\ref{trpca}) has no tunning parameter either. \canyi{Our analysis shows that $\lambda=1/\sqrt{\max(n_1,n_2)n_3}$ guarantees the exact recovery when $\LL_0$ and $\E_0$ satisfy certain assumptions.} As a special case, if $\X$ reduces to a matrix ($n_3=1$ in this case), all the new tensor concepts reduce to the matrix cases. Our TRPCA model (\ref{trpca}) reduces to RPCA in (\ref{rpca}), and also our recovery guarantee in Theorem \ref{thm1} reduces to Theorem 1.1 in \cite{RPCA}. Another advantage of (\ref{trpca}) is that it can be solved by polynomial-time algorithms.
	
The contributions of this work are summarized as follows:
\begin{enumerate}[1.]
	\item Motivated by the t-product \cite{kilmer2011factorization} which is a natural generalization of the matrix-matrix product, we rigorously deduce a new tensor nuclear norm and some other related tensor concepts, and they own the same relationship as the matrix cases. This is the foundation for the extensions of the models, optimization method and theoretical analyzing techniques from matrix cases to tensor cases. 
	\item Equipped with the tensor nuclear norm, we theoretically show that under certain incoherence conditions, the solution to the convex TRPCA model (\ref{trpca}) perfectly recovers the underlying low-rank component $\LL_0$ and sparse component $\E_0$. RPCA \cite{RPCA} and its recovery guarantee fall into our special cases. 
	\item \canyi{We give a new rigorous proof of t-SVD factorization and a more efficient way than \cite{lu2016tensorrpca} for solving TRPCA. We further perform several simulations to corroborate our theoretical results. Numerical experiments on images and videos also show the superiority of TRPCA over RPCA and SNN.}
\end{enumerate}

The rest of this paper is structured as follows. Section \ref{sec_notations} gives some notations and preliminaries. Section \ref{sec_TNN} presents the way for defining the tensor nuclear norm induced by the t-product. Section \ref{sec_tcTNN} provides the recovery guarantee of TRPCA and the optimization details. Section \ref{sec_exp} presents numerical experiments conducted on synthetic and real data. We conclude this work in Section \ref{sec_con}.

\section{Notations and Preliminaries}\label{sec_notations}


\subsection{Notations}

In this paper, we denote tensors by boldface Euler script letters, \textit{e.g.}, $\A$. Matrices are denoted by boldface capital letters, \textit{e.g.}, $\Am$; vectors are denoted by boldface lowercase letters, \textit{e.g.}, $\aa$, and scalars are denoted by lowercase letters, \textit{e.g.}, $a$. We denote $\bm I_n$ as the $n\times n$ identity matrix. The fields of real numbers and complex numbers are denoted as $\mathbb{R}$ and $\mathbb{C}$, respectively. \canyi{For a 3-way tensor $\A\in\Cnnn$, we denote its $(i,j,k)$-th entry as $\A_{ijk}$ or $a_{ijk}$ and use the Matlab notation $\A(i,:,:)$, $\A(:,i,:)$ and $\A(:,:,i)$ to denote respectively the $i$-th horizontal, lateral and frontal slice (see definitions in \cite{kolda2009tensor}).} More often, the frontal slice $\A(:,:,i)$ is denoted compactly as $\Am^{(i)}$. The tube is denoted as $\A(i,j,:)$. The inner product between $\Am$ and $\Bm$ in $\mathbb{C}^{n_1\times n_2}$ is defined as $\inproduct{\Am}{\Bm}=\Tr(\Am^*\Bm)$, where $\Am^*$ denotes the conjugate transpose of $\Am$ and $\Tr(\cdot)$ denotes the matrix trace. The inner product between $\A$ and $\B$ in $\Cnnn$ is defined as $\langle\A,\B\rangle=\sum_{i=1}^{n_3}\inproduct{\Am^{(i)}}{\Bm^{(i)}}$. For any $\A\in\Cnnn$, the complex conjugate of $\A$ is denoted as $\conj(\A)$ which takes the complex conjugate of each entry of $\A$. We denote $\left\lfloor t \right\rfloor$ as the nearest integer less than or equal to $t$ and $\lceil t \rceil$ as the one greater than or equal to $t$.

Some norms of vector, matrix and tensor are used. We denote the $\ell_1$-norm as $\norm{\A}_1=\sum_{ijk}|a_{ijk}|$, the infinity norm as $\norm{\A}_\infty=\max_{ijk}|a_{ijk}|$ and the Frobenius norm as $\norm{\A}_F=\sqrt{\sum_{ijk}|a_{ijk}|^2}$, respectively. The above norms reduce to the vector or matrix norms if $\A$ is a vector or a matrix. For $\vv\in\mathbb{C}^n$, the $\ell_2$-norm is $\norm{\vv}_2 = \sqrt{\sum_{i}|v_{i}|^2}$. The spectral norm of a matrix $\Am$ is denoted as $\norm{\Am} = \max_{i}\sigma_i(\Am)$, where $\sigma_i(\Am)$'s are the singular values of $\Am$. The matrix nuclear norm is $\norm{\Am}_* = \sum_{i}\sigma_i(\Am)$. 

\subsection{Discrete Fourier Transformation}

The Discrete Fourier Transformation (DFT) plays a core role in tensor-tensor product introduced later. We give some related background knowledge and notations here. The DFT on $\vv\in\mathbb{R}^n$, denoted as $\vbar$, is given by
\begin{align}\label{eq_fourtrans}
	\vbar = \F_n\vv \in \mathbb{C}^n,
\end{align}
where $\F_n$ is the DFT matrix defined as
\begin{align*}
	\F_n = 
	\begin{bmatrix}
		1 & 1 & 1 & \cdots & 1 \\
		1 & \omega & \omega^2 & \cdots & \omega^{n-1} \\
		\vdots & \vdots & \vdots & \ddots & \vdots \\
		1 & \omega^{n-1} & \omega^{2(n-1)} & \cdots &\omega^{(n-1)(n-1)}
	\end{bmatrix}\in\mathbb{C}^{n\times n},
\end{align*}
where $\omega = \e^{-\frac{2\pi i}{n}}$ is a primitive $n$-th root of unity in which $i = \sqrt{-1}$. Note that $\F_n/\sqrt{n}$ is a \canyi{unitary} matrix, \textit{i.e.},
\begin{align}\label{eq_Fnorthogonal}
	\F_n^*\F_n = \F_n\F^*_n = n\Im_n.
\end{align}
Thus $\F_n^{-1} = \F^*_n/n$. The above property will be frequently used in this paper. Computing $\vbar$ by using (\ref{eq_fourtrans}) costs $O(n^2)$. A more widely used method is the Fast Fourier Transform (FFT) which costs $O(n\log n)$. By using the Matlab command $\mcode{fft}$, we have $\vbar = \mcode{fft}(\vv)$. Denote the circulant matrix of $\vv$ as
\begin{align*}
	\mcode{circ}(\vv) = 
	\begin{bmatrix}
		v_1 & v_n &\cdots &v_2 \\
		v_2 & v_1 & \cdots & v_3 \\
		\vdots & \vdots & \ddots & \vdots \\
		v_n & v_{n-1} & \cdots & v_1
	\end{bmatrix}\in\mathbb{R}^{n\times n}.
\end{align*} 
It is known that it can be diagonalized by the DFT matrix, \textit{i.e.},
\begin{align}\label{eq_diagonalv}
	\F_n \cdot \mcode{circ}(\vv) \cdot \F_n^{-1} = \Diag(\vbar),
\end{align}
where $\Diag(\vbar)$ denotes a diagonal matrix with its $i$-th diagonal entry as $\bar{v}_i$. The above equation implies that the columns of $\F_n$ are the eigenvectors of $(\mcode{circ}(\vv))^\top$ and $\bar{v}_i$'s are the corresponding eigenvalues. 
\begin{lemma}\label{lem_keyprofft}\cite{rojo2004some} 
	Given any real vector $\vv\in\mathbb{R}^n$, the associated $\vbar$ satisfies 
\begin{align}\label{fftvproper}
	\bar{v}_1 \in\mathbb{R} \text{ and } \conj({\bar{v}}_i) = \bar{v}_{n-i+2}, \ i=2,\cdots, \left\lfloor\frac{n+1}{2} \right\rfloor.
\end{align} 
Conversely, for any given complex $\vbar\in \Cn$ satisfying (\ref{fftvproper}), there exists a real block circulant matrix $\mcode{circ}(\vv)$ such that (\ref{eq_diagonalv}) holds.	
\end{lemma}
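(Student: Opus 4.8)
The plan is to prove both implications by direct computation from the explicit entries of $\F_n$. Recall that the $(k,j)$-entry of $\F_n$ is $\omega^{(k-1)(j-1)}$ with $\omega=\e^{-2\pi i/n}$, so $\bar v_k=\sum_{j=1}^n \omega^{(k-1)(j-1)}v_j$. For the forward direction, since every $v_j$ is real, conjugation gives $\conj(\bar v_k)=\sum_{j=1}^n \omega^{-(k-1)(j-1)}v_j$. The one identity that does all the work is $\omega^{-(k-1)(j-1)}=\omega^{(n-k+1)(j-1)}$, valid because $\omega^n=1$; the right-hand side is exactly the $(n-k+2)$-th row of $\F_n$ applied to $\vv$, where the row index is read modulo $n$ so that $n+1$ is identified with $1$. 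Hence $\conj(\bar v_k)=\bar v_{n-k+2}$. Taking $k=1$ gives $\bar v_1\in\mathbb{R}$, and letting $k$ run over $2,\dots,\lfloor(n+1)/2\rfloor$ gives the asserted relations; the complementary indices follow by conjugating these, and for even $n$ the self-conjugate frequency $k=n/2+1$, where $\omega^{(k-1)(j-1)}=(-1)^{j-1}$ is real, is checked directly.

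For the converse, given $\vbar$ satisfying (\ref{fftvproper}) I would define $\vv:=\F_n^{-1}\vbar=\tfrac1n\F_n^*\vbar$ and show it is real. Writing $v_j=\tfrac1n\sum_k \omega^{-(k-1)(j-1)}\bar v_k$ and conjugating, $\conj(v_j)=\tfrac1n\sum_k \omega^{(k-1)(j-1)}\conj(\bar v_k)=\tfrac1n\sum_k \omega^{(k-1)(j-1)}\bar v_{n-k+2}$, using the hypothesis. The substitution $k\mapsto n-k+2\ (\mathrm{mod}\ n)$ is a permutation of $\{1,\dots,n\}$, and under it $\omega^{(k-1)(j-1)}=\omega^{-(n-k+1)(j-1)}$ turns the sum back into the defining formula for $v_j$; therefore $\conj(v_j)=v_j$, i.e. $\vv\in\mathbb{R}^n$. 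With $\vv$ real, $\mcode{circ}(\vv)$ is a real circulant matrix, and the required diagonalization $\F_n\cdot\mcode{circ}(\vv)\cdot\F_n^{-1}=\Diag(\vbar)$ is exactly (\ref{eq_diagonalv}).

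The algebra is routine; the only real obstacle is the index bookkeeping. One must keep track of the reflection $i\mapsto n-i+2$ as an involution on $\mathbb{Z}/n\mathbb{Z}$, verify that it restricts to a genuine permutation of $\{1,\dots,n\}$ in the converse, and separate the parity cases: for odd $n$ the only self-conjugate index is $1$, while for even $n$ there is a second self-conjugate index $n/2+1$ that the stated range $i\le\lfloor(n+1)/2\rfloor$ does not reach and that must be treated on its own. Once the indexing is pinned down, both implications are one-line verifications.
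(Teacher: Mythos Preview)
The paper does not prove this lemma; it is quoted from \cite{rojo2004some} without argument, so there is no proof in the paper to compare your proposal against.

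Your forward direction is the standard computation and is correct. For the converse you have correctly located the delicate point but not resolved it, and in fact it cannot be resolved from (\ref{fftvproper}) alone: when $n$ is even, the range $i=2,\dots,\lfloor(n+1)/2\rfloor=n/2$ stops one short of the fixed point $n/2+1$ of the involution $k\mapsto n-k+2$, so the hypothesis imposes no constraint on $\bar v_{n/2+1}$, yet your reflection argument needs $\conj(\bar v_{n/2+1})=\bar v_{n/2+1}$ there. This is a gap in the lemma as stated rather than in your method: with $n=2$ and $\vbar=(0,i)^\top$ the index range is empty, (\ref{fftvproper}) holds vacuously, but $\F_2^{-1}\vbar=\tfrac12(i,-i)^\top$ is not real and no real $\vv$ satisfies (\ref{eq_diagonalv}). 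The paper's own algorithms quietly close this hole by running their loops up to $\lceil(n_3+1)/2\rceil$, which for even $n_3$ includes the middle frequency. Your write-up should make explicit that for even $n$ one also needs $\bar v_{n/2+1}\in\mathbb{R}$, after which your argument goes through verbatim.
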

As will be seen later, the above properties are useful for efficient computation and important for proofs. Now we consider the DFT on tensors. For $\A\in\Rn$, we denote $\Abar\in\Cnnn$ as the result of DFT on $\A$ along the 3-rd dimension, \textit{i.e.}, performing the DFT on all the tubes of $\A$. By using the Matlab command $\mcode{fft}$, we have
\begin{equation*}
	\Abar=\mcode{fft}(\A,[\ ],3).
\end{equation*}
In a similar fashion, we can compute $\A$ from ${\Abar}$ using the inverse FFT, \textit{i.e.},
\begin{equation*}
	\A = \mcode{ifft}({\Abar},[\ ],3).
\end{equation*}
In particular, we denote $\Ambar\in\mathbb{C}^{n_1n_3\times n_2n_3}$ as a block diagonal matrix with its $i$-th block on the diagonal as the $i$-th frontal slice $\Ambar^{(i)}$ of ${\Abar}$, \textit{i.e.},
\begin{equation*}\label{eq_Abardef}
	\Ambar = \bdiag(\Abar) =
	\begin{bmatrix}
		\Ambar^{(1)} & & & \\
		& \Ambar^{(2)} & & \\
		& & \ddots & \\
		& & & \Ambar^{(n_3)}
	\end{bmatrix},
\end{equation*}
where $\bdiag$ is an operator which maps the tensor $\Abar$ to the block diagonal matrix $\Ambar$. Also, we define the block circulant matrix ${\bcirc}(\A)\in\mathbb{R}^{n_1n_3\times n_2n_3}$ of $\A$ as
\begin{align*} 
	{\bcirc}(\A) =
	\begin{bmatrix}
		\Am^{(1)} &\Am^{(n_3)} &\cdots &\Am^{(2)} \\
		\Am^{(2)} &\Am^{(1)} & \cdots &\Am^{(3)} \\
		\vdots & \vdots & \ddots & \vdots \\
		\Am^{(n_3)} & \Am^{(n_3-1)} & \cdots & \Am^{(1)}
	\end{bmatrix}.
\end{align*}
Just like the circulant matrix which can be diagonalized by DFT, the block circulant matrix can be block diagonalized, \textit{i.e.},
\begin{align}\label{dftpro}
	(\F_{n_3} \otimes \bm{I}_{n_1}) \cdot \mcode{bcirc}(\A) \cdot (\F_{n_3}^{-1} \otimes \bm{I}_{n_2}) = \Ambar,
\end{align}
where $\otimes$ denotes the Kronecker product and $(\F_{n_3}\otimes \bm{I}_{n_1})/\sqrt{n_3}$ is \canyi{unitary}. By using Lemma \ref{lem_keyprofft}, we have
\begin{equation}\label{keyprofffttensor}
	\begin{cases}
		\Ambar^{(1)} \in \mathbb{R}^{n_1\times n_2}, \\
		\conj({\Ambar}^{(i)}) = \Ambar^{(n_3-i+2)}, \ i=2,\cdots,\left\lfloor\frac{n_3+1}{2} \right\rfloor.
	\end{cases}
\end{equation}
Conversely, for any given $\Abar\in\mathbb{C}^{n_1\times n_2\times n_3}$ satisfying (\ref{keyprofffttensor}), there exists a real tensor $\A\in\Rn$ such that (\ref{dftpro}) holds. Also, by using (\ref{eq_Fnorthogonal}), we have the following properties which will be used frequently:
\begin{align}\label{eq_proFnormNuclear}
 	\norm{\A}_F=\frac{1}{\sqrt{n_3}}\norm{\Ambar}_F, 
\end{align}
\begin{align}\label{eq_proinproduct}
 	\inproduct{\A}{\B}=\frac{1}{n_3}\inproduct{\Ambar}{\Bmbar}.
\end{align}

\subsection{T-product and T-SVD}

For $\A\in\Rn$, we define
\begin{equation*}
	\mcode{unfold}(\A) =
	\begin{bmatrix}
		\Am^{(1)} \\ \Am^{(2)} \\ \vdots \\ \Am^{(n_3)} 
	\end{bmatrix}, \ \mcode{fold}(\mcode{unfold}(\A))=\A,
\end{equation*}
where the $\unfold$ operator maps $\A$ to a matrix of size $n_1n_3\times n_2$ and $\fold$ is its inverse operator. 
\begin{defn} \textbf{(T-product)} \cite{kilmer2011factorization}
	Let $\A\in\Rn$ and $\B\in\mathbb{R}^{n_2\times l\times n_3}$. Then the t-product $\A*\B$ is defined to be a tensor of size $n_1\times l\times n_3$, 
	\begin{equation}\label{tproducdef}
		\A*\B = \mcode{fold}(\mcode{bcirc}(\A)\cdot\mcode{unfold}(\B)).
	\end{equation} 
\end{defn}
The t-product can be understood from two perspectives. First, in the original domain, a 3-way tensor of size $n_1\times n_2\times n_3$ can be regarded as an $n_1\times n_2$ matrix with each entry being a tube that lies in the third dimension. Thus, the t-product is analogous to the matrix multiplication except that the circular convolution replaces the multiplication operation between the elements. Note that the t-product reduces to the standard matrix multiplication when $n_3=1$. This is a key observation which makes our tensor RPCA model shown later involve the matrix RPCA as a special case. Second, the t-product is equivalent to the matrix multiplication in the Fourier domain; that is, $\C=\A*\B$ is equivalent to $\Cmbar=\Ambar\Bmbar$ due to (\ref{dftpro}). Indeed, $\C=\A*\B$ implies
\begin{align}
	  & \unfold(\C) \notag \\ 
	= & \mcode{bcirc}(\A)\cdot\mcode{unfold}(\B) \notag \\
	= & (\F_{n_3}^{-1}\otimes \bm{I}_{n_1}) \cdot ( (\F_{n_3}\otimes \bm{I}_{n_1})\cdot \mcode{bcirc}(\A) \cdot (\F_{n_3}^{-1}\otimes \bm{I}_{n_2})) \notag \\
	  & \cdot ((\F_{n_3}\otimes \bm{I}_{n_2})\cdot \mcode{unfold}(\B))\label{eqn_tproducomputproer} \\
	= & (\F_{n_3}^{-1}\otimes \bm{I}_{n_1})\cdot\Ambar\cdot\unfold(\Bbar),\notag
\end{align}
where (\ref{eqn_tproducomputproer}) uses (\ref{dftpro}). Left multiplying both sides with 
$(\F_{n_3}\otimes \bm{I}_{n_1})$ leads to $\unfold(\Cbar)=\Ambar\cdot\unfold(\Bbar)$. This is equivalent to $\Cmbar=\Ambar\Bmbar$. This property suggests an efficient way based on FFT to compute t-product instead of using (\ref{tproducdef}). See Algorithm \ref{alg_ttprod}.
\begin{algorithm}[!t]
	\caption{Tensor-Tensor Product}
	\textbf{Input:} $\A\in\Rn$, $\B\in\mathbb{R}^{n_2\times l\times n_3}$.\\
	\textbf{Output:} $\C = \A * \B\in\mathbb{R}^{n_1\times l\times n_3}$.
	\begin{enumerate}[1.]
		\item Compute $\Abar=\fft(\A,[\ ],3)$ and $\Bbar = \fft(\B,[\ ],3)$.
		\item Compute each frontal slice of $\Cbar$ by
		\begin{equation*}
			\Cmbar^{(i)} =
			\begin{cases}
				\Ambar^{(i)}\Bmbar^{(i)}, \quad & i=1,\cdots, \lceil \frac{n_3+1}{2}\rceil,\\
				\conj(\Cmbar^{(n_3-i+2)}), \quad & i=\lceil \frac{n_3+1}{2}\rceil+1,\cdots, n_3.
			\end{cases}
		\end{equation*}
		\item Compute $\C=\ifft(\Cbar,[\ ],3)$.
	\end{enumerate}
	\label{alg_ttprod}	
\end{algorithm} 

The t-product enjoys many similar properties to the matrix-matrix product. For example, the t-product is associative, \textit{i.e.}, $\A*(\B*\C) = (\A*\B)*\C$. We also need some other concepts on tensors extended from the matrix cases. 
\begin{defn} \textbf{(Conjugate transpose)} 
	The conjugate transpose of a tensor $\A\in\Cnnn$ is the tensor $\A^*\in\mathbb{C}^{n_2\times n_1\times n_3}$ obtained by conjugate transposing each of the frontal slices and then reversing the order of transposed frontal slices 2 through $n_3$.
\end{defn}
	The tensor conjugate transpose extends the tensor transpose \cite{kilmer2011factorization} for complex tensors. As an example, let $\A\in\mathbb{C}^{n_1\times n_2\times 4}$ and its frontal slices be $\Am_1$, $\Am_2$, $\Am_3$ and $\Am_4$. Then
\begin{align*}
	\A^* = \fold\left(\begin{bmatrix} \Am_1^* \\ \Am_4^* \\ \Am_3^* \\ \Am_2^* \end{bmatrix}\right).
\end{align*}
 
\begin{defn} \textbf{(Identity tensor)} \cite{kilmer2011factorization}
	The identity tensor $\I\in\mathbb{R}^{n\times n\times n_3}$ is the tensor with its first frontal
	slice being the $n \times n$ identity matrix, and other frontal slices being all zeros.
\end{defn}
It is clear that $\A *\I = \A$ and $\I * \A = \A$ given the appropriate dimensions. The tensor $\Ibar=\fft(\I,[\ ],3)$ is a tensor with each frontal slice being the identity matrix.
\begin{defn} \textbf{(Orthogonal tensor)} \cite{kilmer2011factorization}
	A tensor $\Q\in\mathbb{R}^{n\times n\times n_3}$ is orthogonal if it satisfies $\Q^**\Q=\Q*\Q^*=\I$.
\end{defn}
\begin{defn}\textbf{(F-diagonal Tensor)} \cite{kilmer2011factorization}
	A tensor is called f-diagonal if each of its frontal slices is a diagonal matrix. 
\end{defn}
\begin{thm} \textbf{(T-SVD)} \label{thmtsvd}
	Let $\A\in\Rn$. Then it can be factorized as
	\begin{equation}\label{eq_tsvd}
		\A=\U*\Sbm*\V^*,
	\end{equation}
	where $\U\in \mathbb{R}^{n_1\times n_1\times n_3}$, $\V\in\mathbb{R}^{n_2\times n_2\times n_3}$ are orthogonal, and $\Sbm\in\mathbb{R}^{n_1\times n_2\times n_3}$ is an f-diagonal tensor. 
\end{thm}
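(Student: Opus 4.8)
The plan is to transport the problem to the Fourier domain along the third mode, where by $(\ref{dftpro})$ the t-product turns into a slice-wise matrix product; there one computes ordinary matrix SVDs of the frontal slices and then transports the factors back. The only genuinely delicate point will be to choose the slice-wise SVDs so that they are compatible across conjugate-symmetric slices, which is what forces the reconstructed factor tensors to be real.

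Concretely, I would first form $\Abar=\fft(\A,[\ ],3)$ and recall from $(\ref{keyprofffttensor})$ that its frontal slices satisfy $\Ambar^{(1)}\in\mathbb{R}^{n_1\times n_2}$ and $\conj(\Ambar^{(i)})=\Ambar^{(n_3-i+2)}$ for $i=2,\dots,\lfloor (n_3+1)/2\rfloor$ (and $\Ambar^{(n_3/2+1)}$ is real as well when $n_3$ is even). For $i=1,\dots,\lceil (n_3+1)/2\rceil$ I take a full SVD $\Ambar^{(i)}=\Umbar^{(i)}\Smbar^{(i)}(\Vmbar^{(i)})^*$, with $\Umbar^{(i)}\in\mathbb{C}^{n_1\times n_1}$ and $\Vmbar^{(i)}\in\mathbb{C}^{n_2\times n_2}$ unitary and $\Smbar^{(i)}\in\mathbb{R}^{n_1\times n_2}$ rectangular diagonal with nonnegative entries, choosing a real SVD whenever $\Ambar^{(i)}$ happens to be real. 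For the remaining indices $i=\lceil (n_3+1)/2\rceil+1,\dots,n_3$ I set $\Umbar^{(i)}:=\conj(\Umbar^{(n_3-i+2)})$, $\Smbar^{(i)}:=\Smbar^{(n_3-i+2)}$, $\Vmbar^{(i)}:=\conj(\Vmbar^{(n_3-i+2)})$. Since $\Smbar^{(n_3-i+2)}$ is real, conjugating the already chosen SVD of $\Ambar^{(n_3-i+2)}$ keeps the outer factors unitary, and by $\conj(\Ambar^{(n_3-i+2)})=\Ambar^{(i)}$ (which is precisely $(\ref{keyprofffttensor})$) it is a bona fide SVD of $\Ambar^{(i)}$. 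Thus every frontal slice carries an SVD, and by construction the three families $\{\Umbar^{(i)}\}_i$, $\{\Smbar^{(i)}\}_i$, $\{\Vmbar^{(i)}\}_i$ obey the conjugate symmetry $(\ref{keyprofffttensor})$.

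Reading these families as the frontal slices of tensors $\Ubar,\Sbar,\Vbar$, the converse part of Lemma \ref{lem_keyprofft} (its tensor form, stated just after $(\ref{keyprofffttensor})$) yields real tensors $\U\in\mathbb{R}^{n_1\times n_1\times n_3}$, $\Sbm\in\Rn$, $\V\in\mathbb{R}^{n_2\times n_2\times n_3}$ with $\fft(\U,[\ ],3)=\Ubar$, $\fft(\Sbm,[\ ],3)=\Sbar$, $\fft(\V,[\ ],3)=\Vbar$. Pinning down this real representation --- i.e.\ recognizing that the SVDs on conjugate-paired slices must be tied together --- is the part I expect to be the main obstacle; everything afterwards is bookkeeping through the DFT isomorphism $(\ref{dftpro})$.

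It remains to verify the three assertions. Each $\Smbar^{(i)}$ is diagonal, hence $\Sbar$ is f-diagonal, and $\Sbm=\ifft(\Sbar,[\ ],3)$ --- each of whose frontal slices is a fixed linear combination of those of $\Sbar$ --- is f-diagonal as well. For the factorization and orthogonality I would first record the auxiliary fact that, for any real tensor $\Q$, the block-diagonal matrix of $\fft(\Q^*,[\ ],3)$ is the conjugate transpose of the block-diagonal matrix of $\fft(\Q,[\ ],3)$; this follows from $\bcirc(\Q^*)=(\bcirc(\Q))^*$ together with $(\ref{dftpro})$ and $\F_{n_3}^*=n_3\F_{n_3}^{-1}$. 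Combining this with the t-product/Fourier correspondence (the block-diagonal matrix of $\A*\B$ is $\Ambar\Bmbar$) and associativity of $*$, the block-diagonal matrix of $\fft(\U*\Sbm*\V^*,[\ ],3)$ is $\Umbar\,\Smbar\,\Vmbar^*$, whose $i$-th block is $\Umbar^{(i)}\Smbar^{(i)}(\Vmbar^{(i)})^*=\Ambar^{(i)}$; hence $\fft(\U*\Sbm*\V^*,[\ ],3)=\Abar$ and therefore $\U*\Sbm*\V^*=\A$. Likewise the block-diagonal matrix of $\fft(\U^**\U,[\ ],3)$ is $\Umbar^*\Umbar$, with $i$-th block $(\Umbar^{(i)})^*\Umbar^{(i)}=\Im_{n_1}$; since $\Ibar$ has every frontal slice equal to the identity, this gives $\U^**\U=\I$, and symmetrically $\U*\U^*=\I$ and the two analogues for $\V$, so $\U$ and $\V$ are orthogonal. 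This establishes $(\ref{eq_tsvd})$.
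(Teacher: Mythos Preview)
Your proposal is correct and follows essentially the same construction as the paper: compute SVDs of the first $\lceil(n_3+1)/2\rceil$ frontal slices of $\Abar$, define the remaining slices by conjugation so that property~(\ref{keyprofffttensor}) holds for $\Ubar,\Sbar,\Vbar$, and then invoke Lemma~\ref{lem_keyprofft} to conclude that $\U,\Sbm,\V$ are real. Your write-up is in fact slightly more careful than the paper's in that you explicitly choose real SVDs on the self-conjugate slices ($i=1$, and $i=n_3/2+1$ when $n_3$ is even), which is needed for (\ref{keyprofffttensor}) to hold at those indices, and you spell out the verification of orthogonality and f-diagonality that the paper leaves implicit.
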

\begin{proof} 
	The proof is by construction. Recall that (\ref{dftpro}) holds and $\Ambar^{(i)}$'s satisfy the property (\ref{keyprofffttensor}). Then we construct the SVD of each $\Ambar^{(i)}$ in the following way. For $i=1,\cdots, \lceil \frac{n_3+1}{2}\rceil$, let $\Ambar^{(i)} = \Umbar^{(i)}\Smbar^{(i)}(\Vmbar^{(i)})^*$ be the full SVD of $\Ambar^{(i)}$. Here the singular values in $\Smbar^{(i)}$ are real. For $i=\lceil \frac{n_3+1}{2}\rceil+1,\cdots, n_3$, let $\Umbar^{(i)}=\conj(\Umbar^{(n_3-i+2)})$, $\Smbar^{(i)}=\Smbar^{(n_3-i+2)}$ and $\Vmbar^{(i)}=\conj(\Vmbar^{(n_3-i+2)})$. Then, it is easy to verify that $\Ambar^{(i)} = \Umbar^{(i)}\Smbar^{(i)}(\Vmbar^{(i)})^*$ gives the full SVD of $\Ambar^{(i)}$ for $i=\lceil \frac{n_3+1}{2}\rceil+1,\cdots, n_3$. Then,
	\begin{equation}\label{profTSVDeqn1}
		\Ambar = \Umbar \Smbar \Vmbar^*.
	\end{equation}
	By the construction of $\Umbar$, $\Smbar$ and $\Vmbar$, and Lemma \ref{lem_keyprofft}, we have that $(\F_{n_3}^{-1}\otimes \bm{I}_{n_1}) \cdot \Umbar \cdot (\F_{n_3}\otimes \bm{I}_{n_1})$, $(\F_{n_3}^{-1}\otimes \bm{I}_{n_1}) \cdot \Smbar \cdot (\F_{n_3}\otimes \bm{I}_{n_2})$ and $(\F_{n_3}^{-1}\otimes \bm{I}_{n_2}) \cdot \Vmbar \cdot (\F_{n_3}\otimes \bm{I}_{n_2})$ are real block circulant matrices. Then we can obtain an expression for $\bcirc(\A)$ by applying the appropriate matrix $(\F_{n_3}^{-1}\otimes \bm{I}_{n_1})$ to the left and the appropriate matrix $(\F_{n_3}\otimes \bm{I}_{n_2})$ to the right of each of the matrices in (\ref{profTSVDeqn1}), and folding up the result. This gives a decomposition of the form $\U*\Sbm*\V^*$, where $\U$, $\Sbm$ and $\V$ are real. 	
\end{proof}

\begin{figure}[!t]
	\centering
	\includegraphics[width=0.35\textwidth]{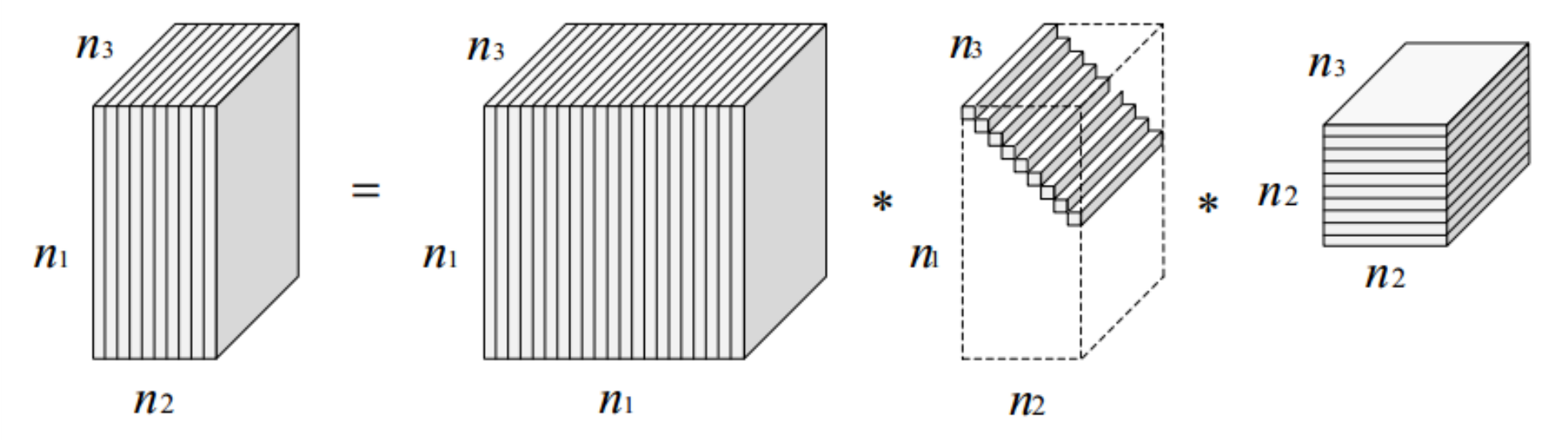}
	\caption{\small An illustration of the t-SVD of an $n_1\times n_2\times n_3$ tensor \cite{hao2013facial}.}
	\label{fig_tsvd}
	\vspace{-0.4cm}
\end{figure}

Theorem \ref{thmtsvd} shows that any 3 way tensor can be factorized into 3 components, including 2 orthogonal tensors and an f-diagonal tensor. See Figure \ref{fig_tsvd} for an intuitive illustration of the t-SVD factorization. T-SVD reduces to the matrix SVD when $n_3=1$. We would like to emphasize that the result of Theorem \ref{thmtsvd} was given first in \cite{kilmer2011factorization} and later in some other related works \cite{hao2013facial,martin2013order}. But their proof and the way for computing $\U$ and $\V$ are not rigorous. The issue is that their method cannot guarantee that $\U$ and $\V$ are real tensors. They construct each frontal slice $\Umbar^{(i)}$ (or $\Vmbar^{(i)}$) of $\Ubar$ (or $\Vbar$ resp.) from the SVD of $\Ambar^{(i)}$ independently for all $i=1,\cdots,n_3$. However, the matrix SVD is not unique. Thus, $\Umbar^{(i)}$'s and $\Vmbar^{(i)}$'s may not satisfy property 
(\ref{keyprofffttensor}) even though $\Ambar^{(i)}$'s do. In this case, the obtained $\U$ (or $\V$) from the inverse DFT of $\Ubar$ (or $\Vbar$ resp.) may not be real. Our proof above instead uses property (\ref{keyprofffttensor}) to construct $\U$ and $\V$ and thus avoids this issue. Our proof further leads to a more efficient way for computing t-SVD shown in Algorithm \ref{alg_tsvd}. 
\begin{algorithm}[!h]
	\caption{T-SVD}
	\textbf{Input:} $\A\in\Rn$.\\
	\textbf{Output:} T-SVD components $\U$, $\Sbm$ and $\V$ of $\A$.
	\begin{enumerate}[1.]
		\item Compute $\Abar=\fft(\A,[\ ],3)$.
		\item Compute each frontal slice of $\Ubar$, $\Sbar$ and $\Vbar$ from $\Abar$ by
		
		\textbf{for} $i=1,\cdots, \lceil 		 \frac{n_3+1}{2}\rceil$ \textbf{do}
		
		\hspace*{0.4cm} $[\Umbar^{(i)},\Smbar^{(i)},\Vmbar^{(i)}] = \text{SVD}(\Ambar^{(i)})$;
		
		\textbf{end for}
		
		\textbf{for} $i=\lceil \frac{n_3+1}{2}\rceil+1,\cdots, n_3$ \textbf{do}
		
		\hspace*{0.4cm}$\Umbar^{(i)} = \conj(\Umbar^{(n_3-i+2)})$;
		 
		\hspace*{0.4cm}$\Smbar^{(i)} = \Smbar^{(n_3-i+2)}$;
		
		\hspace*{0.4cm}$\Vmbar^{(i)} = \conj(\Vmbar^{(n_3-i+2)})$;
		
		\textbf{end for}
		
		\item Compute $\U=\ifft(\Ubar,[\ ],3)$, $\Sbm=\ifft(\Sbar,[\ ],3)$, and $\V=\ifft(\Vbar,[\ ],3)$.
	\end{enumerate}
	\label{alg_tsvd}	
\end{algorithm} 

It is known that the singular values of a matrix have the decreasing order property. Let $\A=\U*\Sbm*\V^*$ be the t-SVD of $\A\in\Rn$. The entries on the diagonal of the first frontal slice $\Sbm(:,:,1)$ of $\Sbm$ have the same decreasing property, \textit{i.e.}, 
\begin{align}\label{eqndecreasingvalue}
	\Sbm(1,1,1)\geq \Sbm(2,2,1)\geq \cdots \geq \Sbm(n',n',1) \geq 0,
\end{align}
where $n' = \min(n_1,n_2)$. The above property holds since the inverse DFT gives 
\begin{equation}\label{prosandsbar}
	\Sbm(i,i,1)=\frac{1}{n_3}\sum_{j=1}^{n_3}\Sbar(i,i,j),
\end{equation}
and the entries on the diagonal of $\Sbar(:,:,j)$ are the singular values of $\Abar(:,:,j)$. As will be seen in Section \ref{sec_TNN}, the tensor nuclear norm depends only on the first frontal slice $\Sbm(:,:,1)$. Thus, we call the entries on the diagonal of $\Sbm(:,:,1)$ as the singular values of $\A$.
\begin{defn} \textbf{(Tensor tubal rank)} \cite{kilmer2013third,zhang2014novel}
	For $\A\in\Rn$, the tensor {tubal rank}, denoted as $\rankt(\A)$, is defined as the number of nonzero singular tubes of $\Sbm$, where $\Sbm$ is from the t-SVD of $\A=\U*\Sbm*\V^*$. We can write
	\begin{align*}
		\rankt(\A) = & \#\{i,\Sbm(i,i,:)\neq\bm{0}\}.
	\end{align*} 
\end{defn}
By using property (\ref{prosandsbar}), the tensor tubal rank is determined by the first frontal slice $\Sbm(:,:,1)$ of $\Sbm$, \textit{i.e.},
\begin{align*}
	\rankt(\A) = \#\{i,\Sbm(i,i,1)\neq {0}\}.	\label{eqndefturank}	
\end{align*} 
Hence, the tensor tubal rank is equivalent to the number of nonzero singular values of $\A$. This property is the same as the matrix case. Define $\A_k=\sum_{i=1}^{k}\U(:,i,:)*\Sbm(i,i,:)*\V(:,i,:)^*$ for some $k<\min(n_1,n_2)$. Then $\A_k = \arg\min_{\rankt(\tilde{\A})\leq k} \norm{\A-\tilde{\A}}_F$, so $\A_k$ is the best approximation of $\A$ with the tubal rank at most $k$. It is known that the real color images can be well approximated by low-rank matrices on the three channels independently. If we treat a color image as a three way tensor with each channel corresponding to a frontal slice, then it can be well approximated by a tensor of low tubal rank. A similar observation was found in \cite{hao2013facial} with the application to facial recognition. Figure \ref{fig_lenna} gives an example to show that a color image can be well approximated by a low tubal rank tensor since most of the singular values of the corresponding tensor are relatively small.

In Section \ref{sec_TNN}, we will define a new tensor nuclear norm which is the convex surrogate of the tensor average rank defined as follows. This rank is closely related to the tensor tubal rank.

\begin{defn} \textbf{(Tensor average rank)} 
	For $\A\in\Rn$, the tensor average rank, denoted as $\ranka(\A)$, is defined as
	\begin{align}
		\ranka(\A) = \frac{1}{n_3} \rankm(\bcirc(\A)).
	\end{align}
\end{defn}
The above definition has a factor $\frac{1}{n_3}$. Note that this factor is crucial in this work as it guarantees that the convex envelope of the tensor average rank within a certain set is the tensor nuclear norm defined in Section \ref{sec_TNN}. The underlying reason for this factor is the t-product definition. Each element of $\A$ is repeated $n_3$ times in the block circulant matrix $\bcirc(\A)$ used in the t-product. Intuitively, this factor alleviates such an entries expansion issue.

\begin{figure}[!t]
	\centering
	\begin{subfigure}[b]{0.15\textwidth}
		\centering
		\includegraphics[width=\textwidth]{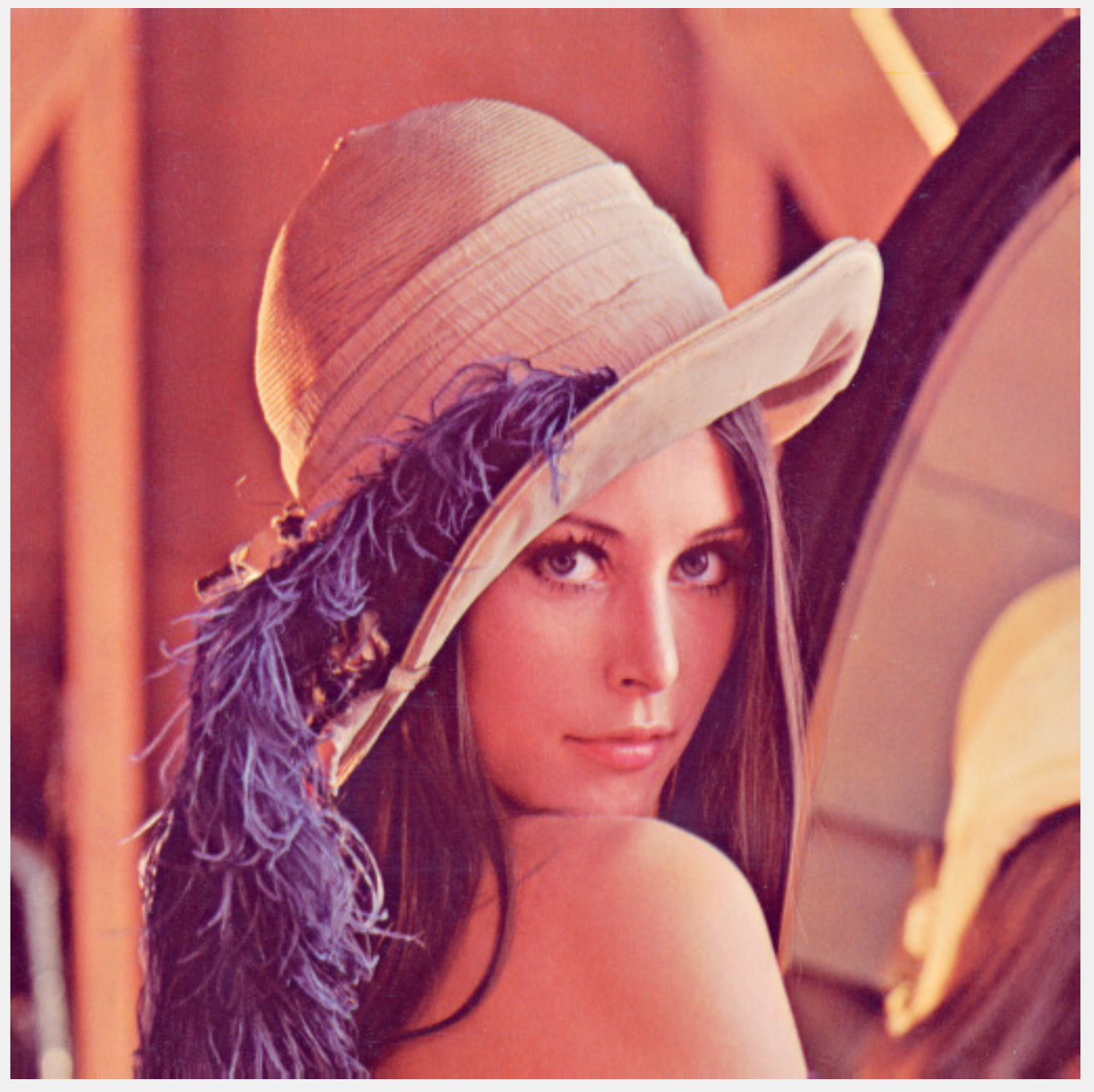}
		\caption{ }
	\end{subfigure} 
	\begin{subfigure}[b]{0.15\textwidth}
		\centering
		\includegraphics[width=\textwidth]{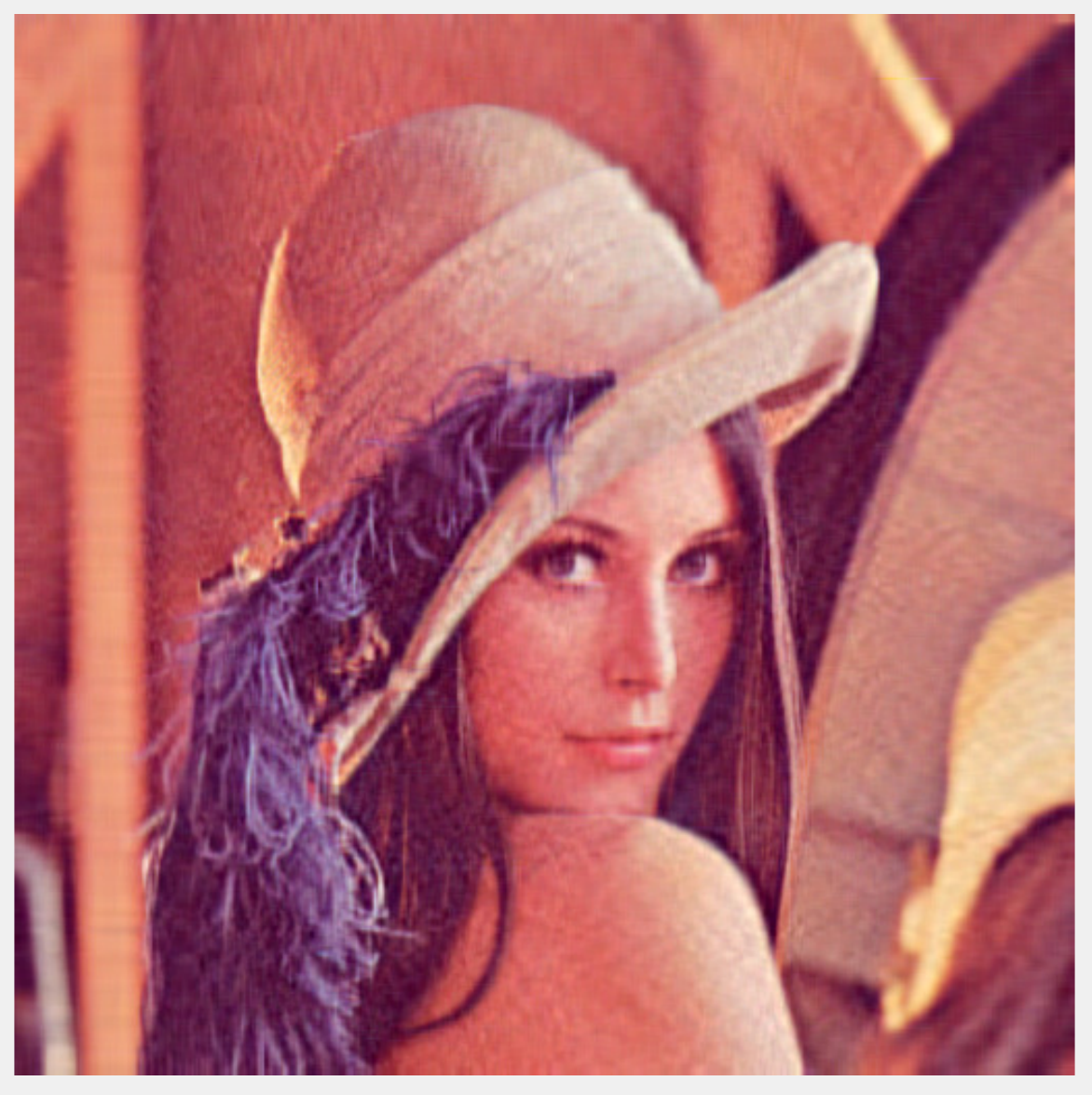}
		\caption{ }
	\end{subfigure}
	\begin{subfigure}[b]{0.165\textwidth}
		\centering
		\includegraphics[width=\textwidth]{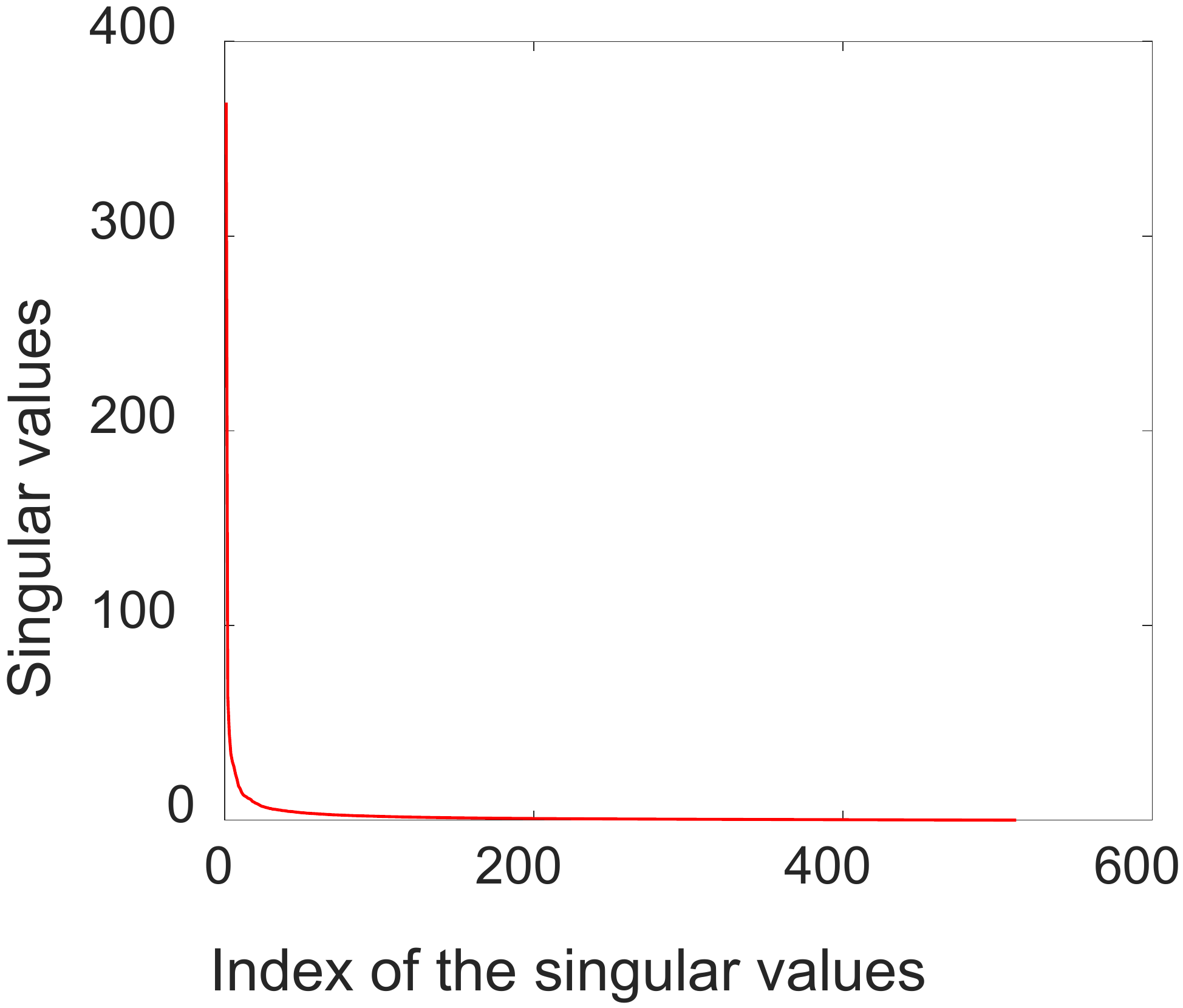}
		\caption{ }
	\end{subfigure}
	\caption{\small{Color images can be approximated by low tubal rank tensors. (a) A color image can be modeled as a tensor $\M \in \mathbb{R}^{512\times 512\times 3}$; (b) approximation by a tensor with tubal rank $r=50$; (c) plot of the singular values of $\M$.}}
	\label{fig_lenna}
\end{figure}

There are some connections between different tensor ranks and these properties imply that the low tubal rank or low average rank assumptions are reasonable for their applications in real visual data. 
First, $\ranka(\A) \leq \rankt(\A)$. Indeed, 
\begin{equation*}
	\ranka(\A) = \frac{1}{n_3} \rankm(\Ambar) \leq \max_{i=1,\cdots,n_3} \rankm(\Ambar^{(i)}) = \rankt(\A),
\end{equation*}
where the first equality uses (\ref{dftpro}). This implies that a low tubal rank tensor always has low average rank. Second, let $\text{rank}_{\text{tc}}(\A)=\left( \text{rank}(\Am^{\{1\}}), \text{rank}(\Am^{\{2\}}), \text{rank}(\Am^{\{3\}}) \right)$, where $\Am^{\{i\}}$ is the mode-$i$ matricization of $\A$, be the Tucker rank of $\A$. Then $\ranka(\A)\leq \rankm(\Am^{\{1\}})$. This implies that a tensor with low Tucker rank has low average rank. The low Tucker rank assumption used in some  applications, \textit{e.g.}, image completion \cite{liu2013tensor}, is applicable to the low average rank assumption. Third, if the CP rank of $\A$ is $r$, then its tubal rank is at most $r$ \cite{zhang2015exact}. Let $\A=\sum_{i=1}^{r} \aa_i^{(1)} \circ \aa_i^{(2)} \circ \aa_i^{(3)}$, where $\circ$ denotes the outer product, be the CP decomposition of $\A$. Then $\Abar=\sum_{i=1}^{r} \aa_i^{(1)} \circ \aa_i^{(2)} \circ \bar{\aa}_i^{(3)}$, where $\bar{\aa}_i^{(3)} = \fft(\aa_i^{(3)})$. So $\Abar$ has the CP rank at most $r$, and each frontal slice of $\Abar$ is the sum of $r$ rank-1 matrices. Thus, the tubal rank of $\A$ is at most $r$. In summary, we show that the low average rank assumption is weaker than the low Tucker rank and low CP rank assumptions. 

\section{Tensor Nuclear Norm (TNN)}
\label{sec_TNN}

In this section, we propose a new tensor nuclear norm which is a convex surrogate of tensor average rank. Based on t-SVD, one may have many different ways to define the tensor nuclear norm intuitively. We give a new and rigorous way to deduce the tensor nuclear norm from the t-product, such that the concepts and their properties are consistent with the matrix cases. This is important since it guarantees that the theoretical analysis of the tensor nuclear norm based tensor RPCA model in Section \ref{sec_tcTNN} can be done in a similar way to RPCA. Figure \ref{fig_tensoroperators} summarizes the way for the new definitions and their relationships. It begins with the known operator norm \cite{atkinson2009theoretical} and t-product. First, the tensor spectral norm is induced by the tensor operator norm by treating the t-product as an operator. Then the tensor nuclear norm is defined as the dual norm of the tensor spectral norm. Finally, we show that the tensor nuclear norm is the convex envelope of the tensor average rank within the unit ball of the tensor spectral norm.

\begin{figure}[!t]
	\centering
	\includegraphics[width=0.42\textwidth]{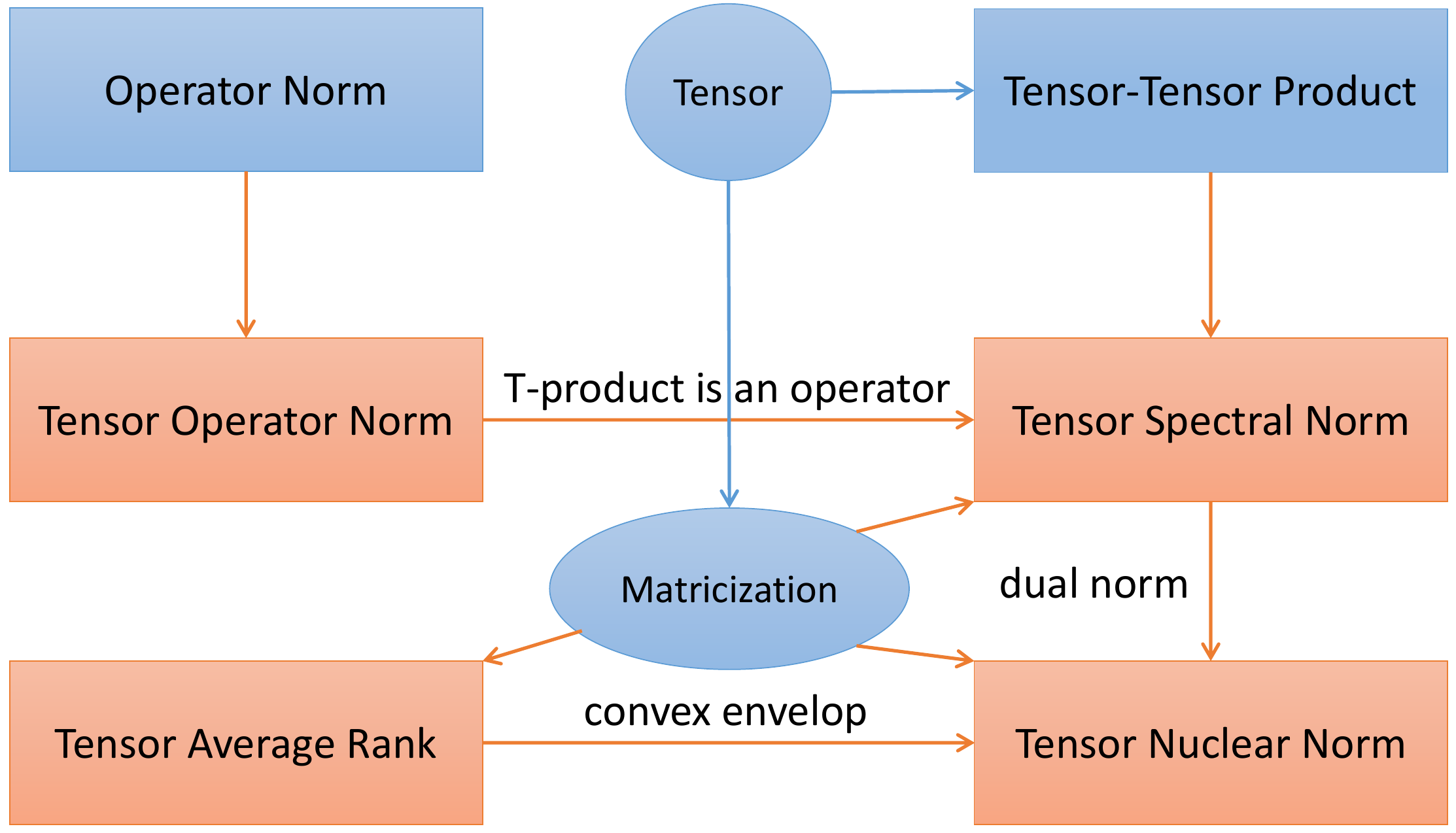}
	\caption{\small An illustration of the way to define the tensor nuclear norm and the relationship with other tensor concepts. First, the tensor operator norm is a special case of the known operator norm performed on the tensors. The tensor spectral norm is induced by the tensor operator norm by treating the tensor-tensor product as an operator. Then the tensor nuclear norm is defined as the dual norm of the tensor spectral norm. We also define the tensor average rank and show that its convex envelope is the tensor nuclear norm within the unit ball of the tensor spectral norm. As detailed in Section \ref{sec_TNN}, the tensor spectral norm, tensor nuclear norm and tensor average rank are also defined on the matricization of the tensor.}
	\label{fig_tensoroperators}
\end{figure}

Let us first recall the concept of operator norm \cite{atkinson2009theoretical}. Let $(V,\norm{\cdot}_V)$ and $(W,\norm{\cdot}_W)$ be normed linear spaces and $L: V\rightarrow W$ be the bounded linear operator between them, respectively. The operator norm of $L$ is defined as
\begin{align}\label{eq_generaloperatornnorm}
	\norm{L} = \sup_{\norm{\vv}_V\leq1} \norm{L(\vv)}_W.
\end{align}
Let $V=\mathbb{C}^{n_2}$, $W=\mathbb{C}^{n_1}$ and $L(\vv) = \Am\vv$, $\vv\in V$, where $\Am\in\mathbb{C}^{n_1\times n_2}$. Based on different choices of $\norm{\cdot}_V$ and $\norm{\cdot}_W$, many matrix norms can be induced by the operator norm in (\ref{eq_generaloperatornnorm}). For example, if $\norm{\cdot}_V$ and $\norm{\cdot}_W$ are $\norm{\cdot}_F$, then the operator norm (\ref{eq_generaloperatornnorm}) reduces to the matrix spectral norm.

Now, consider the normed linear spaces $(V,\norm{\cdot}_F)$ and $(W,\norm{\cdot}_F)$, where $V=\mathbb{R}^{n_2\times 1\times n_3}$, $W=\mathbb{R}^{n_1\times 1\times n_3}$, and $\LL: V\rightarrow W$ is a bounded linear operator. In this case, (\ref{eq_generaloperatornnorm}) reduces to the tensor operator norm
\begin{align}\label{tensoroperatornorm}
	\norm{\LL} = \sup_{\norm{\V}_F\leq1} \norm{\LL(\V)}_F.
\end{align}
As a special case, if $\LL(\V) = \A*\V$, where $\A\in\Rn$ and $\V\in V$, then the tensor operator norm (\ref{tensoroperatornorm}) gives the tensor spectral norm, denoted as $\norm{\A}$,
\begin{align}
	\norm{\A} 
	:= & \sup_{\norm{\V}_F\leq1} \norm{\A*\V}_F \notag\\
	 = & \sup_{\norm{\V}_F\leq1} \norm{\bcirc(\A)\cdot\unfold(\V)}_F \label{eq_deriveoper1}\\
	 = & \norm{\bcirc(\A)} \label{eq_deriveoper2},
\end{align}
where (\ref{eq_deriveoper1}) uses (\ref{tproducdef}), and (\ref{eq_deriveoper2}) uses the definition of matrix spectral norm.
\begin{defn}
	\textbf{(Tensor spectral norm)} The tensor spectral norm of $\A\in\Rn$ is defined as $\norm{\A} := \norm{\bcirc(\A)}$.
\end{defn}
\begin{table*}[!t]
	\centering
	\caption{Parallelism of sparse vector, low-rank matrix and low-rank tensor.}\label{tab_sparselowmlowt}
	\begin{tabular}{c||c|c|c}
		\hline
					   & Sparse vector     & Low-rank matrix  & Low-rank tensor (this work)\\ \hline
		Degeneracy of  & 1-D signal  $\x\in\mathbb{R}^n$      & 2-D correlated signals $\Xm\in\mathbb{R}^{n_1\times n_2}$ & 3-D correlated signals $\X\in\mathbb{R}^{n_1\times n_2\times n_3}$\\ \hline
		Parsimony concept & cardinality    & rank   		  & tensor average rank{\scriptsize\footnotemark}\\ \hline
		Measure  	   & $\ell_0$-norm $\norm{\x}_0$   & $\text{rank}(\Xm)$   & $\ranka(\X)$ \\ \hline
		Convex surrogate  & $\ell_1$-norm $\norm{\x}_1$ & nuclear norm $\norm{\Xm}_*$ & tensor nuclear norm $\norm{\X}_*$ \\ \hline
		Dual norm      & $\ell_\infty$-norm $\norm{\x}_\infty$ & spectral norm $\norm{\Xm}$ & tensor spectral norm $\norm{\X}$ \\ \hline
	\end{tabular}\\
	\footnotesize\footnotemark[2]{Strictly speaking, the tensor tubal rank, which bounds the tensor average rank, is also the parsimony concept of the low-rank tensor.} 
	\vspace{-0.5cm}
\end{table*}
By (\ref{eq_Fnorthogonal}) and (\ref{dftpro}), we have 
\begin{align}\label{eq_protsc}
	\norm{\A} = \norm{\bcirc(\A)} = \norm{\Ambar}.
\end{align}
This property is frequently used in this work. It is known that the matrix nuclear norm is the dual norm of the matrix spectral norm. Thus, we define the tensor nuclear norm, denoted as $\norm{\A}_*$, as the dual norm of the tensor spectral norm. For any $\B\in\Rn$ and $\Bmtilde\in\mathbb{C}^{n_1n_3\times n_2n_3}$, we have
\begin{align}
	\norm{\A}_* 
	:= & \sup_{\norm{\B}\leq1} \langle{\A},{\B}\rangle \label{eq_drivenuclearnorm0}\\
	 =&\sup_{\norm{\Bmbar}\leq1}\frac{1}{n_3}\langle{\Ambar},{\Bmbar}\rangle\label{eq_drivenuclearnorm1}\\
	 \leq &\frac{1}{n_3} \sup_{\norm{\Bmtilde}\leq1} |\langle{\Ambar},{\Bmtilde}\rangle|\label{eq_drivenuclearnorm2} \\
	 = &\frac{1}{n_3}\norm{\Ambar}_*,\label{eq_drivenuclearnorm3}\\
	 = & \frac{1}{n_3}\norm{\bcirc(\A)}_*,\label{eq_drivenuclearnorm4}
\end{align}
where (\ref{eq_drivenuclearnorm1}) is from (\ref{eq_proinproduct}), (\ref{eq_drivenuclearnorm2}) is due to the fact that $\Bmbar$ is a block diagonal matrix in $\mathbb{C}^{n_1n_3\times n_2n_3}$ while $\Bmtilde$ is an arbitrary matrix in $\mathbb{C}^{n_1n_3\times n_2n_3}$, (\ref{eq_drivenuclearnorm3}) uses the fact that the matrix nuclear norm is the dual norm of the matrix spectral norm, and (\ref{eq_drivenuclearnorm4}) uses (\ref{dftpro}) and~(\ref{eq_Fnorthogonal}). Now we show that there exists $\B\in\Rn$ such that the equality (\ref{eq_drivenuclearnorm2}) holds and thus $\norm{\A}_* = \frac{1}{n_3}\norm{\bcirc(\A)}_*$. Let $\A=\U*\Sbm*\V^*$ be the t-SVD of $\A$ and $\B=\U*\V^*$. We have
\begin{align}
	\langle{\A},{\B}\rangle 
	= & \langle{\U*\Sbm*\V^*},{\U*\V^*}\rangle \label{eqntnn1}\\
	= & \frac{1}{n_3} \inproduct{\overline{\U*\Sbm*\V^*}}{\overline{\U*\V^*}} \notag\\
	= & \frac{1}{n_3} \inproduct{\Umbar\Smbar\Vmbar^*}{\Umbar\Vmbar^*} = \frac{1}{n_3} \Tr(\Smbar)\notag\\
	= & \frac{1}{n_3} \norm{\Ambar}_* = \frac{1}{n_3} \norm{\bcirc(\A)}_*. \label{eqntnn2}
\end{align}
Combining (\ref{eq_drivenuclearnorm0})-(\ref{eq_drivenuclearnorm4}) and (\ref{eqntnn1})-(\ref{eqntnn2}) leads to $\norm{\A}_* = \frac{1}{n_3} \norm{\bcirc(\A)}_*$. On the other hand, by (\ref{eqntnn1})-(\ref{eqntnn2}), we have 
\begin{align}
	\norm{\A}_* 
	= & \langle{\U*\Sbm*\V^*},{\U*\V^*}\rangle \notag\\
 	= & \langle{\U^**\U*\Sbm},{\V^**\V}\rangle \notag\\
 	= & \langle{\Sbm},\I\rangle = \sum_{i=1}^{r} \Sbm(i,i,1), \label{defsipro}
\end{align}
where $r=\rankt(\A)$ is the tubal rank. Thus, we have the following definition of tensor nuclear norm.
\begin{defn}\label{defntnnours}
	\textbf{(Tensor nuclear norm)} Let $\A=\U*\Sbm*\V^*$ be the t-SVD of $\A\in\Rn$. The tensor nuclear norm of $\A$ is defined as
	\begin{equation*}
		\norm{\A}_*:= \inproduct{\Sbm}{\I} = \sum_{i=1}^{r} \Sbm(i,i,1),
	\end{equation*}
	where $r=\rankt(\A)$.
\end{defn}
From (\ref{defsipro}), it can be seen that only the information in the first frontal slice of $\Sbm$ is used when defining the tensor nuclear norm. Note that this is the first work which directly uses the singular values $\Sbm(:,:,1)$ of a tensor to define the tensor nuclear norm. Such a definition makes it consistent with the matrix nuclear norm. The above TNN definition is also different from existing works \cite{lu2016tensorrpca,zhang2014novel,semerci2014tensor}.
 
It is known that the matrix nuclear norm $\norm{\Am}_*$ is the convex envelope of the matrix rank $\rankm(\Am)$ within the set $\{\Am |\norm{\Am}\leq1\}$~\cite{fazel2002matrix}. Now we show that the tensor average rank and tensor nuclear norm have the same relationship. 
\begin{thm}\label{thm_convexenvelope}
	On the set $\{\A\in\Rn |\norm{\A}\leq1\}$, the convex envelope of the tensor average rank $\ranka(\A)$ is the tensor nuclear norm $\norm{\A}_*$.
\end{thm}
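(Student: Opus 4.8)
The plan is to identify the convex envelope of $\ranka$ over the set $B:=\{\A\in\Rn:\norm{\A}\le 1\}$ with the Fenchel biconjugate $h^{**}$ of $h:=\ranka+\iota_B$ (this is legitimate since $h\ge 0$, so the closed convex hull of $h$ equals $h^{**}$), and then to compute $h^{**}$ in closed form. The first, easy step is to check that $\norm{\cdot}_*$ is a convex minorant of $\ranka$ on $B$: it is convex, being the dual norm of the tensor spectral norm (see (\ref{eq_drivenuclearnorm0})), and for $\norm{\A}\le 1$ one has $\norm{\A}_*=\frac{1}{n_3}\norm{\bcirc(\A)}_*\le\frac{1}{n_3}\rankm(\bcirc(\A))\,\norm{\bcirc(\A)}=\ranka(\A)\,\norm{\A}\le\ranka(\A)$, using $\norm{\A}=\norm{\bcirc(\A)}$ from (\ref{eq_protsc}) and the definition of $\ranka$. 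Consequently $\norm{\cdot}_*\le h^{**}$ on $B$, and everything reduces to the reverse inequality.

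For $h^{**}\le\norm{\cdot}_*$ on $B$ I would transport the classical computation for the matrix case \cite{fazel2002matrix} to the Fourier domain, where $\bcirc(\A)$ is block-diagonalized by (\ref{dftpro}) and the relevant quantities split across the frontal slices $\Ambar^{(i)}$: $\norm{\A}\le 1$ holds iff $\norm{\Ambar^{(i)}}\le 1$ for every $i$ (by (\ref{eq_protsc}) and block-diagonality), $\ranka(\A)=\frac{1}{n_3}\sum_i\rankm(\Ambar^{(i)})$, and $\langle\A,\Z\rangle=\frac{1}{n_3}\sum_i\langle\Ambar^{(i)},\Zmbar^{(i)}\rangle$ by (\ref{eq_proinproduct}). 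The engine is the single-matrix identity $\sup_{\norm{M}\le 1}\big(\operatorname{Re}\langle M,Z\rangle-\rankm(M)\big)=\sum_j(\sigma_j(Z)-1)_+$, which follows from the von Neumann trace inequality $|\langle M,Z\rangle|\le\sum_j\sigma_j(M)\sigma_j(Z)$ together with $\sigma_j(M)\in[0,1]$, and, dually, $\sup_Z\big(\operatorname{Re}\langle M_0,Z\rangle-\sum_j(\sigma_j(Z)-1)_+\big)=\norm{M_0}_*$ whenever $\norm{M_0}\le 1$. Applying the first identity slice by slice yields $h^*(\Z)=\frac{1}{n_3}\sum_i\sum_j(\sigma_j(\Zmbar^{(i)})-1)_+$, and applying the second then yields $h^{**}(\A)=\frac{1}{n_3}\sum_i\norm{\Ambar^{(i)}}_*=\frac{1}{n_3}\norm{\bcirc(\A)}_*=\norm{\A}_*$ for $\A\in B$. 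Combined with the first step, this gives $h^{**}=\norm{\cdot}_*$ on $B$, i.e., $\norm{\cdot}_*$ is the convex envelope of $\ranka$ there.

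The step I expect to demand the most care is the bookkeeping imposed by the conjugate-symmetry constraint (\ref{keyprofffttensor}): the admissible tensors $\A$ (and dual tensors $\Z$) are real, so their frontal slices are tied together under $i\mapsto n_3-i+2$ rather than being free complex matrices. One has to verify that in each of the two suprema above the optimizer can be chosen to respect this symmetry — in particular that for a self-paired index the optimal slice can be taken real — so that summing the paired contributions (weight $1$ for the self-paired slices, weight $2$ otherwise) reproduces exactly $\frac{1}{n_3}\sum_{i=1}^{n_3}(\cdots)$ and collapses to $\frac{1}{n_3}\norm{\bcirc(\A)}_*$; this is the same real-versus-complex subtlety that the paper flags for the t-SVD construction. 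A fully constructive alternative for $h^{**}\le\norm{\cdot}_*$ avoids dual variables altogether: in the Fourier domain write each slice's SVD $\Ambar^{(i)}=\Umbar^{(i)}\Smbar^{(i)}(\Vmbar^{(i)})^*$ with ordered singular values $\sigma^{(i)}_{(1)}\ge\sigma^{(i)}_{(2)}\ge\cdots\ge 0$ in $[0,1]$, and decompose $\Ambar^{(i)}=\sum_{k\ge 1}(\sigma^{(i)}_{(k)}-\sigma^{(i)}_{(k+1)})\,P^{(i)}_k+(1-\sigma^{(i)}_{(1)})\cdot\bm{0}$, where $P^{(i)}_k$ is the rank-$k$ partial isometry onto the top $k$ singular directions; symmetrizing across conjugate slices and recombining produces a finite convex combination $\A=\sum_\ell\lambda_\ell\B_\ell$ with each $\B_\ell\in B$, and an Abel-summation identity gives $\sum_\ell\lambda_\ell\ranka(\B_\ell)=\frac{1}{n_3}\sum_i\sum_{k\ge1}(\sigma^{(i)}_{(k)}-\sigma^{(i)}_{(k+1)})\,k=\frac{1}{n_3}\sum_i\norm{\Ambar^{(i)}}_*=\norm{\A}_*$, once again forcing the envelope down to $\norm{\cdot}_*$.
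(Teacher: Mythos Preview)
Your proposal is correct and follows essentially the same route as the paper: compute the Fenchel conjugate of $\ranka$ restricted to the spectral ball (obtaining $\phi^*(\B)=\frac{1}{n_3}\sum_i(\sigma_i(\Bmbar)-1)_+$), then take the conjugate again to recover $\norm{\cdot}_*$. The paper carries this out on the full block-diagonal matrix $\Ambar$ via von Neumann's trace inequality in one shot, whereas you decompose slice-by-slice; these are equivalent since the singular values of a block-diagonal matrix are the union of those of its blocks. You are in fact more careful than the paper about the conjugate-symmetry constraint (\ref{keyprofffttensor}) needed to ensure the optimizing $\A$ (resp.\ $\Z$) is real---the paper handles this in a single sentence by choosing $\Umbar_1=\Umbar_2$, $\Vmbar_1=\Vmbar_2$ to match the SVD of a real tensor's $\Bmbar$, which already satisfies the symmetry. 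Your alternative constructive argument via the partial-isometry decomposition and Abel summation is not in the paper and gives a nice primal certificate that the envelope is attained, though it is not needed once the biconjugate has been computed.
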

We would like to emphasize that the proposed tensor spectral norm, tensor nuclear norm and tensor ranks are not arbitrarily defined. They are rigorously induced by the t-product and t-SVD. These concepts and their relationships are consistent with the matrix cases. This is important for the proofs, analysis and computation in optimization. Table \ref{tab_sparselowmlowt} summarizes the parallel concepts in sparse vector, low-rank matrix and low-rank tensor. With these elements in place, the existing proofs of low-rank matrix recovery provide a template for the more general case of low-rank tensor recovery.
 
Also, from the above discussions, we have the property
\begin{align}\label{eqpropertytnn1}
	\norm{\A}_* = \frac{1}{n_3} \norm{\bcirc(\A)}_* = \frac{1}{n_3} \norm{\Ambar}_*. 
\end{align}
It is interesting to understand the tensor nuclear norm from the perspectives of $\bcirc(\A)$ and $\Ambar$. The block circulant matrix can be regarded as a new way of matricization of $\A$ in the original domain. The frontal slices of $\A$ are arranged in a circulant way, which is expected to preserve more spacial relationships across frontal slices, compared with previous matricizations along a single dimension. Also, the block diagonal matrix $\Ambar$ can be regarded as a matricization of $\A$ in the Fourier domain. Its blocks on the diagonal are the frontal slices of $\Abar$, which contains the information across frontal slices of $\A$ due to the DFT on $\A$ along the third dimension. So $\bcirc(\A)$ and $\Ambar$ play a similar role to matricizations of $\A$ in different domains. Both of them capture the spacial information within and across frontal slices of $\A$. This intuitively supports our tensor nuclear norm definition.

Let $\Am=\Um\Sm\Vm^*$ be the skinny SVD of $\Am$. It is known that any subgradient of the nuclear norm at $\Am$ is of the form $\Um\Vm^*+\Wm$, where $\Um^*\Wm=\0$, $\Wm\Vm=\0$ and $\norm{\Wm}\leq 1$ \cite{watson1992characterization}. Similarly, for $\A\in\Rn$ with tubal rank $r$, we also have the skinny t-SVD, \textit{i.e.}, $\A=\U*\Sbm*\V^*$, where $\U\in\mathbb{R}^{n_1\times r\times n_3}$, $\Sbm\in\mathbb{R}^{r\times r\times n_3}$, and $\V\in\mathbb{R}^{n_2\times r\times n_3}$, in which $\U^**\U=\I$ and $\V^**\V=\I$. The skinny t-SVD will be used throughout this paper. With skinny t-SVD, we introduce the subgradient of the tensor nuclear norm, which plays an important role in the proofs. 

\begin{thm}\textbf{(Subgradient of tensor nuclear norm)}\label{thmsubgradient}
	Let $\A\in\Rn$ with $\rankt(\A)=r$ and its skinny t-SVD be $\A=\U*\Sbm*\V^*$. The subdifferential (the set of subgradients) of $\norm{\A}_*$ is $\partial \norm{\A}_*= \{\U*\V^*+\W |\U^**\W=\0, \W*\V=\0,\norm{\W}\leq 1 \}$.
\end{thm}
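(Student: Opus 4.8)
The plan is to push everything to the Fourier domain, where the claim becomes a slicewise statement about the matrix nuclear norm for which the analogous characterization is classical \cite{watson1992characterization}. Two ingredients set this up. First, since $\norm{\cdot}_*$ is a norm whose dual is the tensor spectral norm $\norm{\cdot}$ --- this is how $\norm{\cdot}_*$ was introduced in~(\ref{eq_drivenuclearnorm0}), and biduality of norms in finite dimensions gives the converse --- a tensor $\G\in\Rn$ lies in $\partial\norm{\A}_*$ if and only if $\norm{\G}\le1$ and $\langle\G,\A\rangle=\norm{\A}_*$. Second, by~(\ref{eq_protsc}), (\ref{eqpropertytnn1}) and the Fourier representations of the t-product and the conjugate transpose, all of $\norm{\G}$, $\norm{\A}_*$ and $\langle\G,\A\rangle$ decompose over the diagonal blocks of $\Gmbar$ and $\Ambar$: $\norm{\G}=\max_i\norm{\Gmbar^{(i)}}$, $\norm{\A}_*=\tfrac1{n_3}\sum_i\norm{\Ambar^{(i)}}_*$, $\langle\G,\A\rangle=\tfrac1{n_3}\sum_i\langle\Ambar^{(i)},\Gmbar^{(i)}\rangle$. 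I fix one skinny t-SVD $\A=\U*\Sbm*\V^*$; its orthogonality relations say each $\Umbar^{(i)}$, $\Vmbar^{(i)}$ has orthonormal columns.

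For ``$\supseteq$'', take $\G=\U*\V^*+\W$ with $\U^**\W=\0$, $\W*\V=\0$, $\norm{\W}\le1$. The exposing identity $\langle\G,\A\rangle=\norm{\A}_*$ is the computation already performed in~(\ref{eqntnn1})--(\ref{defsipro}) for the term $\langle\U*\V^*,\U*\Sbm*\V^*\rangle$, plus the observation that $\langle\W,\U*\Sbm*\V^*\rangle=\langle\U^**\W,\Sbm*\V^*\rangle=0$ by the adjoint relation of the t-product and $\U^**\W=\0$. For $\norm{\G}\le1$, the hypotheses become $(\Umbar^{(i)})^*\Wmbar^{(i)}=\0$, $\Wmbar^{(i)}\Vmbar^{(i)}=\0$, $\norm{\Wmbar^{(i)}}\le1$ for all $i$, and splitting any unit vector into its component in the column space of $\Vmbar^{(i)}$ and the orthogonal remainder gives $\norm{\Umbar^{(i)}(\Vmbar^{(i)})^*+\Wmbar^{(i)}}\le1$ for every $i$, hence $\norm{\G}\le1$. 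So $\G\in\partial\norm{\A}_*$.

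For ``$\subseteq$'', suppose $\norm{\G}\le1$ and $\langle\G,\A\rangle=\norm{\A}_*$. By matrix trace duality $\operatorname{Re}\langle\Ambar^{(i)},\Gmbar^{(i)}\rangle\le\norm{\Ambar^{(i)}}_*\norm{\Gmbar^{(i)}}\le\norm{\Ambar^{(i)}}_*$ for each $i$, while the left-hand sides sum to $n_3\norm{\A}_*=\sum_i\norm{\Ambar^{(i)}}_*$; hence each is an equality and therefore $\Gmbar^{(i)}\in\partial\norm{\Ambar^{(i)}}_*$. Invoking the classical matrix characterization of $\partial\norm{\cdot}_*$ on each slice and putting $\Wmbar^{(i)}:=\Gmbar^{(i)}-\Umbar^{(i)}(\Vmbar^{(i)})^*$, I get $(\Umbar^{(i)})^*\Wmbar^{(i)}=\0$, $\Wmbar^{(i)}\Vmbar^{(i)}=\0$, $\norm{\Wmbar^{(i)}}\le1$. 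Because $\G$ and $\U*\V^*$ are real tensors, the block families $\{\Gmbar^{(i)}\}$ and $\{\Umbar^{(i)}(\Vmbar^{(i)})^*\}$ obey the conjugate-symmetry relation~(\ref{keyprofffttensor}) (the latter by the construction in Theorem~\ref{thmtsvd}), so $\{\Wmbar^{(i)}\}$ does too; by the converse part of~(\ref{keyprofffttensor}) there is a real $\W\in\Rn$ with these Fourier slices. Translating the three slicewise identities back through~(\ref{eq_protsc}) and the Fourier form of the t-product and conjugate transpose yields $\G=\U*\V^*+\W$ with $\U^**\W=\0$, $\W*\V=\0$, $\norm{\W}\le1$, finishing the proof.

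The hard part will be this reassembly step: one must ensure the slicewise subgradients $\Gmbar^{(i)}\in\partial\norm{\Ambar^{(i)}}_*$ are representable through the \emph{common} orthogonal factors $\Umbar^{(i)},\Vmbar^{(i)}$ coming from one fixed skinny t-SVD of $\A$, so that the tensor $\W$ obtained by inverse FFT is simultaneously real and compatible with both global constraints $\U^**\W=\0$ and $\W*\V=\0$. The mechanism that makes this possible is precisely the conjugate-symmetry structure~(\ref{keyprofffttensor}) of the Fourier slices respected by the t-SVD construction; this cross-slice coupling is the one feature absent from the purely matricial argument and the place where care is needed.
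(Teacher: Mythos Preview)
Your $\supseteq$ argument is correct and coincides with the paper's own proof: the paper verifies only that every $\G=\U*\V^*+\W$ with $\U^**\W=\0$, $\W*\V=\0$, $\norm{\W}\le1$ satisfies $\langle\G,\A\rangle=\norm{\A}_*$ and $\norm{\G}\le1$, the latter being declared ``obvious'' (you supply the slicewise check). The paper does \emph{not} attempt the reverse inclusion at all, and only $\supseteq$ is used downstream: in the dual-certificate argument one merely needs to \emph{choose} a subgradient of this form.

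Your attempt at $\subseteq$ has a genuine gap, and it is not the conjugate-symmetry reassembly you flag at the end. The obstruction is rank deficiency across slices: $r=\rankt(\A)=\max_i\rank(\Ambar^{(i)})$, so some slice may have $r_i:=\rank(\Ambar^{(i)})<r$. Watson's matrix result then writes $\Gmbar^{(i)}$ as a rank-$r_i$ polar part plus a remainder orthogonal to the $r_i$-dimensional column and row spaces of $\Ambar^{(i)}$, whereas your $\Umbar^{(i)},\Vmbar^{(i)}$ carry $r$ columns. Setting $\Wmbar^{(i)}:=\Gmbar^{(i)}-\Umbar^{(i)}(\Vmbar^{(i)})^*$ does \emph{not} force $(\Umbar^{(i)})^*\Wmbar^{(i)}=\0$, because the extra $r-r_i$ columns of $\Umbar^{(i)}$ are unconstrained by $\Gmbar^{(i)}\in\partial\norm{\Ambar^{(i)}}_*$. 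Concretely, take $n_1=n_2=n_3=2$ with $\Ambar^{(1)}$ of rank $2$ and $\Ambar^{(2)}$ of rank $1$ (both real, so $\A$ is real and $r=2$). Then $\Umbar^{(2)},\Vmbar^{(2)}$ are $2\times2$ unitary, the constraints $(\Umbar^{(2)})^*\Wm=\0$, $\Wm\Vmbar^{(2)}=\0$ force $\Wm=\0$, and the right-hand set restricted to slice $2$ collapses to the single point $\Umbar^{(2)}(\Vmbar^{(2)})^*$; yet $\partial\norm{\Ambar^{(2)}}_*$ is a nondegenerate set containing, for instance, the rank-one polar factor alone. Thus the stated equality can fail; only $\supseteq$ holds in general, which is all the paper proves and all it needs. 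Your slicewise argument \emph{does} go through in the generic case where every $\Ambar^{(i)}$ has rank exactly $r$.
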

 
\section{Exact Recovery Guarantee of TRPCA}\label{sec_tcTNN}

With TNN defined above, we now consider the exact recovery guarantee of TRPCA in (\ref{trpca}). The problem we study here is to recover a low tubal rank tensor $\LL_0$ from highly corrupted measurements $\X=\LL_0+\Sbm_0$. In this section, we show that under certain assumptions, the low tubal rank part $\LL_0$ and sparse part $\Sbm_0$ can be exactly recovered by solving convex program (\ref{trpca}). We will also give the optimization detail for solving (\ref{trpca}).

\subsection{Tensor Incoherence Conditions}

Recovering the low-rank and sparse components from their sum suffers from an identifiability issue. For example, the tensor $\X$, with $x_{ijk}=1$ when $i=j=k=1$ and zeros everywhere else, is both low-rank and sparse. One is not able to identify the low-rank component and the sparse component in this case. To avoid such pathological situations, we need to assume that the low-rank component $\LL_0$ is not sparse. To this end, we assume $\LL_0$ to satisfy some incoherence conditions. We denote $\mathring{\mathfrak{e}}_i$ as the tensor column basis, which is a tensor of size $n_1\times 1\times n_3$ with its $(i,1,1)$-th entry equaling  1 and the rest equaling 0 \cite{zhang2015exact}. We also define the tensor tube basis $\dot{\mathfrak{e}}_k$, which is a tensor of size $1\times 1\times n_3$ with its $(1,1,k)$-th entry equaling  1 and the rest equaling 0.

\begin{defn} \textbf{(Tensor Incoherence Conditions)} 
	For $\LL_0\in\Rn$, assume that $\rankt(\LL_0)=r$ and it has the skinny t-SVD $\LL_0=\U*\Sbm*\V^*$, where $\U\in\mathbb{R}^{n_1\times r\times n_3}$ and
	$\V\in\mathbb{R}^{n_2\times r\times n_3}$ satisfy $\U^**\U=\I$ and $\V^**\V=\I$, and 
	$\Sbm\in\mathbb{R}^{r\times r\times n_3}$ is an f-diagonal tensor. Then $\LL_0$ is said to satisfy the tensor incoherence conditions with parameter $\mu$ if
	\begin{align}
		\max_{i=1,\cdots,n_1} \norm{\U^**\mathring{\mathfrak{e}}_i}_F\leq\sqrt{\frac{\mu r}{n_1n_3}}, \label{tic1}\\
		\max_{j=1,\cdots,n_2} \norm{\V^**\mathring{\mathfrak{e}}_j}_F\leq\sqrt{\frac{\mu r}{n_2n_3}},\label{tic2}
	\end{align}
	\begin{equation}
		\norm{\U*\V^*}_\infty\leq \sqrt{\frac{\mu r}{n_1n_2n_3^2}}.\label{tic3}
	\end{equation}	
\end{defn}
The exact recovery guarantee of RPCA \cite{RPCA} also requires some incoherence conditions. Due to property (\ref{eq_proFnormNuclear}), conditions (\ref{tic1})-(\ref{tic2}) have equivalent matrix forms in the Fourier domain, and they are intuitively similar to the matrix incoherence conditions (1.2) in \cite{RPCA}. But the joint incoherence condition (\ref{tic3}) is more different from the matrix case (1.3) in \cite{RPCA}, since it does not have an equivalent matrix form in the Fourier domain. As observed in \cite{chen2013incoherence}, the joint incoherence condition is not necessary for low-rank matrix completion. However, for RPCA, it is unavoidable for polynomial-time algorithms. In our proofs, the joint incoherence (\ref{tic3}) condition is necessary. Another identifiability issue arises if the sparse tensor $\Sbm_0$ has low tubal rank. This can be avoided by assuming that the support of $\Sbm_0$ is uniformly distributed.

\subsection{Main Results}
Now we show that the convex program (\ref{trpca}) is able to perfectly recover the low-rank and sparse components. We define $n_{(1)}=\max(n_1,n_2)$ and $n_{(2)}=\min ({n_1,n_2})$.
\begin{thm}\label{thm1}
	Suppose that $\LL_0\in \mathbb{R}^{n\times n\times n_3}$ obeys (\ref{tic1})-(\ref{tic3}). Fix any $\nss$ tensor $\M$ of signs. Suppose that the support set $\Omegat$ of $\Sbm_0$ is uniformly distributed among all sets of cardinality $m$, and that $\sgn{[\Sbm_0]_{ijk}}=[\M]_{ijk}$ for all $(i,j,k)\in\Omegat$. Then, there exist universal constants $c_1, c_2>0$ such that with probability at least $1-c_1(nn_3)^{-c_2}$ (over the choice of support of $\Sbm_0$), $(\LL_0,\Sbm_0)$ is the unique minimizer to (\ref{trpca}) with $\lambda = 1/\sqrt{nn_3}$, provided that
	\begin{equation}
		\rankt(\LL_0)\leq \frac{\rho_r nn_3}{\mu(\log(nn_3))^{2}} \text{ and } m\leq \rho_sn^2n_3,
	\end{equation}
	where $\rho_r$ and $\rho_s$ are positive constants. If $\LL_0\in\Rn$ has rectangular frontal slices, TRPCA with $\lambda = 1/\sqrt{\none n_3}$ succeeds with probability at least $1-c_1(\none n_3)^{-c_2}$, provided that $\rankt(\LL_0)\leq \frac{ \rho_r \ntwo n_3}{\mu(\log(\none n_3))^{2}} \text{ and } m\leq \rho_sn_1n_2 n_3$.
\end{thm}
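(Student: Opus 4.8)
\emph{Proof strategy.} The plan is to transplant the dual-certificate argument of matrix RPCA \cite{RPCA} to the t-product algebra, using throughout the Fourier-domain dictionary $\norm{\A}_*=\tfrac{1}{n_3}\norm{\Ambar}_*$ (see (\ref{eqpropertytnn1})), $\norm{\A}=\norm{\Ambar}$ (see (\ref{eq_protsc})), $\inproduct{\A}{\B}=\tfrac{1}{n_3}\inproduct{\Ambar}{\Bmbar}$ (see (\ref{eq_proinproduct})), $\norm{\A}_F=\tfrac{1}{\sqrt{n_3}}\norm{\Ambar}_F$ (see (\ref{eq_proFnormNuclear})), and the subdifferential $\partial\norm{\LL_0}_*=\{\U*\V^*+\W:\U^**\W=\0,\ \W*\V=\0,\ \norm{\W}\le1\}$ from Theorem~\ref{thmsubgradient}. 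First I would fix the geometry: write $\bm T=\{\U*\Q^*+\HH*\V^*:\Q\in\mathbb{R}^{n_2\times r\times n_3},\ \HH\in\mathbb{R}^{n_1\times r\times n_3}\}$ for the tensor tangent subspace at $\LL_0$, with orthogonal projections $\PT$ and $\PTo$, and let $\Pomega$ be the projection onto tensors supported on $\Omegat$. A routine convex-analysis reduction — verbatim from \cite{RPCA} once the duality pairing and the subgradient formula above are in place — shows that $(\LL_0,\Sbm_0)$ is the unique minimizer of (\ref{trpca}) with $\lambda=1/\sqrt{\none n_3}$ provided (i) $\norm{\Pomega\PT}<1$, and (ii) there exists an \emph{inexact dual certificate} $\W$ with $\norm{\PT(\W)-\U*\V^*}_F$ tiny, $\norm{\PTo(\W)}<\tfrac12$, $\norm{\Pomega(\W)-\lambda\,\sgn{\Sbm_0}}_F$ tiny, and $\norm{\Pomegao(\W)}_\infty<\tfrac{\lambda}{2}$, with the exact tolerances those of \cite{RPCA}.

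Second, I would establish the probabilistic estimates that underpin both (i) and the certificate construction. Through (\ref{dftpro}) the incoherence conditions (\ref{tic1})--(\ref{tic3}) become blockwise bounds on the $\Umbar^{(i)},\Vmbar^{(i)}$ and on $\rankm(\Ambar)=\sum_i\rankm(\Ambar^{(i)})\le rn_3$, which control the quantities $\max_{i,j,k}\norm{\PT(\ei*\ek*\ej^*)}_F$ and $\max_{i,j,k}\norm{\PT(\ei*\ek*\ej^*)}_\infty$, exactly as the matrix incoherence controls $\norm{\PT(e_ie_j^*)}$. Since $\Omegat$ is a uniformly random support of size $m\le\rho_s n_1n_2n_3$, the standard reduction to the Bernoulli($\rho_s$) model applies, and a matrix Bernstein / noncommutative-Khintchine argument — carried out in the original tensor coordinates, where $\Pomega$ is an honest coordinate projection and $\PT\Pomega\PT$ is a sum of independent rank-one operators indexed by the sampled $(i,j,k)$ — yields $\norm{\PT-\rho_s^{-1}\PT\Pomega\PT}\le\tfrac12$ with probability $1-O((nn_3)^{-c})$ once $\rho_s$ is below an absolute constant; in particular $\norm{\Pomega\PT}^2\le\rho_s+\tfrac12<1$. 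The same machinery, applied to sums of independent rank-one tensor operators built from $\ei*\ek*\ej^*$, furnishes the auxiliary $\ell_\infty\to\ell_\infty$ and $\ell_\infty\to\text{spectral}$ bounds on $\rho_s^{-1}\PT\Pomega\PT-\PT$ and on the restriction of $\Pomega$ to $\bm T$ that the golfing analysis consumes.

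Third, with these bounds in hand I would assemble $\W=\WL+\WS$. The piece $\WS\in\range{\PTo}$ is produced by the standard least-squares / Neumann-series construction (convergent because $\norm{\Pomega\PT}<1$) so that $\Pomega(\WS)$ reproduces the required sign pattern $\lambda\,\sgn{\Sbm_0}$ on $\Omegat$, while $\norm{\WS}$ and $\norm{\Pomegao(\WS)}_\infty$ stay small by the restricted-isometry and $\ell_\infty$ bounds established above. The piece $\WL$ is produced by the golfing scheme: partition $\Omegat^\bot$ into $j_0\asymp\log(nn_3)$ independent Bernoulli batches of rate $q$, set $\D_0=\U*\V^*$, $\Y_0=\0$, iterate $\Y_j=\Y_{j-1}+q^{-1}\bm{\mathcal{P}}_{\Omegat_j}\D_{j-1}$ and $\D_j=\U*\V^*-\PT\Y_j$, and take $\WL=\Y_{j_0}$ (so $\Pomega(\WL)=\0$). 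Since $\D_{j-1}\in\bm T$, the identity $\D_j=(\PT-q^{-1}\PT\bm{\mathcal{P}}_{\Omegat_j}\PT)\D_{j-1}$ forces $\norm{\D_j}_F$ to contract geometrically, while $\norm{\D_j}_\infty$ — seeded by $\norm{\D_0}_\infty=\norm{\U*\V^*}_\infty\le\sqrt{\mu r/(n_1n_2n_3^2)}$ from (\ref{tic3}) — stays controlled through the $\ell_\infty\to\ell_\infty$ bound; after $j_0$ steps this gives $\norm{\PT\WL-\U*\V^*}_F=\norm{\D_{j_0}}_F\le(nn_3)^{-2}$, $\norm{\PTo\WL}<\tfrac14$, and $\norm{\Pomegao\WL}_\infty=\norm{\WL}_\infty<\tfrac{\lambda}{4}$, and it is exactly this $\ell_\infty$ golfing bound that produces the rank condition $\rankt(\LL_0)\le\rho_r\ntwo n_3/(\mu(\log(\none n_3))^2)$. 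Adding the two pieces verifies all four inequalities of (ii), which settles the square case; the rectangular case then follows by padding $\LL_0$ into an $\none\times\none\times n_3$ tensor and tracking $\mu$ and $r$ through the embedding.

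The main obstacle — the only place where the proof genuinely departs from a mechanical tensorization of \cite{RPCA} — is the clash of domains: low tubal rank is a statement about the block-diagonal $\Ambar$ in the Fourier domain, whereas the sparsity of $\Sbm_0$ and the sampling operator $\Pomega$ live in the original domain, where $\Pomega$ does not act blockwise on $\Ambar$. Every concentration step (for $\PT\Pomega\PT$ and for each golfing residual) must therefore be run in the original tensor coordinates while the Fourier-domain incoherence is fed in simultaneously, and the $n_3$-factors must be threaded carefully — notably the $n_3^2$ in the denominator of the joint incoherence (\ref{tic3}) and the $\tfrac{1}{n_3}$ built into $\ranka$ and $\norm{\cdot}_*$ — through every $\ell_\infty$ estimate, so that $\lambda=1/\sqrt{\none n_3}$ emerges as exactly the right threshold and the whole statement collapses to Theorem~1.1 of \cite{RPCA} when $n_3=1$. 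That bookkeeping, together with confirming that (\ref{tic3}) is genuinely needed only where $\norm{\U*\V^*}_\infty$ seeds the golfing residuals, is where the real effort lies; the rest is a faithful tensor analogue of the matrix argument, consistent with the parallelism recorded in Table~\ref{tab_sparselowmlowt}.
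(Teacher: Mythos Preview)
Your architecture matches the paper's exactly --- the paper's Lemma~\ref{lem_dual2} for the convex-analysis reduction, $\W=\WL+\WS$ with golfing on batches of $\Omegat^c$ for $\WL$ and the Neumann series on $\Omegat$ for $\WS$, and the same trio of concentration lemmas for $\PT-\rho^{-1}\PT\Pomega\PT$ (operator norm, $\ell_\infty\to\ell_\infty$, and $\ell_\infty$-to-spectral) --- and your diagnosis of the domain clash is on target. The gap is in the $\WS$ half. You write that $\norm{\WS}$ and $\norm{\Pomegao\WS}_\infty$ ``stay small by the restricted-isometry and $\ell_\infty$ bounds established above,'' but those bounds only give convergence of the Neumann series, not smallness of its sum. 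In both \cite{RPCA} and this paper the mechanism is entirely different: one conditions on $\Omegat$ and then exploits the \emph{randomness of the signs} of $\Sbm_0$ via Hoeffding, which is why the proof first passes from the fixed-sign hypothesis of the theorem to i.i.d.\ symmetric signs and derandomizes at the end --- a reduction your proposal never invokes.

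More concretely, the bound $\norm{\WS}<\tfrac14$ needs $\lambda\norm{\sgn{\Sbm_0}}$ small, i.e.\ $\norm{\M}\le\varphi(\rho)\sqrt{nn_3}$ with $\varphi(\rho)\to0$. For matrices this is textbook random-matrix theory, but here $\norm{\M}=\max_i\norm{\Mmbar^{(i)}}$ and each Fourier slice $\Mmbar^{(i)}$ has \emph{dependent} entries (linear combinations through $\F_{n_3}$ of the i.i.d.\ entries of $\M$), so none of your Step~2 tools apply. The paper proves this as a standalone result (Lemma~\ref{lemspectrm}) via a sub-gaussian/$\epsilon$-net argument run slice by slice in the Fourier domain, and the companion bound on $\norm{\R(\M)}$ for the Neumann tail needs a bespoke net over block-sparse Fourier vectors. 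This is precisely where the domain clash you flag actually bites, and it is the one genuinely new technical ingredient you have not accounted for.
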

The above result shows that for incoherent $\LL_0$, the perfect recovery is guaranteed with high probability for $\rankt(\LL_0)$ on the order of $nn_3/(\mu(\log n n_3)^2)$ and a number of nonzero entries in $\Sbm_0$ on the order of $n^2n_3$. For $\Sbm_0$, we make only one assumption on the random location distribution, but no assumption about the magnitudes or signs of the nonzero entries. Also TRPCA is parameter free. The mathematical analysis implies that the parameter $\lambda = 1/\sqrt{nn_3}$ leads to the correct recovery. Moreover, since the t-product of 3-way tensors reduces to the standard matrix-matrix product when $n_3=1$, the tensor nuclear norm reduces to the matrix nuclear norm. Thus, RPCA is a special case of TRPCA and the guarantee of RPCA in Theorem 1.1 in \cite{RPCA} is a special case of our Theorem \ref{thm1}. Both our model and theoretical guarantee are consistent with RPCA. Compared with SNN \cite{huang2014provable}, our tensor extension of RPCA is much more simple and elegant.

The detailed proof of Theorem \ref{thm1} can be found in the supplementary material. It is interesting to understand our proof from the perspective of the following equivalent formulation
\begin{equation}\label{trpcaeqnmixed}
	\min_{\LL,\ \E} \ \frac{1}{n_3}\left(\norm{\Lmbar}_*+\lambda\norm{\bcirc(\E)}_1\right), \ \st \ \X=\LL+\E,
\end{equation}
where (\ref{eqpropertytnn1}) is used. Program (\ref{trpcaeqnmixed}) is a mixed model since the low-rank regularization is performed on the Fourier domain while the sparse regularization is performed on the original domain. Our proof of Theorem \ref{thm1} is also conducted based on the interaction between both domains. By interpreting the tensor nuclear norm of $\LL$ as the matrix nuclear norm of $\Lmbar$ (with a factor $\frac{1}{n_3}$) in the Fourier domain, we are then able to use some existing properties of the matrix nuclear norm in the proofs. The analysis for the sparse term is kept on the original domain since the $\ell_1$-norm has no equivalent form in the Fourier domain. Though both two terms of the objective function of (\ref{trpcaeqnmixed}) are given on two matrices ($\Lmbar$ and $\bcirc(\E)$), the analysis for model (\ref{trpcaeqnmixed}) is very different from that of matrix RPCA. The matrices $\Lmbar$ and $\bcirc(\E)$ can be regarded as two matricizations of the tensor objects $\LL$ and $\E$, respectively. Their structures are more complicated than those in matrix RPCA, and thus make the proofs different from 
\cite{RPCA}. For example, our proofs require proving several bounds of norms on random tensors. Theses results and proofs, which have to use the properties of block circulant matrices and the Fourier transformation, are completely new. Some proofs are challenging due to the dependent structure of $\bcirc(\E)$ for $\E$ with an independent elements assumption. Also, TRPCA is of a different nature from the tensor completion problem \cite{zhang2015exact}. The proof of the exact recovery of TRPCA is more challenging since the $\ell_1$-norm (and its dual norm $\ell_\infty$-norm used in (\ref{tic3})) has no equivalent formulation in the Fourier domain.

It is worth mentioning that this work focuses on the analysis for 3-way tensors. But it is not difficult to generalize our model in (\ref{trpca}) and results in Theorem \ref{thm1} to the case of order-$p$ ($p\geq3$) tensors, by using the t-SVD for order-$p$ tensors in \cite{martin2013order}. 

When considering the application of TRPCA, the way for constructing a 3-way tensor from data is important. The reason is that the t-product is orientation dependent, and so is the tensor nuclear norm. Thus, the value of TNN may be different if the tensor is rotated. For example, a 3-channel color image can be formatted as 3 different sizes of tensors. Therefore, when using TRPCA which is based on TNN, one has to format the data into tensors in a proper way by leveraging some priori knowledge, \textit{e.g.}, the low tubal rank property of the constructed tensor.
	
\begin{algorithm}[!t]
	\caption{Tensor Singular Value Thresholding (t-SVT)}
	\textbf{Input:} $\Y\in\Rn$, $\tau>0$.\\
	\textbf{Output:} $\mathcal{D}_\tau(\Y)$ as defined in (\ref{eqn_tsvt}).
	\begin{enumerate}[1.]
		\item Compute $\Ybar=\fft(\Y,[\ ],3)$.
		\item Perform matrix SVT on each frontal slice of $\Ybar$ by
		
		\textbf{for} $i=1,\cdots, \lceil \frac{n_3+1}{2}\rceil$ \textbf{do}
		
		\hspace*{0.4cm} $[\Um,\Sm,\Vm] = \text{SVD}(\Ymbar^{(i)})$;
		
		\hspace*{0.4cm}
		$\Wmbar^{(i)} = \Um \cdot (\Sm-\tau)_+ \cdot \Vm^*$;
		
		\textbf{end for}
		
		\textbf{for} $i=\lceil \frac{n_3+1}{2}\rceil+1,\cdots, n_3$ \textbf{do}
		
		\hspace*{0.4cm}$\Wmbar^{(i)} = \conj(\Wmbar^{(n_3-i+2)})$;
		
		\textbf{end for}
		
		\item Compute $\mathcal{D}_\tau(\Y)=\ifft(\Wbar,[\ ],3)$.
	\end{enumerate}
	\label{alg_tsvt}	
\end{algorithm} 

\subsection{Tensor Singular Value Thresholding}

Problem (\ref{trpca}) can be solved by the standard Alternating Direction Method of Multiplier (ADMM) \cite{lu2018unified}. A key step is to compute the proximal operator of TNN
\begin{equation}\label{potnn}
	\min_{\X\in\Rn} \ \tau \norm{\X}_*+\frac{1}{2}\norm{\X - \Y}_F^2.
\end{equation}
We show that it also has a closed-form solution as the proximal operator of the matrix nuclear norm. 
Let $\Y= \U * \Sbm *\V^*$ be the tensor SVD of $\Y\in\Rn$. For each $\tau>0$, we define the tensor Singular Value Thresholding (t-SVT) operator as follows
\begin{equation}\label{eqn_tsvt}
	\mathcal{D}_{\tau}(\Y) = \U * \Sbm_\tau *\V^*,
\end{equation}
where 
\begin{equation}\label{staudef}
	\Sbm_\tau = \ifft((\Sbar -\tau)_+,[\ ],3).
\end{equation}
Note that $\Sbar$ is a  real tensor. Above $t_+$ denotes the positive part of $t$, \textit{i.e.}, $t_+ = \max(t,0)$. That is, this operator simply applies a soft-thresholding rule to the singular values $\Sbar$ (not $\Sbm$) of the frontal slices of $\Ybar$, effectively shrinking these towards zero. The t-SVT operator is the proximity operator associated with TNN. 
\begin{thm}\label{thmtsvt}
	For any $\tau>0$ and $\Y \in\Rn$, the tensor singular value thresholding operator (\ref{eqn_tsvt}) obeys 
	\begin{equation}\label{thmeqnsvt}
		\mathcal{D}_{\tau}(\Y) = \arg\min_{\X\in\Rn} \ \tau \norm{\X}_*+\frac{1}{2}\norm{\X - \Y}_F^2.
	\end{equation}
\end{thm}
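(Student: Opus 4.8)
The plan is to diagonalize both terms of the objective in~(\ref{potnn}) simultaneously via the DFT along the third mode. By~(\ref{eqpropertytnn1}) we have $\norm{\X}_* = \frac{1}{n_3}\norm{\Xmbar}_*$, and since the DFT along the third mode and the $\bdiag$ operator are linear, $\norm{\X-\Y}_F^2 = \frac{1}{n_3}\norm{\Xmbar-\Ymbar}_F^2$ by~(\ref{eq_proFnormNuclear}); hence minimizing~(\ref{potnn}) over $\X\in\Rn$ is equivalent to minimizing $\tau\norm{\Xmbar}_* + \frac12\norm{\Xmbar-\Ymbar}_F^2$ over all block-diagonal matrices $\Xmbar=\bdiag(\Xbar)$ whose diagonal blocks obey the conjugate-symmetry property~(\ref{keyprofffttensor}) --- which is exactly the requirement that $\X$ be a real tensor. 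Because $\Xmbar$ and $\Ymbar$ are block diagonal, $\norm{\Xmbar}_*=\sum_{i=1}^{n_3}\norm{\Xmbar^{(i)}}_*$ and $\norm{\Xmbar-\Ymbar}_F^2=\sum_{i=1}^{n_3}\norm{\Xmbar^{(i)}-\Ymbar^{(i)}}_F^2$, so the problem splits into $n_3$ independent matrix subproblems, one per frontal slice.

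Next I would solve each subproblem $\min \tau\norm{\Xmbar^{(i)}}_*+\frac12\norm{\Xmbar^{(i)}-\Ymbar^{(i)}}_F^2$ \emph{without} the symmetry constraint, using the known closed form of the proximal operator of the matrix nuclear norm: the unique minimizer is the matrix singular value thresholding $\Wmbar^{(i)}=\Umbar^{(i)}(\Smbar^{(i)}-\tau)_+(\Vmbar^{(i)})^*$, where $\Ymbar^{(i)}=\Umbar^{(i)}\Smbar^{(i)}(\Vmbar^{(i)})^*$ is any SVD (the strongly convex objective makes the minimizer unique, and $\Umbar^{(i)}(\Smbar^{(i)}-\tau)_+(\Vmbar^{(i)})^*$ does not depend on the chosen SVD). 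Stacking the $\Wmbar^{(i)}$ yields the unconstrained minimizer $\Wmbar$ over all block-diagonal matrices, whose objective value is a lower bound for the constrained problem.

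The key step --- and the one I expect to be the main obstacle --- is to show this unconstrained minimizer is in fact feasible, i.e. that $\Wmbar$ already satisfies~(\ref{keyprofffttensor}), so that it coincides with the constrained minimizer and hence solves~(\ref{thmeqnsvt}). To arrange this I would choose the slice-wise SVDs exactly as in the construction proving Theorem~\ref{thmtsvd}: take $\Ymbar^{(i)}=\Umbar^{(i)}\Smbar^{(i)}(\Vmbar^{(i)})^*$ freely for $i=1,\dots,\lceil\frac{n_3+1}{2}\rceil$, and set $\Umbar^{(i)}=\conj(\Umbar^{(n_3-i+2)})$, $\Smbar^{(i)}=\Smbar^{(n_3-i+2)}$, $\Vmbar^{(i)}=\conj(\Vmbar^{(n_3-i+2)})$ for the remaining $i$ --- legitimate because $\Y$ is real and so $\Ymbar$ obeys~(\ref{keyprofffttensor}). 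Since $\Smbar^{(i)}$ is real, $(\Smbar^{(i)}-\tau)_+$ is real, hence $\conj(\Wmbar^{(i)})=\Wmbar^{(n_3-i+2)}$, i.e. $\W$ satisfies~(\ref{keyprofffttensor}); by Lemma~\ref{lem_keyprofft} applied blockwise, $\W=\ifft(\Wbar,[\ ],3)$ is a real tensor, and unwinding the definitions shows $\W=\U*\Sbm_\tau*\V^*$ with $\Sbm_\tau=\ifft((\Sbar-\tau)_+,[\ ],3)$ as in~(\ref{staudef}), i.e. $\W=\mathcal{D}_\tau(\Y)$. Uniqueness of the minimizer in~(\ref{thmeqnsvt}) over $\Rn$ then follows from the strong convexity of the objective. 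The delicate point is precisely this feasibility/coincidence argument: one must verify that the symmetry-respecting choice of SVDs does not raise any of the (independent) subproblem values, which holds here because each conjugate pair of frontal slices contributes identical objective values.
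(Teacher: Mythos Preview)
Your proposal is correct and follows essentially the same approach as the paper: transform the objective to the Fourier domain via (\ref{eqpropertytnn1}) and (\ref{eq_proFnormNuclear}), decouple into $n_3$ independent matrix SVT subproblems (solved by the matrix singular value thresholding result of Cai--Cand\`es--Shen), and verify that the resulting block-diagonal minimizer satisfies the conjugate-symmetry property (\ref{keyprofffttensor}) so that $\mathcal{D}_\tau(\Y)$ is a real tensor. The paper organizes the last step slightly differently --- it first checks that $(\Sbar-\tau)_+$ inherits (\ref{keyprofffttensor}) and hence $\mathcal{D}_\tau(\Y)$ is real, then observes that each frontal slice of $\overline{\mathcal{D}_\tau(\Y)}$ solves the $i$-th subproblem --- but this is the same argument as your ``unconstrained minimizer is feasible, hence optimal for the constrained problem'' phrasing.
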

\begin{proof}
	The required solution to (\ref{thmeqnsvt}) is a real tensor and thus we first show that $\mathcal{D}_{\tau}(\Y)$ in (\ref{eqn_tsvt}) is real. Let $\Y= \U * \Sbm *\V^*$ be the tensor SVD of $\Y$. We know that the frontal slices of $\Sbar$ satisfy the property (\ref{keyprofffttensor}) and so do the frontal slices of $(\Sbar-\tau)_+$. By Lemma \ref{lem_keyprofft}, $\Sbm_\tau$ in (\ref{staudef}) is real. Thus, $\mathcal{D}_{\tau}(\Y)$ in (\ref{eqn_tsvt}) is real. Secondly, by using properties (\ref{eqpropertytnn1}) and (\ref{eq_proFnormNuclear}), problem (\ref{thmeqnsvt}) is equivalent to 
	\begin{align}
		\arg\min_{\X}  & \ \frac{1}{n_3}(\tau\norm{\Xmbar}_*+\frac{1}{2}\norm{\Xmbar-\Ymbar}_F^2) 	\notag \\
		=\arg\min_{\X} & \ \frac{1}{n_3}\sumi(\tau\norm{\Xmbar^{(i)}}_*+\frac{1}{2}\norm{\Xmbar^{(i)}-\Ymbar^{(i)}}_F^2). \label{eqnthmtsvteqnform}
	\end{align}
	By Theorem 2.1 in \cite{cai2010singular}, we know that the $i$-th frontal slice of $\overline{\mathcal{D}_\tau(\Y)}$ solves the $i$-th subproblem of (\ref{eqnthmtsvteqnform}). Hence, $\mathcal{D}_\tau(\Y)$ solves problem (\ref{thmeqnsvt}).
\end{proof}

Theorem \ref{thmtsvt} gives the closed-form of the t-SVT operator $\mathcal{D}_\tau(\Y)$, which is a natural extension of the matrix SVT \cite{cai2010singular}. Note that $\mathcal{D}_\tau(\Y)$ is real when $\Y$ is real. By using property (\ref{keyprofffttensor}),  Algorithm \ref{alg_tsvt} gives an efficient way for computing $\mathcal{D}_\tau(\Y)$.

With t-SVT, we now give the details of ADMM to solve (\ref{trpca}). The augmented Lagrangian function of (\ref{trpca}) is
\begin{align*}
	L(\LL,\E,\Y,\mu) = & \norm{\LL}_* + \lambda\norm{\E}_1+ \inproduct{\Y}{\LL+\E-\X} \\
					   & + \frac{\mu}{2}\norm{\LL+\E-\X}_F^2.
\end{align*}
Then $\LL$ and $\E$ can be updated by minimizing the augmented Lagrangian function $L$ alternately. Both subproblems have closed-form solutions. See Algorithm \ref{alg1} for the whole procedure. The main per-iteration cost lies in the update of $\LL_{k+1}$, which requires computing FFT and $\lceil \frac{n_3+1}{2}\rceil$ SVDs of $n_1\times n_2$ matrices. The per-iteration complexity is $O\left(n_1n_2n_3\log n_3+\none\ntwo^2n_3\right)$.

\begin{algorithm}[t!]
	\caption{Solve (\ref{trpca}) by ADMM}
	\textbf{Initialize:} $\LL_0=\Sbm_0=\Y_0={0}$, $\rho=1.1$, $\mu_0=1\e{-3}$, $\mu_{\max}=1\e{10}$, $\epsilon=1\e{-8}$. \\ 
	\textbf{while} not converged \textbf{do}
	\begin{enumerate}[1.]
		\item Update $\LL_{k+1}$ by
		\begin{equation*}
			\LL_{k+1} = \argmin_{\LL} \ \norm{\LL}_*+\frac{\mu_k}{2}\normlarge{\LL+\E_{k}-\X+\frac{\Y_{k}}{\mu_k}}_F^2;
		\end{equation*}
		\item Update $\E_{k+1}$ by
		\begin{equation*}
			\E_{k+1} = \argmin_{\E} \ \lambda\norm{\E}_1+\frac{\mu_k}{2}\normlarge{\LL_{k+1}+\E-\X+\frac{\Y_{k}}{\mu_k}}_F^2;
		\end{equation*}
		\item $\Y_{k+1}=\Y_{k}+\mu_k(\LL_{k+1}+\E_{k+1}-\X)$;
		\item Update $\mu_{k+1}$ by $\mu_{k+1}=\min(\rho\mu_k,\mu_{\max})$;
		\item Check the convergence conditions
		\begin{align*}
			& \norm{\LL_{k+1}-\LL_{k}}_\infty\leq\epsilon, \ \norm{\E_{k+1}-\E_{k}}_\infty\leq\epsilon, \\
			& \norm{\LL_{k+1}+\E_{k+1}-\X}_\infty\leq\epsilon.
		\end{align*}
	\end{enumerate}
	\textbf{end while}
	\label{alg1}	
\end{algorithm}

\section{Experiments}\label{sec_exp}

In this section, we conduct numerical experiments to verify our main results in Theorem \ref{thm1}. We first investigate the ability of the convex TRPCA model (\ref{trpca}) to recover tensors with varying tubal rank and different levels of sparse noises. We then apply it for image recovery and background modeling. As suggested by Theorem \ref{thm1}, we set $\lambda=1/\sqrt{\none n_3}$ in all the experiments\footnote{Codes of our method available at \url{https://github.com/canyilu}.}. But note that it is possible to further improve the performance by tuning $\lambda$ more carefully. The suggested value in theory provides a good guide in practice. All the simulations are conducted on a PC with an Intel Xeon E3-1270 3.60GHz CPU and 64GB memory.
 

\subsection{Exact Recovery from Varying Fractions of Error}

We first verify the correct recovery guarantee of Theorem \ref{thm1} on randomly generated problems. We simply consider the tensors of size $n\times n \times n$, with varying dimension $n=$100 and 200. We generate a tensor with tubal rank $r$ as a product $\LL_0=\PP*\Q^*$, where $\PP$ and $\Q$ are ${n\times r\times n}$ tensors with entries independently sampled from $\mathcal{N}(0,1/n)$ distribution. The support set $\Omegat$ (with size $m$) of $\E_0$ is chosen uniformly at random. For all $(i,j,k)\in\Omegat$, let $[\E_0]_{ijk}=\M_{ijk}$, where $\M$ is a tensor with independent Bernoulli $\pm 1$ entries.

\begin{table}[]
	\scriptsize 
	\centering
	\caption{\small Correct recovery for random problems of varying sizes.}\vspace{-0.2cm}
	$r=\rankt(\LL_0)=0.05n$, $m=\norm{\E_0}_0=0.05n^3$
	\begin{tabular}{c|c|c|c|c|c|c}
		\hline
		$n$ & $r$ & $m$ & $\rankt(\Lhat)$ & $\|\Shat\|_0$ & $\frac{\norm{\Lhat-\LL_0}_F}{\norm{\LL_0}_F}$ & $\frac{\norm{\Ehat-\E_0}_F}{\norm{\E_0}_F}$ \\ \hline\hline
		100 & 5 & $5\e{4}$ & 5 & 50,029 & $2.6\e{-7}$ & $5.4\e{-10}$\\ \hline
		200 & 10 & $4\e{5}$ & 10 & 400,234 & $5.9\e{-7}$ & $6.7\e{-10}$\\ \hline
	\end{tabular}
	\vspace{0.1cm}
	
	$r=\rankt(\LL_0)=0.05n$, $m=\norm{\E_0}_0=0.1n^3$
	\begin{tabular}{c|c|c|c|c|c|c}
		\hline
		$n$ & $r$ & $m$ & $\rankt(\Lhat)$ & $\|\Shat\|_0$ & $\frac{\norm{\Lhat-\LL_0}_F}{\norm{\LL_0}_F}$ & $\frac{\norm{\Ehat-\E_0}_F}{\norm{\E_0}_F}$    \\ \hline\hline
		100 & 5 & $1\e{5}$ & 5 & 100,117 & $4.1\e{-7}$ & $8.2\e{-10}$\\ \hline
		200 & 10 & $8\e{5}$ & 10 & 800,901 & $4.4\e{-7}$ & $4.5\e{-10}$\\ \hline
	\end{tabular}
	\vspace{0.1cm}
	
	$r=\rankt(\LL_0)=0.1n$, $m=\norm{\E_0}_0=0.1n^3$
	\begin{tabular}{c|c|c|c|c|c|c}
		\hline
		$n$ & $r$ & $m$ & $\rankt(\Lhat)$ & $\|\Shat\|_0$ & $\frac{\norm{\Lhat-\LL_0}_F}{\norm{\LL_0}_F}$ & $\frac{\norm{\Ehat-\E_0}_F}{\norm{\E_0}_F}$        \\ \hline\hline
		100  & 10 & $1\e{5}$ & 10 & 101,952 & $4.8\e{-7}$ & $1.8\e{-9}$\\ \hline
		200  & 20 & $8\e{5}$ & 20 & 815,804  & $4.9\e{-7}$ & $9.3\e{-10}$\\ \hline
	\end{tabular}
	\vspace{0.1cm}
	
	$r=\rankt(\LL_0)=0.1n$, $m=\norm{\E_0}_0=0.2n^3$
	\begin{tabular}{c|c|c|c|c|c|c}
		\hline
		$n$ & $r$ & $m$  & $\rankt(\Lhat)$ & $\|\Ehat\|_0$  & $\frac{\norm{\Lhat-\LL_0}_F}{\norm{\LL_0}_F}$ & $\frac{\norm{\Ehat-\E_0}_F}{\norm{\E_0}_F}$  \\ \hline\hline
		100  & 10 & $2\e{5}$ & 10 & 200,056 & $7.7\e{-7}$ & $4.1\e{-9}$ \\ \hline
		200  & 20 & $16\e{5}$ & 20 & 1,601,008 & $1.2\e{-6}$ & $3.1\e{-9}$ \\ \hline
	\end{tabular}\label{tab_recov}
	\vspace{-0.3cm}
\end{table}

\begin{figure}[!t]
	\centering
	\begin{subfigure}[b]{0.22\textwidth}
		\centering
		\includegraphics[width=\textwidth]{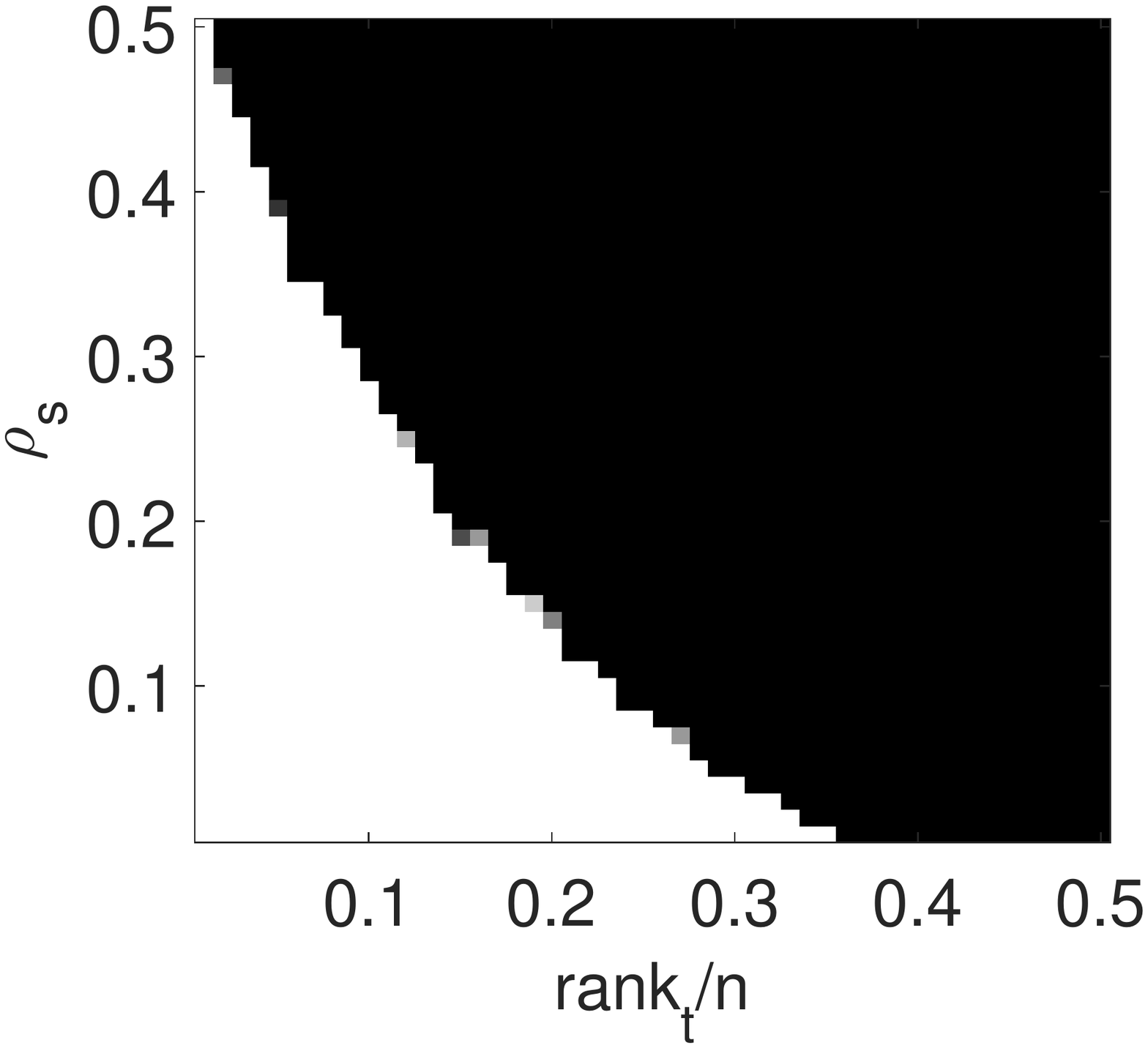}
		\caption{TRPCA, Random Signs}
	\end{subfigure}
	\begin{subfigure}[b]{0.22\textwidth}
		\centering
		\includegraphics[width=\textwidth]{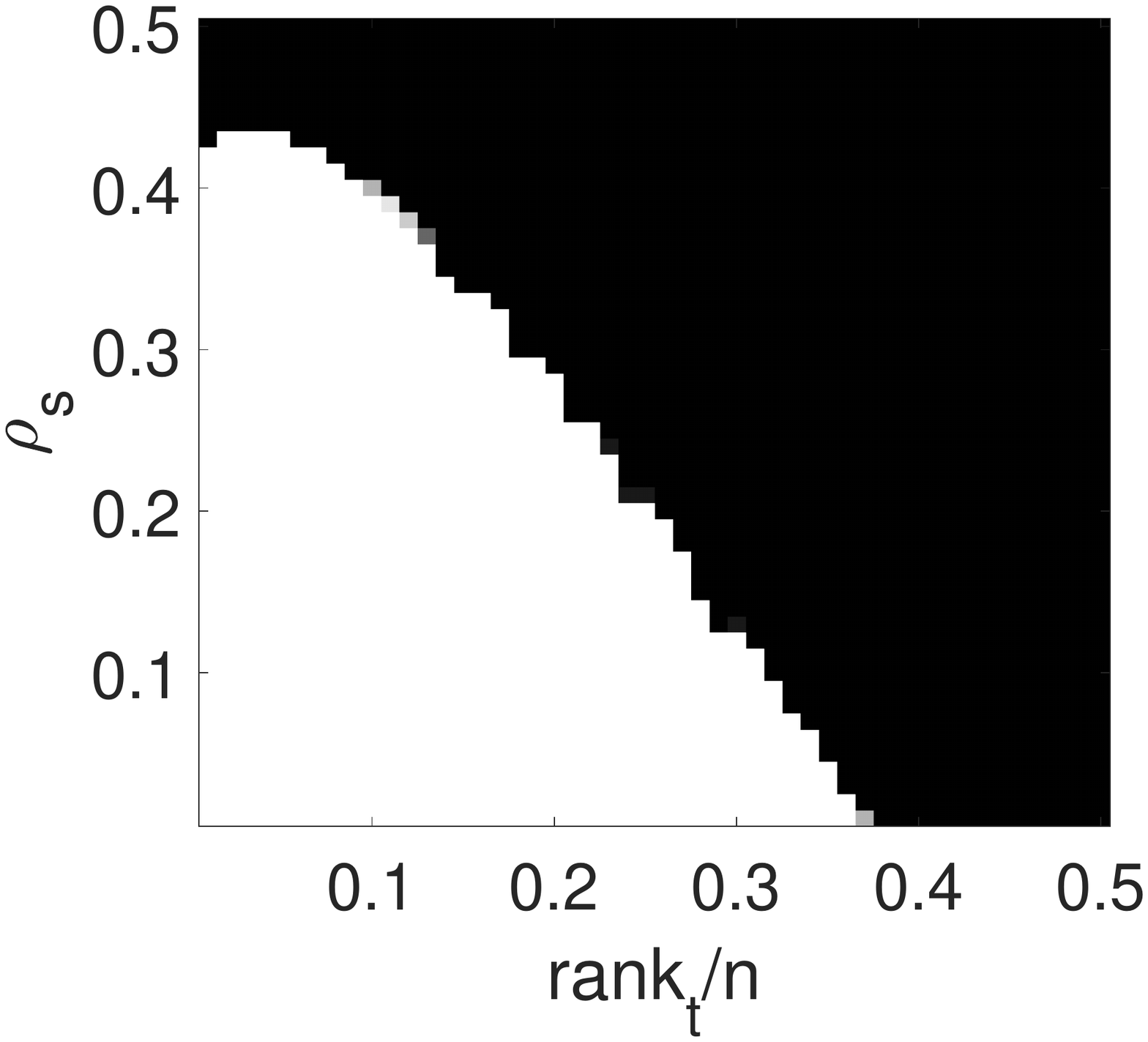}
		\caption{\canyi{TRPCA, Coherent Signs}}
	\end{subfigure}
	\caption{\small{Correct recovery for varying tubal ranks of $\LL_0$ and sparsities of $\E_0$. Fraction of correct recoveries across 10 trials, as a function of $\rankt(\LL_0)$ (x-axis) and sparsity of $\E_0$ (y-axis). \canyi{Left: $\text{sgn}(\E_0)$ random. Right: $\E_0 = \Pomega\text{sgn}(\LL_0)$.}}}
	\label{fig_sparsityvsrank}
	\vspace{-0.4cm}
\end{figure}

Table \ref{tab_recov} reports the recovery results based on varying choices of the tubal rank $r$ of $\LL_0$ and the sparsity $m$ of $\E_0$. It can be seen that our convex program (\ref{trpca}) gives the correct tubal rank estimation of $\LL_0$ in all cases and also the relative errors ${\|{\hat{\LL}-\LL_0}\|_F}/{\norm{\LL_0}_F}$ are very small, less than $10^{-5} $. The sparsity estimation of $\E_0$ is not as exact as the rank estimation, but note that the relative errors ${\|{\Ehat-\E_0}\|_F}/{\norm{\E_0}_F}$ are all very small, less than $10^{-8}$ (actually much smaller than the relative errors of the recovered low-rank component). These results well verify the correct recovery phenomenon as claimed in Theorem \ref{thm1}. 


\begin{figure*}[!t]
	\centering
	\begin{subfigure}[b]{1\textwidth}
		\centering
		\includegraphics[width=\textwidth]{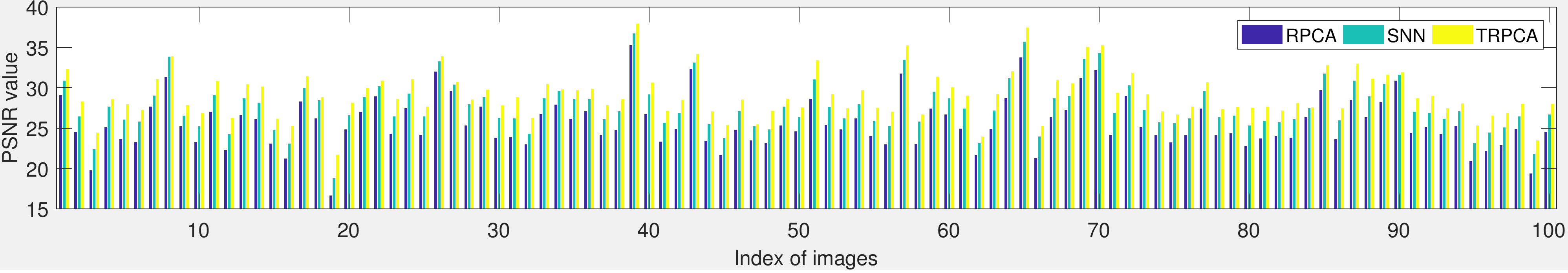}
	\end{subfigure}
	\begin{subfigure}[b]{1\textwidth}
		\centering
		\includegraphics[width=\textwidth]{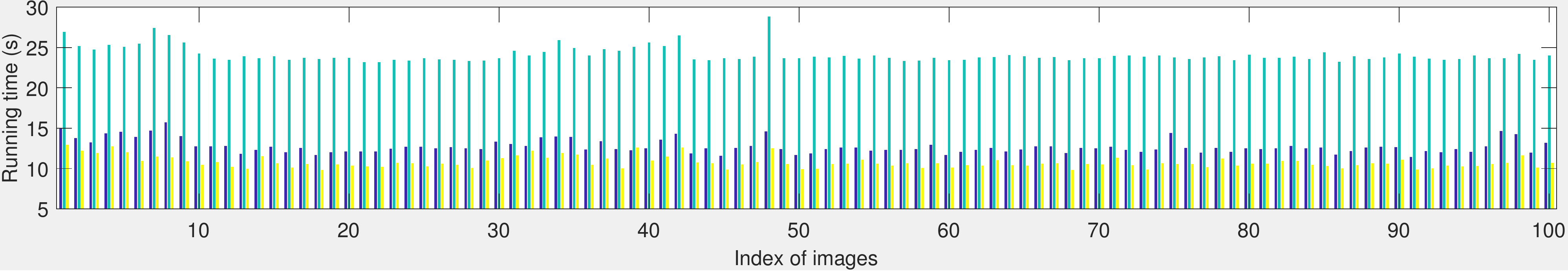}
	\end{subfigure}
	\caption{\small Comparison of the PSNR values (top) and running time (bottom) obtained by RPCA, SNN and TRPCA on 100 images.}
	\label{fig_img_res_psnr_time}
	\vspace{-0.4cm}
\end{figure*}

\subsection{Phase Transition in Tubal Rank and Sparsity}

The results in Theorem \ref{thm1} show the perfect recovery for incoherent tensor with $\rankt(\LL_0)$ on the order of $nn_3/(\mu(\log n n_3)^2)$ and the sparsity of $\E_0$ on the order of $n^2n_3$. Now we examine the recovery phenomenon with varying tubal rank of $\LL_0$ from varying sparsity of $\E_0$. \canyi{We consider the tensor $\LL_0$ of size $\mathbb{R}^{n\times n\times n_3}$, where $n=100$ and $n_3=50$. We generate $\LL_0=\PP*\Q^*$, where $\PP$ and $\Q$ are ${n\times r\times n_3}$ tensors with entries independently sampled from a $\mathcal{N}(0,1/n)$ distribution. For the sparse component $\E_0$, we consider two cases. In the first case, we assume a Bernoulli model for the support of the sparse term $\E_0$, with random signs: each entry of $\E_0$ takes on value 0 with probability $1-\rho$, and values $\pm1$ each with probability $\rho/2$. The second case chooses the support $\bm\Omega$ in accordance with the Bernoulli model, but this time sets $\E_0=\Pomega \text{sgn}(\LL_0)$.} We set $\frac{r}{n}=[0.01:0.01:0.5]$ and $\rho_s=[0.01:0.01:0.5]$. For each $(\frac{r}{n},\rho_s)$-pair, we simulate 10 test instances and declare a trial to be successful if the recovered $\hat{\LL}$ satisfies ${\|{\hat{\LL}-\LL_0}\|_F}/{\norm{\LL_0}_F}\leq 10^{-3}$. Figure \ref{fig_sparsityvsrank} plots the fraction of correct recovery for each pair $(\frac{r}{n},\rho_s)$ (black = $0\%$ and white = 100$\%$). \canyi{It can be seen that there is a large region in which the recovery is correct in both cases.} Intuitively, the experiment shows that the recovery is correct when the tubal rank of $\LL_0$ is relatively low and the errors $\E_0$ is relatively sparse. \canyi{Figure \ref{fig_sparsityvsrank} (b) further shows that the signs of $\E_0$ are  not important: recovery can be guaranteed as long as its support is chosen uniformly at random. These observations are consistent with Theorem \ref{thm1}. Similar observations can be found in the matrix RPCA case (see Figure 1 in \cite{RPCA}).}


\begin{figure*}[!t]
	\centering	
	\begin{subfigure}[b]{0.195\textwidth}
		\centering
		\includegraphics[width=\textwidth]{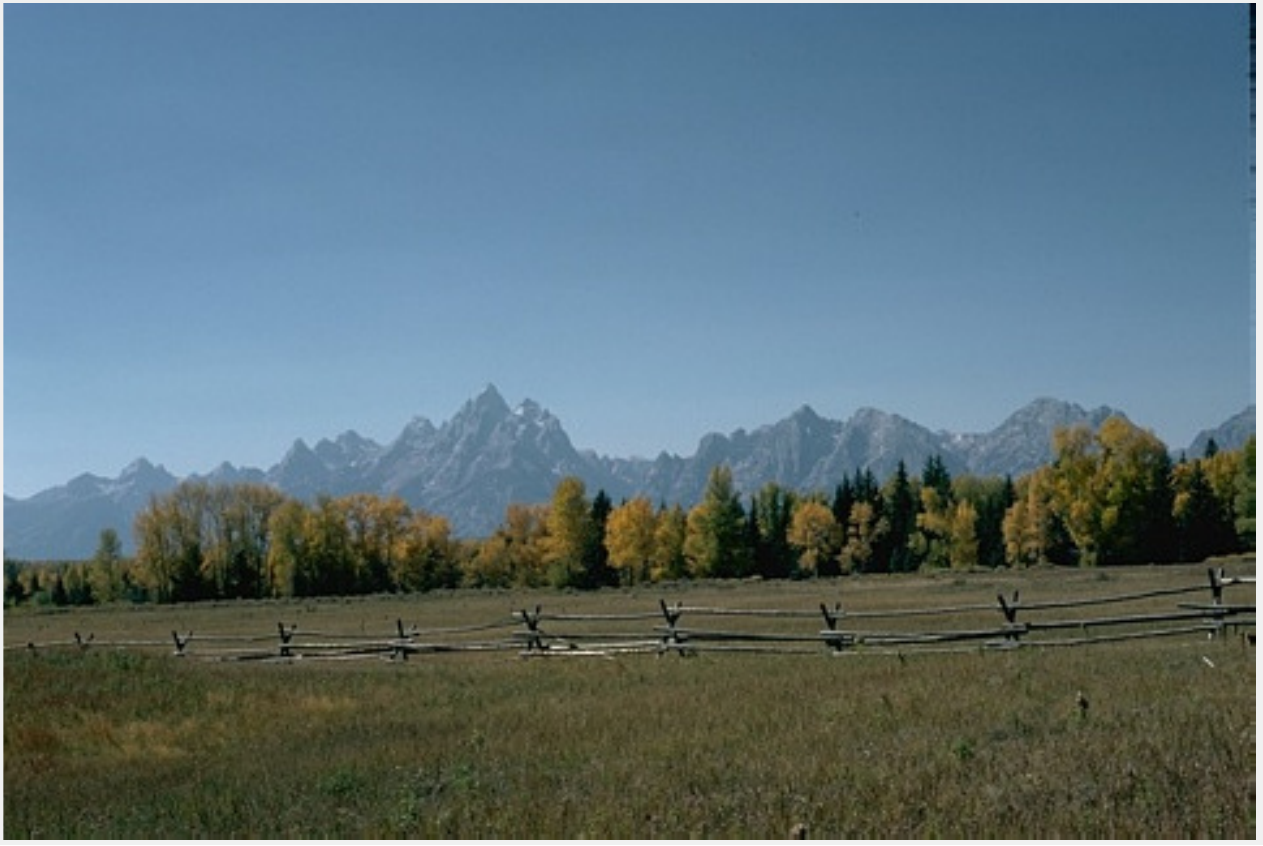}
	\end{subfigure}
	\begin{subfigure}[b]{0.195\textwidth}
		\centering
		\includegraphics[width=\textwidth]{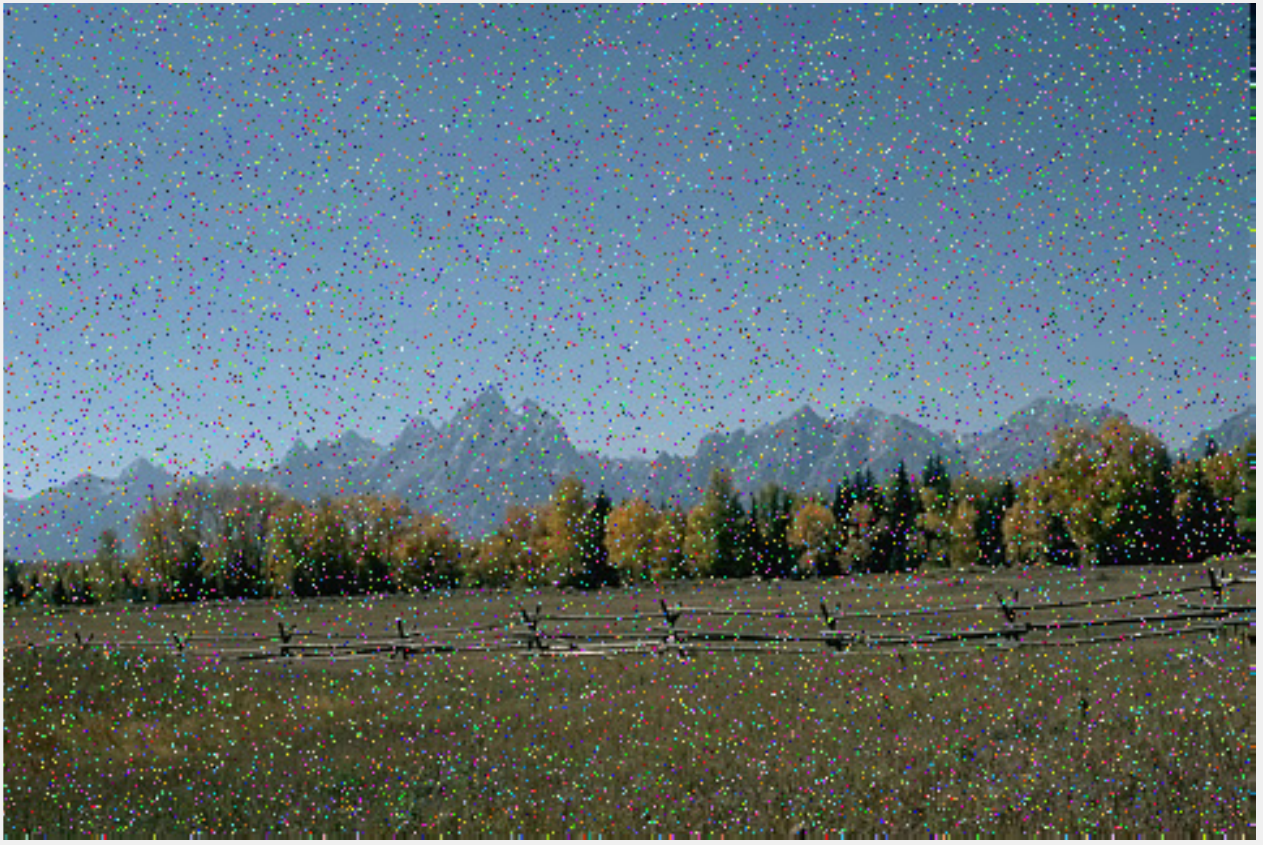}
	\end{subfigure}
	\begin{subfigure}[b]{0.195\textwidth}
		\centering
		\includegraphics[width=\textwidth]{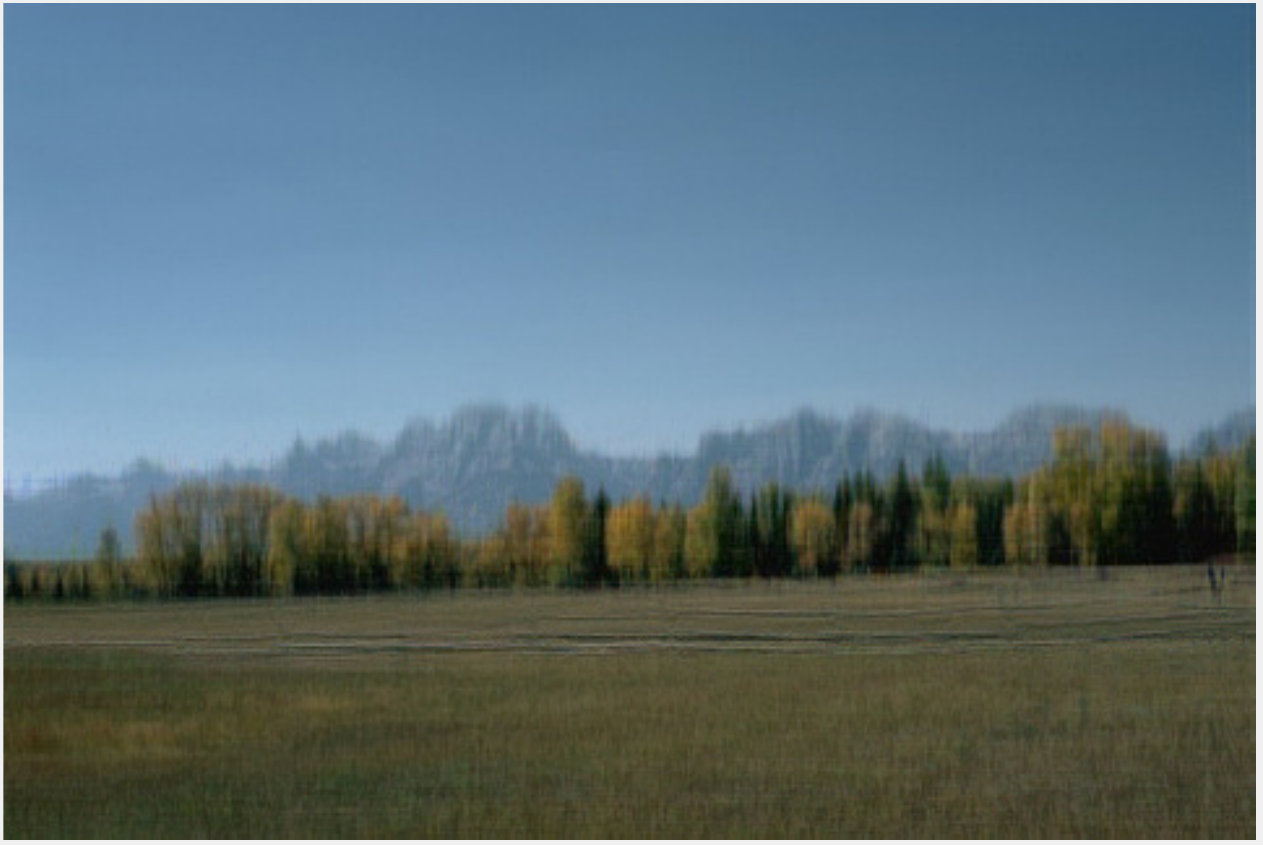}
	\end{subfigure}
	\begin{subfigure}[b]{0.195\textwidth}
		\centering
		\includegraphics[width=\textwidth]{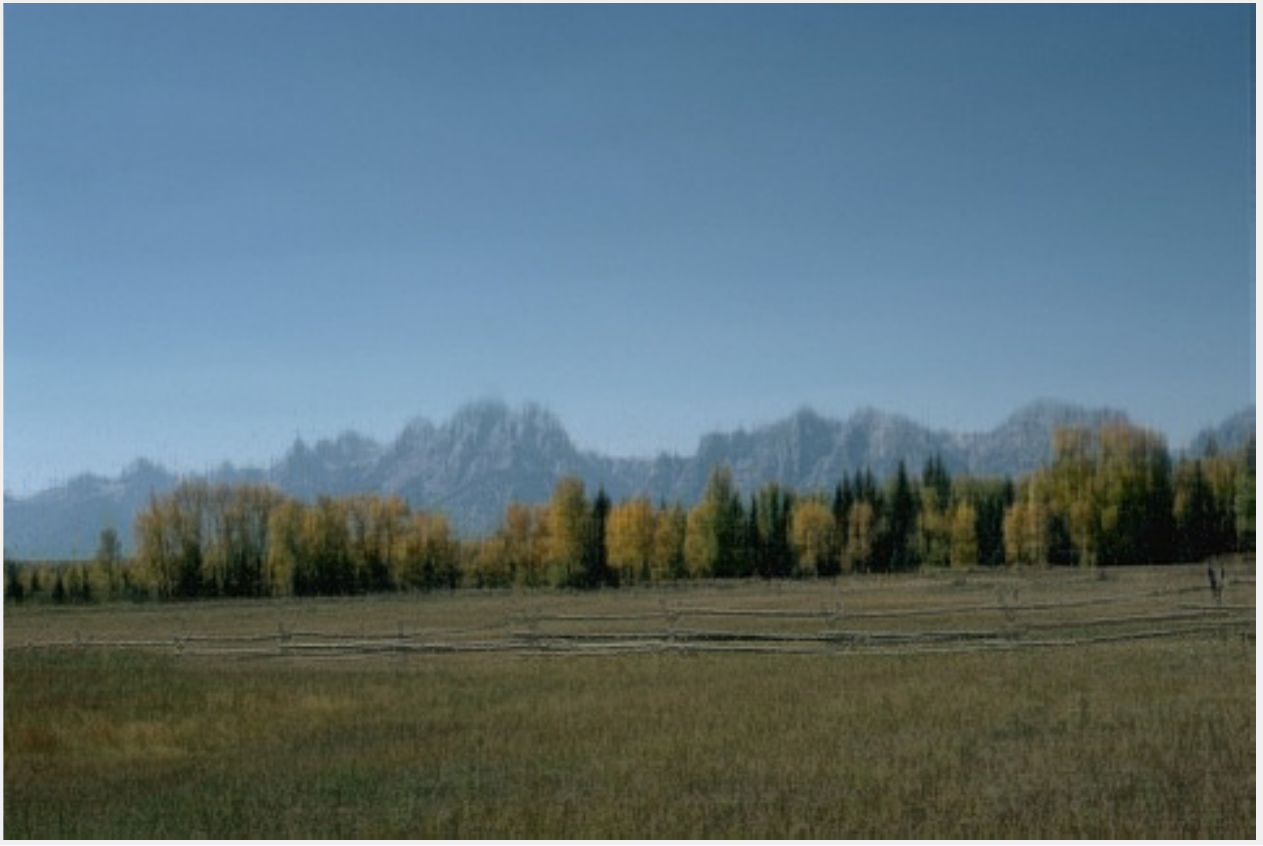}
	\end{subfigure}
	\begin{subfigure}[b]{0.195\textwidth}
		\centering
		\includegraphics[width=\textwidth]{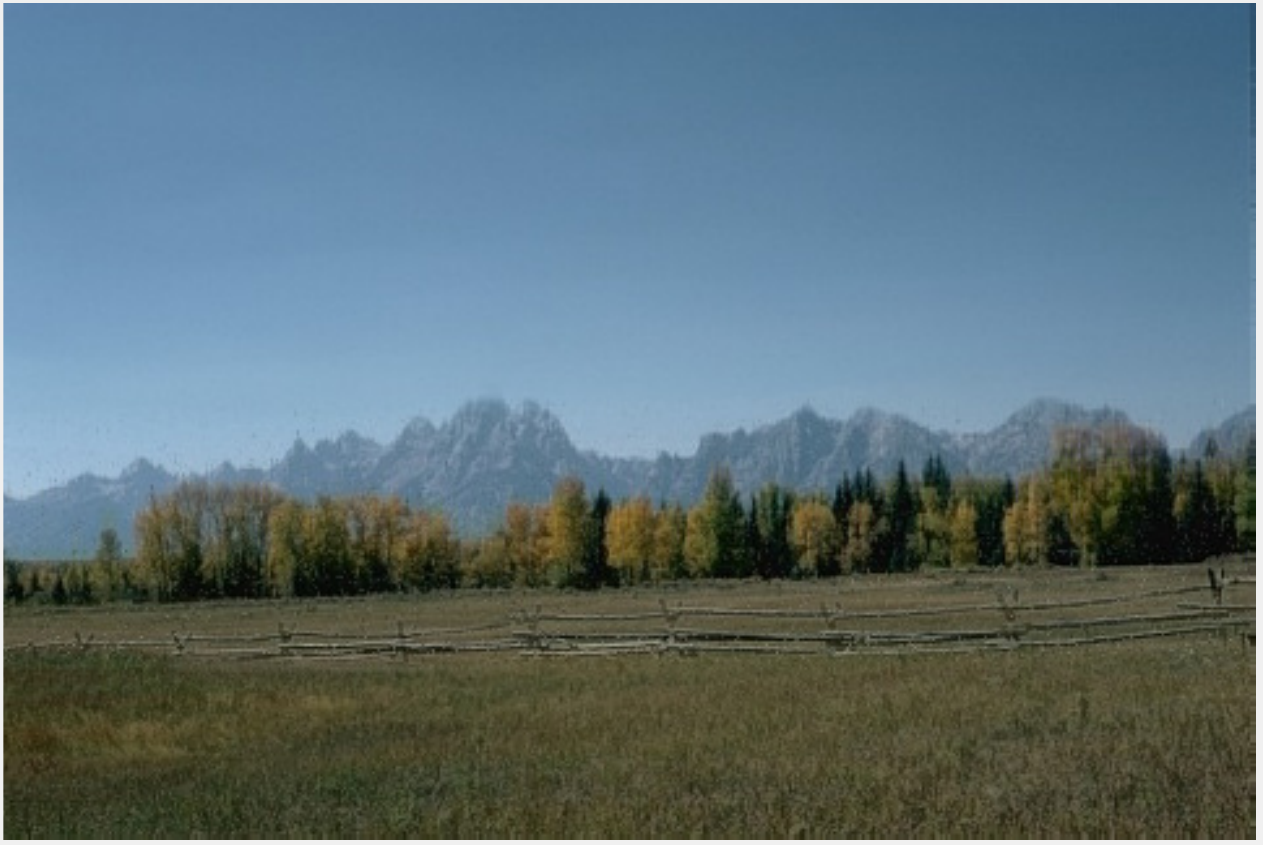}
	\end{subfigure}	
	\begin{subfigure}[b]{0.195\textwidth}
		\centering
		\includegraphics[width=\textwidth]{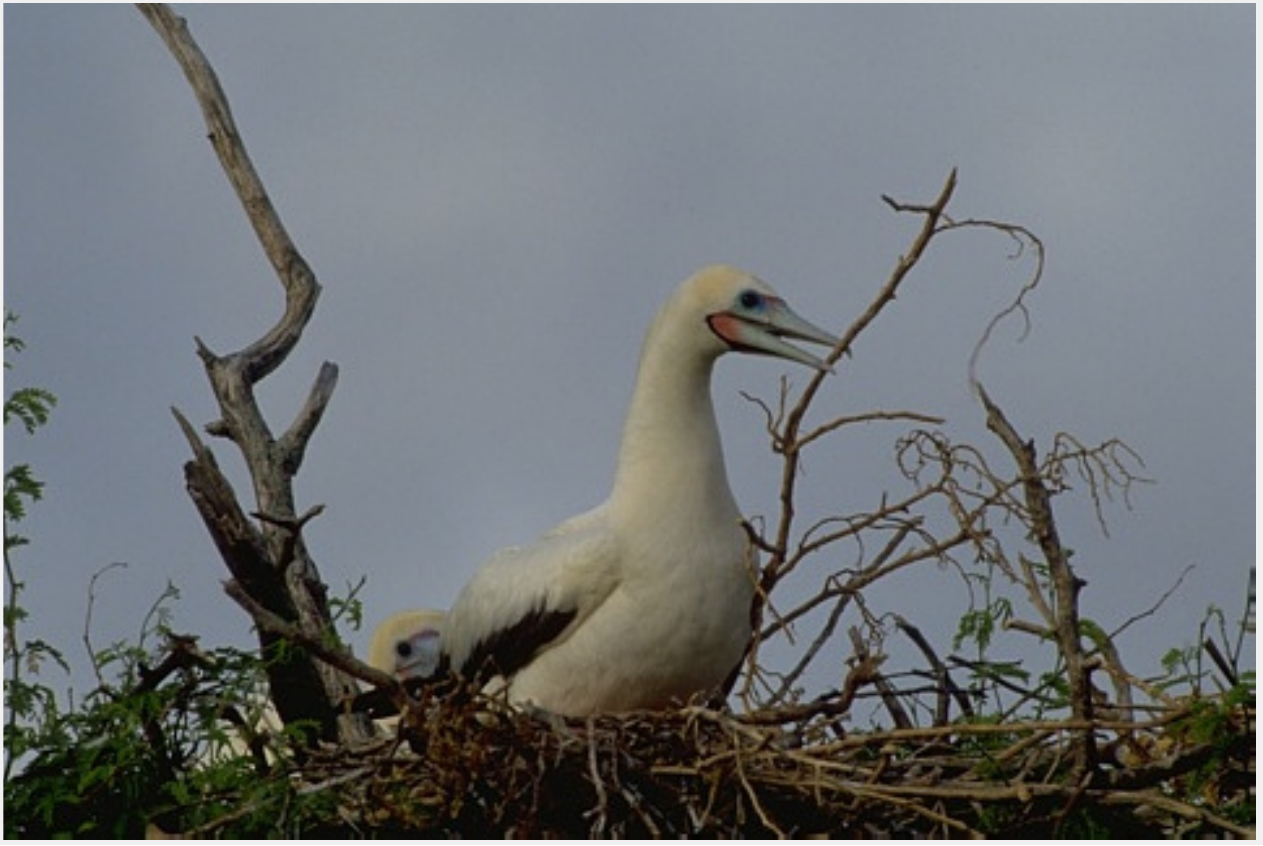}
	\end{subfigure}
	\begin{subfigure}[b]{0.195\textwidth}
		\centering
		\includegraphics[width=\textwidth]{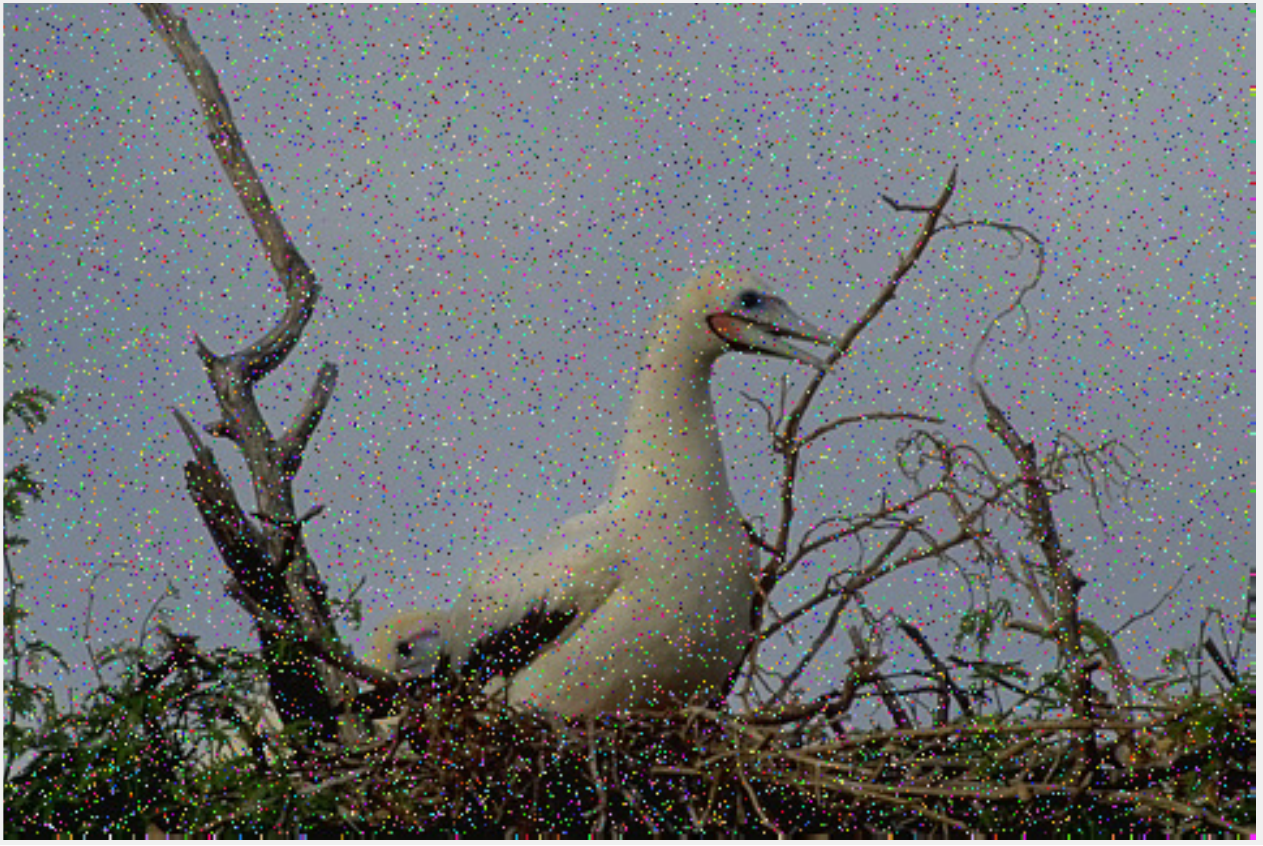}
	\end{subfigure}
	\begin{subfigure}[b]{0.195\textwidth}
		\centering
		\includegraphics[width=\textwidth]{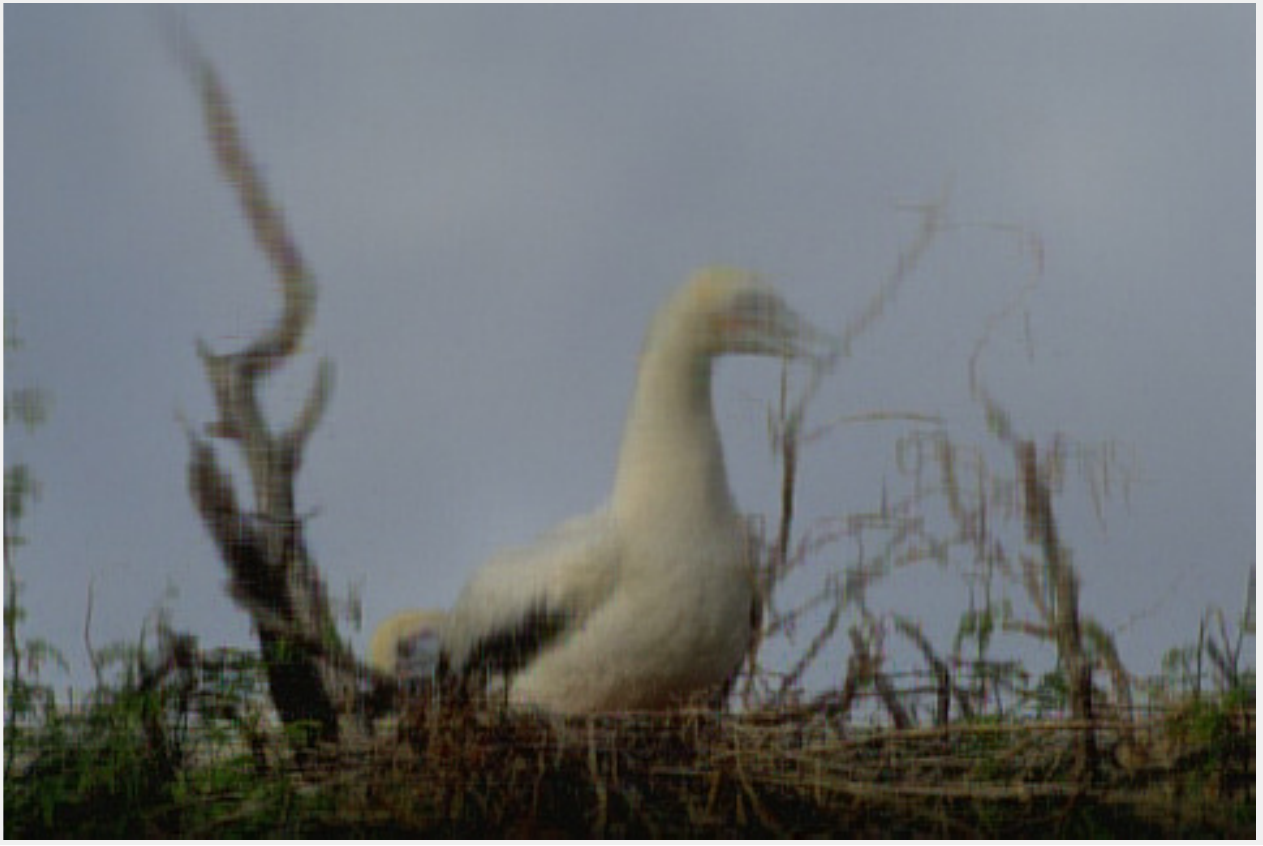}
	\end{subfigure}
	\begin{subfigure}[b]{0.195\textwidth}
		\centering
		\includegraphics[width=\textwidth]{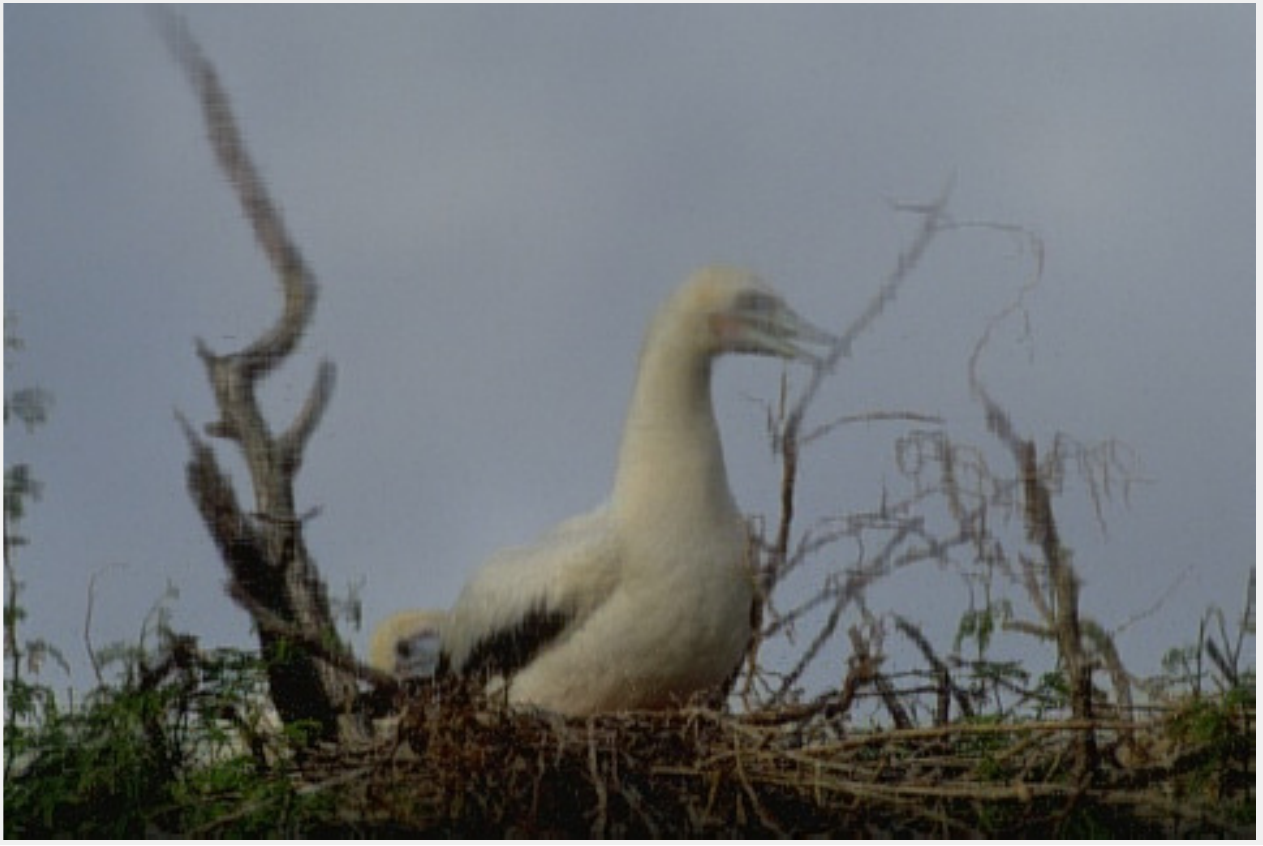}
	\end{subfigure}
	\begin{subfigure}[b]{0.195\textwidth}
		\centering
		\includegraphics[width=\textwidth]{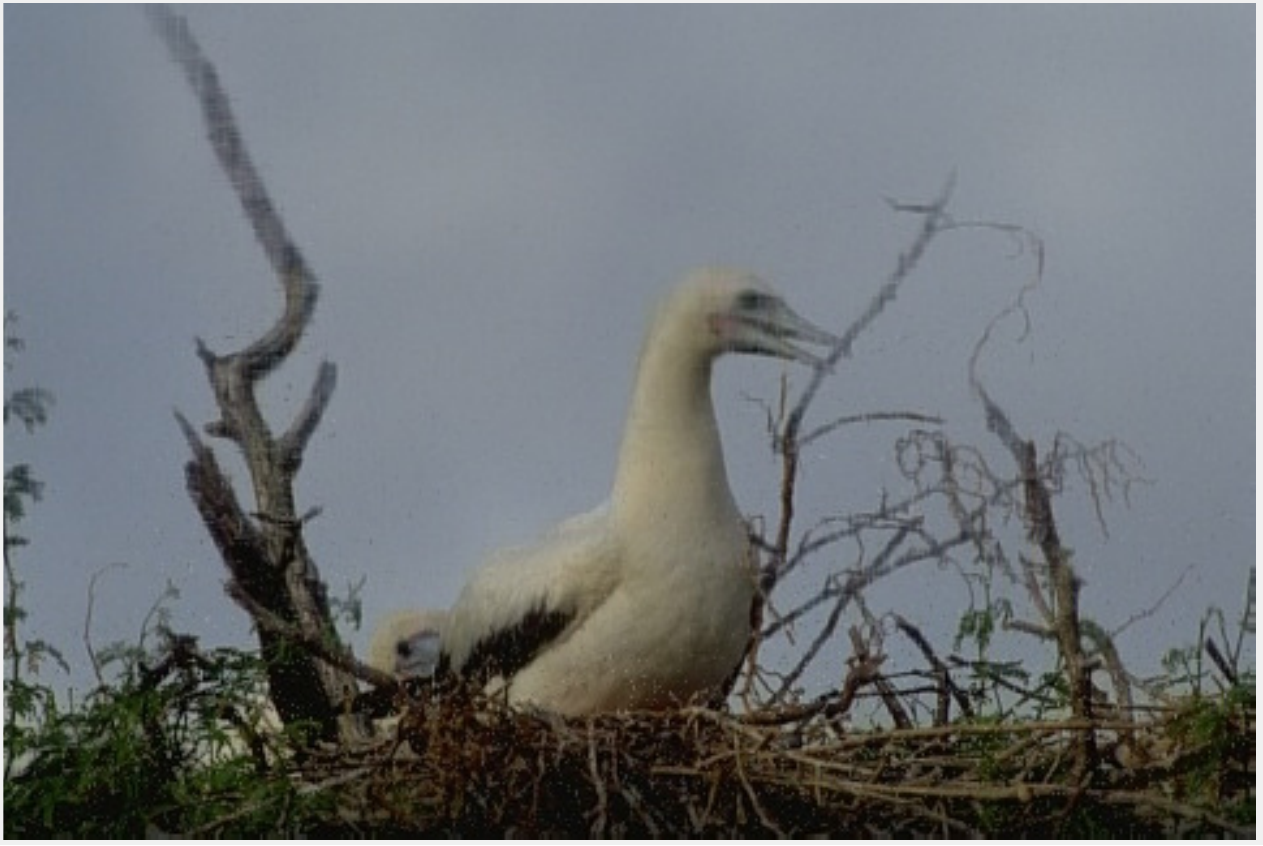}
	\end{subfigure}
	\begin{subfigure}[b]{0.195\textwidth}
		\centering
		\includegraphics[width=\textwidth]{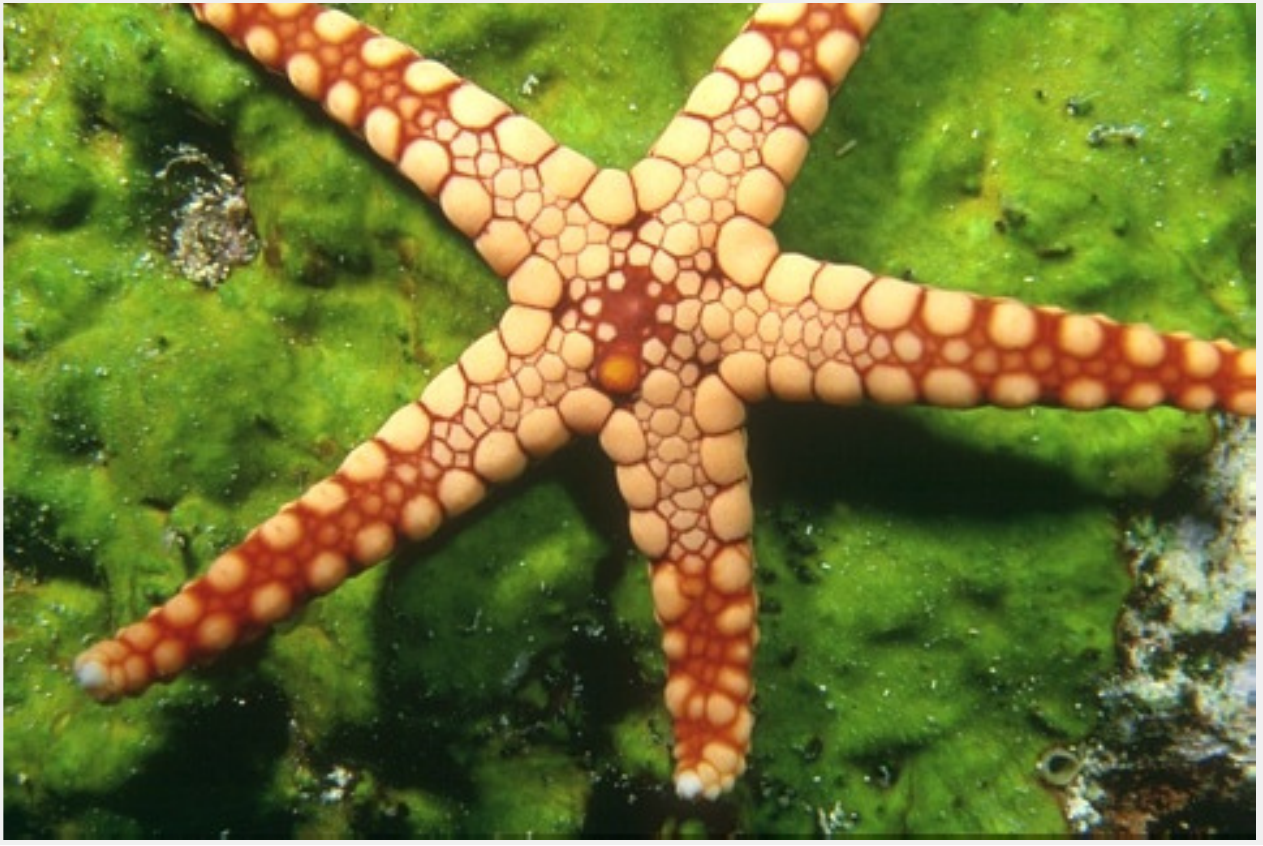}
	\end{subfigure}
	\begin{subfigure}[b]{0.195\textwidth}
		\centering
		\includegraphics[width=\textwidth]{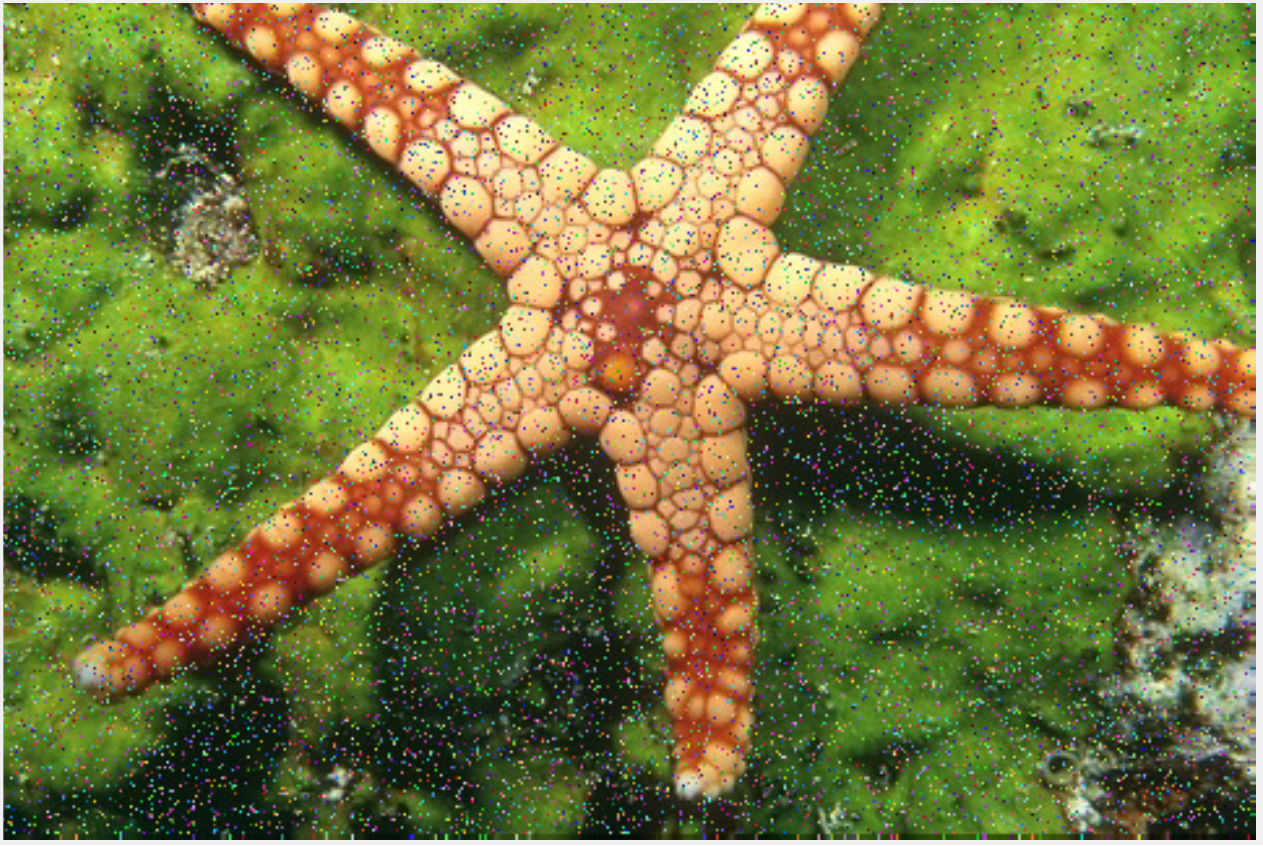}
	\end{subfigure}
	\begin{subfigure}[b]{0.195\textwidth}
		\centering
		\includegraphics[width=\textwidth]{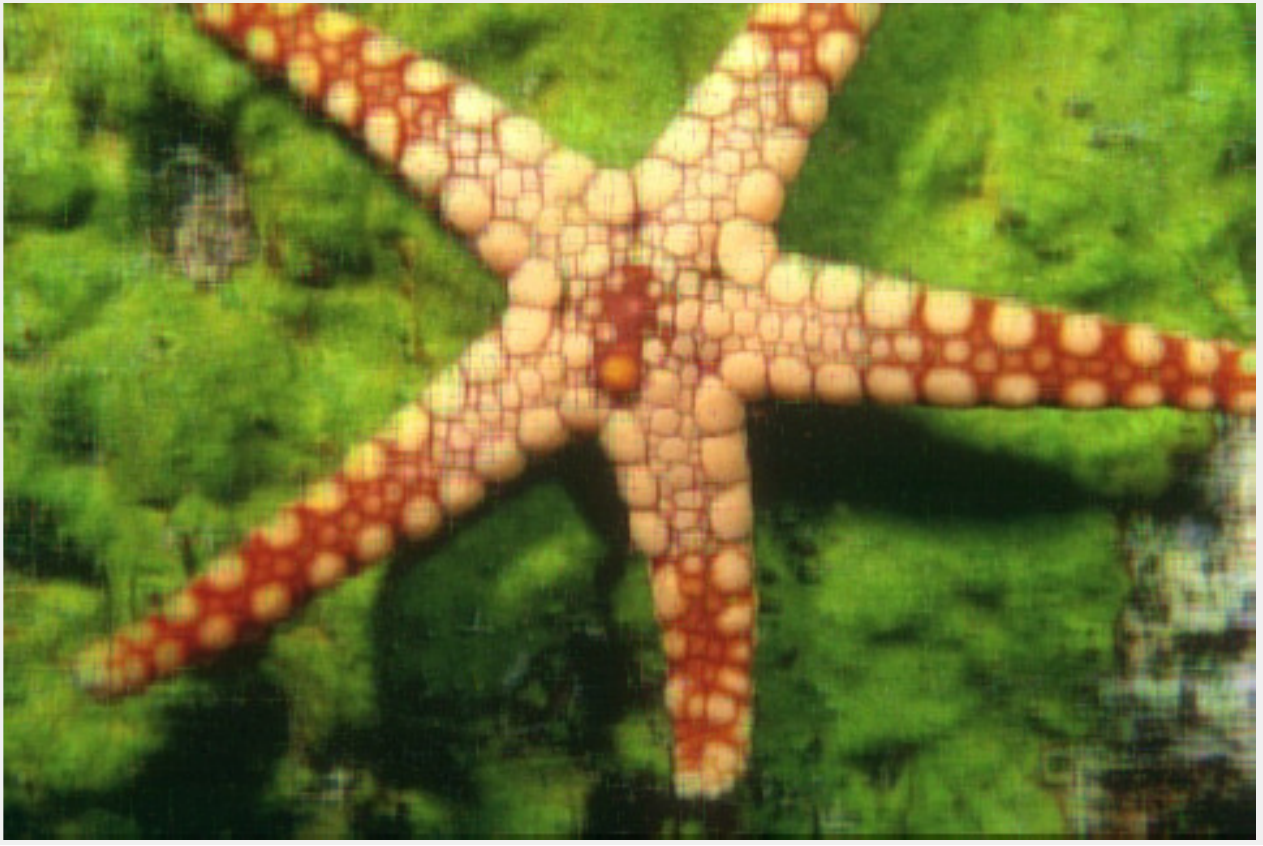}
	\end{subfigure}
	\begin{subfigure}[b]{0.195\textwidth}
		\centering
		\includegraphics[width=\textwidth]{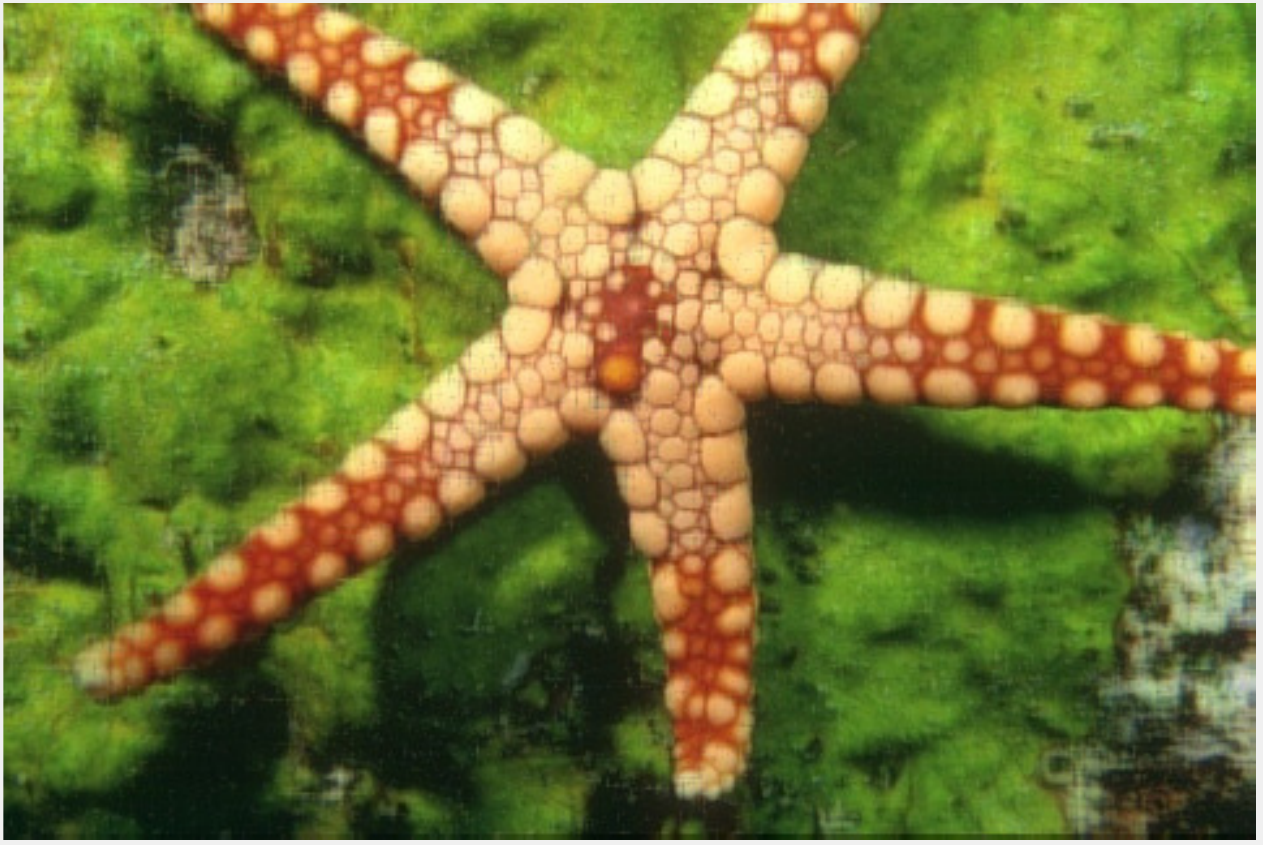}
	\end{subfigure}
	\begin{subfigure}[b]{0.195\textwidth}
		\centering
		\includegraphics[width=\textwidth]{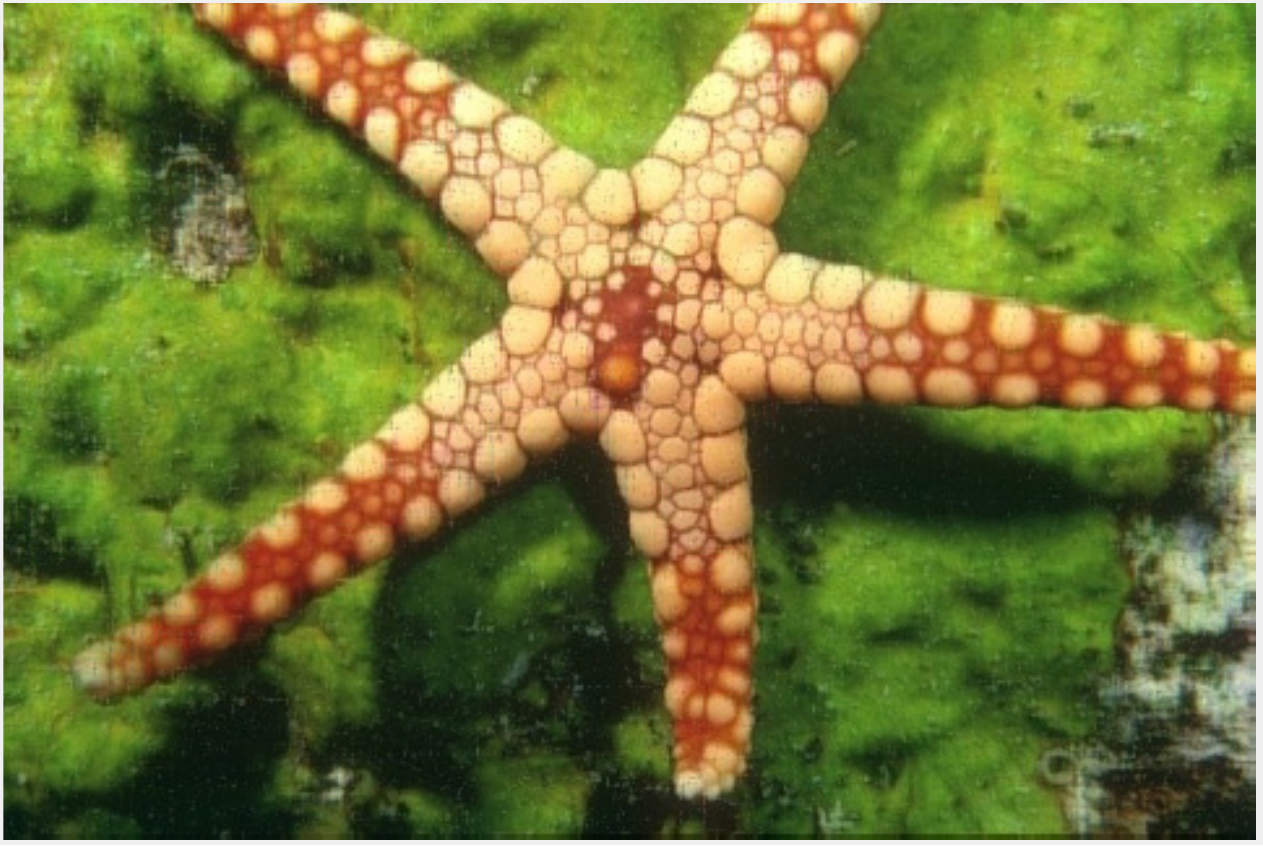}
	\end{subfigure}
	\begin{subfigure}[b]{0.195\textwidth}
		\centering
		\includegraphics[width=\textwidth]{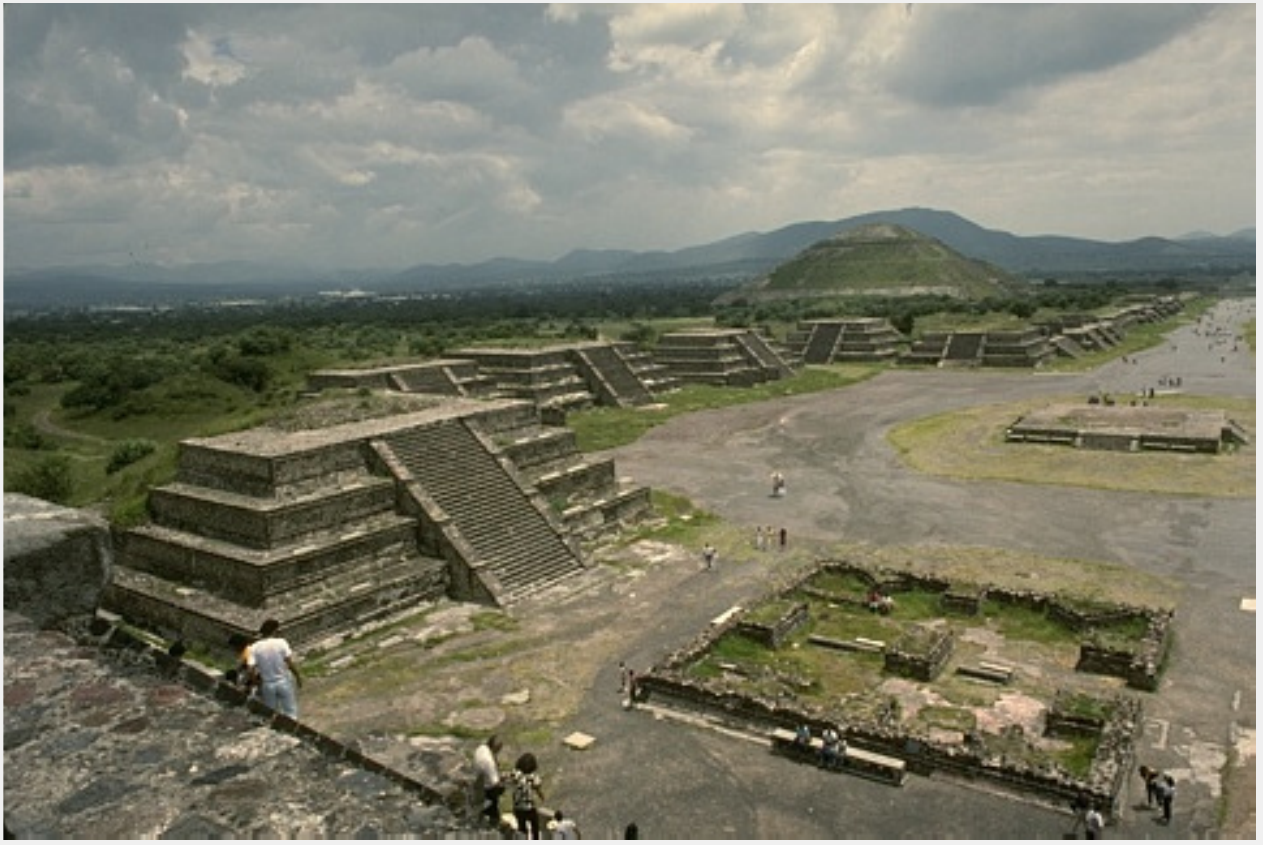}
	\end{subfigure}
	\begin{subfigure}[b]{0.195\textwidth}
		\centering
		\includegraphics[width=\textwidth]{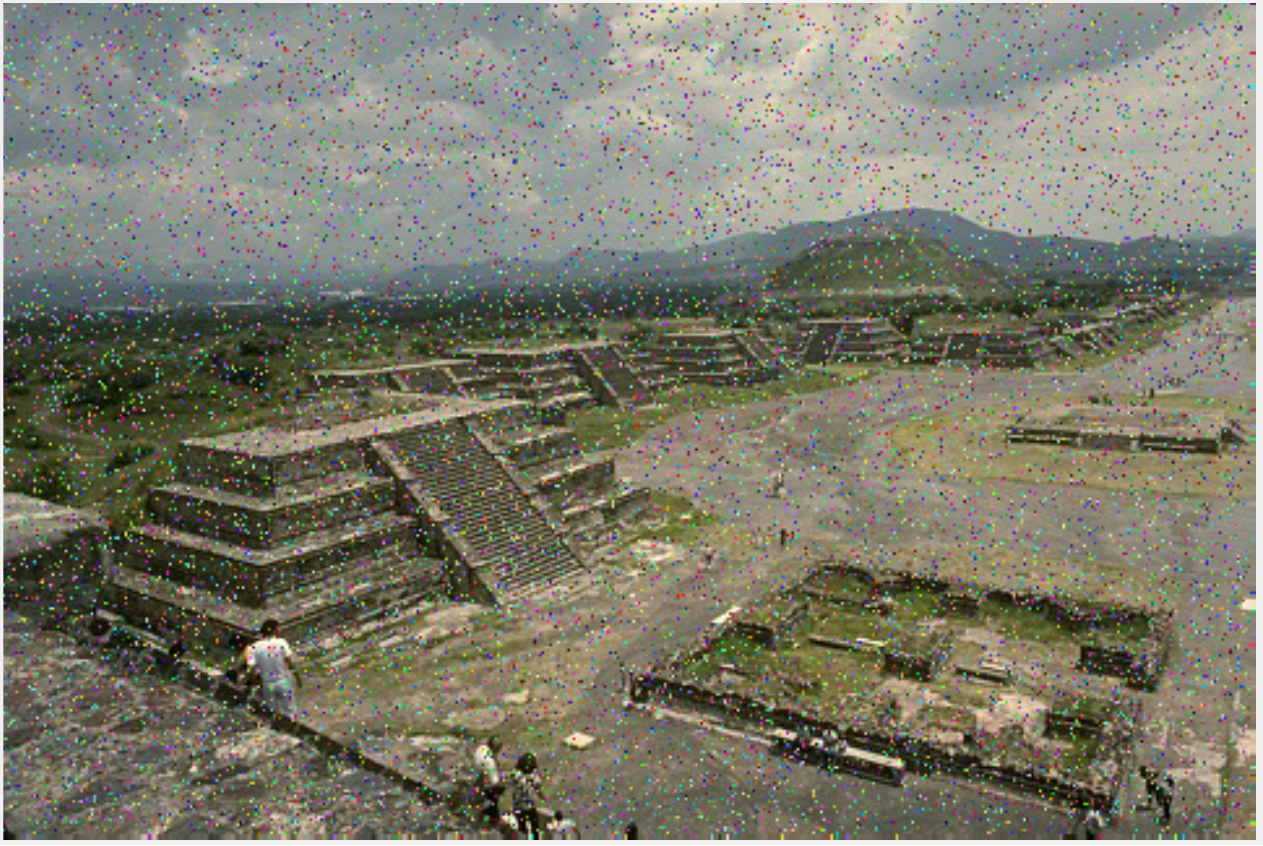}
	\end{subfigure}
	\begin{subfigure}[b]{0.195\textwidth}
		\centering
		\includegraphics[width=\textwidth]{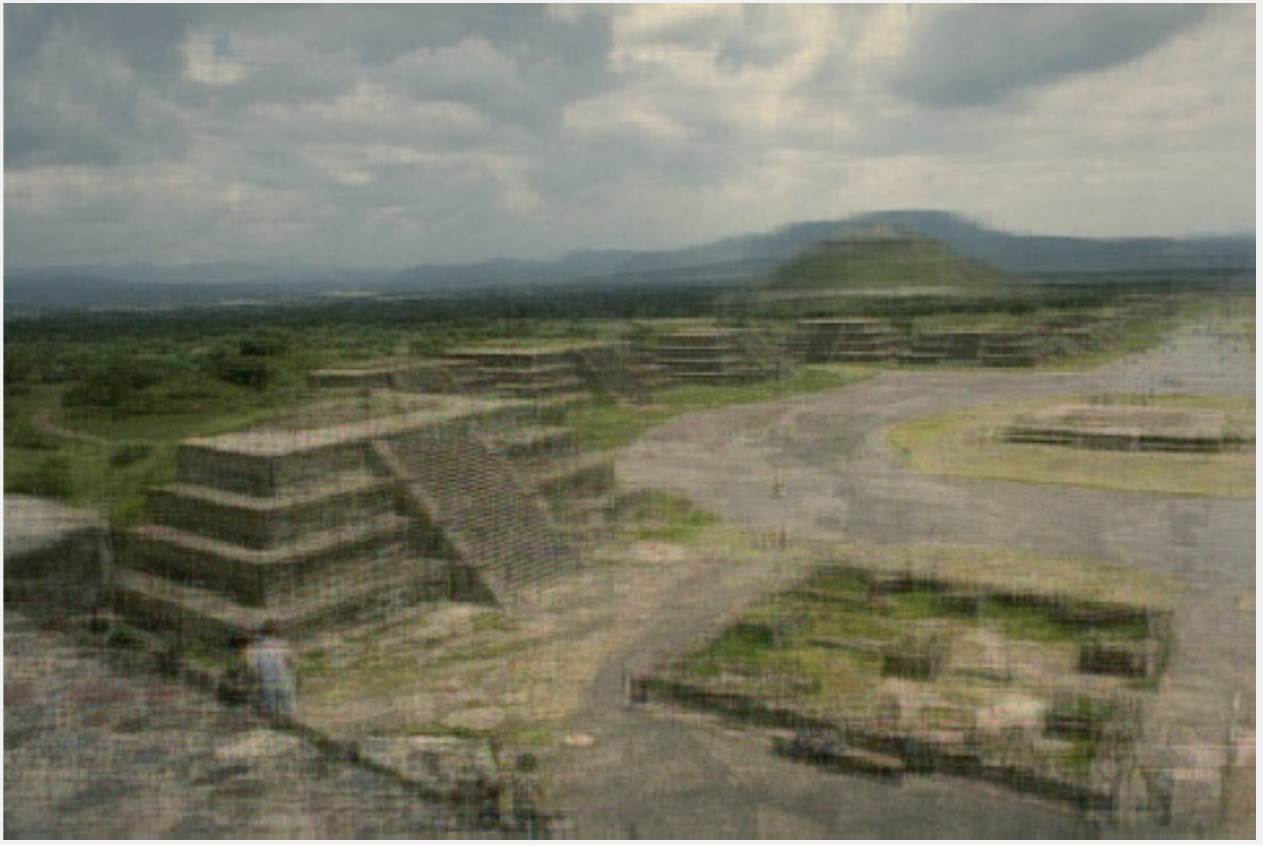}
	\end{subfigure}
	\begin{subfigure}[b]{0.195\textwidth}
		\centering
		\includegraphics[width=\textwidth]{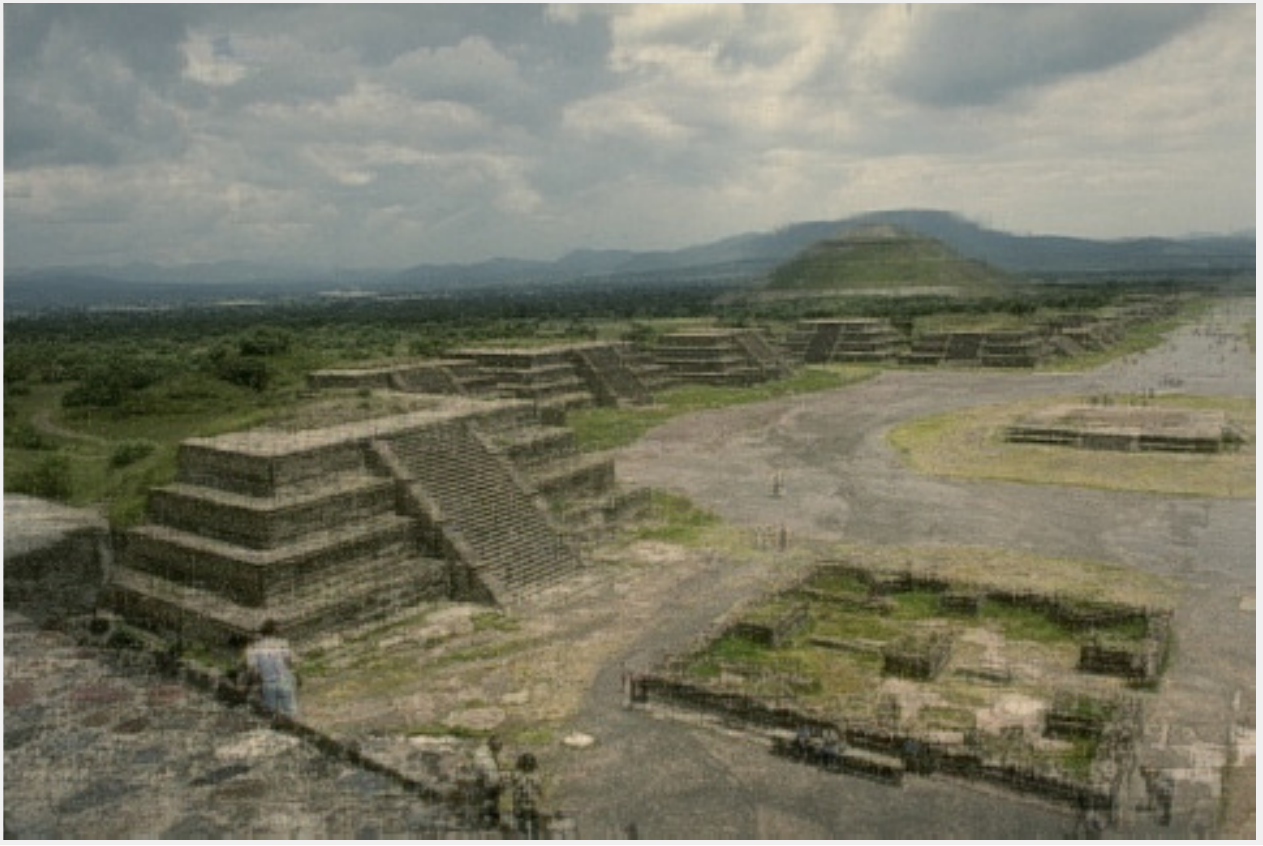}
	\end{subfigure}
	\begin{subfigure}[b]{0.195\textwidth}
		\centering
		\includegraphics[width=\textwidth]{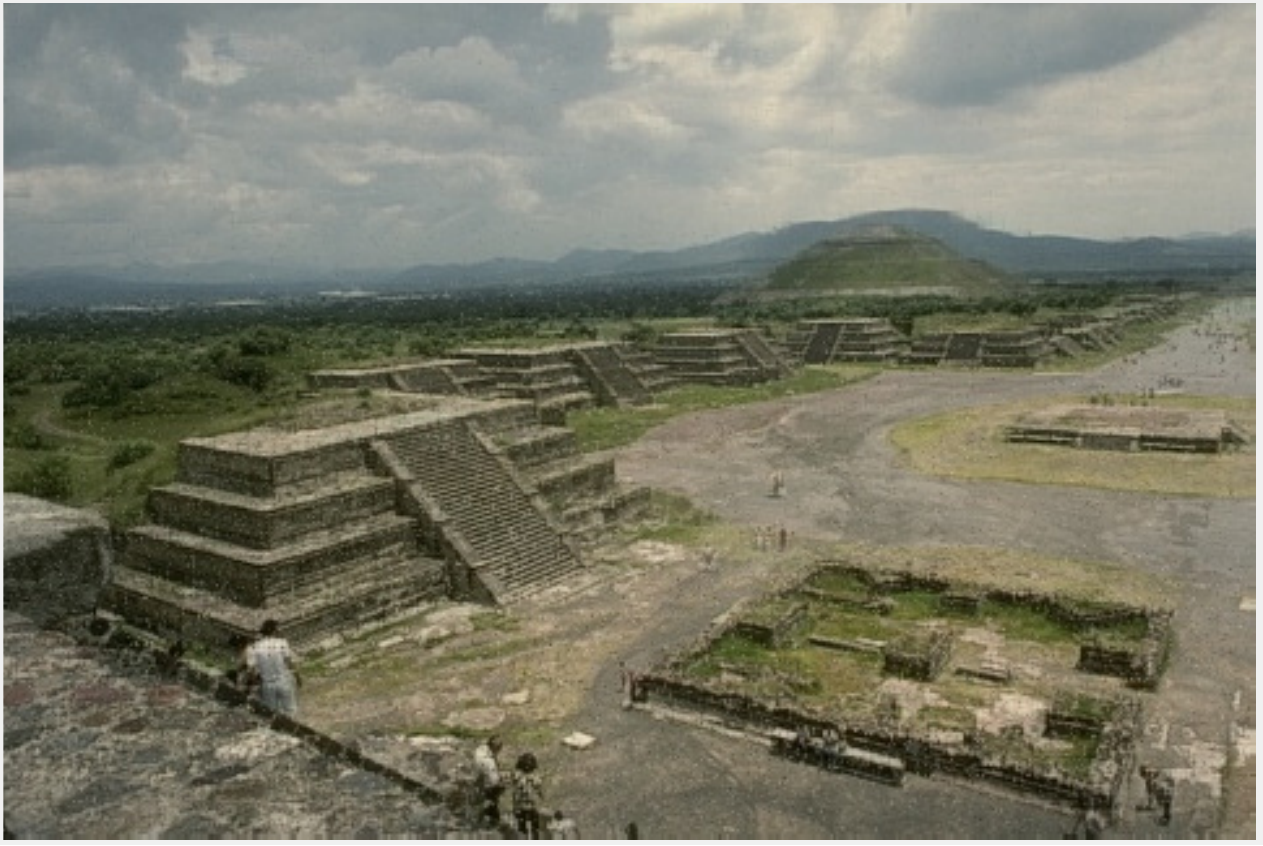}
	\end{subfigure}
	\begin{subfigure}[b]{0.195\textwidth}
		\centering
		\includegraphics[width=\textwidth]{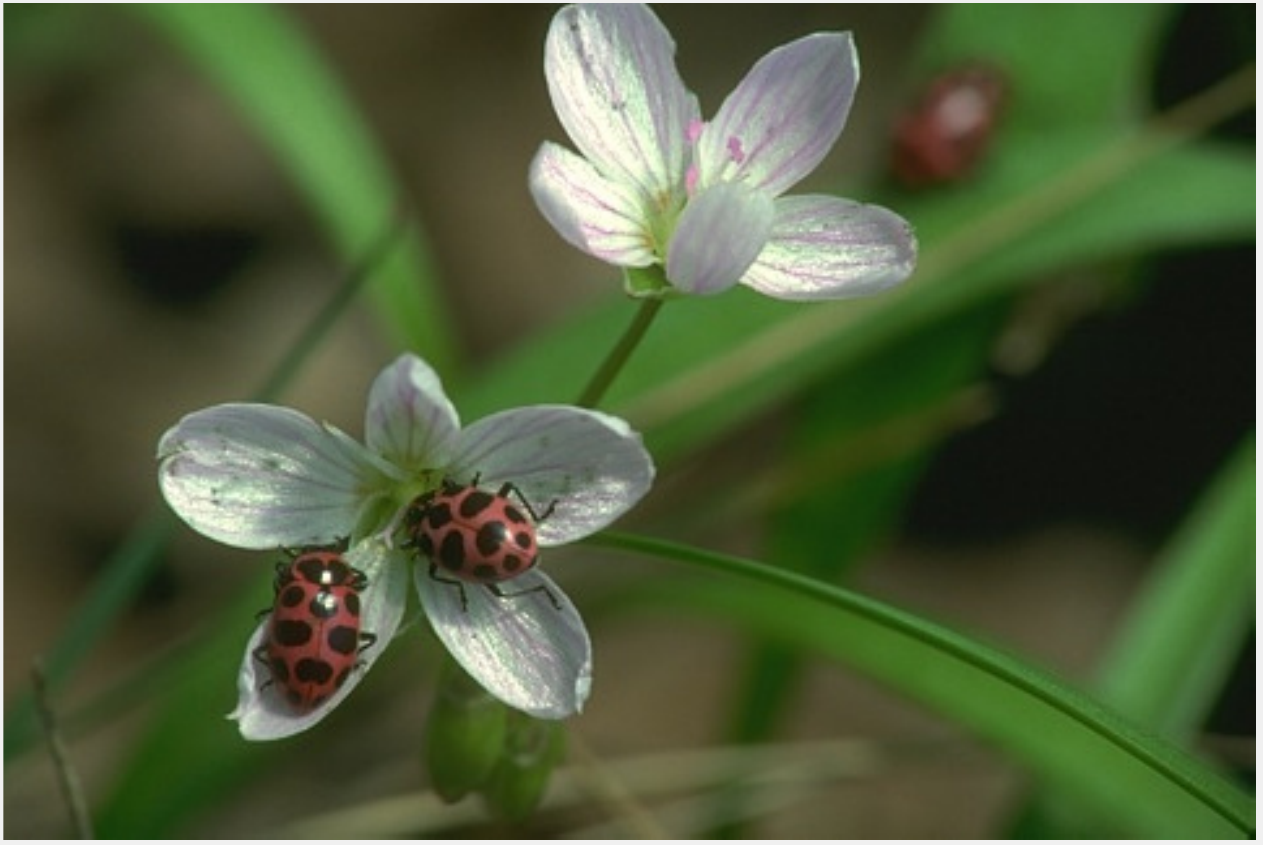}
	\end{subfigure}
	\begin{subfigure}[b]{0.195\textwidth}
		\centering
		\includegraphics[width=\textwidth]{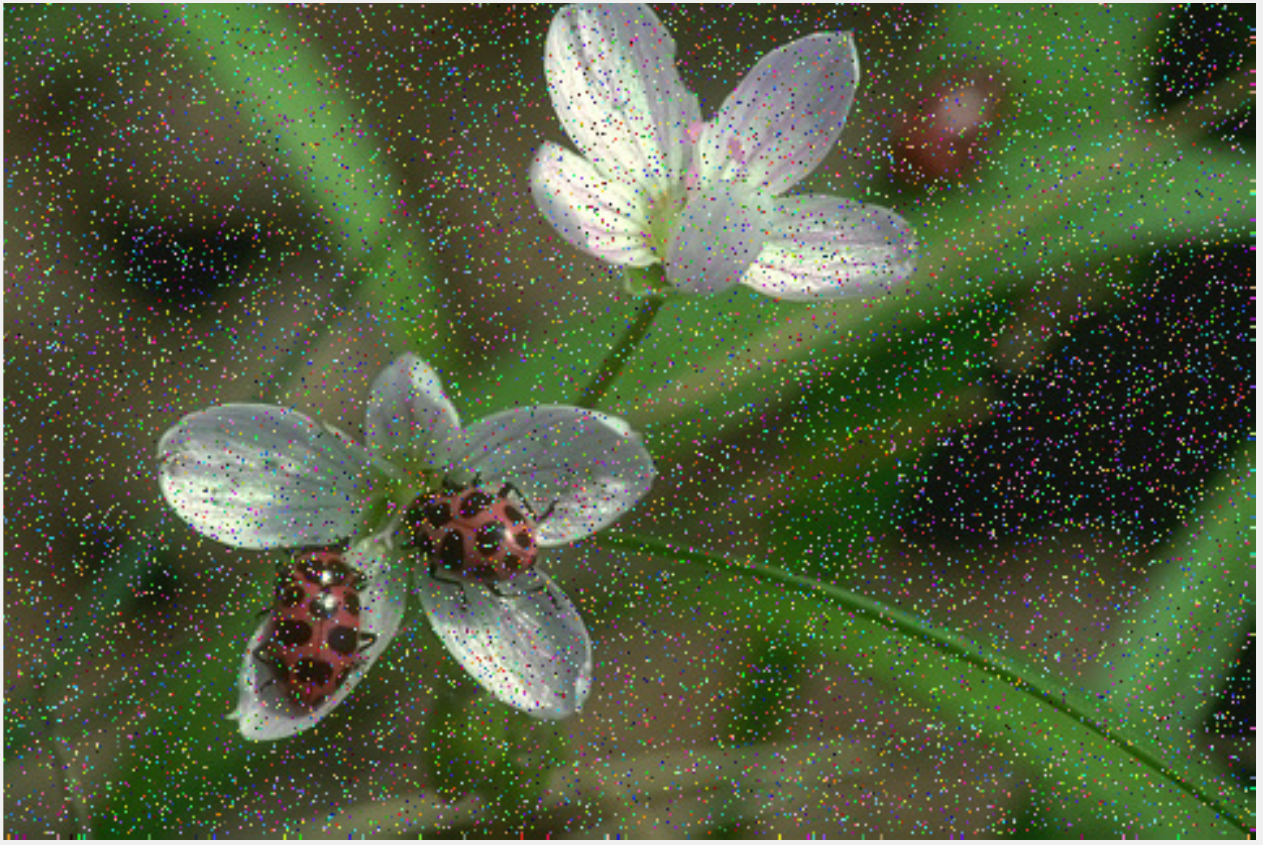}
	\end{subfigure}
	\begin{subfigure}[b]{0.195\textwidth}
		\centering
		\includegraphics[width=\textwidth]{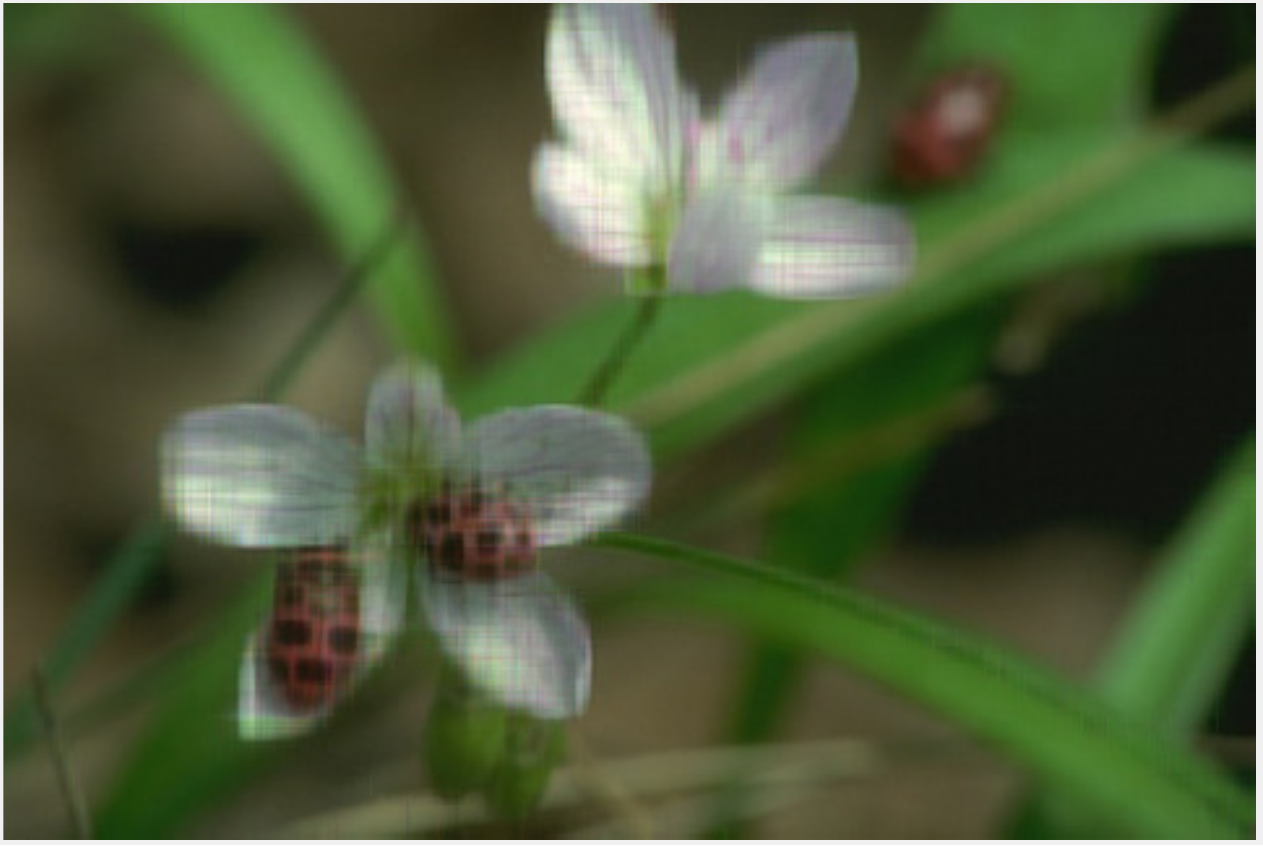}
	\end{subfigure}
	\begin{subfigure}[b]{0.195\textwidth}
		\centering
		\includegraphics[width=\textwidth]{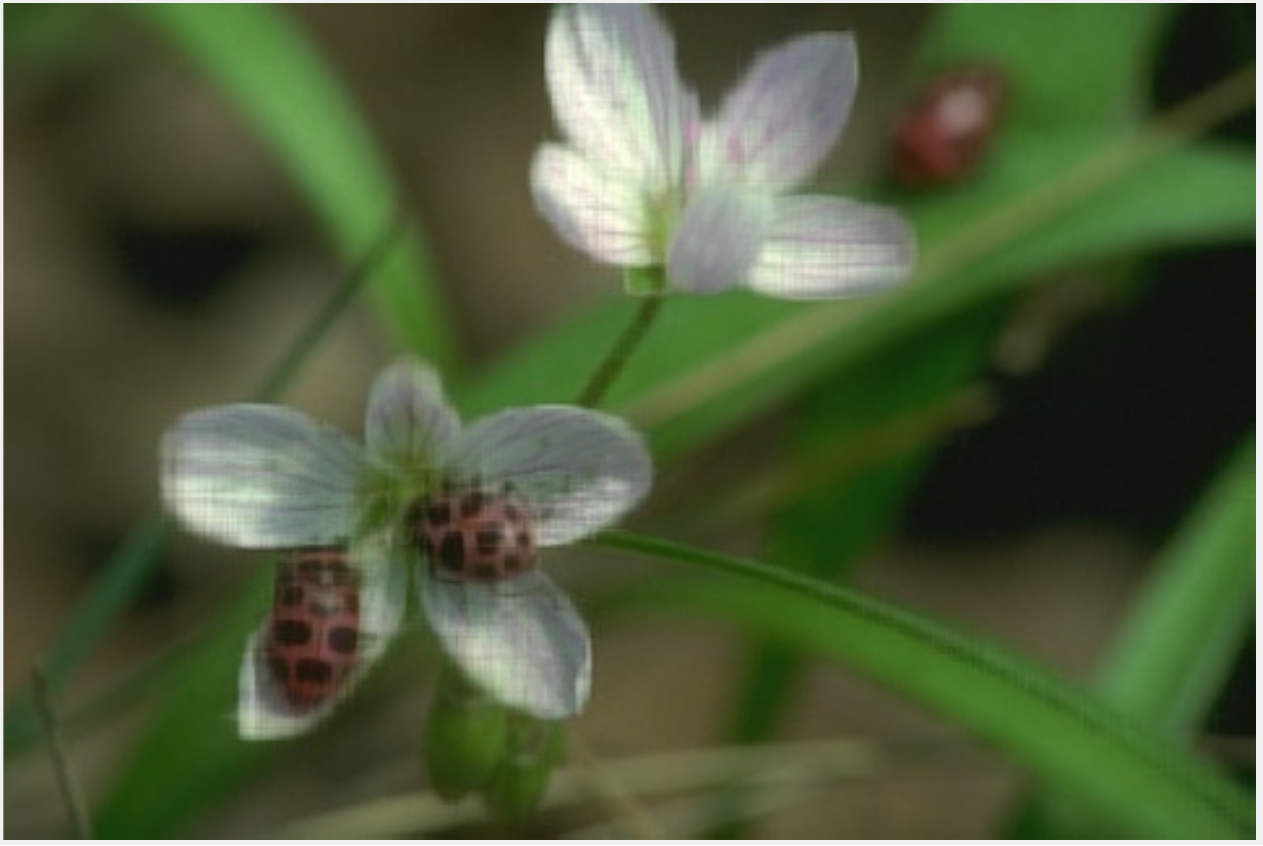}
	\end{subfigure}
	\begin{subfigure}[b]{0.195\textwidth}
		\centering
		\includegraphics[width=\textwidth]{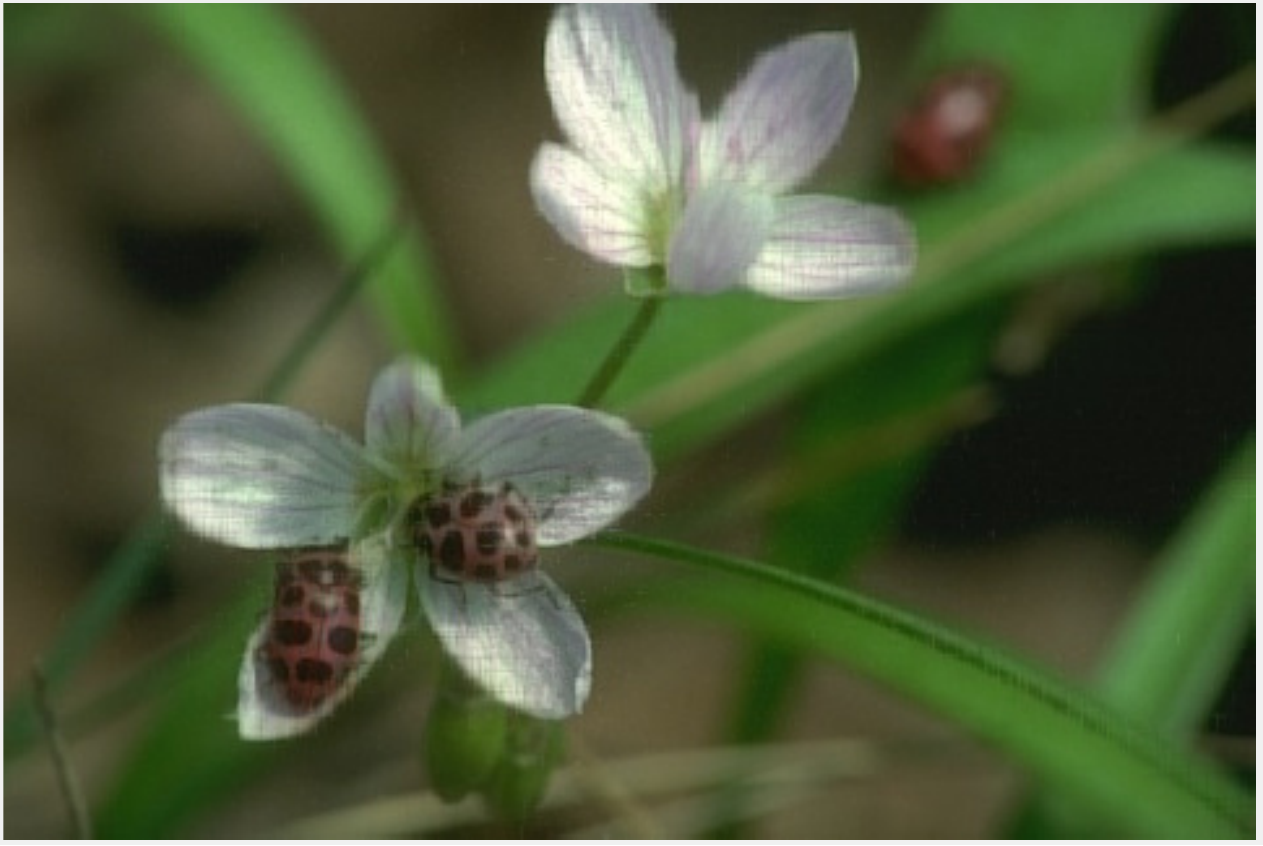}
	\end{subfigure}
	\begin{subfigure}[b]{0.195\textwidth}
		\centering
		\includegraphics[width=\textwidth]{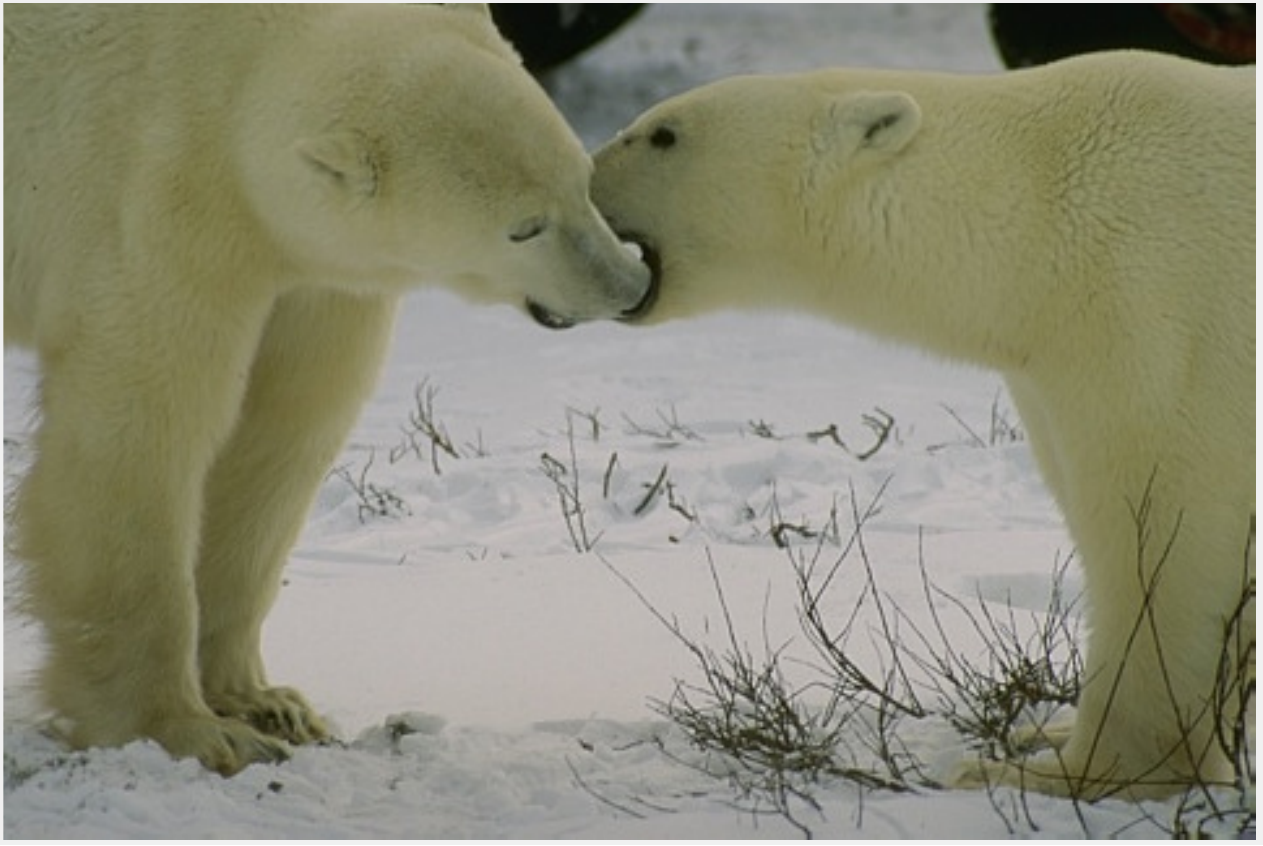}
		\caption{\small Orignal image}
	\end{subfigure}
	\begin{subfigure}[b]{0.195\textwidth}
		\centering
		\includegraphics[width=\textwidth]{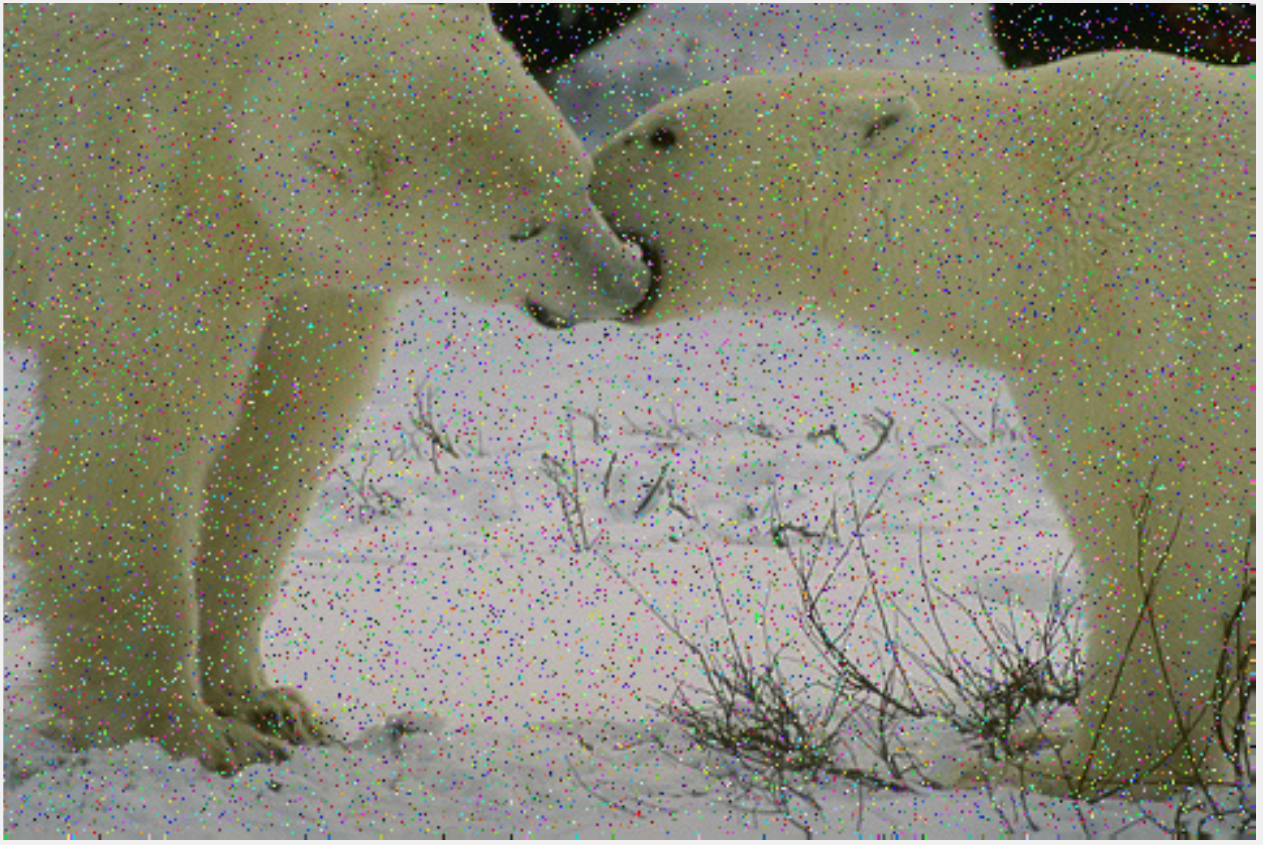}
		\caption{\small Observed image}
	\end{subfigure}
	\begin{subfigure}[b]{0.195\textwidth}
		\centering
		\includegraphics[width=\textwidth]{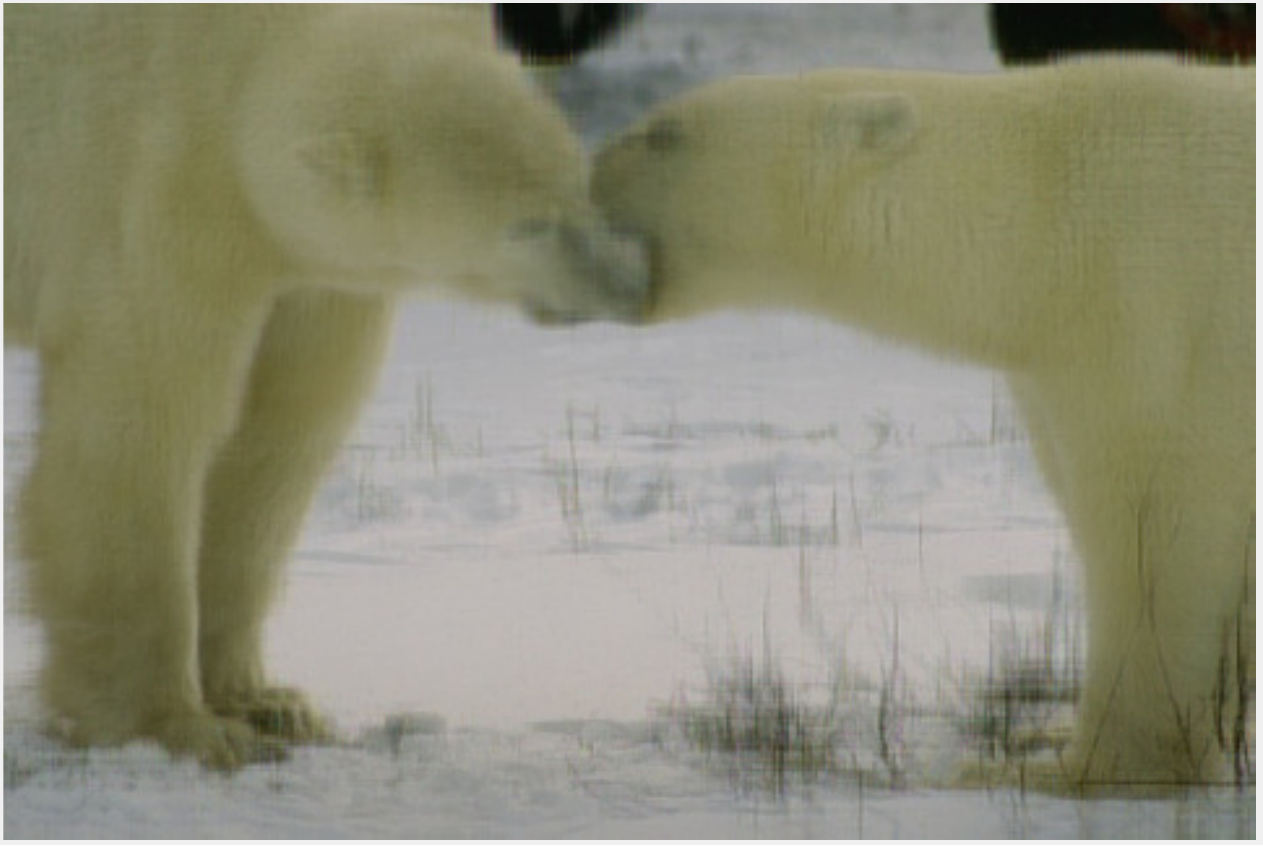}
		\caption{\small RPCA}
	\end{subfigure}
	\begin{subfigure}[b]{0.195\textwidth}
		\centering
		\includegraphics[width=\textwidth]{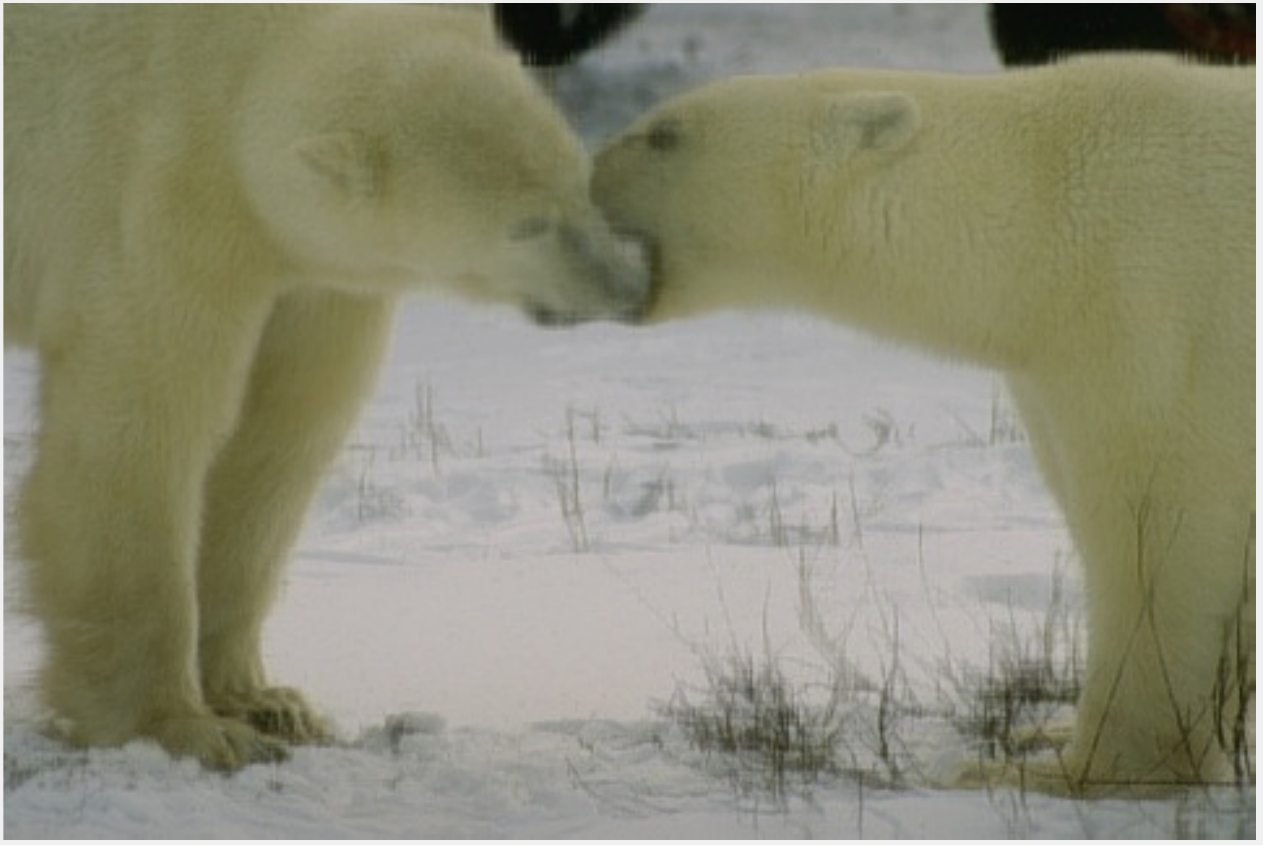}
		\caption{\small SNN}
	\end{subfigure}
	\begin{subfigure}[b]{0.195\textwidth}
		\centering
		\includegraphics[width=\textwidth]{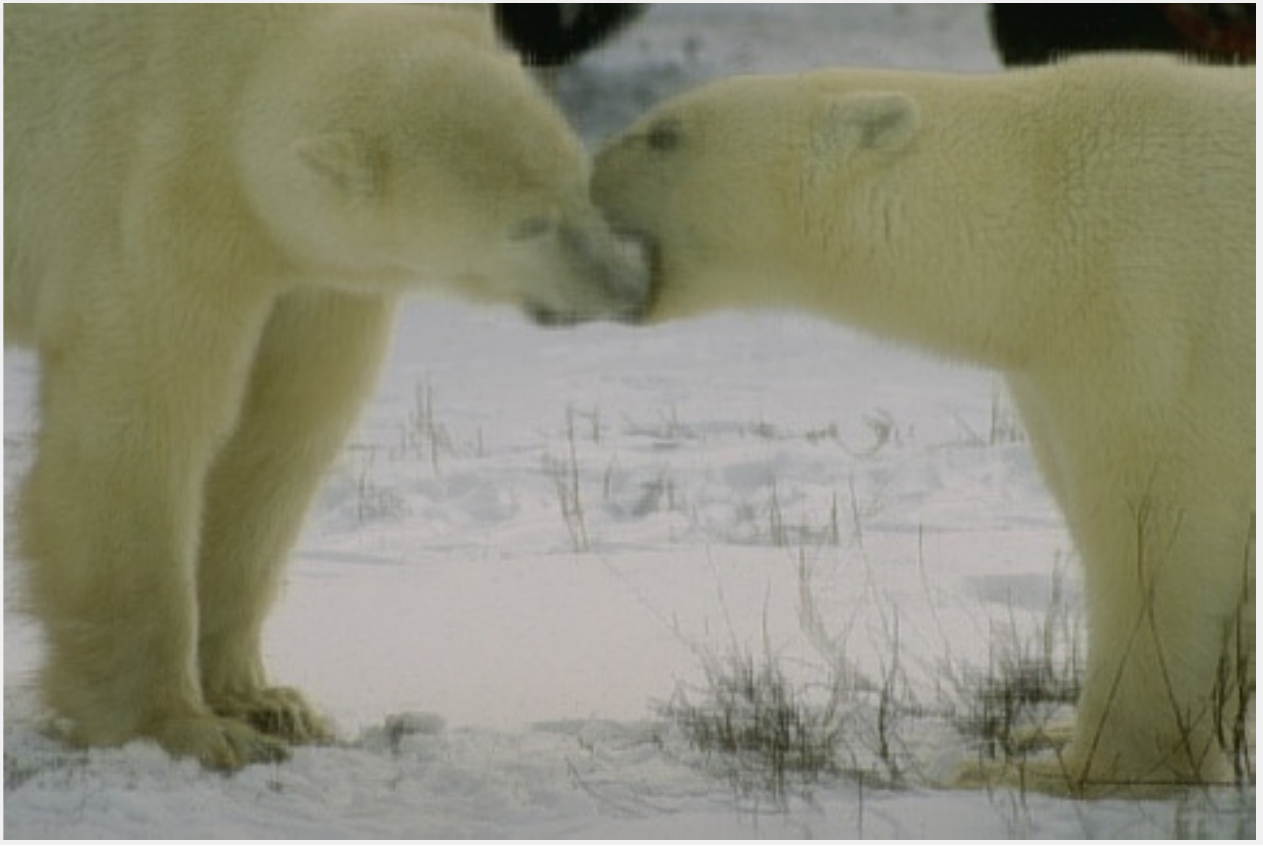}
		\caption{\small TRPCA}
	\end{subfigure}
	\\
	\vspace{0.8em}\small 
	\centering
\begin{minipage}[c]{0.48\textwidth}%
	\centering
	\begin{tabular}{c|c|c|c|c|c|c}
		\hline
		Index	& 1		& 2	    & 3	    & 4	    & 5	    & 6    	\\ \hline
		RPCA	& 29.10 & 24.53 & 25.12 & 24.31 & 27.50 & 26.77 \\ \hline
		SNN 	& 30.91 & 26.45 & 27.66 & 26.45 & 29.26 & 28.19 \\ \hline
		TRPCA	& \textbf{32.33} & \textbf{28.30} & \textbf{28.59} & \textbf{28.62} & \textbf{31.06} & \textbf{30.16} \\ \hline
	\end{tabular}
	\caption*{\small (f) Comparison of the PSNR values on the above 6 images.}
\end{minipage}	
\begin{minipage}[c]{0.48\textwidth}%
	\centering
	\begin{tabular}{c|c|c|c|c|c|c}
		\hline
		Index 	& 1	    & 2    & 3    & 4    & 5    & 6    \\ \hline
		RPCA 	& 14.98 &13.79 &14.35 &12.45 &12.72 &15.73 \\ \hline
		SNN 	& 26.93 &25.20 &25.33 &23.47 &23.38 &28.16 \\ \hline
		TRPCA 	& \textbf{12.96} &\textbf{12.24} &\textbf{12.76} &\textbf{10.70} &\textbf{10.64} &\textbf{14.31} \\ \hline
	\end{tabular}
	\caption*{\small (g) Comparison of the running time (s) on the above 6 images.}
\end{minipage}
	\caption{\small{Recovery performance comparison on 6 example images. (a) Original image; (b) observed image; (c)-(e) recovered images by RPCA, SNN and TRPCA, respectively; (f) and (g) show the comparison of PSNR values and running time (second) on the above 6 images. }}\label{fig_imageinpr}
	\vspace{-0.3cm}
\end{figure*}

\subsection{Application to Image Recovery}

We apply TRPCA to image recovery from the corrupted images with random noises.The motivation is that the color images can be approximated by low rank matrices or tensors \cite{liu2013tensor}. We will show that the recovery performance of TRPCA is still satisfactory with the suggested parameter in theory on real data. 

We use 100 color images from the Berkeley Segmentation Dataset \cite{martin2001database} for the test. The sizes of images are $321\times 481$ or $481\times 321$. For each image, we randomly set $10\%$ of pixels to random values in [0, 255], and the positions of the corrupted pixels are unknown. All the 3 channels of the images are corrupted at the same positions (the corruptions are on the whole tubes). This problem is more challenging than the corruptions on 3 channels at different positions. See Figure \ref{fig_imageinpr} (b) for some sample images with noises. We compare our TRPCA model with RPCA \cite{RPCA} and SNN \cite{huang2014provable} which also own the theoretical recovery guarantee. For RPCA, we apply it on each channel separably and combine the results to obtain the recovered image. The parameter $\lambda$ is set to $\lambda = 1/\sqrt{\max{(n_1,n_2)}}$ as suggested in theory. For SNN in (\ref{eqsnn}), we find that it does not perform well when $\lambda_i$'s are set to the values suggested in theory \cite{huang2014provable}. We empirically set $\bm{\lambda}=[15, 15, 1.5]$ in (\ref{eqsnn}) to make SNN perform well in most cases. For our TRPCA, we format a $n_1\times n_2$ sized image as a tensor of size $n_1\times n_2\times 3$. We find that such a way of tensor construction usually performs better than some other ways. This may be due to the noises which present on the tubes. We set $\lambda = 1/\sqrt{3\max{(n_1,n_2)}}$ in TRPCA. We use the Peak Signal-to-Noise Ratio (PSNR), defined as
\begin{equation*}\label{psnreq}
	\text{PSNR} = 10\log_{10}\left( \frac{\norm{\M}_\infty^2}{\frac{1}{n_1n_2n_3} {\|\hat{\mathbf{\X}}-\M\|_F^2}} \right), 
\end{equation*}
to evaluate the recovery performance. 

Figure \ref{fig_img_res_psnr_time} gives the comparison of the PSNR values and running time on all 100 images. Some examples with the recovered images are shown in Figure \ref{fig_imageinpr}. From these results, we have the following observations. First, both SNN and TRPCA perform much better than the matrix based RPCA. The reason is that RPCA performs on each channel independently, and thus is not able to use the information across channels. The tensor methods instead take advantage of the multi-dimensional structure of data. Second, TRPCA outperforms SNN in most cases. This not only demonstrates the superiority of our TRPCA, but also validates our recovery guarantee in Theorem \ref{thm1} on image data. Note that SNN needs some additional effort to tune the weighted parameters $\lambda_i$'s empirically. Different from SNN which is a loose convex surrogate of the sum of Tucker rank, our TNN is a tight convex relaxation of the tensor average rank, and the recovery performance of the obtained optimal solutions has the tight recovery guarantee as RPCA. Third, we use the standard ADMM to solve RPCA, SNN and TRPCA. Figure \ref{fig_img_res_psnr_time} (bottom) shows that TRPCA is as efficient as RPCA, while SNN requires the highest cost in this experiment.

\subsection{Application to Background Modeling}

In this section, we consider the background modeling problem which aims to separate the foreground objects from the background. The frames of the background are highly \canyi{correlated} and thus can be modeled as a low rank tensor. The moving foreground objects occupy only a fraction of image pixels and thus can be treated as sparse errors. We solve this problem by using RPCA, SNN and TRPCA. \canyi{We consider four color videos, \textit{Hall} (144$\times 176$, 300), \textit{WaterSurface} (128$\times$160, 300), \textit{ShoppingMall} (256$\times$320, 100) and \textit{ShopCorridor} (144$\times$192, 200), where the numbers in the parentheses denote the frame size and the frame number. For each sequence with color frame size $h\times w$ and frame number $k$, we reshape it to a $(3hw) \times k$ matrix and use it in RPCA. To use SNN and TRPCA, we reshape the video to a $(hw)\times 3\times k$ tensor\footnote{We observe that this way of tensor construction performs well for TRPCA, despite one has some other ways.}.} The parameter of SNN in (\ref{eqsnn}) is set to $\bm{\lambda}=[10,\ 0.1,\ 1]\times 20$ in this experiment. 

Figure \ref{fig_bg_res} shows the performance and running time comparison of RPCA, SNN and TRPCA \canyi{on the four sequences}. It can be seen that the low rank components identify the main illuminations as background, while the sparse parts correspond to the motion in the scene. \canyi{Generally, our TRPCA performs the best. RPCA does not perform well on the \textit{Hall} and \textit{WaterSurface} sequences using the default parameter.} Also, TRPCA is as efficient as RPCA and SNN requires much higher computational cost. The efficiency of TRPCA is benefited from our faster way for computing tensor SVT in Algorithm \ref{alg_tsvt} which is the key step for solving TRPCA. 

\captionsetup{position=top}
\begin{figure}[!t]
	\centering
	\begin{subfigure}[b]{0.118\textwidth}
		\centering
		\includegraphics[width=\textwidth]{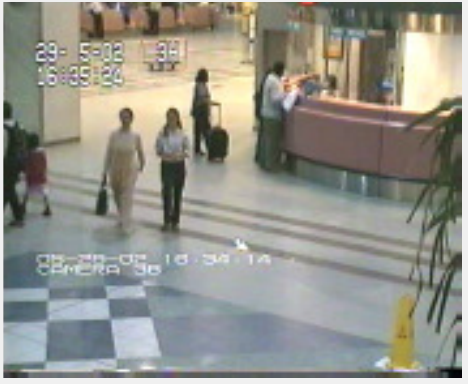}
	\end{subfigure}
	\begin{subfigure}[b]{0.118\textwidth}
		\centering
		\includegraphics[width=\textwidth]{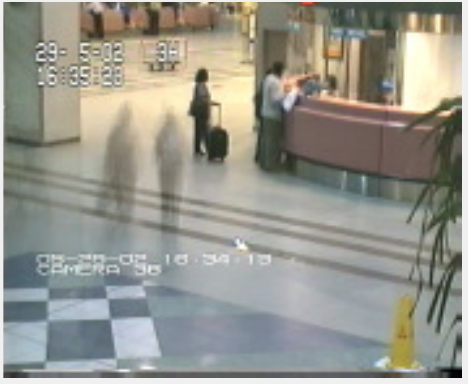}
	\end{subfigure}
	\begin{subfigure}[b]{0.118 \textwidth}
		\centering
		\includegraphics[width=\textwidth]{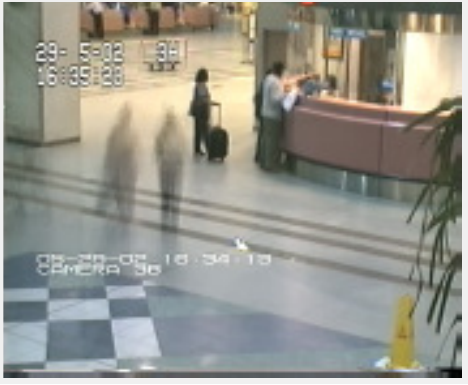}
	\end{subfigure}
	\begin{subfigure}[b]{0.118 \textwidth}
		\centering
		\includegraphics[width=\textwidth]{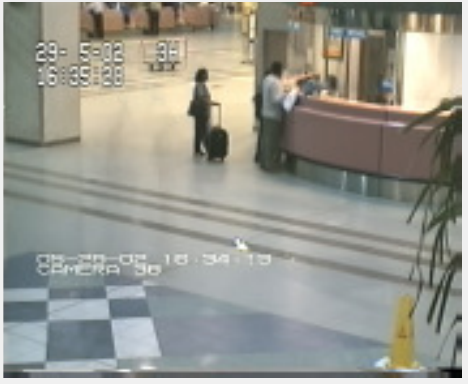}
	\end{subfigure}
	
	\begin{subfigure}[b]{0.118\textwidth}
	\end{subfigure}
	\hfill
	\begin{subfigure}[b]{0.118\textwidth}
		\centering
		\includegraphics[width=\textwidth]{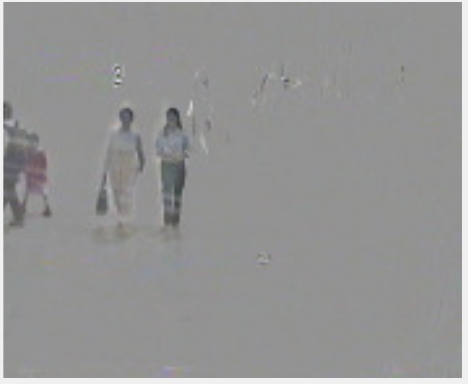}
	\end{subfigure}
	\begin{subfigure}[b]{0.118 \textwidth}
		\centering
		\includegraphics[width=\textwidth]{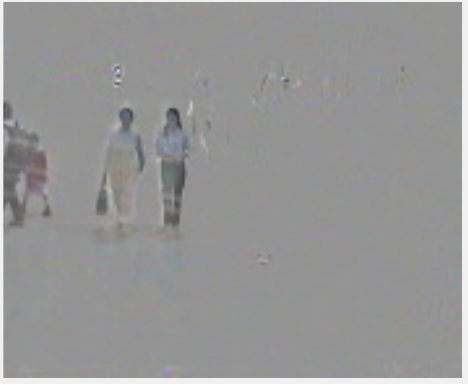}
	\end{subfigure}
	\begin{subfigure}[b]{0.118 \textwidth}
		\centering
		\includegraphics[width=\textwidth]{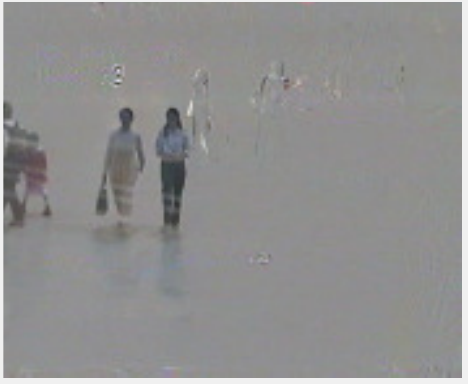}
	\end{subfigure}
	
	\begin{subfigure}[b]{0.118\textwidth}
		\centering
		\includegraphics[width=\textwidth]{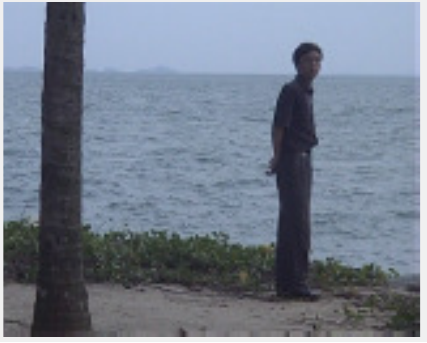}
	\end{subfigure}
	\begin{subfigure}[b]{0.118\textwidth}
		\centering
		\includegraphics[width=\textwidth]{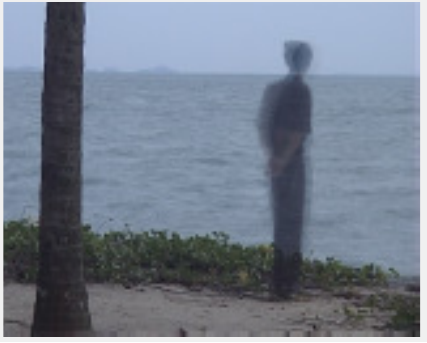}
	\end{subfigure}
	\begin{subfigure}[b]{0.118 \textwidth}
		\centering
		\includegraphics[width=\textwidth]{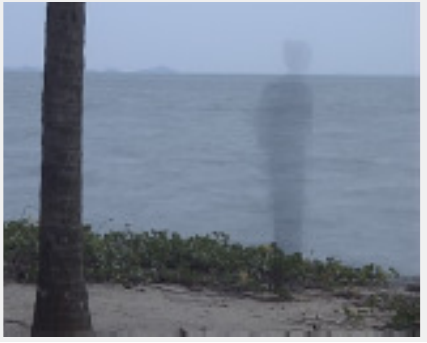}
	\end{subfigure}
	\begin{subfigure}[b]{0.118 \textwidth}
		\centering
		\includegraphics[width=\textwidth]{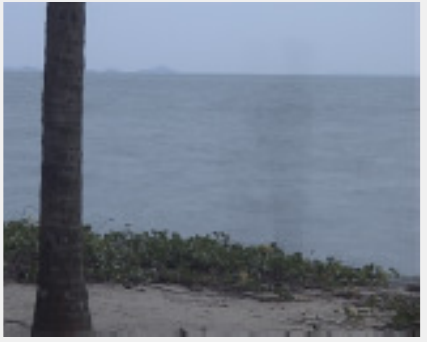}
	\end{subfigure}
	
	\begin{subfigure}[b]{0.118\textwidth}

	\end{subfigure}
	\hfill
	\begin{subfigure}[b]{0.118\textwidth}
		\centering
		\includegraphics[width=\textwidth]{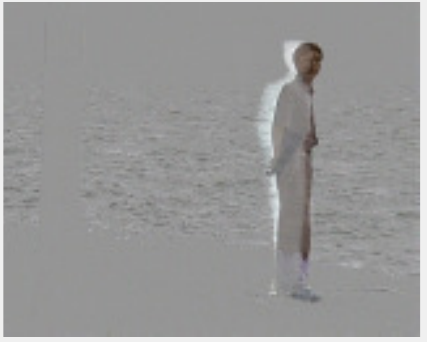}		
	\end{subfigure}
	\begin{subfigure}[b]{0.118 \textwidth}
		\centering
		\includegraphics[width=\textwidth]{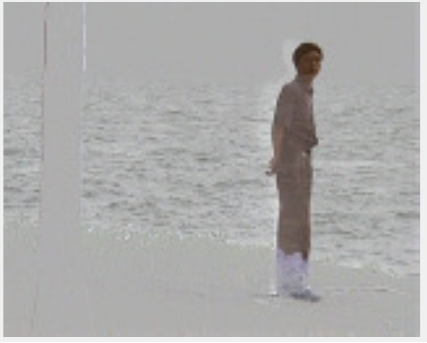}
	\end{subfigure}
	\begin{subfigure}[b]{0.118 \textwidth}
		\centering
		\includegraphics[width=\textwidth]{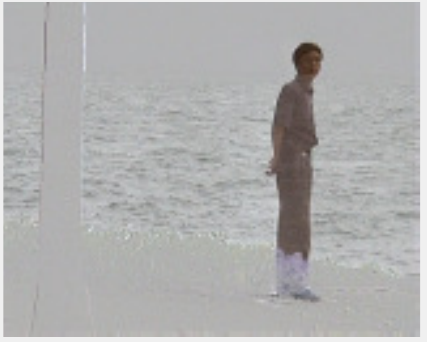}
	\end{subfigure}

	\begin{subfigure}[b]{0.118\textwidth}
		\centering
		\includegraphics[width=\textwidth]{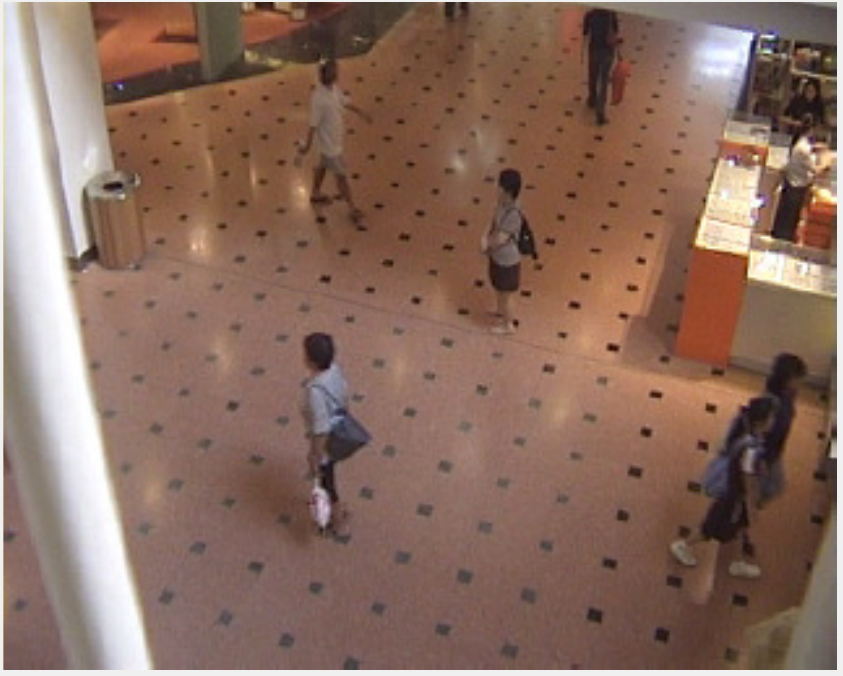}
	\end{subfigure}
	\begin{subfigure}[b]{0.118\textwidth}
		\centering
		\includegraphics[width=\textwidth]{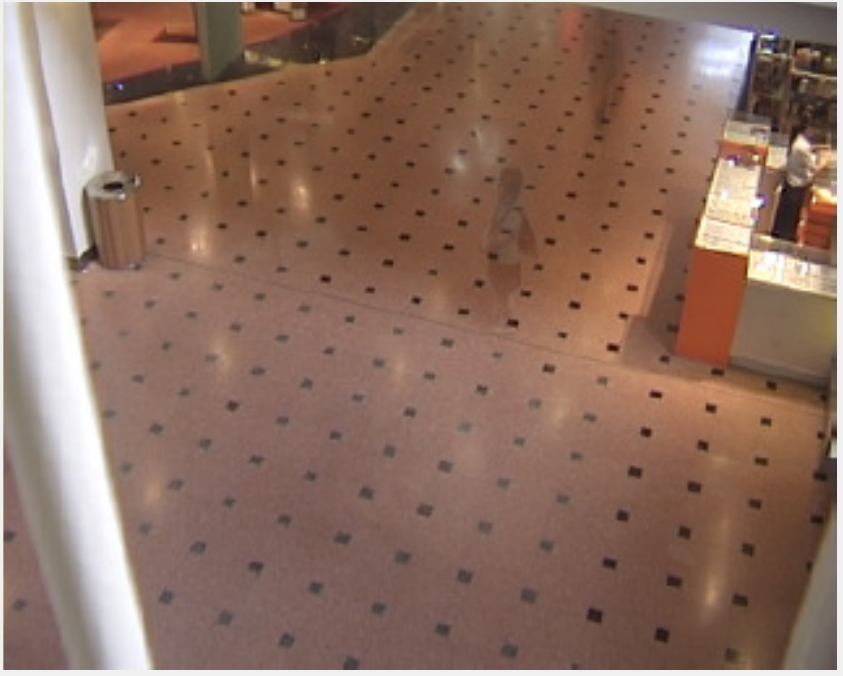}
	\end{subfigure}
	\begin{subfigure}[b]{0.118 \textwidth}
		\centering
		\includegraphics[width=\textwidth]{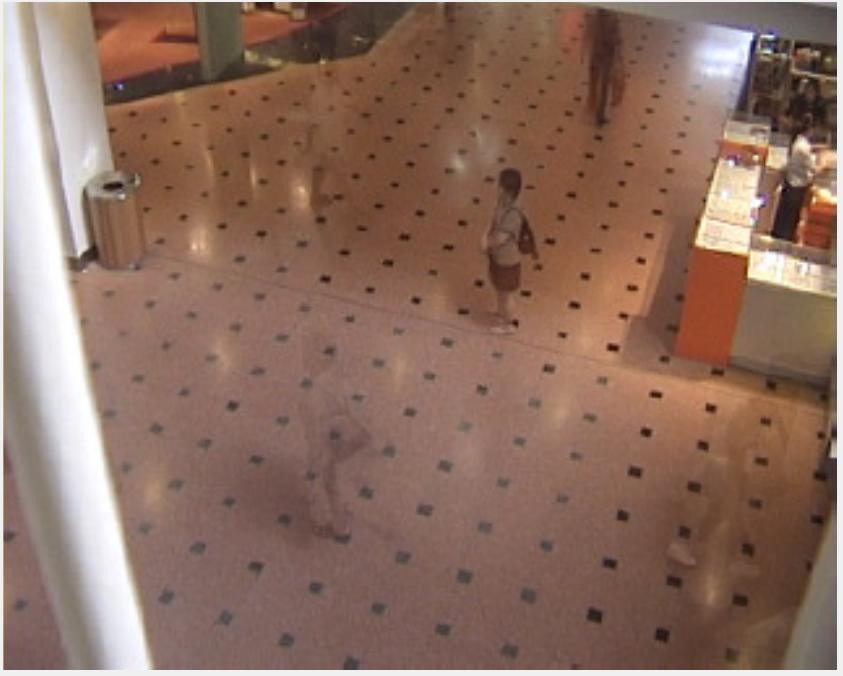}
	\end{subfigure}
	\begin{subfigure}[b]{0.118 \textwidth}
		\centering
		\includegraphics[width=\textwidth]{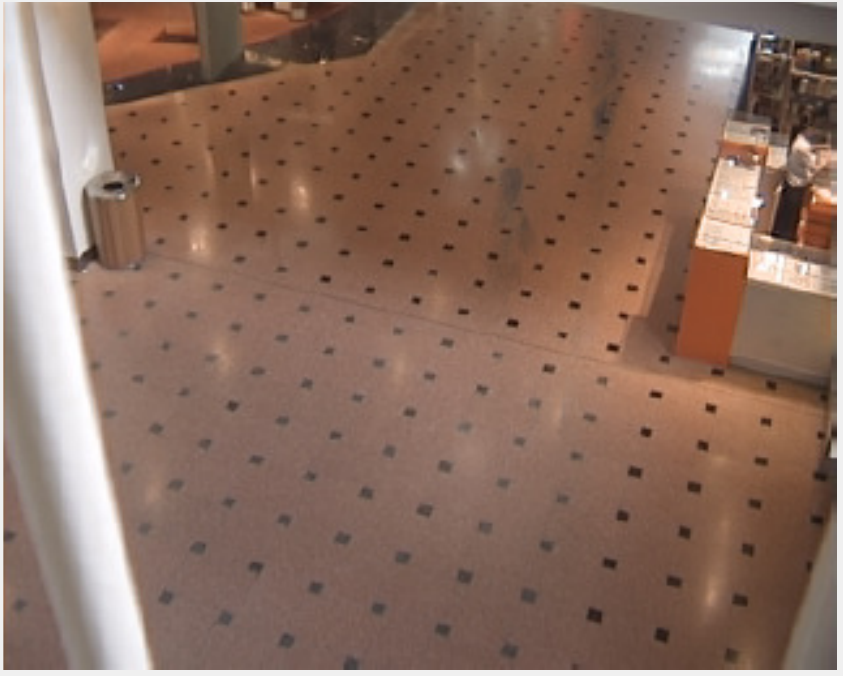}
	\end{subfigure}
	\begin{subfigure}[b]{0.118\textwidth}		
	\end{subfigure}
	\hfill
	\begin{subfigure}[b]{0.118\textwidth}
		\centering
		\includegraphics[width=\textwidth]{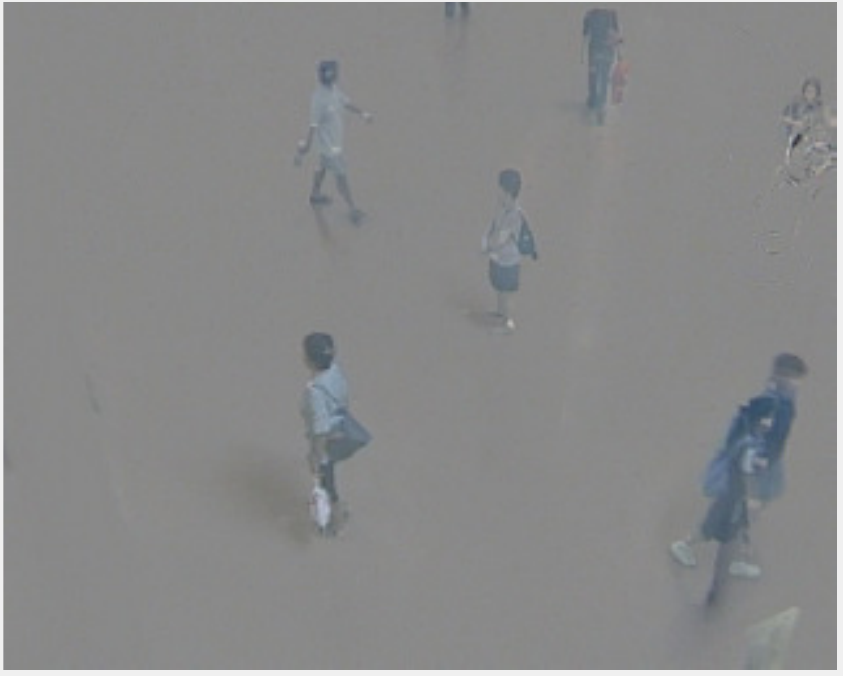}
	\end{subfigure}
	\begin{subfigure}[b]{0.118 \textwidth}
		\centering
		\includegraphics[width=\textwidth]{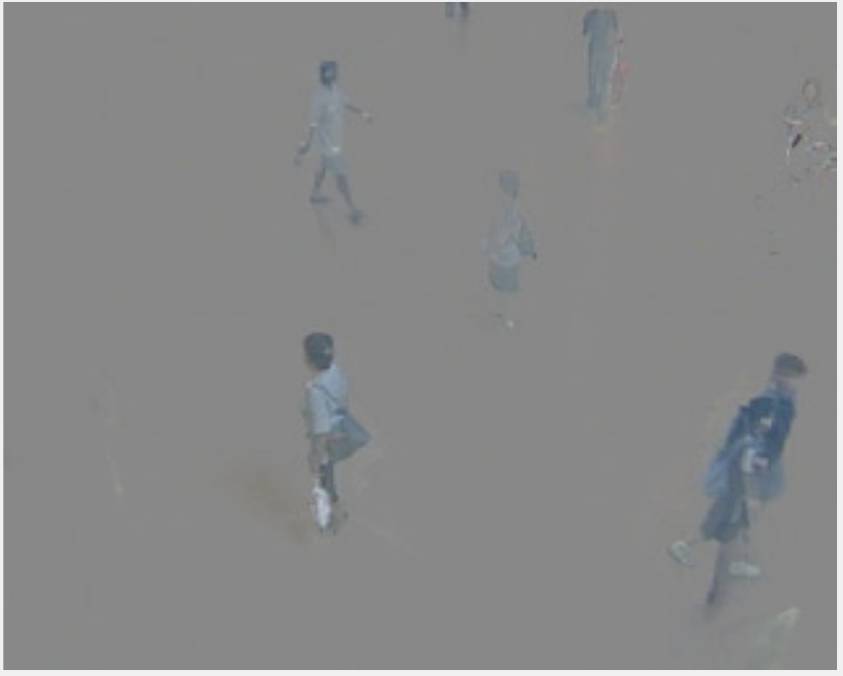}
	\end{subfigure}
	\begin{subfigure}[b]{0.118 \textwidth}
		\centering
		\includegraphics[width=\textwidth]{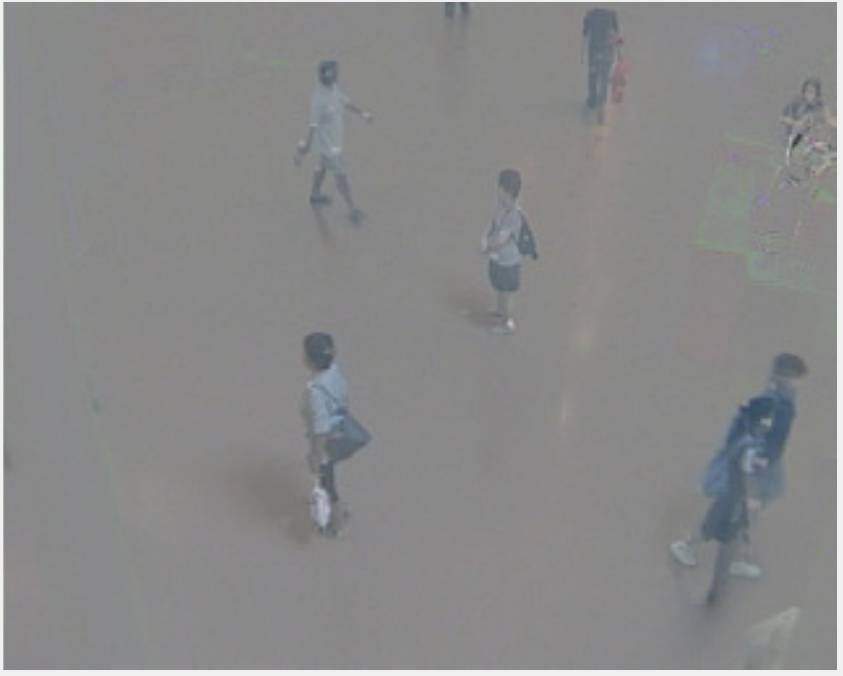}
	\end{subfigure}
	
	\begin{subfigure}[b]{0.118\textwidth}
		\centering
		\includegraphics[width=\textwidth]{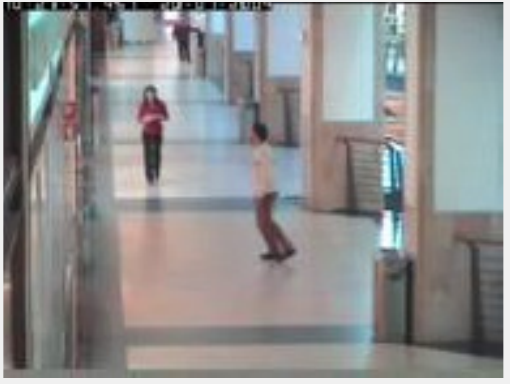}
	\end{subfigure}
	\begin{subfigure}[b]{0.118\textwidth}
		\centering
		\includegraphics[width=\textwidth]{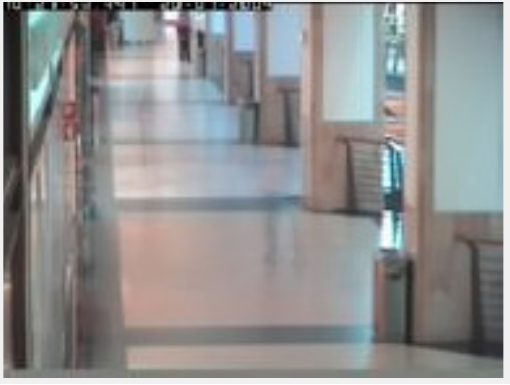}
	\end{subfigure}
	\begin{subfigure}[b]{0.118 \textwidth}
		\centering
		\includegraphics[width=\textwidth]{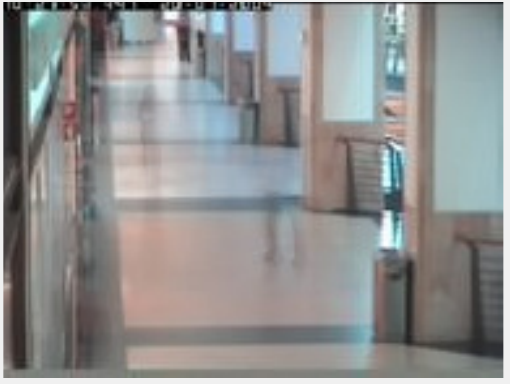}
	\end{subfigure}
	\begin{subfigure}[b]{0.118 \textwidth}
		\centering
		\includegraphics[width=\textwidth]{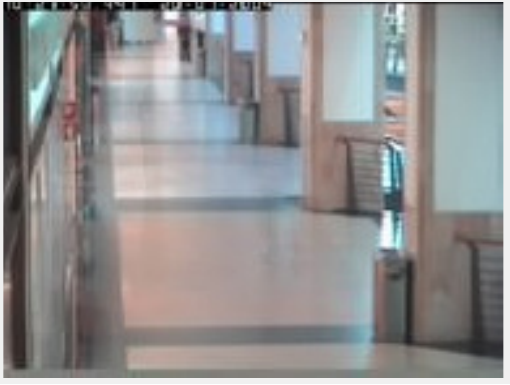}
	\end{subfigure}
	\begin{subfigure}[b]{0.118\textwidth}			
		\caption{Original}	
	\end{subfigure}
	\hfill
	\begin{subfigure}[b]{0.118\textwidth}
		\centering
		\includegraphics[width=\textwidth]{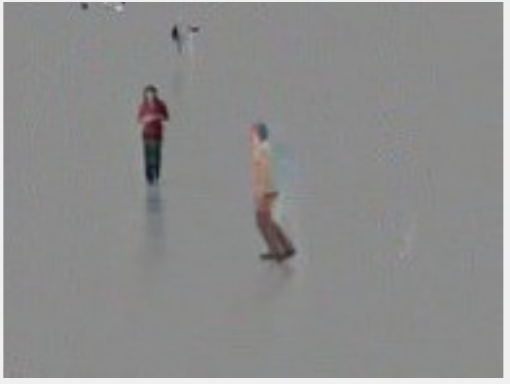}
		\caption{RPCA}
	\end{subfigure}
	\begin{subfigure}[b]{0.118 \textwidth}
		\centering
		\includegraphics[width=\textwidth]{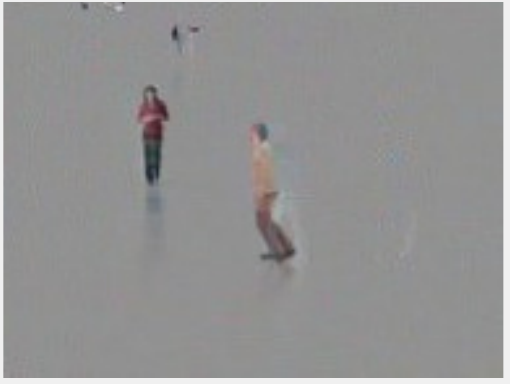}
		\caption{SNN}
	\end{subfigure}
	\begin{subfigure}[b]{0.118 \textwidth}
		\centering
		\includegraphics[width=\textwidth]{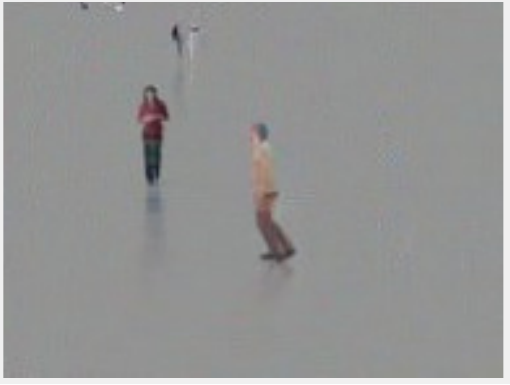}
		\caption{TRPCA}
	\end{subfigure}
		
	\vspace{0.5em}\small 
	\centering
	\begin{minipage}[c]{0.48\textwidth}
		\centering
		\begin{tabular}{c|c|c|c}
			\hline
			& RPCA & SNN & TRPCA \\ \hline
			\textit{Hall}   & \textbf{301.8} & 1553.2 & 323.0 \\ \hline
			\textit{WaterSurface} & 250.1 & 887.3 & \textbf{224.2} \\ \hline
			\canyi{\textit{ShoppingMall}} & \canyi{\textbf{260.9}} & \canyi{744.0} & \canyi{372.4} \\ \hline
			\canyi{\textit{ShopCorridor}} & \canyi{\textbf{321.7}} & \canyi{1438.6} & \canyi{371.3} \\ \hline
		\end{tabular}
		\caption*{\small (e) Running time (seconds) comparison}
	\end{minipage}
	\vspace{-0.1cm}
	\caption{\small{\canyi{Background modeling results of four surveillance video sequences. (a) Original frames; (b)-(d) low rank and sparse components obtained by RPCA, SNN and TRPCA, respectively; (e) running time comparison.}}}
	\label{fig_bg_res}
	\vspace{-0.6cm}
\end{figure}

\section{Conclusions and Future Work} \label{sec_con}

Based on the recently developed tensor-tensor product, which is a natural extension of the matrix-matrix product, we rigorously defined the tensor spectral norm, tensor nuclear norm and tensor average rank, such that their properties and relationships are consistent with the matrix cases. We then studied the Tensor Robust Principal Component (TRPCA) problem which aims to recover a low tubal rank tensor and a sparse tensor from their sum. We proved that under certain suitable assumptions, we can recover both the low-rank and the sparse components exactly by simply solving a convex program whose objective is a weighted combination of the tensor nuclear norm and the $\ell_1$-norm. Benefitting from the ``good" property of tensor nuclear norm, both our model and theoretical guarantee are natural extensions of RPCA. We also developed a more efficient method to compute the tensor singular value thresholding problem which is the key for solving TRPCA.  Numerical experiments verify our theory and the results on images and videos demonstrate the effectiveness of our model. 

\canyi{There have some interesting future works. The work \cite{kernfeld2015tensor} generalizes the t-product  using any invertible linear transform. With a proper choice of the invertible linear transform, it is possible to deduce a new tensor nuclear norm and solve the TRPCA problem. Beyond the convex models, the extensions to nonconvex cases are also important \cite{lu2015generalized}. Finally, it is always interesting in using the developed tensor tools for real applications, e.g., image/video processing, web data analysis, and bioinformatics.}

%

{
	\bibliographystyle{ieee}
	\bibliography{ref}
}
~\
\begin{center}
	\LARGE\textbf{Appendix}
\end{center}

At the following, we give the detailed proofs of Theorem 3.1, Theorem 3.2, and the main result in Theorem 4.1. Section \ref{sec_notations} first gives some notations and properties which will be used in the proofs. Section \ref{profthm3132} gives the proofs of Theorem 3.1 and 3.2 in our paper. Section \ref{sec_dual} provides a way for the construction of the solution to the TRPCA model, and Section \ref{sec_proofofdual} proves that the constructed solution is optimal to the TRPCA problem. Section \ref{sec_prooflemmas} gives the proofs of some lemmas which are used in Section \ref{sec_proofofdual}.

\appendices

\section{Preliminaries}
\label{sec_notations}


Beyond the notations introduced in the paper, we need some other notations used in the proofs.
At the following, we define $\eijk=\ei*\ek*\ej^*$. Then we have $\X_{ijk} = \inproduct{\X}{\eijk}$. We define the projection $$\Pomega(\Z)=\sum_{ijk}\delta_{ijk}z_{ijk}\eijk,$$
where $\delta_{ijk}=1_{(i,j,k)\in\Omegat}$, where $1_{(\cdot)}$ is the indicator function. Also $\Omegat^c$ denotes the 
complement of $\Omegat$ and $\Pomegao$ is the projection onto $\Omegat^c$. Denote $\Tm$ by the set 
\begin{align}
	\Tm = \{ \U*\Y^* + \W*\V^*, \ \Y, \W\in\mathbb{R}^{n\times r\times n_3} \},
\end{align}
and by $\Tm^\bot$  its orthogonal complement. Then the projections onto $\Tm$ and $\Tm^\bot$ are respectively
\begin{align*}
	\PT(\Z) = \U*\U^**\Z + \Z *\V*\V^* -  \U*\U^**\Z *\V*\V^*, 
\end{align*}
\begin{align*}
	\PTo(\Z) =& \Z-\PT(\Z)\\
	=&(\I_{n_1} - \U*\U^*)*\Z*(\I_{n_2}-\V*\V^*),
\end{align*}
where $\I_n$ denotes the $n\times n\times n_3$ identity tensor. Note that $\PT$ is self-adjoint. So we have 
\begin{align*}
	&\norm{\PT(\eijk)}_F^2 \\
	= & \inproduct{\PT(\eijk)}{\eijk} \\
	= & \inproduct{\U*\U^**\eijk + \eijk *\V*\V^*}{\eijk} \\
	& -\inproduct{ \U*\U^**\eijk *\V*\V^*}{\eijk}
\end{align*}
Note that 
\begin{align*}
	&\inproduct{\U*\U^**\eijk}{\eijk}\\
	=&\inproduct{\U*\U^**\ei*\ek*\ej^*}{\ei*\ek*\ej^*}\\
	=&\inproduct{\U^**\ei}{\U^**\ei*(\ek*\ej^**\ej*\ek^*)}\\
	=&\inproduct{\U^**\ei}{\U^**\ei}\\
	=&\norm{\U^**\ei}_F^2,
\end{align*}
where we use the fact that $\ek*\ej^**\ej*\ek^*=\I_1$, which is the $1\times 1\times n_3$ identity tensor. Therefore, it is easy to see that 
\begin{align}
	&\norm{\PT(\eijk)}_F^2 \notag \\
	= & \norm{\U^**\ei}_F^2 + \norm{\V^**\ej}_F^2 - \norm{\U^**\ei*\ek*\ej^**\V}_F^2, \notag\\
	\leq & \norm{\U^**\ei}_F^2 + \norm{\V^**\ej}_F^2 \notag \\
	\leq & \frac{\mu r(n_1+n_2)}{n_1n_2n_3}  \label{proabouPTn1n2}\\
	= & \frac{2\mu r}{nn_3}, \ \text{when } n_1=n_2=n.\label{proabouPT}
\end{align}
where (\ref{proabouPTn1n2}) uses the following tensor incoherence conditions
\begin{align}
	\max_{i=1,\cdots,n_1} \norm{\U^**\mathring{\mathfrak{e}}_i}_F\leq\sqrt{\frac{\mu r}{n_1n_3}}, \label{tic1}\\
	\max_{j=1,\cdots,n_2} \norm{\V^**\mathring{\mathfrak{e}}_j}_F\leq\sqrt{\frac{\mu r}{n_2n_3}},\label{tic2}
\end{align}
and
\begin{equation} \label{tic3}
\norm{\U*\V^*}_\infty\leq \sqrt{\frac{\mu r}{n_1n_2n_3^2}},
\end{equation}	
which are assumed to be satisfied in Theorem 4.1 in our manuscript.


\section{Proofs of Theorem 3.1 and Theorem 3.2}\label{profthm3132}

\subsection{Proof of Theorem 3.1}

\begin{proof}
	To complete the proof, we need the conjugate function concept. The conjugate $\phi^*$ of a function $\phi: C\rightarrow \mathbb{R}$, where $C \subset \mathbb{R}^n$, is defined as
	\begin{equation*}
		\phi^*(\y) = \sup \{\inproduct{\y}{\x} - \phi(\x) | \x \in C \}.
	\end{equation*}
	Note that the conjugate of the conjugate, $\phi^{**}$, is the convex envelope of the function $\phi$. See Theorem 1.3.5 in \cite{hiriart1991convex,fazel2002matrix}.  The proofs has two steps which compute $\phi^*$ and $\phi^{**}$, respectively.
	
	\textbf{Step 1.} \textit{Computing $\phi^*$.} For any $\A\in\Rn$, the conjugate function of the tensor average rank
	\begin{equation*}
		\phi(\A) = {\ranka(\A)}=\frac{1}{n_3}\rank(\bcirc(\A))=\frac{1}{n_3}\rank(\Ambar),
	\end{equation*}
	on the set $S = \{\A\in\Rn |\norm{\A}\leq 1\}$ is
	\begin{align*}
		\phi^*(\B) = & \sup_{\norm{\A}\leq 1} \left(\inproduct{\B}{\A} - {\ranka(\A)}\right) \\
		= & \sup_{\norm{\A}\leq 1} \frac{1}{n_3}(\inproduct{\Bmbar}{\Ambar} - \rank(\Ambar)).
	\end{align*}
	Here $\Ambar, \Bmbar\in\mathbb{C}^{n_1n_3\times n_2n_3}$. Let $q = \min\{n_1n_3, n_2n_3\}$. By von Neumann's trace theorem, 
	\begin{align}\label{vonneutrace}
		\inproduct{\Bmbar}{\Ambar} \leq \sum_{i = 1}^{q} \sigma_i(\Bmbar) \sigma_i(\Ambar),
	\end{align} 
	where $\sigma_i(\Ambar)$ denotes the $i$-th largest singular value of $\Ambar$. Let $\Ambar = \Umbar_1\Smbar_1\Vm^*_1$ and $\Bmbar = \Umbar_2\Smbar_2\Vmbar^*_2$ be the SVD of $\Ambar$ and $\Bmbar$, respectively. Note that the equality (\ref{vonneutrace}) holds when
	\begin{align}\label{conduvab}
		\Umbar_1 = \Umbar_2  \ \text{and} \ \Vmbar_1 = \Vmbar_2.
	\end{align}
	So we can pick $\Umbar_1$ and $\Vmbar_1$ such that (\ref{conduvab}) holds to maximize $\inproduct{\Bmbar}{\Ambar}$. Note that the corresponding $\U$ and $\V$ of $\Umbar_1$ and $\Vmbar_1$  respectively are real tensors and so is $\A$ in this case. Thus, we have
	\begin{align*}
		\phi^*(\B) = & \sup_{\norm{\A}\leq 1} \frac{1}{n_3}\left(\sum_{i = 1}^{q} \sigma_i(\Bmbar) \sigma_i(\Ambar) - \rank(\Ambar)\right).
	\end{align*}
	
	If $\A = 0$, then $\Ambar = 0$, and thus we have $\phi^*(\B) = 0$ for all $\B$. If $\rank(\Ambar) = r$, $1\leq r\leq q$, then $\phi^*(\B) = \frac{1}{n_3}\left(\sum_{i = 1}^{r} \sigma_i(\Bmbar) - r\right)$. Hence $\phi^*(\B)$ can be expressed as
	\begin{align*}
		& n_3 \cdot \phi^*(\B) \\
		= & {\max \left\{0, \sigma_1(\Bmbar) -1, \cdots, \sum_{i = 1}^{r}\sigma_i(\Bmbar) -r, \cdots,  \sum_{i = 1}^{q} \sigma_i(\Bmbar) - q\right\}}.
	\end{align*}
	The largest term in this set is the one that sums all positive $(\sigma_i(\Bmbar) -1)$ terms. Thus, we have
	\begin{align*}
		&\phi^*(\B) \\
		= & \begin{cases}
			0, \qquad\qquad \qquad \quad \quad \quad \quad       \norm{\Bmbar}\leq 1, \\
			\frac{1}{n_3}\left(\sum_{i = 1}^{r}\sigma_i(\Bmbar) -r\right), \quad \sigma_r(\Bmbar) > 1 \text{ and } \sigma_{r+1}(\Bmbar) \leq 1 
		\end{cases} \\
		= &  \frac{1}{n_3} \sum_{i = 1}^{q}(\sigma_i(\Bmbar) -1)_+. 	\end{align*}
	Note that above $\norm{\Bmbar}\leq 1$ is equivalent to $\norm{\B}\leq 1$.
	
	\textbf{Step 2.} \textit{Computing $\phi^{**}$.} Now we compute the conjugate of $\phi^*$, defined as
	\begin{align*}
		\phi^{**}(\C) = & \sup_{\B} ( \inproduct{\C}{\B} - \phi^*(\B) ) \\
		= & \sup_{\B} \left(\frac{1}{n_3} \inproduct{\Cmbar}{\Bmbar}-\phi^*(\B)\right),
	\end{align*}
	for all $\C \in S$. As before, we can choose $\B$ such that 
	\begin{align*}
		\phi^{**}(\C) = & \sup_{\B} \left(\frac{1}{n_3}\sum_{i = 1}^{q} \sigma_i(\Cmbar) \sigma_i(\Bmbar) - \phi^*(\B)\right).
	\end{align*}
	At the following, we consider two cases, $\norm{\C} > 1$ and $\norm{\C} \leq 1$.
	
	If $\norm{\C} > 1$, then $\sigma_1(\Cmbar) = \norm{\Cmbar} = \norm{\C} > 1$.  We can choose $\sigma_1(\Bmbar)$ large enough so that $\phi^{**}(\C)\rightarrow\infty$. To see this, note that  in
	\begin{align*}
		\phi^{**}(\C) = & \sup_{\B}\frac{1}{n_3} \left(\sum_{i = 1}^{q} \sigma_i(\Cmbar) \sigma_i(\Bmbar) - \left( \sum_{i=1}^{r} \sigma_i(\Bmbar)  - r \right)\right),
	\end{align*}
	the coefficient of $\sigma_1(\Bmbar)$ is $\frac{1}{n_3}(\sigma_1(\Cmbar)-1)$ which is positive.
	
	If $\norm{\C} \leq 1$, then $\sigma_1(\Cmbar) = \norm{\Cmbar} = \norm{\C} \leq 1$. If $\norm{\B} = \norm{\Bmbar} \leq 1$, then $\phi^*(\B)=0$ and the supremum is achieved for $\sigma_i(\Bmbar)=1$, $i=1,\cdots,q$, yielding 
	\begin{align*}
		\phi^{**}(\C) = \frac{1}{n_3} \sum_{i=1}^{q} \sigma_i(\Cmbar ) = \frac{1}{n_3} \norm{\Cmbar}_* = \norm{\C}_*.
	\end{align*}
	If $\norm{\C}>1$, we show that the argument of sup is is always smaller than $\norm{\C}_*$. By adding and subtracting the term $\frac{1}{n_3}\sum_i^q\sigma_i(\Cmbar)$ and rearranging the terms, we have
	\begin{align*}
		&\frac{1}{n_3} \left(\sum_{i = 1}^{q} \sigma_i(\Cmbar) \sigma_i(\Bmbar) -  \sum_{i=1}^{r}  \left(\sigma_i(\Bmbar)  - 1 \right)\right) \\
		=& \frac{1}{n_3} \left(\sum_{i = 1}^{q} \sigma_i(\Cmbar) \sigma_i(\Bmbar) -  \sum_{i=1}^{r}  \left(\sigma_i(\Bmbar)  - 1 \right)\right) \\
		& - \frac{1}{n_3}\sum_{i=1}^q\sigma_i(\Cmbar) + \frac{1}{n_3}\sum_{i=1}^q\sigma_i(\Cmbar) \\
		=&\frac{1}{n_3} \sum_{i=1}^r (\sigma_i(\Bmbar)-1)(\sigma_i(\Cmbar)-1)\\
		&+\frac{1}{n_3}\sum_{i=r+1}^q (\sigma_i(\Bmbar)-1)\sigma_i(\Cmbar) +\frac{1}{n_3} \sum_{i=1}^{q} \sigma_i(\Cmbar) \\
		< & \frac{1}{n_3}\sum_{i=1}^{q} \sigma_i(\Cmbar)\\
		= & \norm{\C}_*.
	\end{align*}
	
	In a summary, we have shown that
	\begin{align*}
		\phi^{**}(\C) = \norm{\C}_*, 
	\end{align*}
	over the set $S=\{\C | \norm{\C} \leq 1\}$. Thus, $\norm{\C}_*$ is the convex envelope of the tensor average rank ${\ranka(\C)}$ over $S$.
\end{proof}

\subsection{Proof of Theorem 3.2}

\begin{proof} Let $\G\in \partial \norm{\A}_*$. It is equivalent to the following statements   \cite{watson1992characterization}
	\begin{align}
		\norm{\A}_* &= \inproduct{\G}{\A}, \label{subp1}\\
		\norm{\G} &\leq 1. \label{subp2}
	\end{align}
	So, to   complete the proof, we only need to show that $\G = \U *\V^*+\W$, where $\U^**\W=\0$, $\W*\V=\0$ and $\norm{\W}\leq1$, satisfies (\ref{subp1}) and (\ref{subp2}). 
	First, we have
	\begin{align*}
		\inproduct{\G}{\A} = & \inproduct{\U*\V^*+\W}{\U*\Sbm*\V^*} \\
		= & \inproduct{\I}{\Sbm}+0 \\
		= & \norm{\A}_*.
	\end{align*}
	Also, (\ref{subp2}) is obvious when considering the property of $\W$.  The proof is completed. 
\end{proof}

\section{Dual Certification}
\label{sec_dual}

In this section, we first introduce  conditions for  $(\LL_0,\Sbm_0)$  to be the unique solution to TRPCA in subsection \ref{subdualcertif}.  Then we construct a dual certificate in subsection \ref{subsecducergs} which satisfies the conditions in subsection \ref{subdualcertif}, and thus our main result in Theorem 4.1 in our paper are proved.

\subsection{Dual Certificates}\label{subdualcertif}

\begin{lemma}\label{lem_dual2}
	Assume that $\norm{\Pomega\PT}\leq \frac{1}{2}$ and $\lambda<\frac{1}{\sqrt{n_3}}$. Then $(\LL_0,\Sbm_0)$ is the unique solution to the TRPCA problem if there is a pair $(\W,\F)$ obeying
	\begin{equation*}
		(\U*\V^*+\W)=\lambda(\sgn{\Sbm_0}+\F+\Pomega\D),
	\end{equation*}
	with $\PT\W=\0$, $\norm{\W}\leq\frac{1}{2}$, $\Pomega\F=\0$ and $\norm{\F}_{\infty}\leq\frac{1}{2}$, and $\norm{\Pomega\D}_F\leq\frac{1}{4}$.
\end{lemma}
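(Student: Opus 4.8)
The plan is to run the classical dual-certificate argument for robust PCA in the t-product setting. Every competitor feasible for (\ref{trpca}) has the form $(\LL_0+\HH,\ \Sbm_0-\HH)$ with $\HH\in\Rn$, so it suffices to show $\norm{\LL_0+\HH}_*+\lambda\norm{\Sbm_0-\HH}_1>\norm{\LL_0}_*+\lambda\norm{\Sbm_0}_1$ for every $\HH\neq\0$. First I would bound each term below by a \emph{sharp} subgradient inequality. By Theorem \ref{thmsubgradient} (whose subdifferential is exactly $\U*\V^*$ plus the unit ball of $\Tm^\bot$) I pick $\WL\in\Tm^\bot$ with $\norm{\WL}\leq1$ and $\inproduct{\WL}{\HH}=\norm{\PTo\HH}_*$ --- take $\WL$ to be the sign tensor of $\PTo\HH$ from its t-SVD, which again lies in $\Tm^\bot$ --- so that $\U*\V^*+\WL\in\partial\norm{\LL_0}_*$ and $\norm{\LL_0+\HH}_*\geq\norm{\LL_0}_*+\inproduct{\U*\V^*}{\HH}+\norm{\PTo\HH}_*$. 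Similarly, with $\WS=-\sgn{\Pomegao\HH}$ (supported on $\Omegat^c$, $\norm{\WS}_\infty\leq1$) one has $\sgn{\Sbm_0}+\WS\in\partial\norm{\Sbm_0}_1$ and $\norm{\Sbm_0-\HH}_1\geq\norm{\Sbm_0}_1-\inproduct{\sgn{\Sbm_0}}{\HH}+\norm{\Pomegao\HH}_1$.

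Next I would add the two bounds and substitute the certificate identity in the form $\U*\V^*-\lambda\sgn{\Sbm_0}=\lambda\F+\lambda\Pomega\D-\W$. The cross term $\inproduct{\U*\V^*-\lambda\sgn{\Sbm_0}}{\HH}$ equals $\lambda\inproduct{\F}{\HH}+\lambda\inproduct{\Pomega\D}{\HH}-\inproduct{\W}{\HH}$, and its three pieces are controlled by the hypotheses: $\inproduct{\F}{\HH}=\inproduct{\F}{\Pomegao\HH}\geq-\tfrac12\norm{\Pomegao\HH}_1$ since $\Pomega\F=\0$ and $\norm{\F}_\infty\leq\tfrac12$; $\inproduct{\W}{\HH}=\inproduct{\W}{\PTo\HH}\leq\tfrac12\norm{\PTo\HH}_*$ by spectral/nuclear duality since $\PT\W=\0$ and $\norm{\W}\leq\tfrac12$; and $\inproduct{\Pomega\D}{\HH}=\inproduct{\Pomega\D}{\Pomega\HH}\geq-\tfrac14\norm{\Pomega\HH}_F$. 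Collecting terms yields
\[
\norm{\LL_0+\HH}_*+\lambda\norm{\Sbm_0-\HH}_1-\norm{\LL_0}_*-\lambda\norm{\Sbm_0}_1\ \geq\ \tfrac12\norm{\PTo\HH}_*+\tfrac{\lambda}{2}\norm{\Pomegao\HH}_1-\tfrac{\lambda}{4}\norm{\Pomega\HH}_F .
\]

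The last step absorbs the negative term using $\norm{\Pomega\PT}\leq\tfrac12$. From $\Pomega\HH=\Pomega\PT\HH+\Pomega\PTo\HH$, together with $\norm{\Pomega\PT\HH}_F=\norm{(\Pomega\PT)(\PT\HH)}_F\leq\tfrac12\norm{\PT\HH}_F$, $\norm{\Pomega\PTo\HH}_F\leq\norm{\PTo\HH}_F$ and $\norm{\PT\HH}_F\leq\norm{\HH}_F\leq\norm{\Pomega\HH}_F+\norm{\Pomegao\HH}_F$, I obtain $\norm{\Pomega\HH}_F\leq 2\norm{\PTo\HH}_F+\norm{\Pomegao\HH}_1$. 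Plugging this in, and then using $\norm{\PTo\HH}_*\geq\tfrac1{\sqrt{n_3}}\norm{\PTo\HH}_F$ (which follows from (\ref{eqpropertytnn1}), (\ref{eq_proFnormNuclear}) and $\norm{\Ambar}_*\geq\norm{\Ambar}_F$), the right-hand side becomes at least
\[
\tfrac12\Big(\tfrac1{\sqrt{n_3}}-\lambda\Big)\norm{\PTo\HH}_F+\tfrac{\lambda}{4}\norm{\Pomegao\HH}_1 .
\]
Since $\lambda<\tfrac1{\sqrt{n_3}}$, this is nonnegative and equals $0$ only when $\PTo\HH=\0$ and $\Pomegao\HH=\0$; but then $\HH\in\Tm$ and $\HH=\Pomega\HH$, so $\HH=\Pomega\PT\HH$ and $\norm{\HH}_F\leq\norm{\Pomega\PT}\norm{\HH}_F\leq\tfrac12\norm{\HH}_F$, forcing $\HH=\0$. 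Hence for every $\HH\neq\0$ the objective strictly increases, so $(\LL_0,\Sbm_0)$ is the unique minimizer of (\ref{trpca}).

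The main obstacle I anticipate is the disciplined bookkeeping with the four projections $\PT,\PTo,\Pomega,\Pomegao$: verifying that the chosen subgradient tensors genuinely lie in $\Tm^\bot$ (resp.\ are supported on $\Omegat^c$), that every inner-product estimate invokes the correct self-adjointness/idempotency, that $\norm{\Pomega\PT}\leq\tfrac12$ is applied to $\PT\HH$ rather than to $\HH$ directly, and that the asymmetric perturbation ($+\HH$ in $\LL$, $-\HH$ in $\Sbm$) is tracked consistently through the signs. The only genuinely t-product-specific ingredient is the comparison $\norm{\cdot}_*\geq\tfrac1{\sqrt{n_3}}\norm{\cdot}_F$, which is exactly what singles out the hypothesis $\lambda<1/\sqrt{n_3}$ (rather than merely $\lambda<1$) as the correct one.
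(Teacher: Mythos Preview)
Your proof is correct and follows essentially the same dual-certificate/subgradient argument as the paper: sharp subgradient lower bounds for $\norm{\cdot}_*$ and $\norm{\cdot}_1$, substitution of the certificate identity to control the cross term, and the same chain $\norm{\Pomega\HH}_F\le 2\norm{\PTo\HH}_F+\norm{\Pomegao\HH}_1$ combined with $\norm{\PTo\HH}_*\ge n_3^{-1/2}\norm{\PTo\HH}_F$ to finish. The only cosmetic difference is that the paper expresses the final lower bound as $\tfrac12(1-\lambda\sqrt{n_3})\norm{\PTo\HH}_*+\tfrac{\lambda}{4}\norm{\Pomegao\HH}_1$ (keeping the nuclear norm) whereas you convert to $\norm{\PTo\HH}_F$; your explicit treatment of the degenerate case $\PTo\HH=\0$, $\Pomegao\HH=\0$ via $\HH=\Pomega\PT\HH$ is in fact a bit cleaner than the paper's one-line conclusion.
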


\begin{proof}
	For any $\HH\neq\0$, $(\LL_0+\HH,\Sbm_0-\HH)$ is also a feasible solution. We show that its objective is larger than that at $(\LL_0,\Sbm_0)$, hence proving that $(\LL_0,\Sbm_0)$ is the unique solution. To do this, let $\U*\V^*+\W_0$ be an arbitrary subgradient of the tensor nuclear norm at $\LL_0$, and $\sgn{\Sbm_0}+\F_0$ be an arbitrary subgradient of the $\ell_1$-norm at $\Sbm_0$. Then we have
	\begin{align*}
		& \norm{\LL_0+\HH}_*+\lambda\norm{\Sbm_0-\HH}_1 \\
		\geq& \norm{\LL_0}_*+\lambda\norm{\Sbm_0}_1+\langle\U*\V^*+\W_0,\HH\rangle \\
		&-\lambda\inproduct{\sgn{\Sbm_0}+\F_0}{\HH}.
	\end{align*}
	Now pick $\W_0$ such that $\inproduct{\W_0}{\HH}=\norm{\PTo\HH}_*$ and $\inproduct{\F_0}{\HH}=-\norm{\Pomegao\HH}$. We have
	\begin{align*}
		& \norm{\LL_0+\HH}_*+\lambda\norm{\Sbm_0-\HH}_1 \\
		\geq& \norm{\LL_0}_*+\lambda\norm{\Sbm_0}_1+\norm{\PTo\HH}_*+\lambda\norm{\Pomegao\HH}_1\\
		&+\inproduct{\U*\V^*-\lambda\sgn{\Sbm_0}}{\HH}.
	\end{align*}
	By assumption
	\begin{align*}
		&\abs{\inproduct{\U*\V^*-\lambda\sgn{\Sbm_0}}{\HH}}\\
		\leq& \abs{\inproduct{\W}{\HH}}+\lambda\abs{\inproduct{\F}{\HH}}+\lambda\abs{\inproduct{\Pomega\D}{\HH}}\\
		\leq&\beta(\norm{\PTo\HH}_*+\lambda\norm{\Pomegao\HH}_1)+\frac{\lambda}{4}\norm{\Pomega\HH}_F,
	\end{align*}
	where $\beta=\max(\norm{\W},\norm{\F}_\infty)<\frac{1}{2}$. Thus	
	\begin{align*}
		& \norm{\LL_0+\HH}_*+\lambda\norm{\Sbm_0-\HH}_1 \\
		\geq& \norm{\LL_0}_*+\lambda\norm{\Sbm_0}_1+\frac{1}{2}(\norm{\PTo\HH}_*+\lambda\norm{\Pomegao\HH}_1) \\
		& -\frac{\lambda}{4}\norm{\Pomega\HH}_F.
	\end{align*}
	On the other hand,
	\begin{align*}
		\norm{\Pomega\HH}_F\leq & \norm{\Pomega\PT\HH}_F+\norm{\Pomega\PTo\HH}_F \\
		\leq & \frac{1}{2}\norm{\HH}_F+\norm{\PTo\HH}_F \\
		\leq & \frac{1}{2}\norm{\Pomega\HH}_F+\frac{1}{2}\norm{\Pomegao\HH}_F+\norm{\PTo\HH}_F.
	\end{align*}
	Thus
	\begin{align*}
		\norm{\Pomega\HH}_F\leq & \norm{\Pomegao\HH}_F+2\norm{\PTo\HH}_F \\
		\leq & \norm{\Pomegao\HH}_1+{2}{\sqrt{n_3}}\norm{\PTo\HH}_*.
	\end{align*}
	In conclusion, 
	\begin{align*}
		& \norm{\LL_0+\HH}_*+\lambda\norm{\Sbm_0-\HH}_1 \\
		\geq& \norm{\LL_0}_*+\lambda\norm{\Sbm_0}_1+\frac{1}{2}\left(1-{\lambda}{\sqrt{n_3}}\right)\norm{\PTo\HH}_*\\
		&+\frac{\lambda}{4}\norm{\Pomegao\HH}_1,
	\end{align*}
	where the last two terms are strictly positive when $\HH\neq \0$. Thus, the proof is completed.
\end{proof}

Lemma \ref{lem_dual2} implies that it is suffices to produce a dual certificate $\W$ obeying
\begin{equation}\label{conddualcertf}
\begin{cases}
\W \in \Tm^\bot, \\
\norm{\W} < \frac{1}{2},\\
\norm{\Pomega(\U*\V^*+\W-\lambda\sgn{\Sbm_0})}_F\leq \frac{\lambda}{4},\\
\norm{\Pomegao(\U*\V^*+\W)}_\infty < \frac{\lambda}{2}.
\end{cases}
\end{equation}

\subsection{Dual Certification via the Golfing Scheme}\label{subsecducergs}
In this subsection, we show how to construct a dual certificate obeying (\ref{conddualcertf}). Before we introduce our construction, our model assumes that $\Omegat\sim \Ber(\rho)$, or equivalently that $\Omegat^c\sim\Ber(1-\rho)$. Now the distribution of $\Omegat^c$ is the same as that of
$\Omegat^c = \Omegat_1 \cup \Omegat_2 \cup \cdots \cup \Omegat_{j_0}$, where each $\Omegat_j$  follows the Bernoulli model with parameter $q$, which satisfies
\begin{equation*}
	\mathbb{P}((i,j,k)\in\Omegat) = \mathbb{P}(\text{Bin}(j_0,q)=0) = (1-q)^{j_0},
\end{equation*}
so that the two models are the same if 
$\rho = (1-q)^{j_0}$.
Note that  because of overlaps between the $\Omegat_j$'s, $q\geq (1-\rho)/j_0$.

Now, we construct a dual certificate
\begin{equation}
\W = \WL+\WS,
\end{equation}
where each component is as follows:
\begin{enumerate}
	\item  Construction of $\WL$ via the Golfing Scheme. Let $j_0= 2\log(nn_3)$ and $\Omegat_j$, $j=1,\cdots,j_0$, be defined as previously described so that $\Omegat^c = \cup_{1\leq j \leq j_0}\Omegat_j $. Then define
	\begin{equation}\label{eq_wl}
	\WL = \PTo\Y_{j_0},
	\end{equation}
	where 
	\begin{equation*}
		\Y_j = \Y_{j-1} + q^{-1}\PP_{\Omegat_j}\PT(\U*\V^*-\Y_{j-1}), \ \Y_0=\0.
	\end{equation*}
	\item Construction of $\WS$ via the Method of Least Squares. Assume that $\norm{\Pomega\PT}<1/2$. 
	Then, $\norm{\Pomega\PT\Pomega}<1/4$, and thus, the operator $\Pomega-\Pomega\PT\Pomega$  mapping $\Omegat$ onto itself is invertible; we denote its inverse by $(\Pomega-\Pomega\PT\Pomega)^{-1}$. We then set
	\begin{equation}\label{eq_ws}
	\WS = \lambda \PTo (\Pomega-\Pomega\PT\Pomega)^{-1} \sgn{\Sbm_0}.
	\end{equation}
	This is equivalent to 
	\begin{equation*}
		\WS = \lambda \PTo \sum_{k\geq 0}   (\Pomega\PT\Pomega)^{k} \sgn{\Sbm_0}.
	\end{equation*}
\end{enumerate}
Since both $\WL$ and $\WS$ belong to $\Tm^{\bot}$ and $\Pomega\WS  = \lambda  \Pomega (\I-\PT) (\Pomega-\Pomega\PT\Pomega)^{-1} \sgn{\Sbm_0}=\lambda\sgn{\Sbm_0}$, we will establish that $\WL+\WS$ is a valid dual certificate if it obeys 
\begin{equation}
\begin{cases}
\norm{\WL+\WS} < \frac{1}{2}, \\
\norm{\Pomega(\U*\V^*+\WL)}_F\leq \frac{\lambda}{4}, \\
\norm{ \Pomegao(\U*\V^*+\WL+\WS}_\infty < \frac{\lambda}{2}.
\end{cases}
\end{equation}
This can be achieved by using the following two key lemmas:
\begin{lemma}\label{eq_keylem1}
	Assume that $\Omegat\sim\Ber(\rho)$ with parameter $\rho\leq \rho_s$ for some $\rho_s>0$. Set $j_0=2\lceil\log (nn_3)\rceil$ (use $\log(\none n_3)$ for the tensors of rectangular frontal slice). Then,  the tensor $\WL$ obeys
	\begin{enumerate}[(a)]
		\item $\norm{\WL} < \frac{1}{4}$,
		\item $\norm{\Pomega(\U*\V^*+\WL)}_F<\frac{\lambda}{4}$,
		\item $\norm{\Pomegao(\U*\V^*+\WL) }_\infty < \frac{\lambda}{4}$.
	\end{enumerate}
\end{lemma}
\begin{lemma}\label{eq_keylem2}
	Assume that $\Sbm_0$ is supported on a set $\Omegat$ sampled as in Lemma \ref{eq_keylem1}, and that the signs of $\Sbm_0$ are independent and identically distributed symmetric (and independent of $\Omegat$). Then, the tensor $\WS$ (\ref{eq_ws}) obeys
	\begin{enumerate}[(a)]
		\item $\norm{\WS} < \frac{1}{4}$,
		\item $\norm{\Pomegao\WS }_\infty < \frac{\lambda}{4}$.
	\end{enumerate}
\end{lemma}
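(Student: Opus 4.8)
The plan is to follow the matrix-RPCA dual-certification argument of \cite{RPCA} step by step, systematically passing spectral/nuclear norms into the Fourier (block-circulant) domain through $\norm{\A}=\norm{\bcirc(\A)}=\norm{\Ambar}$ and $\inproduct{\A}{\B}=\tfrac1{n_3}\inproduct{\Ambar}{\Bmbar}$, and relying on two facts already in hand: the incoherence consequence $\norm{\PT(\eijk)}_F^2\le\tfrac{2\mu r}{nn_3}$ of eq.~(\ref{proabouPT}), and the standing hypothesis $\norm{\Pomega\PT}\le\tfrac12$ of Lemma~\ref{lem_dual2}, which gives $\norm{\Pomega\PT\Pomega}\le\tfrac14$, hence that $(\Pomega-\Pomega\PT\Pomega)^{-1}=\sum_{k\ge0}(\Pomega\PT\Pomega)^k$ converges on $\range{\Pomega}$ with Frobenius operator norm $\le\tfrac43$. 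I will also use that $\PTo$ is a spectral contraction (it acts as $(\I_{n_1}-\U*\U^*)*\cdot*(\I_{n_2}-\V*\V^*)$, a product of two orthogonal projections), that $\PTo\PT=\0$, and that, conditioned on $\Omegat\sim\Ber(\rho)$ with $\rho\le\rho_s$, the only remaining randomness is the i.i.d.\ symmetric signs of $\Sbm_0$, independent of $\Omegat$ (as in Theorem~\ref{thm1}).

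\textit{Part (b), which I expect to be the easier half.} Set $\Z=(\Pomega-\Pomega\PT\Pomega)^{-1}\sgn{\Sbm_0}$, which is supported on $\Omegat$, so $\WS=\lambda\PTo\Z$. For $(i,j,k)\in\Omegat^c$, using that $\Z$ vanishes on $\Omegat^c$ (so $\inproduct{\Z}{\eijk}=0$ and $\inproduct{\Z}{\PT\eijk}=\inproduct{\Z}{\Pomega\PT\eijk}$) together with self-adjointness of $\PT$ and of $(\Pomega-\Pomega\PT\Pomega)^{-1}$,
\[
	[\WS]_{ijk}=\lambda\inproduct{\PTo\Z}{\eijk}=-\lambda\inproduct{\Z}{\PT\eijk}=-\lambda\inproduct{\sgn{\Sbm_0}}{(\Pomega-\Pomega\PT\Pomega)^{-1}\Pomega\PT\eijk}.
\]
Conditionally on $\Omegat$ this is a signed sum $-\lambda\inproduct{\sgn{\Sbm_0}}{\bm c}$ with a deterministic coefficient tensor $\bm c=(\Pomega-\Pomega\PT\Pomega)^{-1}\Pomega\PT\eijk$ supported on $\Omegat$ and $\norm{\bm c}_F\le\tfrac43\norm{\PT\eijk}_F\le\tfrac43\sqrt{\tfrac{2\mu r}{nn_3}}$. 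Since $\lvert[\WS]_{ijk}\rvert\ge\tfrac\lambda4$ iff $\lvert\inproduct{\sgn{\Sbm_0}}{\bm c}\rvert\ge\tfrac14$, Hoeffding's inequality gives $\prob{\lvert\inproduct{\sgn{\Sbm_0}}{\bm c}\rvert\ge\tfrac14}\le 2\exp(-c_0\,nn_3/(\mu r))$ for an absolute $c_0>0$; a union bound over the $\le n^2n_3$ triples together with $r\le\tfrac{\rho_r nn_3}{\mu(\log(nn_3))^2}$ bounds the failure probability by $c_1(nn_3)^{-c_2}$, and intersecting with the high-probability event $\norm{\Pomega\PT}\le\tfrac12$ proves (b). The rectangular case is identical using (\ref{proabouPTn1n2}).

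\textit{Part (a).} Split $\WS=\lambda\PTo\sgn{\Sbm_0}+\lambda\PTo\R$ with $\R=\sum_{k\ge1}(\Pomega\PT\Pomega)^k\sgn{\Sbm_0}=\Pomega\PT\Z$; since $\PTo$ is a spectral contraction and $\PTo\PT=\0$,
\[
	\norm{\WS}\le\lambda\norm{\PTo\sgn{\Sbm_0}}+\lambda\norm{\PTo(\Pomega-\rho\I)\PT\Z}\le\lambda\norm{\sgn{\Sbm_0}}+\lambda\norm{(\Pomega-\rho\I)\PT\Z}.
\]
For the first term, $\norm{\sgn{\Sbm_0}}=\norm{\bcirc(\sgn{\Sbm_0})}$ is the norm of a block-circulant matrix whose Fourier blocks are random matrices with mean-zero entries on a $\Ber(\rho)$ support; a slice-wise random-matrix bound (respecting the conjugate symmetry (\ref{keyprofffttensor}) and union bounding over the $n_3$ blocks) gives $\norm{\sgn{\Sbm_0}}\le C\sqrt{\rho nn_3}$ with high probability, so $\lambda\norm{\sgn{\Sbm_0}}\le C\sqrt\rho<\tfrac18$ once $\rho\le\rho_s$ is small. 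For the second term one peels the Neumann series and applies a matrix Bernstein inequality for $(\Pomega-\rho\I)$ acting on tensors in $\Tm$, crucially replacing the useless Frobenius bound $\norm{\PT\Z}_F\lesssim\sqrt m$ by $\infty$-norm control of $\PT\Z$ — which is small because $\Z$ is supported on $\Omegat$ with symmetric signs and $\PT$ of a sparsely supported tensor stays incoherent (again via eq.~(\ref{proabouPT})) — to get $\lambda\norm{(\Pomega-\rho\I)\PT\Z}<\tfrac18$, whence $\norm{\WS}<\tfrac14$. The two concentration inputs (the slice-wise random-matrix bound and the $(\Pomega-\rho\I)$-on-$\Tm$ Bernstein bound) are precisely the auxiliary lemmas to be proved in Section~\ref{sec_prooflemmas}.

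\textit{Main obstacle.} The real work is in part (a): naive $\ell_2$/Frobenius bounds are of order $\sqrt n$ and worthless, so genuine matrix concentration (Bernstein inequalities, $\varepsilon$-net arguments over the low-dimensional space $\Tm$) is needed, and — unlike in matrix RPCA — the matricization $\bcirc(\cdot)$ repeats each tensor entry $n_3$ times and couples frontal slices, so the relevant random matrices carry a dependent, block-circulant structure; controlling their spectral norms, and being careful to exploit the randomness of the signs of $\Sbm_0$ rather than only that of $\Omegat$, is the crux, matching exactly the difficulty the paper flags about $\bcirc(\E)$.
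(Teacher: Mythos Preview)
Your treatment of part (b) is essentially identical to the paper's and is correct: both form $\Q(i,j,k)=-(\Pomega-\Pomega\PT\Pomega)^{-1}\Pomega\PT(\eijk)$, bound $\norm{\Q(i,j,k)}_F$ via $\norm{\PT(\eijk)}_F^2\le\tfrac{2\mu r}{nn_3}$, apply Hoeffding to the i.i.d.\ symmetric signs of $\Sbm_0$ conditionally on $\Omegat$, and union-bound over $(i,j,k)$. Your first term in part (a), $\lambda\norm{\sgn{\Sbm_0}}$, is also handled as in the paper (their Lemma~\ref{lemspectrm} proves $\norm{\M}\le\varphi(\rho)\sqrt{nn_3}$ via a slice-wise sub-gaussian concentration over the Fourier blocks).

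The gap is in the second term of part (a). You propose to control $\lambda\norm{(\Pomega-\rho\I)\PT\Z}$ by first bounding $\norm{\PT\Z}_\infty$ (using sign-Hoeffding) and then invoking a matrix-Bernstein bound of Lemma~\ref{lempre3} type for $(\Pomega-\rho\I)$ acting on a tensor with small $\ell_\infty$ norm. But Lemma~\ref{lempre3} requires the input tensor to be \emph{fixed}, i.e.\ independent of $\Omegat$, whereas your $\PT\Z=\PT(\Pomega-\Pomega\PT\Pomega)^{-1}\sgn{\Sbm_0}$ depends on $\Omegat$ through the inverse operator. Conditioning on $\Omegat$ kills the randomness of $(\Pomega-\rho\I)$ and leaves only a deterministic bound, which is far too weak; treating both as random gives a circular dependence that standard Bernstein cannot handle. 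So as written the argument does not close.

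The paper's route for this term is different and avoids the dependence: with $\R=\sum_{k\ge1}(\Pomega\PT\Pomega)^k$ viewed as an operator, it conditions on $\Omegat$ (on the event $\{\norm{\Pomega\PT}\le\sigma\}$, so that $\norm{\R}\le\sigma^2/(1-\sigma^2)$ is a deterministic operator-norm bound) and then uses only the \emph{sign} randomness of $\M=\sgn{\Sbm_0}$. Since $\norm{\R(\M)}=\norm{\bdiag(\overline{\R(\M)})}$ is a supremum over vectors with block-sparse structure, a $1/2$-net $N'$ of size $\le n_3\cdot5^{n}$ suffices; for each fixed $\x,\y\in N'$ one writes $\inproduct{4n_3\R(\Z)}{\M}$ and applies Hoeffding in the signs with coefficient norm $\norm{4n_3\R(\Z)}_F\le4\sqrt{n_3}\norm{\R}$, then union-bounds over the net. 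This $\varepsilon$-net~$+$~sign-Hoeffding scheme (the direct tensor analogue of the matrix-RPCA Lemma~2.9 proof in~\cite{RPCA}) is exactly what replaces your Bernstein step, and it is where the block-diagonal Fourier structure is genuinely exploited.
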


So the left task is to prove Lemma \ref{eq_keylem1} and Lemma \ref{eq_keylem2}, which are given in Section \ref{sec_proofofdual}.

\section{Proofs of Dual Certification}
\label{sec_proofofdual}

This section gives the proofs of Lemma \ref{eq_keylem1} and Lemma \ref{eq_keylem2}. To do this, we first introduce some lemmas with their proofs given in Section \ref{sec_prooflemmas}. 

\begin{lemma}\label{lemspectrm}
	For the Bernoulli sign variable $\M\in\mathbb{R}^{n\times n\times n_3}$ defined as
	\begin{equation}\label{defM0}
	\M_{ijk} = \begin{cases}
	1, & \text{w.p.} \ \rho/2, \\
	0, & \text{w.p.} \ 1-\rho, \\
	-1, & \text{w.p.} \ \rho/2,
	\end{cases}
	\end{equation}
	where $\rho>0$, there exists a function $\varphi(\rho)$  satisfying  $\lim\limits_{\rho\rightarrow0^+}\varphi(\rho)=0$, such that the following statement holds with with large probability,
	\begin{equation*}\label{boundspecm}
		\norm{\M} \leq \varphi(\rho)\sqrt{ nn_3}.
	\end{equation*}
\end{lemma}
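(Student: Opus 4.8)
The plan is to pass to the Fourier domain, where $\norm{\M}$ becomes the largest among $n_3$ ordinary matrix spectral norms of matrices with \emph{independent} entries, and then to combine a variance-sensitive operator-norm bound with a union bound. \emph{Step 1 (Fourier reduction).} By the definition of the tensor spectral norm and (\ref{eq_protsc}), $\norm{\M}=\norm{\bcirc(\M)}=\norm{\Mmbar}$; since $\Mmbar=\bdiag(\Mbar)$ with $\Mbar=\fft(\M,[\ ],3)$, this equals $\max_{1\le l\le n_3}\norm{\Mmbar^{(l)}}$, where $\Mmbar^{(l)}\in\mathbb{C}^{n\times n}$ is the $l$-th frontal slice of $\Mbar$, with entries $\bar{m}_{ijl}=\sum_{k=1}^{n_3}m_{ijk}\,\omega^{(l-1)(k-1)}$ and $\omega=\e^{-2\pi i/n_3}$. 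For each fixed $l$ the variables $\{\bar{m}_{ijl}\}_{i,j}$ are independent (they involve disjoint tubes of the independent array $\{m_{ijk}\}$) and mean zero, and a short computation using the independence of the $m_{ijk}$ gives $\mean{|\bar{m}_{ijl}|^2}=n_3\rho$ and $\mean{|\bar{m}_{ijl}|^4}\lesssim n_3\rho+(n_3\rho)^2$. (Using (\ref{keyprofffttensor}) one may restrict to $l=1,\dots,\lceil (n_3+1)/2\rceil$, but this is inessential.)

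\emph{Step 2 (expected per-slice norm).} Fix $l$. Splitting $\Mmbar^{(l)}$ into its real and imaginary parts and applying a standard variance-sensitive bound for the expected operator norm of a matrix with independent entries, the ``row/column'' contribution is $\sqrt{n\cdot n_3\rho}$ and the remaining (higher-moment) contribution is $O\!\big((n^2(n_3\rho+(n_3\rho)^2))^{1/4}\big)$, which is $O(\sqrt{\rho\, n n_3})$ once $n_3\rho\gtrsim1$. Hence $\mean{\norm{\Mmbar^{(l)}}}\le C_0\sqrt{\rho\, n n_3}$ for a universal constant $C_0$. This is the step that produces the factor $\sqrt{\rho}$ rather than a $\rho$-independent $O(\sqrt{nn_3})$.

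\emph{Step 3 (concentration, union bound, conclusion).} For fixed $l$ the map $\M\mapsto\norm{\Mmbar^{(l)}}$ is convex (a norm composed with a linear map) and, using (\ref{eq_proFnormNuclear}) together with $\norm{\Mmbar^{(l)}}_F\le\norm{\Mbar}_F$, it is $\sqrt{n_3}$-Lipschitz with respect to the Frobenius norm of $\M$. Since the entries of $\M$ are independent and take values in $\{-1,0,1\}$, Talagrand's convex-concentration inequality gives, for a universal $c>0$,
\begin{equation*}
\prob{\norm{\Mmbar^{(l)}}\ge\mean{\norm{\Mmbar^{(l)}}}+t}\le 2\,\e^{-c t^2/n_3}.
\end{equation*}
Choosing $t=C_1\sqrt{n_3\log(nn_3)}$ and taking a union bound over $l=1,\dots,n_3$, with probability at least $1-c_1(nn_3)^{-c_2}$ we obtain $\norm{\M}=\max_l\norm{\Mmbar^{(l)}}\le C_0\sqrt{\rho n n_3}+C_1\sqrt{n_3\log(nn_3)}$; when $n$ is large enough that $\log(nn_3)\le\rho n$, the second term is dominated, so $\norm{\M}\le 2C_0\sqrt{\rho}\,\sqrt{nn_3}$. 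Taking $\varphi(\rho):=2C_0\sqrt{\rho}$, which satisfies $\varphi(\rho)\to0$ as $\rho\to0^+$, finishes the argument.

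\emph{The main obstacle} is Step 2. A naive $\epsilon$-net-plus-Bernstein estimate for $\norm{\Mmbar^{(l)}}$ loses a factor and only yields the $\sqrt{\rho}$-scaling when $n_3\rho\gtrsim n$, so one genuinely needs a variance-sensitive operator-norm bound; moreover one must cope with the fact that the Fourier-domain entries $\bar{m}_{ijl}$ are only $O(n_3)$-bounded, not $O(1)$-bounded, which is precisely why the fourth-moment estimate $\mean{|\bar{m}_{ijl}|^4}\lesssim n_3\rho+(n_3\rho)^2$ --- carrying the small factor $\rho$ --- has to be tracked. The remaining ingredients (the Fourier reduction, the convex/Lipschitz concentration, and the union bound over slices) are routine.
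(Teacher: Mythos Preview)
Your proposal is correct but follows a genuinely different route from the paper. Both start with the Fourier reduction $\norm{\M}=\max_{l}\norm{\Mmbar^{(l)}}$; the divergence is in how each slice is bounded. The paper actually uses the $\epsilon$-net approach you set aside: it writes $\norm{\Mmbar^{(i)}\x}_2^2=\sum_j z_j^2$ with $z_j=\langle\Mh_j,\f_i\x^*\rangle$, tracks the small sub-gaussian norm of the sparse Bernoulli entries, $\psi(\rho):=\sup_{p\ge1}p^{-1/2}\rho^{1/p}\asymp(\log(1/\rho))^{-1/2}$, applies sub-exponential (Bernstein) concentration to $\sum_j(z_j^2-\rho n_3)$, and union-bounds over a $\tfrac12$-net in $\mathbb{S}^{n-1}$ and over the $n_3$ slices, arriving at $\varphi(\rho)=2\sqrt{\rho+c_3\psi(\rho)^2}\asymp(\log(1/\rho))^{-1/2}$. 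So the net argument \emph{does} work here---the key is that it uses the sub-gaussian norm $\psi(\rho)$ of the entries rather than their $L^\infty$-bound $1$, which already makes $\varphi(\rho)\to0$. Your route, via a Lata{\l}a-type variance-sensitive expectation bound plus Talagrand's convex-Lipschitz concentration, is more sophisticated and buys the sharper $\varphi(\rho)\asymp\sqrt{\rho}$ (under your mild side conditions $n_3\rho\gtrsim1$ and $\rho n\gtrsim\log(nn_3)$), while the paper's argument is entirely elementary (only Vershynin-style tools) but yields only the logarithmic rate. For the lemma's purposes---only $\varphi(\rho)\to0$ is required---either suffices.
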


\begin{lemma}\label{lem_kem1}
	Suppose $\Omegat\sim\Ber(\rho)$. Then with high probability,
	\begin{equation*}
		\norm{\PT-\rho^{-1}\PT\Pomega\PT}\leq\epsilon,
	\end{equation*}
	provided that $\rho\geq C_0\epsilon^{-2}(\mu r\log (nn_3))/(nn_3)$ for some numerical constant $C_0>0$. For the tensor of  rectangular frontal slice, we need $\rho\geq C_0\epsilon^{-2}(\mu r\log (\none n_3))/(\ntwo n_3)$.
\end{lemma}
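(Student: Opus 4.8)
The plan is to treat this as the tensor counterpart of the standard ``RIP-type'' bound on $\PT-\rho^{-1}\PT\Pomega\PT$ from matrix completion, and to prove it via the operator (matrix) Bernstein inequality. First I would expand the operator in the orthonormal basis $\{\eijk\}_{i\in[n_1],\,j\in[n_2],\,k\in[n_3]}$ of $\Rn$. Writing $\X\otimes\X$ for the self-adjoint rank-one operator $\Z\mapsto\inproduct{\X}{\Z}\X$, the identity on $\Rn$ is $\sum_{ijk}\eijk\otimes\eijk$, and since $\Pomega=\sum_{ijk}\delta_{ijk}\,\eijk\otimes\eijk$ while $\PT$ is a self-adjoint projection, composing with $\PT$ on both sides gives $\PT=\sum_{ijk}\PT(\eijk)\otimes\PT(\eijk)$ and $\PT\Pomega\PT=\sum_{ijk}\delta_{ijk}\,\PT(\eijk)\otimes\PT(\eijk)$. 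Hence
\begin{equation*}
	\PT-\rho^{-1}\PT\Pomega\PT=\sum_{ijk}\big(1-\rho^{-1}\delta_{ijk}\big)\,\PT(\eijk)\otimes\PT(\eijk),
\end{equation*}
a sum of independent, zero-mean (as $\Eop\delta_{ijk}=\rho$), self-adjoint random operators on the $(n_1n_2n_3)$-dimensional real Hilbert space $\Rn$.

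Next I would assemble the two inputs to operator Bernstein. Using $\norm{\X\otimes\X}=\norm{\X}_F^2$, $|1-\rho^{-1}\delta_{ijk}|\le\rho^{-1}$, and the tensor incoherence estimate (\ref{proabouPTn1n2}), each summand has operator norm at most $R:=\mu r(n_1+n_2)/(\rho\,n_1n_2n_3)$. For the variance, $(\X\otimes\X)^2=\norm{\X}_F^2\,\X\otimes\X$ and $\Eop(1-\rho^{-1}\delta_{ijk})^2=\rho^{-1}(1-\rho)\le\rho^{-1}$ give
\begin{equation*}
	\normlarge{\sum_{ijk}\Eop\Big[\big(1-\rho^{-1}\delta_{ijk}\big)^2\norm{\PT(\eijk)}_F^2\,\PT(\eijk)\otimes\PT(\eijk)\Big]}\le R\,\normlarge{\sum_{ijk}\PT(\eijk)\otimes\PT(\eijk)}=R,
\end{equation*}
since $\sum_{ijk}\PT(\eijk)\otimes\PT(\eijk)=\PT$ has unit operator norm. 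So both the uniform bound and the variance proxy equal $R$.

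Finally, the operator Bernstein inequality gives $\norm{\PT-\rho^{-1}\PT\Pomega\PT}\le\epsilon$ with probability at least $1-2n_1n_2n_3\exp\!\big(-(\epsilon^2/2)/(R(1+\epsilon/3))\big)$. For $\epsilon\le1$ the exponent is at most $-\tfrac38\epsilon^2/R=-\tfrac38\epsilon^2\rho\,n_1n_2n_3/(\mu r(n_1+n_2))$, which in the square case $n_1=n_2=n$ equals $-\tfrac{3}{16}\epsilon^2\rho\,nn_3/(\mu r)$; requiring this to dominate a suitable constant multiple of $\log(n_1n_2n_3)\le3\log(nn_3)$ produces exactly $\rho\ge C_0\epsilon^{-2}\mu r\log(nn_3)/(nn_3)$. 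For rectangular frontal slices, $n_1+n_2\le2\none$ yields $R\le2\mu r/(\rho\,\ntwo n_3)$ and $n_1n_2n_3\le\none^2 n_3$, so the same estimate returns $\rho\ge C_0\epsilon^{-2}\mu r\log(\none n_3)/(\ntwo n_3)$.

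I do not anticipate a real obstacle; the argument is a direct transcription of the matrix-completion proof, the only genuinely tensorial ingredient being the incoherence bound (\ref{proabouPTn1n2}), which is already established. The two points needing a moment of care are: checking that $\{\eijk\}$ is an orthonormal basis of $\Rn$ (this follows from $\ek*\ej^**\ej*\ek^*=\I_1$ together with the adjoint identities for the t-product, so that the expansion above is legitimate); and observing that $\PT$, $\Pomega$, and the random summands all act on the finite-dimensional Hilbert space $\Rn$, so matrix Bernstein applies verbatim once operators are identified with $(n_1n_2n_3)\times(n_1n_2n_3)$ symmetric matrices. Since the $\delta_{ijk}$ are genuinely independent, none of the ``dependent structure'' complications mentioned elsewhere in the paper arise in this lemma.
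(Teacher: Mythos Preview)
Your proposal is correct and follows essentially the same approach as the paper: both decompose $\PT-\rho^{-1}\PT\Pomega\PT$ as a sum over $(i,j,k)$ of independent, zero-mean, self-adjoint rank-one operators $\Z\mapsto(1-\rho^{-1}\delta_{ijk})\inproduct{\PT(\eijk)}{\Z}\PT(\eijk)$, bound the uniform and variance terms via the incoherence estimate $\norm{\PT(\eijk)}_F^2\le 2\mu r/(nn_3)$, and apply operator Bernstein. The only cosmetic difference is that the paper routes through matrix operators $\Hmbar_{ijk}$ on block-diagonal matrices in the Fourier domain, whereas you work directly on $\Rn$; your variance step via $\sum_{ijk}\PT(\eijk)\otimes\PT(\eijk)=\PT$ is in fact a bit cleaner than the paper's computation.
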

\begin{cor}\label{corollar31}
	Assume that $\Omegat\sim\Ber(\rho)$, then $\norm{\Pomega\PT}^2\leq \rho+\epsilon$, provided that $1-\rho\geq C\epsilon^{-2}(\mu r\log(nn_3))/(nn_3)$, where $C$ is as in Lemma \ref{lem_kem1}. For the tensor with frontal slice, the modification is as in Lemma  \ref{lem_kem1}.
\end{cor}
Note that this corollary shows that $\norm{\Pomega\PT}\leq 1/2$, provided $|\Omegat|$ is not too large. 
\begin{lemma}\label{lem_keyinf}
	Suppose that $\Z\in\Tm$ is a fixed tensor, and $\Omegat\sim\Ber(\rho)$. Then, with high probability,
	\begin{equation}
	\norm{\Z-\rho^{-1}\PT\Pomega\Z}_{\infty} \leq \epsilon \norm{\Z}_\infty,
	\end{equation}
	provided that $\rho\geq C_0 \epsilon^{-2}(\mu r \log(nn_3))/(nn_3)$ (for the tensor of  rectangular frontal slice, $\rho\geq C_0\epsilon^{-2}(\mu r\log (\none n_3))/(\ntwo n_3)$) for some numerical constant $C_0>0$.
\end{lemma}

\begin{lemma}\label{lempre3}
	Suppose $\Z$ is fixed, and $\Omegat\sim\Ber(\rho)$. Then, with high probability,
	\begin{equation}
	\norm{(\I-\rho^{-1}\Pomega)\Z} \leq \sqrt{\frac{C_0nn_3\log(nn_3)}{\rho}} \norm{\Z}_\infty,
	\end{equation}
	for some numerical constant $C_0>0$ provided that $\rho\geq C_0\log(nn_3)/(nn_3)$ (or $\rho\geq C_0\log(\none n_3)/(\ntwo n_3)$ for the tensors with rectangular frontal slice).
\end{lemma} 

\subsection{Proof of Lemma \ref{eq_keylem1}}
\begin{proof}
	We first introduce some notations. Set $\Z_j =   \U*\V^* - \PT\Y_j$ obeying
	\begin{equation*}
		\Z_j = (\PT-q^{-1}\PT\PP_{\Omegat_j}\PT)\Z_{j-1}.
	\end{equation*}
	So $\Z_j\in\T$ for all $j\geq 0$. Also, note that when
	\begin{equation}\label{eq_q}
	q\geq C_0\epsilon^{-2}\frac{\mu r\log(nn_3)}{nn_3},
	\end{equation}
	or for the tensors with rectangular frontal slices $q\geq C_0\epsilon^{-2}\frac{\mu r\log(\none n_3)}{\ntwo n_3}$,
	we have
	\begin{equation}\label{eq_Zrel1}
	\norm{\Z_j}_\infty \leq \epsilon \norm{\Z_{j-1}}_\infty\leq \epsilon^j \norm{\U*\V^*}_\infty,
	\end{equation}
	by Lemma \ref{lem_keyinf} and 
	\begin{equation}\label{eq_Zrel2}
	\norm{\Z_j}_F \leq \epsilon \norm{\Z_{j-1}}_F \leq \epsilon^j \norm{\U*\V^*}_F \leq   \epsilon^j \sqrt{r}.
	\end{equation}
	We assume $\epsilon\leq e^{-1}$.
	
	\noindent\textbf{1. Proof of (a).} Note that $\Y_{j_0} = \sum_j q^{-1}\PP_{\Omegat_j}\Z_{j-1}$. We have
	\begin{align*}
		\norm{\WL} = &\norm{\PTo\Y_{j_0}} \leq \sum_j \norm{q^{-1} \PTo\PP_{\Omegat_j} \Z_{j-1} } \\
		\leq & \sum_j \norm{\PTo(q^{-1}\PP_{\Omegat_j}\Z_{j-1}-\Z_{j-1})} \\
		\leq & \sum_j  \norm{q^{-1}\PP_{\Omegat_j}\Z_{j-1}-\Z_{j-1}} \\
		\leq & C_0'\sqrt{\frac{nn_3\log(nn_3)}{q}} \sum_j \norm{\Z_{j-1}}_\infty \\
		\leq & C_0'\sqrt{\frac{nn_3\log(nn_3)}{q}} \sum_j \epsilon^{j-1}\norm{\U*\V^*}_\infty \\
		\leq & C_0'(1-\epsilon)^{-1} \sqrt{\frac{nn_3\log(nn_3)}{q}}\norm{\U*\V^*}_\infty.
	\end{align*}
	The fourth step is from Lemma \ref{lempre3} and the fifth is from (\ref{eq_Zrel1}). Now by using (\ref{eq_q}) and (\ref{tic3}), we have 
	$$\norm{\WL}\leq C'\epsilon,$$
	for some numerical constant $C'$.

	\noindent\textbf{2. Proof of (b).} Since $\Pomega\Y_{j_0} = \0$, $\Pomega(\U*\V^*+\PTo\Y_{j_0}) =  \Pomega(\U*\V^*-\PT\Y_{j_0}) = \Pomega(\Z_{j_0}) $, and it follows from (\ref{eq_Zrel2}) that $$\norm{\Z_{j_0}}_F \leq \epsilon^{j_0} \norm{\U*\V^*}_F \leq \epsilon^{j_0} \sqrt{r}.$$
	Since $\epsilon\leq e^{-1}$ and $j_0 \geq 2\log(nn_3)$, $\epsilon^{j_0}\leq (nn_3)^{-2}$ and this proves the claim.
	
	\noindent\textbf{3. Proof of (c).} We have $\U*\V^*+\WL = \Z_{j_0}+\Y_{j_0}$ and know that $\Y_{j_0}$ is supported on $\Omegat^c$. Therefore, since $\norm{\Z_{j_0}}_F\leq \lambda/8$. We only need to show that $\norm{\Y_{j_0}}_\infty\leq\lambda/8$. Indeed,
	\begin{align*}
		\norm{\Y_{j_0}}_\infty \leq & q^{-1}\sum_j \norm{\PP_{\Omegat_j} \Z_{j-1}}_\infty \\
		\leq & q^{-1}\sum_j \norm{ \Z_{j-1}}_\infty \\
		\leq & q^{-1}\sum_j\epsilon^{j-1} \norm{  \U*\V^*}_\infty.
	\end{align*}
	Since $\norm{\U*\V^*}_\infty \leq \sqrt{\frac{\mu r}{n^2n^2_3}}$, this gives
	\begin{equation*}
		\norm{\Y_{j_0}}_\infty \leq C'\frac{\epsilon^2}{\sqrt{\mu r(\log(nn_3))^2}},
	\end{equation*}
	for some numerical constant $C'$ whenever $q$ obeys (\ref{eq_q}). Since $\lambda = 1/\sqrt{nn_3}$, $\norm{\Y_{j_0}}_\infty \leq \lambda/8$ if
	\begin{align*}
		\epsilon\leq C \left(\frac{\mu r(\log(nn_3))^2}{nn_3}\right)^{1/4}.
	\end{align*}
	The claim is proved by using (\ref{eq_q}), (\ref{tic3}) and sufficiently small $\epsilon$ (provided that $\rho_r$ is sufficiently small. Note that everything is consistent since $C_0\epsilon^{-2}\frac{\mu r\log(nn_3)}{nn_3}<1$.
\end{proof}

\subsection{Proof of Lemma \ref{eq_keylem2}}
\begin{proof}
	We denote $\M=\sgn{\Sbm_0}$ distributed as
	\begin{equation*}\label{defM}
		\M_{ijk} = \begin{cases}
			1, & \text{w.p.} \ \rho/2, \\
			0, & \text{w.p.} \ 1-\rho, \\
			-1, & \text{w.p.} \ \rho/2.
		\end{cases}
	\end{equation*}
	Note that for any $\sigma>0$, $\{\norm{\Pomega\PT}\leq\sigma\}$ holds with high probability provided that $\rho$ is sufficiently small, see Corollary \ref{corollar31}.
	
	\noindent\textbf{1. Proof of (a).} By construction,
	\begin{align*}
		\WS = &\lambda\PTo\M+\lambda\PTo\sum_{k\geq 1} (\Pomega\PT\Pomega)^k\M \\
		:=&  \PTo\WS_0 + \PTo\WS_1.
	\end{align*}
	Note that  $\norm{\PTo\WS_0} \leq \norm{\WS_0} = \lambda\norm{\M}$ and $\norm{\PTo\WS_1} \leq \norm{\WS_1} = \lambda\norm{\R(\M)}$, where $\R = \sum_{k\geq1} (\Pomega\PT\Pomega)^k$. Now, we will respectively show that $\lambda\norm{\M}$ and $\lambda\norm{\R(\M)}$ are small enough when $\rho$ is sufficiently small for $\lambda=1/\sqrt{nn_3}$. Therefor, $\norm{\WS}\leq1/4$. 
	
	\textbf{1) Bound $\lambda\norm{\M}$.} 
	
	By using Lemma \ref{lemspectrm} directly, we have that $\lambda\norm{\M}\leq \varphi(\rho)$ is sufficiently small given $\lambda=1/\sqrt{nn_3}$ and $\rho$ is sufficiently small.

	\textbf{2) Bound $\norm{\R(\M)}$.} 
	
	For simplicity, let $\Z=\R(\M)$. We have
	\begin{equation}\label{boundrm1}
	\norm{\Z}=\norm{\Zmbar}=\sup_{\x\in\mathbb{S}^{nn_3-1}} \norm{\Zmbar\x}_2.
	\end{equation}
	The optimal $\x$ to (\ref{boundrm1}) is an eigenvector of $\Zmbar^*\Zmbar$. Since $\Zmbar$ is a block diagonal matrix, the optimal $\x$ has a  block sparse structure, i.e., 
	$\x \in B=\big\{  \x  \in\mathbb{R}^{nn_3} | \x = [\x_1^\top,\cdots , \x_i^\top \cdots, \x^\top_{n_3} ], \text{with }  \x_i\in\mathbb{R}^n, \text{ and there exists } j \text{ such that } \x_j\neq\0 \text{ and } \x_i=\0, i\neq j  \}$. Note that $\norm{\x}_2 = \norm{\x_j}_2 = 1$. Let $N$ be the $1/2$-net for $\mathbb{S}^{n-1}$ of size at most $5^n$ (see Lemma 5.2 in \cite{vershynin2010introduction}). Then the $1/2$-net, denoted as $N'$, for $B$ has the size at most $n_3\cdot 5^n$.  We have 
	\begin{align*}
		\norm{\R(\M)} =  &\norm{\bdiag( \overline{ \R(\M) }  )} \\
		= & \sup_{\x,\y\in B} \inproduct{\x}{\bdiag( \overline{ \R(\M) }  )\y} \\ 
		= & \sup_{\x,\y\in B}  \inproduct{\x\y^*}{\bdiag(\overline{\R({\M)}})}  \\
		=& \sup_{\x,\y\in B}  \inproduct{\bdiag^*(\x\y^*)}{\overline{\R({\M)}}} ,
	\end{align*}
	where $\bdiag^*$, the joint operator of $\bdiag$, maps the block diagonal matrix $\x\y^*$ to a tensor of size $n\times n\times n_3$. Let $\Z' = \bdiag^*(\x\y^*)$ and $\Z = \mcode{ifft}(\Z',[\ ],3)$. We have
	\begin{align*}
		\norm{\R(\M)} =& \sup_{\x,\y\in B}   \inproduct{\Z'}{\overline{\R({\M)}}} \\
		=&\sup_{\x,\y\in B} n_3   \inproduct{\Z}{{\R({\M)}}}  \\
		=&\sup_{\x,\y\in B} n_3   \inproduct{\R(\Z)}{{{\M}}}  \\
		\leq & \sup_{\x,\y\in N'} 4 n_3   \inproduct{\R(\Z)}{{{\M}}}.
	\end{align*}
	For a fixed pair $(\x,\y)$ of unit-normed vectors, define the random variable
	\begin{equation*}
		X(\x,\y) = \inproduct{4n_3\R(\Z)}{{{\M}}}.
	\end{equation*}
	Conditional on $\Omegat=\text{supp}(\M)$, the signs of $\M$ are independent and identically distributed symmetric and Hoeffding's inequality gives
	\begin{align*}
		\mathbb{P}(|X(\x,\y)|>t| \Omegat) \leq 2\exp\left(\frac{-2t^2}{\norm{4n_3\R(\Z)}_F^2}\right).
	\end{align*}
	Note that $\norm{4n_3\R(\Z))}_F\leq 4n_3\norm{\R}\norm{\Z}_F = 4\sqrt{n_3}\norm{\R}\norm{\Z'}_F =4\sqrt{n_3}\norm{\R}$. Therefore, we have 
	\begin{align*}
		\mathbb{P}\left( \sup_{\x,\y\in N'} |X(\x,\y)| >t | \Omegat \right) \leq 2|N'|^2\exp\left(-\frac{t^2}{8n_3\norm{\R}^2}\right).
	\end{align*}
	Hence,
	\begin{align*}
		\mathbb{P}\left( \norm{\R(\M)} >t | \Omegat \right) \leq 2|N'|^2\exp\left(-\frac{t^2}{8n_3\norm{\R}^2}\right).
	\end{align*}
	On the event $\{\norm{\Pomega\PT} \leq\sigma \}$,
	\begin{align*}
		\norm{\R} \leq \sum_{k\geq1} \sigma^{2k} = \frac{\sigma^2}{1-\sigma^2},
	\end{align*}
	and, therefore, unconditionally,
	\begin{align*}
		& \mathbb{P}\left( \norm{\R(\M)} >t   \right) \\
		\leq & 2|N'|^2\exp\left(-\frac{\gamma^2t^2}{8n_3}\right) + \mathbb{P}\left( \norm{\Pomega\PT} \geq \sigma   \right), \ \gamma =\frac{1-\sigma^2}{2\sigma^2} \\
		=& 2n_3^2 \cdot 5^{2n} \exp\left(-\frac{\gamma^2t^2}{8n_3}\right) + \mathbb{P}\left( \norm{\Pomega\PT} \geq \sigma   \right).
	\end{align*}
	Let $t=c\sqrt{nn_3}$, where $c$ can be a small absolute constant. Then the above inequality implies that $\norm{\R(\M)}\leq t$ with high probability.

	\noindent\textbf{2. Proof of (b)}
	Observe that 
	\begin{align*}
		\Pomegao\WS = -\lambda\Pomegao\PT(\Pomega-\Pomega\PT\Pomega)^{-1}\M.
	\end{align*}
	Now for $(i,j,k)\in\Omegat^c$, $\WS_{ijk} = \inproduct{\WS}{\eijk}$, and we have $\WS_{ijk} =\lambda \inproduct{ \Q(i,j,k)}{\M}$, where $\Q(i,j,k)$ is the tensor $-(\Pomega-\Pomega\PT\Pomega)^{-1}\Pomega\PT(\eijk)$. Conditional on $\Omegat = \text{supp}(\M)$, the signs of $\M$ are independent and identically distributed symmetric, and the Hoeffding's inequality gives
	\begin{equation*}
		\mathbb{P}(|\WS_{ijk}| > t\lambda|\Omegat) \leq 2\exp\left( -\frac{2t^2}{\norm{\Q(i,j,k)}_F^2}\right),
	\end{equation*}
	and
	\begin{align*}
		&\mathbb{P}(\sup_{i,j,k}|\WS_{ijk}| > t\lambda/n_3|\Omegat) \\
		\leq & 2n^2n_3\exp\left( -\frac{2t^2}{\sup_{i,j,k}\norm{\Q(i,j,k)}_F^2}\right).
	\end{align*}
	By using (\ref{proabouPT}), we have
	\begin{align*}
		\norm{\Pomega\PT(\eijk)}_F \leq & \norm{\Pomega\PT}\norm{\PT(\eijk)}_F \\
		\leq & \sigma\sqrt{\frac{2\mu r}{nn_3}},
	\end{align*}
	on the event $\{\norm{\Pomega\PT}\leq\sigma\}$. On the same event, we have $\norm{(\Pomega-\Pomega\PT\Pomega)^{-1}}\leq (1-\sigma^2)^{-1}$ and thus
	$\norm{\Q(i,j,k)}_F^2 \leq \frac{2\sigma^2}{(1-\sigma^2)^2}\frac{\mu r}{nn_3}$.
	Then, unconditionally,
	\begin{align*}
		& \mathbb{P}\left(\sup_{i,j,k}|\WS_{ijk}| >t\lambda\right)  \\
		\leq & 2n^2n_3\exp\left( -\frac{nn_3\gamma^2t^2}{\mu r} \right) + \mathbb{P}(\norm{\Pomega\PT}\geq\sigma),
	\end{align*}
	where $\gamma=\frac{(1-\sigma^2)^2}{2\sigma^2}$. This proves the claim when $\mu r<\rho'_rnn_3\log(nn_3)^{-1}$ and $\rho_r'$ is sufficiently small.
\end{proof}

\section{Proofs of Some  Lemmas}\label{sec_prooflemmas}

\begin{lemma} \cite{tropp2012user}   \label{lembenmatrix}
	Consider a finite sequence $\{\Zm_k\}$ of independent, random $n_1\times n_2$ matrices that satisfy the assumption $\mathbb{E} \Zm_k=\0$ and $\norm{\Zm_k}\leq R$ almost surely. Let $\sigma^2 = \max\{\norm{\sum_k\mathbb{E}[\Zm_k\Zm_k^*]} , \max\{\norm{\sum_k\mathbb{E}[\Zm_k^*\Zm_k]} \}$. Then, for any $t\geq0$, we have
	\begin{align*}
		\mathbb{P}\left[ \normlarge{\sum_k\Zm_k} \geq t \right] \leq & (n_1+n_2) \exp\left( -\frac{t^2}{2\sigma^2+\frac{2}{3}Rt} \right) \\
		\leq &  (n_1+n_2) \exp\left( -\frac{3t^2}{8\sigma^2} \right), \ \text{for } t\leq\frac{\sigma^2}{R}.
	\end{align*}
	Or, for any $c>0$, we have
	\begin{align*}\label{eq_tropbound2}
		\normlarge{\sum_k\Zm_k} \geq 2\sqrt{c\sigma^2\log(n_1+n_2)} + cB\log(n_1+n_2),
	\end{align*}
	with probability at least $1-(n_1+n_2)^{1-c}$.
\end{lemma}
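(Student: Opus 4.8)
The statement is precisely the rectangular matrix Bernstein inequality of Tropp~\cite{tropp2012user}, so the plan is to reduce to, and then invoke, the standard matrix-concentration machinery; I indicate the steps and why the stated constants come out. First I would symmetrize by passing to the self-adjoint dilation $\mathcal{D}(\Zm_k)=\begin{bmatrix}\0 & \Zm_k\\ \Zm_k^{*} & \0\end{bmatrix}$, an $(n_1+n_2)\times(n_1+n_2)$ Hermitian matrix obeying $\mathbb{E}\,\mathcal{D}(\Zm_k)=\0$, $\norm{\mathcal{D}(\Zm_k)}=\norm{\Zm_k}\le R$, and $\mathcal{D}(\Zm_k)^{2}=\begin{bmatrix}\Zm_k\Zm_k^{*} & \0\\ \0 & \Zm_k^{*}\Zm_k\end{bmatrix}$. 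Since $\norm{\sum_k\Zm_k}=\lambda_{\max}\bigl(\sum_k\mathcal{D}(\Zm_k)\bigr)$ and $\norm{\sum_k\mathbb{E}\,\mathcal{D}(\Zm_k)^{2}}=\sigma^{2}$, it suffices to prove the Hermitian statement: for $\Xm=\sum_k\Xm_k$ with the $\Xm_k$ independent, Hermitian, $\mathbb{E}\Xm_k=\0$, $\norm{\Xm_k}\le R$ and $\bigl\|\sum_k\mathbb{E}\Xm_k^{2}\bigr\|\le\sigma^{2}$, one has $\mathbb{P}\bigl[\lambda_{\max}(\Xm)\ge t\bigr]\le(n_1+n_2)\exp\bigl(-t^{2}/(2\sigma^{2}+\tfrac23 Rt)\bigr)$.

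Next I would run the matrix Laplace transform method: for every $\theta>0$, $\mathbb{P}[\lambda_{\max}(\Xm)\ge t]\le e^{-\theta t}\,\mathbb{E}\operatorname{Tr}e^{\theta\Xm}$, using $\lambda_{\max}(e^{\theta\Xm})\le\operatorname{Tr}e^{\theta\Xm}$. The crucial step is to control $\mathbb{E}\operatorname{Tr}e^{\theta\Xm}$; here I would invoke Lieb's concavity theorem, which (through subadditivity of the matrix cumulant generating function) gives $\mathbb{E}\operatorname{Tr}e^{\theta\Xm}\le\operatorname{Tr}\exp\bigl(\sum_k\log\mathbb{E}\,e^{\theta\Xm_k}\bigr)$. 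Using $\mathbb{E}\Xm_k=\0$, $\norm{\Xm_k}\le R$ and the scalar bound $e^{\theta x}\le 1+\theta x+\tfrac{\theta^{2}x^{2}/2}{1-R\theta/3}$ (valid for $|x|\le R$, $0<\theta<3/R$) transferred to matrices, followed by operator monotonicity of $\log$, I would obtain $\log\mathbb{E}\,e^{\theta\Xm_k}\preceq\tfrac{\theta^{2}/2}{1-R\theta/3}\,\mathbb{E}\Xm_k^{2}$. Summing, bounding $\operatorname{Tr}\exp(\Am)\le(n_1+n_2)\exp(\lambda_{\max}(\Am))$ for Hermitian $\Am$, and using $\bigl\|\sum_k\mathbb{E}\Xm_k^{2}\bigr\|\le\sigma^{2}$ yields $\mathbb{P}[\lambda_{\max}(\Xm)\ge t]\le(n_1+n_2)\exp\bigl(-\theta t+\tfrac{\theta^{2}\sigma^{2}/2}{1-R\theta/3}\bigr)$; the choice $\theta=t/(\sigma^{2}+Rt/3)\in(0,3/R)$ collapses the exponent to $-t^{2}/(2\sigma^{2}+\tfrac23 Rt)$, which is the first displayed inequality.

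The two remaining assertions are then pure bookkeeping. When $t\le\sigma^{2}/R$ I would note $\tfrac23 Rt\le\tfrac23\sigma^{2}$, hence $2\sigma^{2}+\tfrac23 Rt\le\tfrac83\sigma^{2}$ and the exponent is at most $-3t^{2}/(8\sigma^{2})$, giving the sub-Gaussian form. For the last display (in which $B=R$), I would set $t_0=2\sqrt{c\sigma^{2}\log(n_1+n_2)}+cR\log(n_1+n_2)$, expand $t_0^{2}/\bigl(c\log(n_1+n_2)\bigr)$, and compare it term by term with $2\sigma^{2}+\tfrac23 Rt_0$ to conclude $t_0^{2}/(2\sigma^{2}+\tfrac23 Rt_0)\ge c\log(n_1+n_2)$; plugging $t=t_0$ into the first inequality then gives tail probability at most $(n_1+n_2)\cdot(n_1+n_2)^{-c}=(n_1+n_2)^{1-c}$, as stated.

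I do not expect a genuine obstacle here: the only non-routine ingredients are Lieb's concavity theorem and the matrix cumulant bound $\log\mathbb{E}\,e^{\theta\Xm_k}\preceq g(\theta)\,\mathbb{E}\Xm_k^{2}$ (which rests on operator monotonicity of $\log$ and the transfer rule), and both are classical. Consequently, in the manuscript the cleanest course is simply to cite~\cite{tropp2012user} for this lemma; the sketch above is included only to record why the constants $2\sigma^{2}$, $\tfrac23 R$, and the final threshold $2\sqrt{c\sigma^{2}\log(n_1+n_2)}+cR\log(n_1+n_2)$ are the correct ones.
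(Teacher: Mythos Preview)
Your proposal is correct: the sketch via Hermitian dilation, the matrix Laplace transform bound, Lieb's concavity, and the cumulant estimate $\log\mathbb{E}\,e^{\theta\Xm_k}\preceq \tfrac{\theta^{2}/2}{1-R\theta/3}\,\mathbb{E}\Xm_k^{2}$ is exactly the standard derivation of the rectangular matrix Bernstein inequality, and your bookkeeping for the sub-Gaussian regime and the high-probability deviation form is sound. The paper does not supply its own proof of this lemma at all; it simply quotes the result with a citation to~\cite{tropp2012user}, which is precisely what you anticipated in your final paragraph.
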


\subsection{Proof of Lemma \ref{lemspectrm}}

\begin{proof}
	The proof has three steps.
	
	\textit{Step 1: Approximation.}
	We first introduce some notations. Let $\f_i^*$ be the $i$-th row of $\F_{n_3}$,  and $\Mh=\begin{bmatrix}
	\Mh_1 \\ \Mh_2 \\ \vdots \\ \Mh_{n} 
	\end{bmatrix} \in\mathbb{R}^{nn_3\times n}$ be a matrix unfolded by $\M$, where $\Mh_i\in\mathbb{R}^{n_3\times n}$ is the $i$-th horizontal slice of $\M$, i.e., $[\Mh_i]_{kj} = \M_{ikj}$. Consider that $\Mbar = \mcode{fft}(\M,[\ ],3)$, we have
	\begin{equation*}\label{pfl351}
		\Mmbar_i = \begin{bmatrix}
			\f_i^*\Mh_1 \\  \f_i^*\Mh_2 \\ \vdots \\  \f_i^*\Mh_{n} 
		\end{bmatrix},
	\end{equation*}\label{pfl352}
	where $\Mmbar_i\in\mathbb{R}^{n\times n}$ is the $i$-th frontal slice of $\M$. Note that 
	\begin{equation}\label{pfl3544440}
	\norm{\M} = \norm{\Mmbar} = \max_{i=1,\cdots,n_3} \ \norm{\Mmbar_i}.
	\end{equation}
	Let $N$ be the $1/2$-net for $\mathbb{S}^{n-1}$ of size at most $5^n$ (see Lemma 5.2 in \cite{vershynin2010introduction}). Then Lemma 5.3 in  \cite{vershynin2010introduction} gives 
	\begin{equation}\label{prf13555}
	\norm{\Mmbar_i} \leq 2 \ \max_{\x\in N} \ \norm{\Mmbar_i\x}_2.
	\end{equation}
	So we consider to bound $\norm{\Mmbar_i\x}_2$. 
	
	\textit{Step 2: Concentration.} 
	We can express $\norm{\Mmbar_i\x}_2^2$ as a sum of independent random variables
	\begin{equation}\label{pfl353}
	\norm{\Mmbar_i\x}_2^2 = \sum_{j=1}^{n} (\f_i^*\Mh_j\x)^2 := \sum_{j=1}^{n} z_j^2,
	\end{equation}
	where $z_j = \langle {\Mh_j},{\f_i\x^*}\rangle$, $j=1,\cdots,n$, are independent sub-gaussian random variables with $\mathbb{E}z_j^2=\rho\norm{\f_i\x^*}_F^2=\rho n_3$. Using (\ref{defM0}), we have
	\begin{equation*} 
		|[\Mh_j]_{kl}| = \begin{cases}
			1, & \text{w.p.} \ \rho, \\
			0, & \text{w.p.} \ 1-\rho.
		\end{cases}
	\end{equation*}
	Thus, the sub-gaussian norm of $[\Mh_j]_{kl}$, denoted as $\norm{\cdot}_{\psi_2}$, is 
	\begin{align*}
		\norm{[\Mh_j]_{kl}}_{\psi_2} = & \sup_{p\geq 1} p^{-\frac{1}{2}}(\mathbb{E}[|[\Mh_j]_{kl}|^p])^{\frac{1}{p}} \\
		= & \sup_{p\geq 1} p^{-\frac{1}{2}}\rho^{\frac{1}{p}}.
	\end{align*}
	Define the function $\phi(x) =x^{-\frac{1}{2}}\rho^{\frac{1}{x}}$ on $[1,+\infty)$. The only stationary point occurs at $x^*=\log \rho^{-2}$. Thus, 
	\begin{align}
		\phi(x)\leq & \max(\phi(1),\phi(x^*)) \notag\\
		= &\max\left(\rho,(\log\rho^{-2})^{-\frac{1}{2}}\rho^{\frac{1}{\log \rho^{-2}}}\right) \notag\\
		:= & \psi(\rho). \label{abcefeqdd}
	\end{align}
	Therefore,  $\norm{[\Mh_j]_{kl}}_{\psi_2}\leq   \psi(\rho)$. Consider that $z_j$ is a sum of independent centered sub-gaussian random variables $[\Mh_j]_{kl}$'s, by using Lemma 5.9 in  \cite{vershynin2010introduction}, we have $\norm{z_j}^2_{\psi_2}\leq c_1(\psi(\rho))^2 n_3$, where $c_1$ is an absolute constant. Therefore, by Remark 5.18 and Lemma 5.14 in \cite{vershynin2010introduction}, $z_j^2-\rho n_3$ are independent centered sub-exponential random variables with $\norm{z_j^2-\rho n_3}_{\psi_1} \leq 2 \norm{z_j}^2_{\psi_1} \leq 4 \norm{z_j}^2_{\psi_2}\leq 4c_1 (\psi(\rho))^2 n_3$.
	
	Now, we use an exponential deviation inequality, Corollary 5.17 in \cite{vershynin2010introduction}, to control the sum  (\ref{pfl353}). We have
	\begin{align*}
		& \mathbb{P}\left(| \norm{\Mmbar_i\x}_2^2 -\rho nn_3 | \geq tn \right) \\
		= & \mathbb{P}\left(\left| \sum_{j=1}^{n} (z_j^2-\rho n_3) \right| \geq tn \right) \\
		\leq & 2\exp\left( -c_2n\min \left( \left(\frac{t}{4c_1 (\psi(\rho))^2 n_3}\right)^2, \frac{t}{4c_1(\psi(\rho))^2 n_3}\right) \right),
	\end{align*}
	where $c_2>0$. Let $t = c_3 (\psi(\rho))^2 n_3$ for some absolute constant $c_3$, we have   
	\begin{align*}
		& \mathbb{P}\left(| \norm{\Mmbar_i\x}_2^2 -\rho nn_3 | \geq c_3(\psi(\rho))^2 nn_3 \right) \\
		\leq & 2\exp\left( -c_2n\min\left( \left(\frac{c_3}{4c_1}\right)^2,\frac{c_3}{4c_1}\right) \right).
	\end{align*}

	\textit{Step 3： Union bound.}
	Taking the union bound over all $\x$ in the net $N$ of cardinality $|N|\leq 5^n$, we obtain
	\begin{align*}
		&\mathbb{P}\left(\left| \max_{\x \in N} \norm{\Mmbar_i\x}_2^2 -\rho nn_3 \right| \geq c_3(\psi(\rho))^2 nn_3 \right) \\
		\leq & 2\cdot 5^n \cdot\exp\left( -  c_2n\min\left(\left(\frac{c_3}{4c_1}\right)^2,\frac{c_3}{4c_1}\right) \right).
	\end{align*}
	Furthermore, taking the union bound over all $i=1,\cdots,n_3$, we have 
	\begin{align*}
		&\mathbb{P}\left(\max_i \ \left| \max_{\x \in N} \norm{\Mmbar_i\x}_2^2 -\rho nn_3 \right| \geq c_3(\psi(\rho))^2 nn_3 \right) \notag\\
		\leq &2\cdot 5^n\cdot n_3 \cdot \exp\left( -  c_2n\min\left(\left(\frac{c_3}{4c_1}\right)^2,\frac{c_3}{4c_1}\right) \right). \label{prfnlemmada000}
	\end{align*}
	This implies that, with high probability (when the constant $c_3$ is  large enough),
	\begin{equation}\label{profboundm2}
	\max_i \  \max_{\x \in N} \norm{\Mmbar_i\x}_2^2 \leq (\rho+c_3(\psi(\rho))^2)  nn_3.
	\end{equation}
	Let $\varphi(\rho) = 2\sqrt{\rho+c_3(\psi(\rho))^2}$ and it satisfies  $\lim\limits_{\rho\rightarrow0^+}\varphi(\rho)=0$ by using (\ref{abcefeqdd}). The proof is completed by further combining  (\ref{pfl3544440}), (\ref{prf13555}) and (\ref{profboundm2}).
\end{proof}

\subsection{Proof of Lemma \ref{lem_kem1}}
\begin{proof}
	For any tensor $\Z$, we can write
	\begin{align*}
		&(\rho^{-1}\PT\Pomega\PT-\PT)\Z \\
		= &\sum_{ijk}\left(\rho^{-1}{\delta_{ijk}}-1\right)\inproduct{\eijk }{\PT\Z}\PT(\eijk) \\
		:=&\sum_{ijk} \HH_{ijk}(\Z)
	\end{align*}
	where $\HH_{ijk}: \mathbb{R}^{\nss}\rightarrow\mathbb{R}^{\nss}$ is a self-adjoint random operator with $\mathbb{E}[\HH_{ijk}]=\0$. Define the matrix operator $\Hmbar_{ijk}: \mathbb{B}\rightarrow \mathbb{B}$, where $\mathbb{B}=\{\Bmbar: \B\in \mathbb{R}^{\nss} \}$ denotes the set consists of block diagonal matrices with the blocks as the frontal slices of $\Bbar$,  as
	\begin{align*}
		\Hmbar_{ijk}(\Zmbar) = & \left(\rho^{-1}{\delta_{ijk}}-1\right)\inproduct{\eijk }{\PT(\Z)}\bdiag(\overline{\PT(\eijk)}).
	\end{align*}
	By the above definitions, we have $\norm{\HH_{ijk}} = \norm{\Hmbar_{ijk}}$ and $\norm{\sum_{ijk}\HH_{ijk}} = \norm{\sum_{ijk}\Hmbar_{ijk}}$. Also $\Hmbar_{ijk}$ is  self-adjoint and $\mathbb{E}[\Hmbar_{ijk}]=0$. 
	To prove the result by the non-commutative Bernstein inequality, we need to bound $\norm{\Hmbar_{ijk}}$ and $\normlarge{\sum_{ijk}\mathbb{E}[\Hmbar^2_{ijk}]}$. First, we have
	\begin{align*}
		\norm{\Hmbar_{ijk}} = & \sup_{\norm{\Zmbar}_F=1} \norm{\Hmbar_{ijk}(\Zmbar)}_F \\
		\leq &  \sup_{\norm{\Zmbar}_F=1} \rho^{-1} \norm{\PT(\eijk)}_F \norm{\bdiag(\overline{\PT(\eijk)})}_F \norm{\Z}_F \\
		= & \sup_{\norm{\Zmbar}_F=1} \rho^{-1} \norm{\PT(\eijk)}_F^2 \norm{\Zmbar}_F \\
		\leq & \frac{2\mu r}{nn_3\rho},
	\end{align*}
	where the last inequality uses (\ref{proabouPT}). On the other hand, by direct computation, we have $\Hmbar_{ijk}^2(\Zmbar) = (\rho^{-1}\delta_{ijk}-1)^2\inproduct{\eijk}{\PT(\Z)}\inproduct{\eijk}{\PT(\eijk)}\bdiag(\overline{\PT(\eijk)})$. Note that $\mathbb{E}[(\rho^{-1}\delta_{ijk}-1)^2]\leq \rho^{-1}$. We have
	\begin{align*}
		& \normlarge{\sum_{ijk}\mathbb{E}[\Hmbar^2_{ijk}(\Zmbar)]}_F \\
		\leq & \rho^{-1}\normlarge{\sum_{ijk} \inproduct{\eijk}{\PT(\Z)}\inproduct{\eijk}{\PT(\eijk)}\bdiag(\overline{\PT(\eijk)})}_F \\
		\leq & \rho^{-1} \sqrt{n_3} \norm{\PT(\eijk)}_F^2\normlarge{\sum_{ijk} \inproduct{\eijk}{\PT(\Z)} }_F \\
		= & \rho^{-1}\sqrt{n_3} \norm{\PT(\eijk)}_F^2\norm{\PT(\Z)}_F\\
		\leq & \rho^{-1}\sqrt{n_3} \norm{\PT(\eijk)}_F^2\norm{\Z}_F\\
		= & \rho^{-1} \norm{\PT(\eijk)}_F^2\norm{\Zmbar}_F\\
		\leq & \frac{2\mu r}{nn_3\rho}\norm{\Zmbar}_F.	
	\end{align*}
	This implies $\normlarge{\sum_{ijk}\mathbb{E}[\Hmbar^2_{ijk}]}\leq \frac{2\mu r}{nn_3\rho}$.  Let $\epsilon\leq 1$. By Lemma \ref{lembenmatrix}, we have
	\begin{align*}
		& \mathbb{P} \left[\norm{\rho^{-1}\PT\Pomega\PT-\PT} > \epsilon \right] \\
		= & \mathbb{P}\left[ \normlarge{  \sum_{ijk} {\HH}_{ijk} } > \epsilon\right] \\
		= & \mathbb{P}\left[ \normlarge{  \sum_{ijk} {\Hmbar}_{ijk} } > \epsilon \right] \\
		\leq & 2nn_3 \exp\left( -\frac{3}{8} \cdot \frac{\epsilon^2}{2\mu r/(nn_3\rho) } \right) \\
		\leq & 2(nn_3)^{1-\frac{3}{16}C_0},
	\end{align*}
	where the last inequality uses  $\rho\geq C_0\epsilon^{-2}\mu r\log(nn_3)/(nn_3)$.
	Thus, $\norm{\rho^{-1}\PT\Pomega\PT-\PT} \leq \epsilon$ holds with high probability for some numerical constant $C_0$.
\end{proof}

\subsection{Proof of Corollary \ref{corollar31}}
\begin{proof}
	From Lemma \ref{lem_kem1}, we have 
	\begin{equation*}
		\norm{\PT-(1-\rho)^{-1}\PT\Pomegao\PT}\leq \epsilon,
	\end{equation*}
	provided that $1-\rho\geq C_0 \epsilon^{-2}(\mu r\log(nn_3))/n$. Note that $\I = \Pomega+\Pomegao$, we have 
	\begin{equation*}
		\norm{\PT-(1-\rho)^{-1}\PT\Pomegao\PT} = (1-\rho)^{-1} (\PT\Pomega\PT - \rho \PT).
	\end{equation*}
	Then, by the triangular inequality
	\begin{equation*}
		\norm{\PT\Pomega\PT} \leq \epsilon (1-\rho) + \rho \norm{\PT} = \rho + \epsilon(1-\rho).
	\end{equation*}
	The proof is completed by using $\norm{\Pomega\PT}^2 = \norm{\PT\Pomega\PT}$.
\end{proof}

\subsection{Proof of Lemma \ref{lem_keyinf}}
\begin{proof} For any tensor $\Z\in\Tm$, we write
	\begin{align*}
		\rho^{-1}\PT\Pomega(\Z) = \sum_{ijk}\rho^{-1}\delta_{ijk} z_{ijk}\PT(\eijk).
	\end{align*}
	The $(a,b,c)$-th entry of $\rho^{-1}\PT\Pomega(\Z)-\Z$ can be written as a sum of independent random variables, i.e.,
	\begin{align*}
		& \inproduct{\rho^{-1}\PT\Pomega(\Z)-\Z}{\eabc} \\
		= & \sum_{ijk} (\rho^{-1}\delta_{ijk}-1)z_{ijk} \inproduct{\PT(\eijk)}{\eabc} \\
		:=& \sum_{ijk} t_{ijk},
	\end{align*}
	where $t_{ijk}$'s are independent and $\mathbb{E}(t_{ijk})=0$. Now we bound $|t_{ijk}|$ and $|\sum_{ijk}\mathbb{E}[t_{ijk}^2]|$. First
	\begin{align*}
		&|t_{ijk}|\\	
		\leq & \rho^{-1} \norm{\Z}_\infty \norm{\PT(\eijk)}_F\norm{\PT(\eabc)}_F \\
		\leq & \frac{2\mu r}{nn_3\rho}\norm{\Z}_\infty.
	\end{align*}
	Second, we have
	\begin{align*}
		& \left|\sum_{ijk}\mathbb{E}[t_{ijk}^2]\right |\\
		\leq & \rho^{-1}\norm{\Z}_\infty^2\sum_{ijk}\inproduct{\PT(\eijk)}{\eabc}^2 \\
		=  & \rho^{-1}\norm{\Z}_\infty^2\sum_{ijk}\inproduct{\eijk}{\PT(\eabc)}^2 \\
		=  & \rho^{-1}\norm{\Z}_\infty^2\norm{\PT(\eabc)}_F^2 \\
		\leq & \frac{2\mu r}{nn_3\rho}\norm{\Z}_\infty^2.
	\end{align*}
	Let $\epsilon\leq 1$. By Lemma \ref{lembenmatrix}, we have
	\begin{align*}
		& \mathbb{P} \left[ |[\rho^{-1}\PT\Pomega(\Z)-\Z]_{abc}| > \epsilon\norm{\Z}_\infty\right] \\
		= & \mathbb{P}\left[ \left| \sum_{ijk} {t}_{ijk}\right|> \epsilon\norm{\Z}_\infty\right] \\
		\leq & 2 \exp\left( -\frac{3}{8} \cdot \frac{\epsilon^2\norm{\Z}^2_\infty}{2\mu r\norm{\Z}^2_\infty/(nn_3\rho) } \right) \\
		\leq & 2(nn_3)^{-\frac{3}{16}C_0},
	\end{align*}
	where the last inequality uses  $\rho\geq C_0\epsilon^{-2}\mu r\log(nn_3)/(nn_3)$.
	Thus, $\norm{\rho^{-1}\PT\Pomega(\Z)-\Z}_\infty\leq \epsilon\norm{\Z}_\infty$ holds with high probability for some numerical constant $C_0$. 	
\end{proof}

\subsection{Proof of Lemma \ref{lempre3}}

\begin{proof}
	Denote the tensor $\HH_{ijk} = \left(1-\rho^{-1}\delta_{ijk}\right)z_{ijk}\eijk$. Then we have
	\begin{equation*}
		(\I-\rho^{-1}\Pomega)\Z = \sum_{ijk}\HH_{ijk}.
	\end{equation*}
	Note that $\delta_{ijk}$'s are independent random scalars. Thus, $\HH_{ijk}$'s are independent random tensors and $\Hmbar_{ijk}$'s are independent random matrices. 
	Observe that $\mathbb{E}[{\Hmbar}_{ijk}] = \0$ and $\norm{{\Hmbar}_{ijk}} \leq {\rho}^{-1} \norm{\Z}_\infty$. We have
	\begin{align*}
		&\normlarge{\sum_{ijk} \mathbb{E} [ {\Hmbar}^*_{ijk} {\Hmbar}_{ijk} ]  } \\
		= & \normlarge{\sum_{ijk} \mathbb{E} [ {\HH}^*_{ijk} *{\HH}_{ijk} ]  } \\
		= & \normlarge{\sum_{ijk}   \mathbb{E}[ (1-\rho^{-1}{\delta_{ijk}})^2 ] z_{ijk}^2 ({\ej}*\ej^*) } \\
		= & \normlarge{\frac{1-\rho}{\rho} \sum_{ijk} z_{ijk}^2 ({\ej}*\ej^*) } \\
		\leq & { \frac{nn_3}{\rho} }\norm{\Z}_\infty^2.
	\end{align*}
	A similar calculation yields $\normlarge{\sum_{ijk} \mathbb{E} [{\Hmbar}_{ijk}^* {\Hmbar}_{ijk} ]  }\leq { \rho^{-1}nn_3 }\norm{\Z}_\infty^2$. Let $t = \sqrt{C_0{nn_3\log(nn_3)}/{\rho}}\norm{\Z}_\infty$. When $\rho\geq C_0\log(nn_3)/(nn_3)$, we apply Lemma \ref{lembenmatrix} and obtain
	\begin{align*}
		& \mathbb{P}\left[ \norm{(\I-\rho^{-1}\Pomega)\Z } > t \right]  \\
		= & \mathbb{P}\left[ \normlarge{  \sum_{ijk} {\HH}_{ijk} } > t \right] \\
		= & \mathbb{P}\left[ \normlarge{  \sum_{ijk} {\Hmbar}_{ijk} } > t \right] \\
		\leq & 2nn_3 \exp\left( -\frac{3}{8} \cdot \frac{C_0nn_3\log(nn_3)\norm{\Z}_\infty^2/\rho}{nn_3\norm{\Z}_\infty^2/\rho } \right) \\
		\leq & 2(nn_3)^{1-\frac{3}{8}C_0}.
	\end{align*}
	Thus, $ \norm{(\I-\rho^{-1}\Pomega)\Z } > t$ holds with high probability for some numerical constant $C_0$.
\end{proof}

\end{document}


\title{Supplementary Material of Tensor Robust Principal Component Analysis with A New Tensor Nuclear Norm}

\author{Canyi~Lu,~Jiashi~Feng,~Yudong~Chen,~Wei Liu,~Member,~IEEE,~Zhouchen Lin,~\IEEEmembership{Fellow,~IEEE},
	
	~and~Shuicheng Yan,~\IEEEmembership{Fellow,~IEEE}
	
}

\maketitle

This document gives the detailed proofs of Theorem 3.1, Theorem 3.2, and the main result in Theorem 4.1. Section \ref{sec_notations} first gives some notations and properties which will be used in the proofs. Section \ref{profthm3132} gives the proofs of Theorem 3.1 and 3.2 in our paper. Section \ref{sec_dual} provides a way for the construction of the solution to the TRPCA model, and Section \ref{sec_proofofdual} proves that the constructed solution is optimal to the TRPCA problem. Section \ref{sec_prooflemmas} gives the proofs of some lemmas which are used in Section \ref{sec_proofofdual}.


\section{Preliminaries}
\label{sec_notations}


Beyond the notations introduced in the paper, we need some other notations used in the proofs.
At the following, we define $\eijk=\ei*\ek*\ej^*$. Then we have $\X_{ijk} = \inproduct{\X}{\eijk}$. We define the projection $$\Pomega(\Z)=\sum_{ijk}\delta_{ijk}z_{ijk}\eijk,$$
where $\delta_{ijk}=1_{(i,j,k)\in\Omegat}$, where $1_{(\cdot)}$ is the indicator function. Also $\Omegat^c$ denotes the 
complement of $\Omegat$ and $\Pomegao$ is the projection onto $\Omegat^c$. Denote $\Tm$ by the set 
\begin{align}
\Tm = \{ \U*\Y^* + \W*\V^*, \ \Y, \W\in\mathbb{R}^{n\times r\times n_3} \},
\end{align}
and by $\Tm^\bot$  its orthogonal complement. Then the projections onto $\Tm$ and $\Tm^\bot$ are respectively
\begin{align*}
\PT(\Z) = \U*\U^**\Z + \Z *\V*\V^* -  \U*\U^**\Z *\V*\V^*, 
\end{align*}
 \begin{align*}
 \PTo(\Z) =& \Z-\PT(\Z)\\
 =&(\I_{n_1} - \U*\U^*)*\Z*(\I_{n_2}-\V*\V^*),
 \end{align*}
 where $\I_n$ denotes the $n\times n\times n_3$ identity tensor. Note that $\PT$ is self-adjoint. So we have 
 \begin{align*}
 &\norm{\PT(\eijk)}_F^2 \\
 = & \inproduct{\PT(\eijk)}{\eijk} \\
 = & \inproduct{\U*\U^**\eijk + \eijk *\V*\V^*}{\eijk} \\
   & -\inproduct{ \U*\U^**\eijk *\V*\V^*}{\eijk}
 \end{align*}
Note that 
  \begin{align*}
  &\inproduct{\U*\U^**\eijk}{\eijk}\\
  =&\inproduct{\U*\U^**\ei*\ek*\ej^*}{\ei*\ek*\ej^*}\\
  =&\inproduct{\U^**\ei}{\U^**\ei*(\ek*\ej^**\ej*\ek^*)}\\
  =&\inproduct{\U^**\ei}{\U^**\ei}\\
  =&\norm{\U^**\ei}_F^2,
  \end{align*}
  where we use the fact that $\ek*\ej^**\ej*\ek^*=\I_1$, which is the $1\times 1\times n_3$ identity tensor. Therefore, it is easy to see that 
   \begin{align}
   &\norm{\PT(\eijk)}_F^2 \notag \\
   = & \norm{\U^**\ei}_F^2 + \norm{\V^**\ej}_F^2 - \norm{\U^**\ei*\ek*\ej^**\V}_F^2, \notag\\
   \leq & \norm{\U^**\ei}_F^2 + \norm{\V^**\ej}_F^2 \notag \\
   \leq & \frac{\mu r(n_1+n_2)}{n_1n_2n_3}  \label{proabouPTn1n2}\\
   = & \frac{2\mu r}{nn_3}, \ \text{when } n_1=n_2=n.\label{proabouPT}
   \end{align}
where (\ref{proabouPTn1n2}) uses the following tensor incoherence conditions
	\begin{align}
	\max_{i=1,\cdots,n_1} \norm{\U^**\mathring{\mathfrak{e}}_i}_F\leq\sqrt{\frac{\mu r}{n_1n_3}}, \label{tic1}\\
	\max_{j=1,\cdots,n_2} \norm{\V^**\mathring{\mathfrak{e}}_j}_F\leq\sqrt{\frac{\mu r}{n_2n_3}},\label{tic2}
	\end{align}
	and
	\begin{equation} \label{tic3}
	\norm{\U*\V^*}_\infty\leq \sqrt{\frac{\mu r}{n_1n_2n_3^2}},
	\end{equation}	
which are assumed to be satisfied in Theorem 4.1 in our manuscript.


\section{Proofs of Theorem 3.1 and Theorem 3.2}\label{profthm3132}
 
\subsection{Proof of Theorem 3.1}

\begin{proof}
	To complete the proof, we need the conjugate function concept. The conjugate $\phi^*$ of a function $\phi: C\rightarrow \mathbb{R}$, where $C \subset \mathbb{R}^n$, is defined as
	\begin{equation*}
	\phi^*(\y) = \sup \{\inproduct{\y}{\x} - \phi(\x) | \x \in C \}.
	\end{equation*}
	Note that the conjugate of the conjugate, $\phi^{**}$, is the convex envelope of the function $\phi$. See Theorem 1.3.5 in \cite{hiriart1991convex,fazel2002matrix}.  The proofs has two steps which compute $\phi^*$ and $\phi^{**}$, respectively.
	
	\textbf{Step 1.} \textit{Computing $\phi^*$.} For any $\A\in\Rn$, the conjugate function of the tensor average rank
	\begin{equation*}
		\phi(\A) = {\ranka(\A)}=\frac{1}{n_3}\rank(\bcirc(\A))=\frac{1}{n_3}\rank(\Ambar),
	\end{equation*}
	on the set $S = \{\A\in\Rn |\norm{\A}\leq 1\}$ is
	\begin{align*}
		\phi^*(\B) = & \sup_{\norm{\A}\leq 1} \left(\inproduct{\B}{\A} - {\ranka(\A)}\right) \\
				   = & \sup_{\norm{\A}\leq 1} \frac{1}{n_3}(\inproduct{\Bmbar}{\Ambar} - \rank(\Ambar)).
	\end{align*}
	Here $\Ambar, \Bmbar\in\mathbb{C}^{n_1n_3\times n_2n_3}$. Let $q = \min\{n_1n_3, n_2n_3\}$. By von Neumann's trace theorem, 
	\begin{align}\label{vonneutrace}
		\inproduct{\Bmbar}{\Ambar} \leq \sum_{i = 1}^{q} \sigma_i(\Bmbar) \sigma_i(\Ambar),
	\end{align} 
	where $\sigma_i(\Ambar)$ denotes the $i$-th largest singular value of $\Ambar$. Let $\Ambar = \Umbar_1\Smbar_1\Vm^*_1$ and $\Bmbar = \Umbar_2\Smbar_2\Vmbar^*_2$ be the SVD of $\Ambar$ and $\Bmbar$, respectively. Note that the equality (\ref{vonneutrace}) holds when
	\begin{align}\label{conduvab}
		\Umbar_1 = \Umbar_2  \ \text{and} \ \Vmbar_1 = \Vmbar_2.
	\end{align}
	So we can pick $\Umbar_1$ and $\Vmbar_1$ such that (\ref{conduvab}) holds to maximize $\inproduct{\Bmbar}{\Ambar}$. Note that the corresponding $\U$ and $\V$ of $\Umbar_1$ and $\Vmbar_1$  respectively are real tensors and so is $\A$ in this case. Thus, we have
	\begin{align*}
	\phi^*(\B) = & \sup_{\norm{\A}\leq 1} \frac{1}{n_3}\left(\sum_{i = 1}^{q} \sigma_i(\Bmbar) \sigma_i(\Ambar) - \rank(\Ambar)\right).
	\end{align*}
	
	If $\A = 0$, then $\Ambar = 0$, and thus we have $\phi^*(\B) = 0$ for all $\B$. If $\rank(\Ambar) = r$, $1\leq r\leq q$, then $\phi^*(\B) = \frac{1}{n_3}\left(\sum_{i = 1}^{r} \sigma_i(\Bmbar) - r\right)$. Hence $\phi^*(\B)$ can be expressed as
	\begin{align*}
		& n_3 \cdot \phi^*(\B) \\
		= & {\max \left\{0, \sigma_1(\Bmbar) -1, \cdots, \sum_{i = 1}^{r}\sigma_i(\Bmbar) -r, \cdots,  \sum_{i = 1}^{q} \sigma_i(\Bmbar) - q\right\}}.
	\end{align*}
 	The largest term in this set is the one that sums all positive $(\sigma_i(\Bmbar) -1)$ terms. Thus, we have
 	\begin{align*}
 		&\phi^*(\B) \\
 		= & \begin{cases}
 		0, \qquad\qquad \qquad \quad \quad \quad \quad       \norm{\Bmbar}\leq 1, \\
 		 \frac{1}{n_3}\left(\sum_{i = 1}^{r}\sigma_i(\Bmbar) -r\right), \quad \sigma_r(\Bmbar) > 1 \text{ and } \sigma_{r+1}(\Bmbar) \leq 1 
 		\end{cases} \\
 		= &  \frac{1}{n_3} \sum_{i = 1}^{q}(\sigma_i(\Bmbar) -1)_+. 	\end{align*}
 	Note that above $\norm{\Bmbar}\leq 1$ is equivalent to $\norm{\B}\leq 1$.
 	
 	\textbf{Step 2.} \textit{Computing $\phi^{**}$.} Now we compute the conjugate of $\phi^*$, defined as
 	\begin{align*}
 		\phi^{**}(\C) = & \sup_{\B} ( \inproduct{\C}{\B} - \phi^*(\B) ) \\
 		= & \sup_{\B} \left(\frac{1}{n_3} \inproduct{\Cmbar}{\Bmbar}-\phi^*(\B)\right),
 	\end{align*}
 	for all $\C \in S$. As before, we can choose $\B$ such that 
 	\begin{align*}
	 	\phi^{**}(\C) = & \sup_{\B} \left(\frac{1}{n_3}\sum_{i = 1}^{q} \sigma_i(\Cmbar) \sigma_i(\Bmbar) - \phi^*(\B)\right).
 	\end{align*}
 	At the following, we consider two cases, $\norm{\C} > 1$ and $\norm{\C} \leq 1$.
 	
 	If $\norm{\C} > 1$, then $\sigma_1(\Cmbar) = \norm{\Cmbar} = \norm{\C} > 1$.  We can choose $\sigma_1(\Bmbar)$ large enough so that $\phi^{**}(\C)\rightarrow\infty$. To see this, note that  in
 	\begin{align*}
 		\phi^{**}(\C) = & \sup_{\B}\frac{1}{n_3} \left(\sum_{i = 1}^{q} \sigma_i(\Cmbar) \sigma_i(\Bmbar) - \left( \sum_{i=1}^{r} \sigma_i(\Bmbar)  - r \right)\right),
 	\end{align*}
 	the coefficient of $\sigma_1(\Bmbar)$ is $\frac{1}{n_3}(\sigma_1(\Cmbar)-1)$ which is positive.
 	
 	If $\norm{\C} \leq 1$, then $\sigma_1(\Cmbar) = \norm{\Cmbar} = \norm{\C} \leq 1$. If $\norm{\B} = \norm{\Bmbar} \leq 1$, then $\phi^*(\B)=0$ and the supremum is achieved for $\sigma_i(\Bmbar)=1$, $i=1,\cdots,q$, yielding 
 	\begin{align*}
 		\phi^{**}(\C) = \frac{1}{n_3} \sum_{i=1}^{q} \sigma_i(\Cmbar ) = \frac{1}{n_3} \norm{\Cmbar}_* = \norm{\C}_*.
 	\end{align*}
 	If $\norm{\C}>1$, we show that the argument of sup is is always smaller than $\norm{\C}_*$. By adding and subtracting the term $\frac{1}{n_3}\sum_i^q\sigma_i(\Cmbar)$ and rearranging the terms, we have
 	\begin{align*}
 		&\frac{1}{n_3} \left(\sum_{i = 1}^{q} \sigma_i(\Cmbar) \sigma_i(\Bmbar) -  \sum_{i=1}^{r}  \left(\sigma_i(\Bmbar)  - 1 \right)\right) \\
 		=& \frac{1}{n_3} \left(\sum_{i = 1}^{q} \sigma_i(\Cmbar) \sigma_i(\Bmbar) -  \sum_{i=1}^{r}  \left(\sigma_i(\Bmbar)  - 1 \right)\right) \\
 		 & - \frac{1}{n_3}\sum_{i=1}^q\sigma_i(\Cmbar) + \frac{1}{n_3}\sum_{i=1}^q\sigma_i(\Cmbar) \\
 		=&\frac{1}{n_3} \sum_{i=1}^r (\sigma_i(\Bmbar)-1)(\sigma_i(\Cmbar)-1)\\
 		&+\frac{1}{n_3}\sum_{i=r+1}^q (\sigma_i(\Bmbar)-1)\sigma_i(\Cmbar) +\frac{1}{n_3} \sum_{i=1}^{q} \sigma_i(\Cmbar) \\
 		< & \frac{1}{n_3}\sum_{i=1}^{q} \sigma_i(\Cmbar)\\
 		= & \norm{\C}_*.
 	\end{align*}
 	
 	In a summary, we have shown that
 	\begin{align*}
	 	\phi^{**}(\C) = \norm{\C}_*, 
 	\end{align*}
 	over the set $S=\{\C | \norm{\C} \leq 1\}$. Thus, $\norm{\C}_*$ is the convex envelope of the tensor average rank ${\ranka(\C)}$ over $S$.
\end{proof}

\subsection{Proof of Theorem 3.2}

\begin{proof} Let $\G\in \partial \norm{\A}_*$. It is equivalent to the following statements   \cite{watson1992characterization}
	\begin{align}
	\norm{\A}_* &= \inproduct{\G}{\A}, \label{subp1}\\
	\norm{\G} &\leq 1. \label{subp2}
	\end{align}
	So, to   complete the proof, we only need to show that $\G = \U *\V^*+\W$, where $\U^**\W=\0$, $\W*\V=\0$ and $\norm{\W}\leq1$, satisfies (\ref{subp1}) and (\ref{subp2}). 
	First, we have
	\begin{align*}
	\inproduct{\G}{\A} = & \inproduct{\U*\V^*+\W}{\U*\SS*\V^*} \\
	= & \inproduct{\I}{\SS}+0 \\
	= & \norm{\A}_*.
	\end{align*}
	Also, (\ref{subp2}) is obvious when considering the property of $\W$.  The proof is completed. 
\end{proof}

\section{Dual Certification}
\label{sec_dual}

In this section, we first introduce  conditions for  $(\LL_0,\SS_0)$  to be the unique solution to TRPCA in subsection \ref{subdualcertif}.  Then we construct a dual certificate in subsection \ref{subsecducergs} which satisfies the conditions in subsection \ref{subdualcertif}, and thus our main result in Theorem 4.1 in our paper are proved.

\subsection{Dual Certificates}\label{subdualcertif}

\begin{lemma}\label{lem_dual2}
	Assume that $\norm{\Pomega\PT}\leq \frac{1}{2}$ and $\lambda<\frac{1}{\sqrt{n_3}}$. Then $(\LL_0,\SS_0)$ is the unique solution to the TRPCA problem if there is a pair $(\W,\F)$ obeying
	\begin{equation*}
	(\U*\V^*+\W)=\lambda(\sgn{\SS_0}+\F+\Pomega\D),
	\end{equation*}
	with $\PT\W=\0$, $\norm{\W}\leq\frac{1}{2}$, $\Pomega\F=\0$ and $\norm{\F}_{\infty}\leq\frac{1}{2}$, and $\norm{\Pomega\D}_F\leq\frac{1}{4}$.
\end{lemma}

\begin{proof}
	For any $\HH\neq\0$, $(\LL_0+\HH,\SS_0-\HH)$ is also a feasible solution. We show that its objective is larger than that at $(\LL_0,\SS_0)$, hence proving that $(\LL_0,\SS_0)$ is the unique solution. To do this, let $\U*\V^*+\W_0$ be an arbitrary subgradient of the tensor nuclear norm at $\LL_0$, and $\sgn{\SS_0}+\F_0$ be an arbitrary subgradient of the $\ell_1$-norm at $\SS_0$. Then we have
	\begin{align*}
	& \norm{\LL_0+\HH}_*+\lambda\norm{\SS_0-\HH}_1 \\
	\geq& \norm{\LL_0}_*+\lambda\norm{\SS_0}_1+\langle\U*\V^*+\W_0,\HH\rangle \\
	&-\lambda\inproduct{\sgn{\SS_0}+\F_0}{\HH}.
	\end{align*}
	Now pick $\W_0$ such that $\inproduct{\W_0}{\HH}=\norm{\PTo\HH}_*$ and $\inproduct{\F_0}{\HH}=-\norm{\Pomegao\HH}$. We have
	\begin{align*}
	& \norm{\LL_0+\HH}_*+\lambda\norm{\SS_0-\HH}_1 \\
	\geq& \norm{\LL_0}_*+\lambda\norm{\SS_0}_1+\norm{\PTo\HH}_*+\lambda\norm{\Pomegao\HH}_1\\
	&+\inproduct{\U*\V^*-\lambda\sgn{\SS_0}}{\HH}.
	\end{align*}
	By assumption
	\begin{align*}
	&\abs{\inproduct{\U*\V^*-\lambda\sgn{\SS_0}}{\HH}}\\
	\leq& \abs{\inproduct{\W}{\HH}}+\lambda\abs{\inproduct{\F}{\HH}}+\lambda\abs{\inproduct{\Pomega\D}{\HH}}\\
	\leq&\beta(\norm{\PTo\HH}_*+\lambda\norm{\Pomegao\HH}_1)+\frac{\lambda}{4}\norm{\Pomega\HH}_F,
	\end{align*}
	where $\beta=\max(\norm{\W},\norm{\F}_\infty)<\frac{1}{2}$. Thus	
	\begin{align*}
	& \norm{\LL_0+\HH}_*+\lambda\norm{\SS_0-\HH}_1 \\
	\geq& \norm{\LL_0}_*+\lambda\norm{\SS_0}_1+\frac{1}{2}(\norm{\PTo\HH}_*+\lambda\norm{\Pomegao\HH}_1) \\
	& -\frac{\lambda}{4}\norm{\Pomega\HH}_F.
	\end{align*}
	On the other hand,
	\begin{align*}
	\norm{\Pomega\HH}_F\leq & \norm{\Pomega\PT\HH}_F+\norm{\Pomega\PTo\HH}_F \\
	\leq & \frac{1}{2}\norm{\HH}_F+\norm{\PTo\HH}_F \\
	\leq & \frac{1}{2}\norm{\Pomega\HH}_F+\frac{1}{2}\norm{\Pomegao\HH}_F+\norm{\PTo\HH}_F.
	\end{align*}
	Thus
	\begin{align*}
	\norm{\Pomega\HH}_F\leq & \norm{\Pomegao\HH}_F+2\norm{\PTo\HH}_F \\
	\leq & \norm{\Pomegao\HH}_1+{2}{\sqrt{n_3}}\norm{\PTo\HH}_*.
	\end{align*}
	In conclusion, 
	\begin{align*}
	& \norm{\LL_0+\HH}_*+\lambda\norm{\SS_0-\HH}_1 \\
	\geq& \norm{\LL_0}_*+\lambda\norm{\SS_0}_1+\frac{1}{2}\left(1-{\lambda}{\sqrt{n_3}}\right)\norm{\PTo\HH}_*\\
	&+\frac{\lambda}{4}\norm{\Pomegao\HH}_1,
	\end{align*}
	where the last two terms are strictly positive when $\HH\neq \0$. Thus, the proof is completed.
\end{proof}

Lemma \ref{lem_dual2} implies that it is suffices to produce a dual certificate $\W$ obeying
\begin{equation}\label{conddualcertf}
\begin{cases}
\W \in \Tm^\bot, \\
\norm{\W} < \frac{1}{2},\\
\norm{\Pomega(\U*\V^*+\W-\lambda\sgn{\SS_0})}_F\leq \frac{\lambda}{4},\\
\norm{\Pomegao(\U*\V^*+\W)}_\infty < \frac{\lambda}{2}.
\end{cases}
\end{equation}

\subsection{Dual Certification via the Golfing Scheme}\label{subsecducergs}
In this subsection, we show how to construct a dual certificate obeying (\ref{conddualcertf}). Before we introduce our construction, our model assumes that $\Omegat\sim \Ber(\rho)$, or equivalently that $\Omegat^c\sim\Ber(1-\rho)$. Now the distribution of $\Omegat^c$ is the same as that of
$\Omegat^c = \Omegat_1 \cup \Omegat_2 \cup \cdots \cup \Omegat_{j_0}$, where each $\Omegat_j$  follows the Bernoulli model with parameter $q$, which satisfies
\begin{equation*}
\mathbb{P}((i,j,k)\in\Omegat) = \mathbb{P}(\text{Bin}(j_0,q)=0) = (1-q)^{j_0},
\end{equation*}
so that the two models are the same if 
$\rho = (1-q)^{j_0}$.
Note that  because of overlaps between the $\Omegat_j$'s, $q\geq (1-\rho)/j_0$.

Now, we construct a dual certificate
\begin{equation}
\W = \WL+\WS,
\end{equation}
where each component is as follows:
\begin{enumerate}
	\item  Construction of $\WL$ via the Golfing Scheme. Let $j_0= 2\log(nn_3)$ and $\Omegat_j$, $j=1,\cdots,j_0$, be defined as previously described so that $\Omegat^c = \cup_{1\leq j \leq j_0}\Omegat_j $. Then define
	\begin{equation}\label{eq_wl}
	\WL = \PTo\Y_{j_0},
	\end{equation}
	where 
	\begin{equation*}
	\Y_j = \Y_{j-1} + q^{-1}\PP_{\Omegat_j}\PT(\U*\V^*-\Y_{j-1}), \ \Y_0=\0.
	\end{equation*}
	\item Construction of $\WS$ via the Method of Least Squares. Assume that $\norm{\Pomega\PT}<1/2$. 
	Then, $\norm{\Pomega\PT\Pomega}<1/4$, and thus, the operator $\Pomega-\Pomega\PT\Pomega$  mapping $\Omegat$ onto itself is invertible; we denote its inverse by $(\Pomega-\Pomega\PT\Pomega)^{-1}$. We then set
	\begin{equation}\label{eq_ws}
	\WS = \lambda \PTo (\Pomega-\Pomega\PT\Pomega)^{-1} \sgn{\SS_0}.
	\end{equation}
	This is equivalent to 
	\begin{equation*}
	\WS = \lambda \PTo \sum_{k\geq 0}   (\Pomega\PT\Pomega)^{k} \sgn{\SS_0}.
	\end{equation*}
\end{enumerate}
Since both $\WL$ and $\WS$ belong to $\Tm^{\bot}$ and $\Pomega\WS  = \lambda  \Pomega (\I-\PT) (\Pomega-\Pomega\PT\Pomega)^{-1} \sgn{\SS_0}=\lambda\sgn{\SS_0}$, we will establish that $\WL+\WS$ is a valid dual certificate if it obeys 
\begin{equation}
\begin{cases}
\norm{\WL+\WS} < \frac{1}{2}, \\
\norm{\Pomega(\U*\V^*+\WL)}_F\leq \frac{\lambda}{4}, \\
\norm{ \Pomegao(\U*\V^*+\WL+\WS}_\infty < \frac{\lambda}{2}.
\end{cases}
\end{equation}
This can be achieved by using the following two key lemmas:
\begin{lemma}\label{eq_keylem1}
	Assume that $\Omegat\sim\Ber(\rho)$ with parameter $\rho\leq \rho_s$ for some $\rho_s>0$. Set $j_0=2\lceil\log (nn_3)\rceil$ (use $\log(\none n_3)$ for the tensors of rectangular frontal slice). Then,  the tensor $\WL$ obeys
	\begin{enumerate}[(a)]
		\item $\norm{\WL} < \frac{1}{4}$,
		\item $\norm{\Pomega(\U*\V^*+\WL)}_F<\frac{\lambda}{4}$,
		\item $\norm{\Pomegao(\U*\V^*+\WL) }_\infty < \frac{\lambda}{4}$.
	\end{enumerate}
\end{lemma}
\begin{lemma}\label{eq_keylem2}
	Assume that $\SS_0$ is supported on a set $\Omegat$ sampled as in Lemma \ref{eq_keylem1}, and that the signs of $\SS_0$ are independent and identically distributed symmetric (and independent of $\Omegat$). Then, the tensor $\WS$ (\ref{eq_ws}) obeys
	\begin{enumerate}[(a)]
		\item $\norm{\WS} < \frac{1}{4}$,
		\item $\norm{\Pomegao\WS }_\infty < \frac{\lambda}{4}$.
	\end{enumerate}
\end{lemma}

So the left task is to prove Lemma \ref{eq_keylem1} and Lemma \ref{eq_keylem2}, which are given in Section \ref{sec_proofofdual}.

\section{Proofs of Dual Certification}
\label{sec_proofofdual}

This section gives the proofs of Lemma \ref{eq_keylem1} and Lemma \ref{eq_keylem2}. To do this, we first introduce some lemmas with their proofs given in Section \ref{sec_prooflemmas}. 

\begin{lemma}\label{lemspectrm}
	For the Bernoulli sign variable $\M\in\mathbb{R}^{n\times n\times n_3}$ defined as
	\begin{equation}\label{defM0}
	\M_{ijk} = \begin{cases}
	1, & \text{w.p.} \ \rho/2, \\
	0, & \text{w.p.} \ 1-\rho, \\
	-1, & \text{w.p.} \ \rho/2,
	\end{cases}
	\end{equation}
	where $\rho>0$, there exists a function $\varphi(\rho)$  satisfying  $\lim\limits_{\rho\rightarrow0^+}\varphi(\rho)=0$, such that the following statement holds with with large probability,
	\begin{equation*}\label{boundspecm}
	\norm{\M} \leq \varphi(\rho)\sqrt{ nn_3}.
	\end{equation*}
\end{lemma}

\begin{lemma}\label{lem_kem1}
	Suppose $\Omegat\sim\Ber(\rho)$. Then with high probability,
	\begin{equation*}
	\norm{\PT-\rho^{-1}\PT\Pomega\PT}\leq\epsilon,
	\end{equation*}
	provided that $\rho\geq C_0\epsilon^{-2}(\mu r\log (nn_3))/(nn_3)$ for some numerical constant $C_0>0$. For the tensor of  rectangular frontal slice, we need $\rho\geq C_0\epsilon^{-2}(\mu r\log (\none n_3))/(\ntwo n_3)$.
\end{lemma}
\begin{cor}\label{corollar31}
	Assume that $\Omegat\sim\Ber(\rho)$, then $\norm{\Pomega\PT}^2\leq \rho+\epsilon$, provided that $1-\rho\geq C\epsilon^{-2}(\mu r\log(nn_3))/(nn_3)$, where $C$ is as in Lemma \ref{lem_kem1}. For the tensor with frontal slice, the modification is as in Lemma  \ref{lem_kem1}.
\end{cor}
Note that this corollary shows that $\norm{\Pomega\PT}\leq 1/2$, provided $|\Omegat|$ is not too large. 
\begin{lemma}\label{lem_keyinf}
	Suppose that $\Z\in\Tm$ is a fixed tensor, and $\Omegat\sim\Ber(\rho)$. Then, with high probability,
	\begin{equation}
	\norm{\Z-\rho^{-1}\PT\Pomega\Z}_{\infty} \leq \epsilon \norm{\Z}_\infty,
	\end{equation}
	provided that $\rho\geq C_0 \epsilon^{-2}(\mu r \log(nn_3))/(nn_3)$ (for the tensor of  rectangular frontal slice, $\rho\geq C_0\epsilon^{-2}(\mu r\log (\none n_3))/(\ntwo n_3)$) for some numerical constant $C_0>0$.
\end{lemma}

\begin{lemma}\label{lempre3}
	Suppose $\Z$ is fixed, and $\Omegat\sim\Ber(\rho)$. Then, with high probability,
	\begin{equation}
	\norm{(\I-\rho^{-1}\Pomega)\Z} \leq \sqrt{\frac{C_0nn_3\log(nn_3)}{\rho}} \norm{\Z}_\infty,
	\end{equation}
	for some numerical constant $C_0>0$ provided that $\rho\geq C_0\log(nn_3)/(nn_3)$ (or $\rho\geq C_0\log(\none n_3)/(\ntwo n_3)$ for the tensors with rectangular frontal slice).
\end{lemma} 

\subsection{Proof of Lemma \ref{eq_keylem1}}
\begin{proof}
	We first introduce some notations. Set $\Z_j =   \U*\V^* - \PT\Y_j$ obeying
	\begin{equation*}
	\Z_j = (\PT-q^{-1}\PT\PP_{\Omegat_j}\PT)\Z_{j-1}.
	\end{equation*}
	So $\Z_j\in\T$ for all $j\geq 0$. Also, note that when
	\begin{equation}\label{eq_q}
	q\geq C_0\epsilon^{-2}\frac{\mu r\log(nn_3)}{nn_3},
	\end{equation}
	or for the tensors with rectangular frontal slices $q\geq C_0\epsilon^{-2}\frac{\mu r\log(\none n_3)}{\ntwo n_3}$,
	we have
	\begin{equation}\label{eq_Zrel1}
	\norm{\Z_j}_\infty \leq \epsilon \norm{\Z_{j-1}}_\infty\leq \epsilon^j \norm{\U*\V^*}_\infty,
	\end{equation}
	by Lemma \ref{lem_keyinf} and 
	\begin{equation}\label{eq_Zrel2}
	\norm{\Z_j}_F \leq \epsilon \norm{\Z_{j-1}}_F \leq \epsilon^j \norm{\U*\V^*}_F \leq   \epsilon^j \sqrt{r}.
	\end{equation}
	We assume $\epsilon\leq e^{-1}$.
	
\noindent\textbf{1. Proof of (a).} Note that $\Y_{j_0} = \sum_j q^{-1}\PP_{\Omegat_j}\Z_{j-1}$. We have
	\begin{align*}
		\norm{\WL} = &\norm{\PTo\Y_{j_0}} \leq \sum_j \norm{q^{-1} \PTo\PP_{\Omegat_j} \Z_{j-1} } \\
		\leq & \sum_j \norm{\PTo(q^{-1}\PP_{\Omegat_j}\Z_{j-1}-\Z_{j-1})} \\
		\leq & \sum_j  \norm{q^{-1}\PP_{\Omegat_j}\Z_{j-1}-\Z_{j-1}} \\
		\leq & C_0'\sqrt{\frac{nn_3\log(nn_3)}{q}} \sum_j \norm{\Z_{j-1}}_\infty \\
		\leq & C_0'\sqrt{\frac{nn_3\log(nn_3)}{q}} \sum_j \epsilon^{j-1}\norm{\U*\V^*}_\infty \\
		\leq & C_0'(1-\epsilon)^{-1} \sqrt{\frac{nn_3\log(nn_3)}{q}}\norm{\U*\V^*}_\infty.
	\end{align*}
	The fourth step is from Lemma \ref{lempre3} and the fifth is from (\ref{eq_Zrel1}). Now by using (\ref{eq_q}) and (\ref{tic3}), we have 
	$$\norm{\WL}\leq C'\epsilon,$$
	 for some numerical constant $C'$.

\noindent\textbf{2. Proof of (b).} Since $\Pomega\Y_{j_0} = \0$, $\Pomega(\U*\V^*+\PTo\Y_{j_0}) =  \Pomega(\U*\V^*-\PT\Y_{j_0}) = \Pomega(\Z_{j_0}) $, and it follows from (\ref{eq_Zrel2}) that $$\norm{\Z_{j_0}}_F \leq \epsilon^{j_0} \norm{\U*\V^*}_F \leq \epsilon^{j_0} \sqrt{r}.$$
Since $\epsilon\leq e^{-1}$ and $j_0 \geq 2\log(nn_3)$, $\epsilon^{j_0}\leq (nn_3)^{-2}$ and this proves the claim.

\noindent\textbf{3. Proof of (c).} We have $\U*\V^*+\WL = \Z_{j_0}+\Y_{j_0}$ and know that $\Y_{j_0}$ is supported on $\Omegat^c$. Therefore, since $\norm{\Z_{j_0}}_F\leq \lambda/8$. We only need to show that $\norm{\Y_{j_0}}_\infty\leq\lambda/8$. Indeed,
\begin{align*}
\norm{\Y_{j_0}}_\infty \leq & q^{-1}\sum_j \norm{\PP_{\Omegat_j} \Z_{j-1}}_\infty \\
 \leq & q^{-1}\sum_j \norm{ \Z_{j-1}}_\infty \\
  \leq & q^{-1}\sum_j\epsilon^{j-1} \norm{  \U*\V^*}_\infty.
\end{align*}
Since $\norm{\U*\V^*}_\infty \leq \sqrt{\frac{\mu r}{n^2n^2_3}}$, this gives
\begin{equation*}
\norm{\Y_{j_0}}_\infty \leq C'\frac{\epsilon^2}{\sqrt{\mu r(\log(nn_3))^2}},
\end{equation*}
for some numerical constant $C'$ whenever $q$ obeys (\ref{eq_q}). Since $\lambda = 1/\sqrt{nn_3}$, $\norm{\Y_{j_0}}_\infty \leq \lambda/8$ if
\begin{align*}
\epsilon\leq C \left(\frac{\mu r(\log(nn_3))^2}{nn_3}\right)^{1/4}.
\end{align*}
The claim is proved by using (\ref{eq_q}), (\ref{tic3}) and sufficiently small $\epsilon$ (provided that $\rho_r$ is sufficiently small. Note that everything is consistent since $C_0\epsilon^{-2}\frac{\mu r\log(nn_3)}{nn_3}<1$.
\end{proof}

\subsection{Proof of Lemma \ref{eq_keylem2}}
\begin{proof}
We denote $\M=\sgn{\SS_0}$ distributed as
\begin{equation*}\label{defM}
\M_{ijk} = \begin{cases}
1, & \text{w.p.} \ \rho/2, \\
0, & \text{w.p.} \ 1-\rho, \\
-1, & \text{w.p.} \ \rho/2.
\end{cases}
\end{equation*}
Note that for any $\sigma>0$, $\{\norm{\Pomega\PT}\leq\sigma\}$ holds with high probability provided that $\rho$ is sufficiently small, see Corollary \ref{corollar31}.

\noindent\textbf{1. Proof of (a).} By construction,
\begin{align*}
\WS = &\lambda\PTo\M+\lambda\PTo\sum_{k\geq 1} (\Pomega\PT\Pomega)^k\M \\
:=&  \PTo\WS_0 + \PTo\WS_1.
\end{align*}
Note that  $\norm{\PTo\WS_0} \leq \norm{\WS_0} = \lambda\norm{\M}$ and $\norm{\PTo\WS_1} \leq \norm{\WS_1} = \lambda\norm{\R(\M)}$, where $\R = \sum_{k\geq1} (\Pomega\PT\Pomega)^k$. Now, we will respectively show that $\lambda\norm{\M}$ and $\lambda\norm{\R(\M)}$ are small enough when $\rho$ is sufficiently small for $\lambda=1/\sqrt{nn_3}$. Therefor, $\norm{\WS}\leq1/4$. 

\textbf{1) Bound $\lambda\norm{\M}$.} 

By using Lemma \ref{lemspectrm} directly, we have that $\lambda\norm{\M}\leq \varphi(\rho)$ is sufficiently small given $\lambda=1/\sqrt{nn_3}$ and $\rho$ is sufficiently small.

\textbf{2) Bound $\norm{\R(\M)}$.} 

For simplicity, let $\Z=\R(\M)$. We have
\begin{equation}\label{boundrm1}
\norm{\Z}=\norm{\Zmbar}=\sup_{\x\in\mathbb{S}^{nn_3-1}} \norm{\Zmbar\x}_2.
\end{equation}
The optimal $\x$ to (\ref{boundrm1}) is an eigenvector of $\Zmbar^*\Zmbar$. Since $\Zmbar$ is a block diagonal matrix, the optimal $\x$ has a  block sparse structure, i.e., 
$\x \in B=\big\{  \x  \in\mathbb{R}^{nn_3} | \x = [\x_1^\top,\cdots , \x_i^\top \cdots, \x^\top_{n_3} ], \text{with }  \x_i\in\mathbb{R}^n, \text{ and there exists } j \text{ such that } \x_j\neq\0 \text{ and } \x_i=\0, i\neq j  \}$. Note that $\norm{\x}_2 = \norm{\x_j}_2 = 1$. Let $N$ be the $1/2$-net for $\mathbb{S}^{n-1}$ of size at most $5^n$ (see Lemma 5.2 in \cite{vershynin2010introduction}). Then the $1/2$-net, denoted as $N'$, for $B$ has the size at most $n_3\cdot 5^n$.  We have 
\begin{align*}
\norm{\R(\M)} =  &\norm{\bdiag( \overline{ \R(\M) }  )} \\
= & \sup_{\x,\y\in B} \inproduct{\x}{\bdiag( \overline{ \R(\M) }  )\y} \\ 
= & \sup_{\x,\y\in B}  \inproduct{\x\y^*}{\bdiag(\overline{\R({\M)}})}  \\
=& \sup_{\x,\y\in B}  \inproduct{\bdiag^*(\x\y^*)}{\overline{\R({\M)}}} ,
\end{align*}
where $\bdiag^*$, the joint operator of $\bdiag$, maps the block diagonal matrix $\x\y^*$ to a tensor of size $n\times n\times n_3$. Let $\Z' = \bdiag^*(\x\y^*)$ and $\Z = \mcode{ifft}(\Z',[\ ],3)$. We have
\begin{align*}
\norm{\R(\M)} =& \sup_{\x,\y\in B}   \inproduct{\Z'}{\overline{\R({\M)}}} \\
=&\sup_{\x,\y\in B} n_3   \inproduct{\Z}{{\R({\M)}}}  \\
=&\sup_{\x,\y\in B} n_3   \inproduct{\R(\Z)}{{{\M}}}  \\
\leq & \sup_{\x,\y\in N'} 4 n_3   \inproduct{\R(\Z)}{{{\M}}}.
\end{align*}
For a fixed pair $(\x,\y)$ of unit-normed vectors, define the random variable
\begin{equation*}
X(\x,\y) = \inproduct{4n_3\R(\Z)}{{{\M}}}.
\end{equation*}
Conditional on $\Omegat=\text{supp}(\M)$, the signs of $\M$ are independent and identically distributed symmetric and Hoeffding's inequality gives
\begin{align*}
\mathbb{P}(|X(\x,\y)|>t| \Omegat) \leq 2\exp\left(\frac{-2t^2}{\norm{4n_3\R(\Z)}_F^2}\right).
\end{align*}
Note that $\norm{4n_3\R(\Z))}_F\leq 4n_3\norm{\R}\norm{\Z}_F = 4\sqrt{n_3}\norm{\R}\norm{\Z'}_F =4\sqrt{n_3}\norm{\R}$. Therefore, we have 
\begin{align*}
\mathbb{P}\left( \sup_{\x,\y\in N'} |X(\x,\y)| >t | \Omegat \right) \leq 2|N'|^2\exp\left(-\frac{t^2}{8n_3\norm{\R}^2}\right).
\end{align*}
Hence,
\begin{align*}
\mathbb{P}\left( \norm{\R(\M)} >t | \Omegat \right) \leq 2|N'|^2\exp\left(-\frac{t^2}{8n_3\norm{\R}^2}\right).
\end{align*}
On the event $\{\norm{\Pomega\PT} \leq\sigma \}$,
\begin{align*}
\norm{\R} \leq \sum_{k\geq1} \sigma^{2k} = \frac{\sigma^2}{1-\sigma^2},
\end{align*}
and, therefore, unconditionally,
\begin{align*}
& \mathbb{P}\left( \norm{\R(\M)} >t   \right) \\
\leq & 2|N'|^2\exp\left(-\frac{\gamma^2t^2}{8n_3}\right) + \mathbb{P}\left( \norm{\Pomega\PT} \geq \sigma   \right), \ \gamma =\frac{1-\sigma^2}{2\sigma^2} \\
=& 2n_3^2 \cdot 5^{2n} \exp\left(-\frac{\gamma^2t^2}{8n_3}\right) + \mathbb{P}\left( \norm{\Pomega\PT} \geq \sigma   \right).
\end{align*}
Let $t=c\sqrt{nn_3}$, where $c$ can be a small absolute constant. Then the above inequality implies that $\norm{\R(\M)}\leq t$ with high probability.

\noindent\textbf{2. Proof of (b)}
Observe that 
\begin{align*}
\Pomegao\WS = -\lambda\Pomegao\PT(\Pomega-\Pomega\PT\Pomega)^{-1}\M.
\end{align*}
Now for $(i,j,k)\in\Omegat^c$, $\WS_{ijk} = \inproduct{\WS}{\eijk}$, and we have $\WS_{ijk} =\lambda \inproduct{ \Q(i,j,k)}{\M}$, where $\Q(i,j,k)$ is the tensor $-(\Pomega-\Pomega\PT\Pomega)^{-1}\Pomega\PT(\eijk)$. Conditional on $\Omegat = \text{supp}(\M)$, the signs of $\M$ are independent and identically distributed symmetric, and the Hoeffding's inequality gives
\begin{equation*}
\mathbb{P}(|\WS_{ijk}| > t\lambda|\Omegat) \leq 2\exp\left( -\frac{2t^2}{\norm{\Q(i,j,k)}_F^2}\right),
\end{equation*}
and
\begin{align*}
&\mathbb{P}(\sup_{i,j,k}|\WS_{ijk}| > t\lambda/n_3|\Omegat) \\
\leq & 2n^2n_3\exp\left( -\frac{2t^2}{\sup_{i,j,k}\norm{\Q(i,j,k)}_F^2}\right).
\end{align*}
By using (\ref{proabouPT}), we have
\begin{align*}
\norm{\Pomega\PT(\eijk)}_F \leq & \norm{\Pomega\PT}\norm{\PT(\eijk)}_F \\
\leq & \sigma\sqrt{\frac{2\mu r}{nn_3}},
\end{align*}
on the event $\{\norm{\Pomega\PT}\leq\sigma\}$. On the same event, we have $\norm{(\Pomega-\Pomega\PT\Pomega)^{-1}}\leq (1-\sigma^2)^{-1}$ and thus
$\norm{\Q(i,j,k)}_F^2 \leq \frac{2\sigma^2}{(1-\sigma^2)^2}\frac{\mu r}{nn_3}$.
Then, unconditionally,
\begin{align*}
& \mathbb{P}\left(\sup_{i,j,k}|\WS_{ijk}| >t\lambda\right)  \\
\leq & 2n^2n_3\exp\left( -\frac{nn_3\gamma^2t^2}{\mu r} \right) + \mathbb{P}(\norm{\Pomega\PT}\geq\sigma),
\end{align*}
where $\gamma=\frac{(1-\sigma^2)^2}{2\sigma^2}$. This proves the claim when $\mu r<\rho'_rnn_3\log(nn_3)^{-1}$ and $\rho_r'$ is sufficiently small.
\end{proof}

%
%
%

\section{Proofs of Some  Lemmas}\label{sec_prooflemmas}

\begin{lemma} \cite{tropp2012user}   \label{lembenmatrix}
	Consider a finite sequence $\{\Zm_k\}$ of independent, random $n_1\times n_2$ matrices that satisfy the assumption $\mathbb{E} \Zm_k=\0$ and $\norm{\Zm_k}\leq R$ almost surely. Let $\sigma^2 = \max\{\norm{\sum_k\mathbb{E}[\Zm_k\Zm_k^*]} , \max\{\norm{\sum_k\mathbb{E}[\Zm_k^*\Zm_k]} \}$. Then, for any $t\geq0$, we have
	\begin{align*}
	\mathbb{P}\left[ \normlarge{\sum_k\Zm_k} \geq t \right] \leq & (n_1+n_2) \exp\left( -\frac{t^2}{2\sigma^2+\frac{2}{3}Rt} \right) \\
	\leq &  (n_1+n_2) \exp\left( -\frac{3t^2}{8\sigma^2} \right), \ \text{for } t\leq\frac{\sigma^2}{R}.
	\end{align*}
	Or, for any $c>0$, we have
	\begin{align*}\label{eq_tropbound2}
	\normlarge{\sum_k\Zm_k} \geq 2\sqrt{c\sigma^2\log(n_1+n_2)} + cB\log(n_1+n_2),
	\end{align*}
	with probability at least $1-(n_1+n_2)^{1-c}$.
\end{lemma}

\subsection{Proof of Lemma \ref{lemspectrm}}

\begin{proof}
The proof has three steps.

\textit{Step 1: Approximation.}
We first introduce some notations. Let $\f_i^*$ be the $i$-th row of $\F_{n_3}$,  and $\Mh=\begin{bmatrix}
\Mh_1 \\ \Mh_2 \\ \vdots \\ \Mh_{n} 
\end{bmatrix} \in\mathbb{R}^{nn_3\times n}$ be a matrix unfolded by $\M$, where $\Mh_i\in\mathbb{R}^{n_3\times n}$ is the $i$-th horizontal slice of $\M$, i.e., $[\Mh_i]_{kj} = \M_{ikj}$. Consider that $\Mbar = \mcode{fft}(\M,[\ ],3)$, we have
\begin{equation*}\label{pfl351}
\Mmbar_i = \begin{bmatrix}
\f_i^*\Mh_1 \\  \f_i^*\Mh_2 \\ \vdots \\  \f_i^*\Mh_{n} 
\end{bmatrix},
\end{equation*}\label{pfl352}
where $\Mmbar_i\in\mathbb{R}^{n\times n}$ is the $i$-th frontal slice of $\M$. Note that 
\begin{equation}\label{pfl3544440}
\norm{\M} = \norm{\Mmbar} = \max_{i=1,\cdots,n_3} \ \norm{\Mmbar_i}.
\end{equation}
Let $N$ be the $1/2$-net for $\mathbb{S}^{n-1}$ of size at most $5^n$ (see Lemma 5.2 in \cite{vershynin2010introduction}). Then Lemma 5.3 in  \cite{vershynin2010introduction} gives 
\begin{equation}\label{prf13555}
\norm{\Mmbar_i} \leq 2 \ \max_{\x\in N} \ \norm{\Mmbar_i\x}_2.
\end{equation}
So we consider to bound $\norm{\Mmbar_i\x}_2$. 

\textit{Step 2: Concentration.} 
We can express $\norm{\Mmbar_i\x}_2^2$ as a sum of independent random variables
\begin{equation}\label{pfl353}
\norm{\Mmbar_i\x}_2^2 = \sum_{j=1}^{n} (\f_i^*\Mh_j\x)^2 := \sum_{j=1}^{n} z_j^2,
\end{equation}
where $z_j = \langle {\Mh_j},{\f_i\x^*}\rangle$, $j=1,\cdots,n$, are independent sub-gaussian random variables with $\mathbb{E}z_j^2=\rho\norm{\f_i\x^*}_F^2=\rho n_3$. Using (\ref{defM0}), we have
\begin{equation*} 
|[\Mh_j]_{kl}| = \begin{cases}
1, & \text{w.p.} \ \rho, \\
0, & \text{w.p.} \ 1-\rho.
\end{cases}
\end{equation*}
Thus, the sub-gaussian norm of $[\Mh_j]_{kl}$, denoted as $\norm{\cdot}_{\psi_2}$, is 
\begin{align*}
\norm{[\Mh_j]_{kl}}_{\psi_2} = & \sup_{p\geq 1} p^{-\frac{1}{2}}(\mathbb{E}[|[\Mh_j]_{kl}|^p])^{\frac{1}{p}} \\
= & \sup_{p\geq 1} p^{-\frac{1}{2}}\rho^{\frac{1}{p}}.
\end{align*}
Define the function $\phi(x) =x^{-\frac{1}{2}}\rho^{\frac{1}{x}}$ on $[1,+\infty)$. The only stationary point occurs at $x^*=\log \rho^{-2}$. Thus, 
\begin{align}
\phi(x)\leq & \max(\phi(1),\phi(x^*)) \notag\\
= &\max\left(\rho,(\log\rho^{-2})^{-\frac{1}{2}}\rho^{\frac{1}{\log \rho^{-2}}}\right) \notag\\
:= & \psi(\rho). \label{abcefeqdd}
\end{align}
Therefore,  $\norm{[\Mh_j]_{kl}}_{\psi_2}\leq   \psi(\rho)$. Consider that $z_j$ is a sum of independent centered sub-gaussian random variables $[\Mh_j]_{kl}$'s, by using Lemma 5.9 in  \cite{vershynin2010introduction}, we have $\norm{z_j}^2_{\psi_2}\leq c_1(\psi(\rho))^2 n_3$, where $c_1$ is an absolute constant. Therefore, by Remark 5.18 and Lemma 5.14 in \cite{vershynin2010introduction}, $z_j^2-\rho n_3$ are independent centered sub-exponential random variables with $\norm{z_j^2-\rho n_3}_{\psi_1} \leq 2 \norm{z_j}^2_{\psi_1} \leq 4 \norm{z_j}^2_{\psi_2}\leq 4c_1 (\psi(\rho))^2 n_3$.

Now, we use an exponential deviation inequality, Corollary 5.17 in \cite{vershynin2010introduction}, to control the sum  (\ref{pfl353}). We have
\begin{align*}
& \mathbb{P}\left(| \norm{\Mmbar_i\x}_2^2 -\rho nn_3 | \geq tn \right) \\
= & \mathbb{P}\left(\left| \sum_{j=1}^{n} (z_j^2-\rho n_3) \right| \geq tn \right) \\
\leq & 2\exp\left( -c_2n\min \left( \left(\frac{t}{4c_1 (\psi(\rho))^2 n_3}\right)^2, \frac{t}{4c_1(\psi(\rho))^2 n_3}\right) \right),
\end{align*}
where $c_2>0$. Let $t = c_3 (\psi(\rho))^2 n_3$ for some absolute constant $c_3$, we have   
\begin{align*}
& \mathbb{P}\left(| \norm{\Mmbar_i\x}_2^2 -\rho nn_3 | \geq c_3(\psi(\rho))^2 nn_3 \right) \\
\leq & 2\exp\left( -c_2n\min\left( \left(\frac{c_3}{4c_1}\right)^2,\frac{c_3}{4c_1}\right) \right).
\end{align*}

\textit{Step 3： Union bound.}
Taking the union bound over all $\x$ in the net $N$ of cardinality $|N|\leq 5^n$, we obtain
\begin{align*}
&\mathbb{P}\left(\left| \max_{\x \in N} \norm{\Mmbar_i\x}_2^2 -\rho nn_3 \right| \geq c_3(\psi(\rho))^2 nn_3 \right) \\
\leq & 2\cdot 5^n \cdot\exp\left( -  c_2n\min\left(\left(\frac{c_3}{4c_1}\right)^2,\frac{c_3}{4c_1}\right) \right).
\end{align*}
Furthermore, taking the union bound over all $i=1,\cdots,n_3$, we have 
\begin{align*}
&\mathbb{P}\left(\max_i \ \left| \max_{\x \in N} \norm{\Mmbar_i\x}_2^2 -\rho nn_3 \right| \geq c_3(\psi(\rho))^2 nn_3 \right) \notag\\
\leq &2\cdot 5^n\cdot n_3 \cdot \exp\left( -  c_2n\min\left(\left(\frac{c_3}{4c_1}\right)^2,\frac{c_3}{4c_1}\right) \right). \label{prfnlemmada000}
\end{align*}
This implies that, with high probability (when the constant $c_3$ is  large enough),
\begin{equation}\label{profboundm2}
\max_i \  \max_{\x \in N} \norm{\Mmbar_i\x}_2^2 \leq (\rho+c_3(\psi(\rho))^2)  nn_3.
\end{equation}
Let $\varphi(\rho) = 2\sqrt{\rho+c_3(\psi(\rho))^2}$ and it satisfies  $\lim\limits_{\rho\rightarrow0^+}\varphi(\rho)=0$ by using (\ref{abcefeqdd}). The proof is completed by further combining  (\ref{pfl3544440}), (\ref{prf13555}) and (\ref{profboundm2}).
\end{proof}

\subsection{Proof of Lemma \ref{lem_kem1}}
\begin{proof}
	For any tensor $\Z$, we can write
	\begin{align*}
	&(\rho^{-1}\PT\Pomega\PT-\PT)\Z \\
	= &\sum_{ijk}\left(\rho^{-1}{\delta_{ijk}}-1\right)\inproduct{\eijk }{\PT\Z}\PT(\eijk) \\
	:=&\sum_{ijk} \HH_{ijk}(\Z)
	\end{align*}
	where $\HH_{ijk}: \mathbb{R}^{\nss}\rightarrow\mathbb{R}^{\nss}$ is a self-adjoint random operator with $\mathbb{E}[\HH_{ijk}]=\0$. Define the matrix operator $\Hmbar_{ijk}: \mathbb{B}\rightarrow \mathbb{B}$, where $\mathbb{B}=\{\Bmbar: \B\in \mathbb{R}^{\nss} \}$ denotes the set consists of block diagonal matrices with the blocks as the frontal slices of $\Bbar$,  as
	\begin{align*}
	\Hmbar_{ijk}(\Zmbar) = & \left(\rho^{-1}{\delta_{ijk}}-1\right)\inproduct{\eijk }{\PT(\Z)}\bdiag(\overline{\PT(\eijk)}).
	\end{align*}
	By the above definitions, we have $\norm{\HH_{ijk}} = \norm{\Hmbar_{ijk}}$ and $\norm{\sum_{ijk}\HH_{ijk}} = \norm{\sum_{ijk}\Hmbar_{ijk}}$. Also $\Hmbar_{ijk}$ is  self-adjoint and $\mathbb{E}[\Hmbar_{ijk}]=0$. 
	To prove the result by the non-commutative Bernstein inequality, we need to bound $\norm{\Hmbar_{ijk}}$ and $\normlarge{\sum_{ijk}\mathbb{E}[\Hmbar^2_{ijk}]}$. First, we have
	\begin{align*}
	\norm{\Hmbar_{ijk}} = & \sup_{\norm{\Zmbar}_F=1} \norm{\Hmbar_{ijk}(\Zmbar)}_F \\
	\leq &  \sup_{\norm{\Zmbar}_F=1} \rho^{-1} \norm{\PT(\eijk)}_F \norm{\bdiag(\overline{\PT(\eijk)})}_F \norm{\Z}_F \\
	= & \sup_{\norm{\Zmbar}_F=1} \rho^{-1} \norm{\PT(\eijk)}_F^2 \norm{\Zmbar}_F \\
	\leq & \frac{2\mu r}{nn_3\rho},
	\end{align*}
	where the last inequality uses (\ref{proabouPT}). On the other hand, by direct computation, we have $\Hmbar_{ijk}^2(\Zmbar) = (\rho^{-1}\delta_{ijk}-1)^2\inproduct{\eijk}{\PT(\Z)}\inproduct{\eijk}{\PT(\eijk)}\bdiag(\overline{\PT(\eijk)})$. Note that $\mathbb{E}[(\rho^{-1}\delta_{ijk}-1)^2]\leq \rho^{-1}$. We have
	\begin{align*}
	& \normlarge{\sum_{ijk}\mathbb{E}[\Hmbar^2_{ijk}(\Zmbar)]}_F \\
	\leq & \rho^{-1}\normlarge{\sum_{ijk} \inproduct{\eijk}{\PT(\Z)}\inproduct{\eijk}{\PT(\eijk)}\bdiag(\overline{\PT(\eijk)})}_F \\
	\leq & \rho^{-1} \sqrt{n_3} \norm{\PT(\eijk)}_F^2\normlarge{\sum_{ijk} \inproduct{\eijk}{\PT(\Z)} }_F \\
	= & \rho^{-1}\sqrt{n_3} \norm{\PT(\eijk)}_F^2\norm{\PT(\Z)}_F\\
	\leq & \rho^{-1}\sqrt{n_3} \norm{\PT(\eijk)}_F^2\norm{\Z}_F\\
	= & \rho^{-1} \norm{\PT(\eijk)}_F^2\norm{\Zmbar}_F\\
	\leq & \frac{2\mu r}{nn_3\rho}\norm{\Zmbar}_F.	
	\end{align*}
	This implies $\normlarge{\sum_{ijk}\mathbb{E}[\Hmbar^2_{ijk}]}\leq \frac{2\mu r}{nn_3\rho}$.  Let $\epsilon\leq 1$. By Lemma \ref{lembenmatrix}, we have
	\begin{align*}
	& \mathbb{P} \left[\norm{\rho^{-1}\PT\Pomega\PT-\PT} > \epsilon \right] \\
	= & \mathbb{P}\left[ \normlarge{  \sum_{ijk} {\HH}_{ijk} } > \epsilon\right] \\
	= & \mathbb{P}\left[ \normlarge{  \sum_{ijk} {\Hmbar}_{ijk} } > \epsilon \right] \\
	\leq & 2nn_3 \exp\left( -\frac{3}{8} \cdot \frac{\epsilon^2}{2\mu r/(nn_3\rho) } \right) \\
	\leq & 2(nn_3)^{1-\frac{3}{16}C_0},
	\end{align*}
	where the last inequality uses  $\rho\geq C_0\epsilon^{-2}\mu r\log(nn_3)/(nn_3)$.
	Thus, $\norm{\rho^{-1}\PT\Pomega\PT-\PT} \leq \epsilon$ holds with high probability for some numerical constant $C_0$.
\end{proof}

\subsection{Proof of Corollary \ref{corollar31}}
\begin{proof}
From Lemma \ref{lem_kem1}, we have 
\begin{equation*}
\norm{\PT-(1-\rho)^{-1}\PT\Pomegao\PT}\leq \epsilon,
\end{equation*}
provided that $1-\rho\geq C_0 \epsilon^{-2}(\mu r\log(nn_3))/n$. Note that $\I = \Pomega+\Pomegao$, we have 
\begin{equation*}
\norm{\PT-(1-\rho)^{-1}\PT\Pomegao\PT} = (1-\rho)^{-1} (\PT\Pomega\PT - \rho \PT).
\end{equation*}
Then, by the triangular inequality
\begin{equation*}
\norm{\PT\Pomega\PT} \leq \epsilon (1-\rho) + \rho \norm{\PT} = \rho + \epsilon(1-\rho).
\end{equation*}
The proof is completed by using $\norm{\Pomega\PT}^2 = \norm{\PT\Pomega\PT}$.
\end{proof}

\subsection{Proof of Lemma \ref{lem_keyinf}}
\begin{proof} For any tensor $\Z\in\Tm$, we write
	\begin{align*}
	\rho^{-1}\PT\Pomega(\Z) = \sum_{ijk}\rho^{-1}\delta_{ijk} z_{ijk}\PT(\eijk).
	\end{align*}
	The $(a,b,c)$-th entry of $\rho^{-1}\PT\Pomega(\Z)-\Z$ can be written as a sum of independent random variables, i.e.,
	\begin{align*}
	& \inproduct{\rho^{-1}\PT\Pomega(\Z)-\Z}{\eabc} \\
	= & \sum_{ijk} (\rho^{-1}\delta_{ijk}-1)z_{ijk} \inproduct{\PT(\eijk)}{\eabc} \\
	:=& \sum_{ijk} t_{ijk},
	\end{align*}
	where $t_{ijk}$'s are independent and $\mathbb{E}(t_{ijk})=0$. Now we bound $|t_{ijk}|$ and $|\sum_{ijk}\mathbb{E}[t_{ijk}^2]|$. First
	\begin{align*}
	&|t_{ijk}|\\	
	\leq & \rho^{-1} \norm{\Z}_\infty \norm{\PT(\eijk)}_F\norm{\PT(\eabc)}_F \\
	\leq & \frac{2\mu r}{nn_3\rho}\norm{\Z}_\infty.
	\end{align*}
	Second, we have
	\begin{align*}
	& \left|\sum_{ijk}\mathbb{E}[t_{ijk}^2]\right |\\
	\leq & \rho^{-1}\norm{\Z}_\infty^2\sum_{ijk}\inproduct{\PT(\eijk)}{\eabc}^2 \\
	=  & \rho^{-1}\norm{\Z}_\infty^2\sum_{ijk}\inproduct{\eijk}{\PT(\eabc)}^2 \\
	=  & \rho^{-1}\norm{\Z}_\infty^2\norm{\PT(\eabc)}_F^2 \\
	\leq & \frac{2\mu r}{nn_3\rho}\norm{\Z}_\infty^2.
	\end{align*}
	 Let $\epsilon\leq 1$. By Lemma \ref{lembenmatrix}, we have
	 \begin{align*}
	 & \mathbb{P} \left[ |[\rho^{-1}\PT\Pomega(\Z)-\Z]_{abc}| > \epsilon\norm{\Z}_\infty\right] \\
	 = & \mathbb{P}\left[ \left| \sum_{ijk} {t}_{ijk}\right|> \epsilon\norm{\Z}_\infty\right] \\
	 \leq & 2 \exp\left( -\frac{3}{8} \cdot \frac{\epsilon^2\norm{\Z}^2_\infty}{2\mu r\norm{\Z}^2_\infty/(nn_3\rho) } \right) \\
	 \leq & 2(nn_3)^{-\frac{3}{16}C_0},
	 \end{align*}
	 where the last inequality uses  $\rho\geq C_0\epsilon^{-2}\mu r\log(nn_3)/(nn_3)$.
	 Thus, $\norm{\rho^{-1}\PT\Pomega(\Z)-\Z}_\infty\leq \epsilon\norm{\Z}_\infty$ holds with high probability for some numerical constant $C_0$. 	
\end{proof}

\subsection{Proof of Lemma \ref{lempre3}}

\begin{proof}
	Denote the tensor $\HH_{ijk} = \left(1-\rho^{-1}\delta_{ijk}\right)z_{ijk}\eijk$. Then we have
	\begin{equation*}
	(\I-\rho^{-1}\Pomega)\Z = \sum_{ijk}\HH_{ijk}.
	\end{equation*}
	Note that $\delta_{ijk}$'s are independent random scalars. Thus, $\HH_{ijk}$'s are independent random tensors and $\Hmbar_{ijk}$'s are independent random matrices. 
	Observe that $\mathbb{E}[{\Hmbar}_{ijk}] = \0$ and $\norm{{\Hmbar}_{ijk}} \leq {\rho}^{-1} \norm{\Z}_\infty$. We have
	\begin{align*}
	&\normlarge{\sum_{ijk} \mathbb{E} [ {\Hmbar}^*_{ijk} {\Hmbar}_{ijk} ]  } \\
	= & \normlarge{\sum_{ijk} \mathbb{E} [ {\HH}^*_{ijk} *{\HH}_{ijk} ]  } \\
	= & \normlarge{\sum_{ijk}   \mathbb{E}[ (1-\rho^{-1}{\delta_{ijk}})^2 ] z_{ijk}^2 ({\ej}*\ej^*) } \\
	= & \normlarge{\frac{1-\rho}{\rho} \sum_{ijk} z_{ijk}^2 ({\ej}*\ej^*) } \\
	\leq & { \frac{nn_3}{\rho} }\norm{\Z}_\infty^2.
	\end{align*}
	A similar calculation yields $\normlarge{\sum_{ijk} \mathbb{E} [{\Hmbar}_{ijk}^* {\Hmbar}_{ijk} ]  }\leq { \rho^{-1}nn_3 }\norm{\Z}_\infty^2$. Let $t = \sqrt{C_0{nn_3\log(nn_3)}/{\rho}}\norm{\Z}_\infty$. When $\rho\geq C_0\log(nn_3)/(nn_3)$, we apply Lemma \ref{lembenmatrix} and obtain
	\begin{align*}
	& \mathbb{P}\left[ \norm{(\I-\rho^{-1}\Pomega)\Z } > t \right]  \\
	= & \mathbb{P}\left[ \normlarge{  \sum_{ijk} {\HH}_{ijk} } > t \right] \\
	= & \mathbb{P}\left[ \normlarge{  \sum_{ijk} {\Hmbar}_{ijk} } > t \right] \\
	\leq & 2nn_3 \exp\left( -\frac{3}{8} \cdot \frac{C_0nn_3\log(nn_3)\norm{\Z}_\infty^2/\rho}{nn_3\norm{\Z}_\infty^2/\rho } \right) \\
	\leq & 2(nn_3)^{1-\frac{3}{8}C_0}.
	\end{align*}
	Thus, $ \norm{(\I-\rho^{-1}\Pomega)\Z } > t$ holds with high probability for some numerical constant $C_0$.
\end{proof}

{\small
\bibliographystyle{ieee}
\bibliography{ref}
}